\renewcommand{\epsilon}{\varepsilon}
\newcommand{\cA}{\mathcal{A}}
\newcommand{\cC}{\mathcal{C}}
\newcommand{\cS}{\mathcal{S}}
\newcommand{\cF}{\mathcal{F}}
\newcommand{\cE}{\mathcal{E}}
\newcommand{\cI}{\mathcal{I}}
\newcommand{\cM}{\mathcal{M}}
\newcommand{\cW}{\mathcal{W}}
\newcommand{\cY}{\mathcal{Y}}
\newcommand{\cZ}{\mathcal{Z}}
\newcommand{\EE}{\mathbb{E}}
\newcommand{\ph}{{h'}}
\newcommand{\tV}{\tilde{V}}
\newcommand{\tQ}{\tilde{Q}}
\newcommand{\mP}{{\mathbb{P}}}
\newcommand{\mR}{{\mathbb{R}}}
\newcommand{\hmP}{{\hat{\mathbb{P}}}}
\newcommand\numberthis{\addtocounter{equation}{1}\tag{\theequation}}
\let\hat\widehat
\let\tilde\widetilde
\newtheorem{theorem}{Theorem}[section]
\newtheorem{lemma}[theorem]{Lemma}
\newtheorem{corollary}[theorem]{Corollary}
\newtheorem{remark}[theorem]{Remark}
\theoremstyle{definition}
\newtheorem{definition}[theorem]{Definition}
\newtheorem{condition}[theorem]{Condition}
\newtheorem{example}[theorem]{Example}
\newtheorem{assumption}{Assumption}
\newcounter{framework}
\def\##1\#{\begin{align}#1\end{align}}
\def\$#1\${\begin{align*}#1\end{align*}}
\newcommand{\kibitz}[2]{\ifnum\Comments=1\textcolor{#1}{#2}\fi}
\newcommand{\blue}[1]{{\color{blue} #1}}
\newcommand{\MO}{{M_\tO}}
\newcommand{\ME}{{M_\tE}}
\newcommand{\DeltaO}{\Delta_{\tO}}
\newcommand{\DeltaE}{\Delta_{\tE}}
\newcommand{\DeltaOt}{\Delta_{\tO,\task}}
\newcommand{\rO}[1]{r_{\tO #1}}
\newcommand{\rE}[1]{r_{\tE #1}}
\newcommand{\hmPO}[1]{{\hat{\mathbb{P}}}_{\tO #1}}
\newcommand{\hmPE}[1]{{\hat{\mathbb{P}}}_{\tE #1}}
\newcommand{\mPO}[1]{{\mathbb{P}}_{\tO #1}}
\newcommand{\mPE}[1]{{\mathbb{P}}_{\tE #1}}
\newcommand{\hmPOt}[1]{{\hat{\mathbb{P}}}_{\tO,\task #1}}
\newcommand{\NO}[1]{N_{\tO #1}}
\newcommand{\NE}[1]{N_{\tE #1}}
\newcommand{\hmuO}[1]{\hat{\mu}_{\tO #1}}
\newcommand{\hmuE}[1]{\hat{\mu}_{\tE #1}}
\newcommand{\piO}[1]{{\pi}_{\tO #1}}
\newcommand{\piE}[1]{{\pi}_{\tE #1}}
\newcommand{\upiO}[1]{{\underline{\pi}}_{\tO #1}}
\newcommand{\muO}{\mu_\tO}
\newcommand{\muE}{\mu_\tE}
\newcommand{\algO}{\text{Alg}^\tO}
\newcommand{\algE}{\text{Alg}^\tE}
\newcommand{\tauO}{{\tau}_\tO}
\newcommand{\tauE}{{\tau}_\tE}
\newcommand{\alphaO}{{\alpha}}
\newcommand{\alphaE}{{\alpha}}
\newcommand{\Regret}{\text{Regret}}
\newcommand{\scalar}{\nu}
\newcommand{\uQO}[1]{\underline{Q}_{\tO #1}}
\newcommand{\uVO}[1]{\underline{V}_{\tO #1}}
\newcommand{\tVO}[1]{\tilde{V}_{\tO #1}}
\newcommand{\QO}[1]{Q_{\tO #1}}
\newcommand{\VO}[1]{V_{\tO #1}}
\newcommand{\uQOt}[1]{\underline{Q}_{\tO,\task #1}}
\newcommand{\tQOt}[1]{\tilde{Q}_{\tO,\task #1}}
\newcommand{\uVOt}[1]{\underline{V}_{\tO,\task #1}}
\newcommand{\tVOt}[1]{\tilde{V}_{\tO,\task #1}}
\newcommand{\uQE}[1]{\underline{Q}_{\tE #1}}
\newcommand{\tQE}[1]{\tilde{Q}_{\tE #1}}
\newcommand{\uVE}[1]{\underline{V}_{\tE #1}}
\newcommand{\tVE}[1]{\tilde{V}_{\tE #1}}
\newcommand{\QE}[1]{Q_{\tE #1}}
\newcommand{\VE}[1]{V_{\tE #1}}
\newcommand{\omuO}[1]{\overline{\mu}_{\tO #1}}
\newcommand{\umuO}[1]{\underline{\mu}_{\tO #1}}
\newcommand{\omuE}[1]{\overline{\mu}_{\tE #1}}
\newcommand{\umuOtk}[1]{\underline{\mu}_{\tO,\task^k #1}}
\newcommand{\Gl}{G^{\text{lower}}}
\newcommand{\Gu}{G^{\text{upper}}}
\newcommand{\tE}{\text{Hi}}
\newcommand{\tO}{\text{Lo}}
\newcommand{\Null}{\texttt{Null}}
\newcommand{\ratio}{2}
\newcommand{\tk}{{\tilde{k}}}
\newcommand{\surplusE}[1]{{\mathbf{E}}_{\tE #1}}
\newcommand{\dsurplusE}[1]{\ddot{\mathbf{E}}_{\tE #1}}
\newcommand{\epsClip}{\epsilon_\text{Clip}}
\newcommand{\Clip}[2]{\text{Clip}\left[#1 #2\right]}
\newcommand{\cECk}{\cE_{\textbf{Con},k}}
\newcommand{\cEBk}{\cE_{\textbf{Bonus},k}}
\newcommand{\cEOk}{\cE_{\algO,k}}
\newcommand{\cEEk}{\cE_{\algE,k}}
\newcommand{\cECTk}{\cE_{\textbf{Con},[\Task],k}}
\newcommand{\cEBTk}{\cE_{\textbf{Bonus},[\Task],k}}
\newcommand{\cEOTk}{\cE_{\algO,[\Task],k}}
\newcommand{\bonus}{b}
\newcommand{\threshold}{\xi}
\newcommand{\MOt}{M_{\tO,\task}}
\newcommand{\tOt}{{\tO,\task}}
\newcommand{\NOempty}{N_{\tO}}
\newcommand{\NEempty}{N_{\tE}}
\newcommand{\Poly}{\text{Poly}}
\newcommand{\task}{w}
\newcommand{\Task}{W}
\newcommand{\tcW}{\tilde{\mathcal{\Task}}}
\title{Robust Knowledge Transfer in Tiered RL}
\author{%
  Jiawei Huang\\
  Department of Computer Science\\
  ETH Zurich \\
  \texttt{jiawei.huang@inf.ethz.ch} \\
  \And
  Niao He\\
  Department of Computer Science\\
  ETH Zurich \\
  \texttt{niao.he@inf.ethz.ch} \\
}
\begin{document}
\maketitle

\begingroup
\allowdisplaybreaks

\begin{abstract}
    In this paper, we study the Tiered Reinforcement Learning setting, a parallel transfer learning framework, where the goal is to transfer knowledge from the low-tier (source) task to the high-tier (target) task to reduce the exploration risk of the latter while solving the two tasks in parallel. Unlike previous work, we do not assume the low-tier and high-tier tasks share the same dynamics or reward functions, and focus on robust knowledge transfer without prior knowledge on the task similarity. We identify a natural and necessary condition called the ``Optimal Value Dominance'' for our objective. Under this condition, we propose novel online learning algorithms such that, for the high-tier task, it can achieve constant regret on partial states depending on the task similarity and retain near-optimal regret when the two tasks are dissimilar, while for the low-tier task, it can keep near-optimal without making sacrifice. Moreover, we further study the setting with multiple low-tier tasks, and propose a novel transfer source selection mechanism, which can ensemble the information from all low-tier tasks and allow provable benefits on a much larger state-action space.
\end{abstract}

\section{Introduction}\label{sec:introduction}
Comparing with individual learning from scratch, transferring knowledge from other similar tasks or side information has been proven to be an effective way to reduce the exploration risk and improve sample efficiency in Reinforcement Learning (RL).
Multi-Task RL (MT-RL) \citep{vithayathil2020survey} and Transfer RL \citep{taylor2009transfer, lazaric2012transfer, zhu2020transfer} are two mainstream knowledge transfer frameworks; however, both are subject to limitations when dealing with real-world scenarios.
MT-RL studies the setting where a set of similar tasks are solved concurrently, and the main objective is to accelerate the learning by sharing information of all tasks together.
However, in practice, in many MT-RL scenarios, the tasks are not equally important and we are more interested in the performance of certain tasks.
For example, in robot learning, a few robots are more valuable and hard to fix, while the others are cheaper or just simulators.
Most existing works on MT-RL treat all tasks equally and focus primarily on the reduction of the total regret of all tasks as a whole \citep{brunskill2013sample, d2020sharing, zhang2021provably, hu2021near}, with no guarantee of improving a particular task.
In contrast, transfer RL distinguishes the priority of different tasks by categorizing them into source and target tasks and aims at transferring the knowledge from source tasks
(or some side information like value predictors) to facilitate the learning of target tasks \citep{mann2013directed, tkachuk2021effect, golowich2022can, gupta2022unpacking}.
However, a key assumption in transfer RL is that the source task is completely solved before the learning of the target task, and this is not always practical. For example, in some sim-to-real domain, the source task simulator may require a long time to solve \citep{choi2021use}, and in some user-interaction scenarios \citep{huang2022tiered}, the source and target tasks refer to different user groups and they have to be served simultaneously.
In these cases, it's more reasonable to solve the source and target tasks in parallel and 
transfer the information immediately once available.

Recently, \citep{huang2022tiered} proposed a new ``parallel knowledge transfer'' framework, called Tiered RL,  which is promising to fill the gap. Tiered RL considers the case when  a source task $\MO$ and a target task $\ME$ are learned in parallel, by two separate algorithms $\algO$ and $\algE$, and its goal is to reduce the exploration risk and regret in learning $\ME$ by leveraging knowledge transfer from $\MO$ to $\ME$.
\citep{huang2022tiered} showed that under the strong assumption that $\MO = \ME$, it's possible to achieve constant regret in learning $\ME$ while keeping regret in $\MO$ optimal in gap-dependent setting. Yet, their algorithm based on Pessimistic Value Iteration (PVI) \citep{jin2021pessimism} can be hardly applied  when the assumption $\MO = \ME$ breaks, given its pure exploitation nature,  making their result very restrictive.

In this paper, we study the general Tiered RL setting \emph{without prior knowledge} about how similar the tasks are
\footnote{We defer the detailed framework to Appx. \ref{appx:general_learning_framework} for completeness, which is the same as Fr. 1 in \citep{huang2022tiered}.}. 
The key question we would like to address is:
\textbf{Can we design algorithms s.t.: (1) $\Regret$ in $\MO$ keeps near-optimal; (2) $\Regret$ in $\ME$ achieves provable benefits when $\MO$ and $\ME$ are similar while retaining near-optimal otherwise?}
Note for $\algO$, we expect it to achieve near-optimal regret bounds, which is reasonable since the source task is often important and our results still hold if relaxing it.
As for $\algE$, we expect it to be \emph{robust}, i.e., it can adaptively exploit from $\MO$ if it is close to $\ME$, while avoiding negative transfer in other cases.
Notably, our setting strictly generalizes \citep{huang2022tiered} and is much more challenging, for balancing the exploitation from  $\MO$ and exploration from $\ME$ without prior knowledge of task similarity.  
We give positive answers to the above question in this paper and demonstrate provable benefits with robust knowledge transfer for Tiered RL framework. Below,  we summarize our main contributions in three aspects.

\textbf{Our first contribution} is to identify essential conditions and notions about when and how our objective is achievable.
We first provide a mild condition called \emph{Optimal Value Dominance} (OVD), and in Sec. \ref{sec:lower_bound}, we justify its necessity to our objective by a lower bound result.
Our lower bound holds even if $M_\tO$ is fully known to the learner, and therefore, it also justifies the necessity of similar assumptions in previous transfer RL literatures \citep{golowich2022can,gupta2022unpacking}.
Besides, we introduce the notion of \emph{transferable states} to characterize states on which $\ME$ is expected to achieve benefits by transferring knowledge from $\MO$.
We believe those findings also provide useful insights for further works.

As \textbf{our second contribution}, in Sec.~\ref{sec:single_task_setting}, we propose novel algorithms for Tiered Multi-Armed Bandit (MAB) and Tiered RL, which can achieve robust parallel transfer by balancing between pessimism-based exploitation from $\MO$ and optimism-based online learning in $\ME$.
Depending on the similarity between $\MO$ and $\ME$, our algorithms can enjoy constant regret on a proportion of state-action pairs or even on the entire $\ME$ by leveraging information from $\MO$, while timely interrupting negative transfer and retaining near-optimal regret on states dissimilar between two tasks.
Moreover, in the bandit setting, our result implies a strictly improved regret bound when $\ME=\MO$, compared with previous results under the same setting \citep{rouyer2020tsallis, huang2022tiered}.

Beyond the single low-tier task setting, in many real-world scenarios, it's reasonable to assume there are multiple different low-tier tasks $\cM_\tO = \{M_{\tO,w}\}_{w=1}^W$ available.
As \textbf{our third contribution}, in Sec.~\ref{sec:multiple_source_task}, we extend our algorithm to this setting with a new source task selection mechanism.
By novel techniques, we show that, even if each $M_{\tO,w}$ may only share similarity with $\ME$ on a small portion of states, we are able to ensemble the information from each source task together to achieve constant regret on a much larger state-action space ``stitched'' from the transferable state-action set in each individual $M_{\tO,w}$, at the expense of an additional $\log W$ factor in regret.
Besides, our algorithm is still robust to model difference and retains near-optimal regret in general.
Although we only study the Tiered RL setting in this paper, we believe our task selection strategy can be applied to standard transfer RL setting \citep{golowich2022can,gupta2022unpacking} when multiple (partially correct) side information or value predictors are provided, which is an interesting direction for future work.
Finally, we conduct experiments in toy examples to verify our theoretical results.

\subsection{Closely Related Work} For the lack of space, we only discuss closely related work here and defer the rest to Appx. \ref{appx:comparison_with_previous}.
The most related to us is the Tiered RL framework \citep{huang2022tiered}, which was originally motivated by Tiered structure in user-oriented applications.
However, they only studied the case when $\ME = \MO$, which limits the practicability of their algorithms. 
Although there is a sequence of work studying parallel transfer learning in multi-agent system \citep{taylor2019parallel, liang2020parallel}, but they mainly focused on heuristic empirical algorithms and did not have theoretical guarantees.

Transfer RL \citep{zhu2020transfer}, compared to learning from scratch, can reduces exploration risk of target task by leveraging information in similar source tasks or side information \citep{mann2013directed, tkachuk2021effect, golowich2022can, gupta2022unpacking}.
Comparing with transfer RL setting, our parallel transfer setting has some additional challenges.
Firstly, $\ME$ can only leverage estimated value/model/optimal policy from $\MO$ with uncertainty, which implies we need additional efforts to control failure events with non-zero probability comparing with normal transfer RL setting.
Secondly, the constraints on the optimality of $\Regret_K(\MO)$, although reasonable, restrict the transferable information because in $\MO$ estimation uncertainty can only be controlled on those states frequently visited.
Moreover, none of these previous work studies how to leverage multiple partially correct side information like what we did in Sec.~\ref{sec:multiple_source_task}.
In MT-RL setting, the benefits of leveraging information gathered from other tasks has been observed from both empirical and theoretical works \citep{wilson2007multi, brunskill2013sample, d2020sharing, sodhani2021multi, zhang2021provably, hu2021near}.
But MTRL treats each task equally, and the reduction of total regret over all tasks does not directly imply benefits achieved in a particular task.

\section{Preliminary and Problem Formulation}\label{sec:preliminary}
\textbf{Tiered Stochastic MAB and Tiered Episodic Tabular RL} 
In Tiered MAB setting, we consider a low-tier task $\MO$ and a high-tier task $\ME$ sharing the arm/action space $\cA = \{1,2,...,A\}$. By pulling the arm $i\in[A]$ in $\MO$ or $\ME$, the agent can observe a random variable $\rO{}(i)$ or $\rE{}(i)\in [0,1]$. We will use $\muO{}(i)=\EE[\rO{}(i)]$ and $\muE{}(i)=\EE[\rE{}(i)]$ to denote the expected return of the $i$-th arm in $\MO$ and $\ME$, respectively, and note that it's possible that $\muO{}(i) \neq \muE{}(i)$.

For Tiered RL, we assume that two tasks $\MO=\{\cS,\cA,H,\mP_\tO,r_\tO\}$ and $\ME=\{\cS,\cA,H,\mP_\tE,r_\tE\}$ share the finite state $\cS$ and action space $\cA$ across episode length $H$ (i.e. $\cS_h= \cS, \cA_h=\cA$ for any $h\in[H]$), 
but may have different time-dependent transition and reward functions $\mP_\tO=\{\mP_{\tO,h}\}_{h=1}^H, r_\tO=\{r_{\tO,h}\}_{h=1}^H$ and $\mP_\tE=\{\mP_{\tE,h}\}_{h=1}^H, r_\tE=\{r_{\tE,h}\}_{h=1}^H$.
W.l.o.g., we assume the initial state $s_1$ is fixed, and the reward functions $r_\tO$ and $r_\tE$ are deterministic and bounded by $[0,1]$.
In episodic MDPs, we study the time-dependent policy specified as $\pi:=\{\pi_1,...,\pi_H\}$ with $\pi_h:\cS_h\rightarrow \Delta(\cA_h)$ for all $h\in[H]$, where $\Delta(\cA_h)$ denotes the probability simplex over the action space. With a slight abuse of notation, when $\pi_h$ is a deterministic policy, we use $\pi_h: \cS_h\rightarrow \cA_h$ to denote the deterministic mapping.
Besides, we use $Q^\pi_h(s,a)=\EE[\sum_{h'=h}^H r_{h'}(s_{h'},a_{h'})|s_h=s,a_h=a,\pi],~V^\pi_h(s)=\EE_{a\sim\pi}[Q^\pi_h(s,a)]$ to denote the value function for $\pi$ at step $h\in[H]$, and denote $d^{\pi}_h(\cdot) := \Pr(s_h=\cdot|\pi)$ and $d^\pi_h(\cdot,\cdot) := \Pr(s_h=\cdot, a_h=\cdot|\pi)$ as the state and state-action occupancy w.r.t. policy $\pi$.
We use $\pi^*$ to denote the optimal policy, and $V^{*}_h,Q^{*}_h,d^*_h$ as a short note when $\pi = \pi^*$.
To avoid confusion, we will specify the policy and value functions in $\MO$ and $\ME$ by $\tO$ and $\tE$ in subscription, respectively. For example, in $\MO$ we have $\piO{}:=\{\piO{,1},...,\piO{,H}\}$, $\QO{,h}^{\piO{}}/\VO{,h}^{\piO{}}$ and $\QO{,h}^{*}/\VO{,h}^{*}$, $d^{\pi_\tO}_{\tO,h},d^*_{\tO,h}$, and similarly for $\ME$. 

\textbf{Gap-Dependent Setting}
Throughout, we focus on gap-dependent setting \citep{lattimore2020bandit,simchowitz2019non, xu2021fine, dann2021beyond}. 
Below we introduce the notion of gap for $\MO$ as an example, and those for $\ME$ follows similarly.
In MAB case, the gap in $\MO$ w.r.t.~arm $i$ is defined as $\DeltaO(i):=\max_{j\in[A]}\muO{}(j) - \muO{}(i),\forall i \in [A]$, and for tabular RL setting, we have $\DeltaO{}(s_h,a_h):=\VO{,h}^*(s_h)-\QO{,h}^*(s_h,a_h),\forall h\in[H],s_h\in\cS_h,a_h\in\cA_h$. 
We use $\Delta_{\tO,\min}$ to refer to the minimal gap such that 
$\Delta(s_h,a_h) \geq \Delta_{\tO,\min}$ for all non-optimal actions $a_h$, and use $\Delta_{\min}:=\min\{\Delta_{\tO,\min}, \Delta_{\tE,\min}\}$ to denote the minimal gap over two tasks. In the gap-dependent setting, we assume $\Delta_{\min}>0$.

\textbf{Knowledge Transfer from Multiple Low-Tier Tasks}
In this case, we assume there are $W > 1$ different source tasks $\cM_\tO = \{M_{\tO,w}\}_{w=1}^W$ and all the tasks share the same state and action spaces but may have different transition and reward function.
We defer the extended framework for this setting to Appx.~\ref{appx:supplementary_introduction}.
We specify the task index $w\in[W]$ in sub-scription to distinguish the notation for different source tasks (e.g. $\mP_{\tO,w,h}, Q^{(\cdot)}_{\tO,w,h}$). Moreover, we define $\Delta_{\min}:=\min\{\Delta_{\tO,1,\min},...,\Delta_{\tO,W,\min},\Delta_{\tE,\min}\}$.
For convenience, in the rest of the paper, we use TRL-MST (Tiered RL with Multiple Source Task) as a short abbreviation for this setting.

\textbf{Performance Measure}
We use Pseudo-Regret as performance measure:
$\Regret_K(\MO):=\EE\left[\sum_{k=1}^K V_1^*(s_1)-V_1^{\piO{}^k}(s_1)\right];~$ $\Regret_K(\ME):=\EE\left[\sum_{k=1}^K V_1^*(s_1)-V_1^{\piE{}^k}(s_1)\right],$
where $K$ is the number of iterations, $\{\piO{}^k\}_{k=1}^K$ and $\{\piE{}^k\}_{k=1}^K$ are generated by the algorithms.

\textbf{Frequently Used Notations}
We denote $[n]=\{1,2,...,n\}$.
Given a transition matrix $\mP:\cS\times\cA\rightarrow \Delta(\cS)$, and a function $V: \cS \rightarrow \mR$, we use $\mP V(s_h,a_h)$ as a short note of $\EE_{s'\sim\mP(\cdot|s,a)}[V(s')]$. 
We will use $i^*_\tO/i^*_\tE$ to denote the optimal arm in bandit setting, and $\piO{}^*/\piE{}^*$ denotes the optimal policy in RL setting. In TRL-MST setting, we use $i^*_{\tO,w}/\piO{,w}^*$ to distinguish different source tasks.

\subsection{Assumptions and Characterization of Transferable States}
Throughout the paper, we make several assumptions.  The first one is the uniqueness of the optimal policy, which is common in the literature \citep{rouyer2020tsallis,chen2022offline}.
\begin{assumption}\label{assump:unique_optimal_policy}
    Both $\MO$ (or $\{M_{\tO,w}\}_{w=1}^W$) and $\ME$ have unique optimal arms/policies.
\end{assumption}

 Next, we introduce a new concept called ``Optimal Value Dominance'' (OVD for short), which says that for each state (or at least those states reachable by optimal policy in $\MO$), the optimal value of $\MO$ is an approximate overestimation for the optimal value of $\ME$.
In Sec. \ref{sec:lower_bound}, we will use a lower bound to show such a condition is necessary to attain the robust transfer objective.
\begin{assumption}\label{assump:opt_value_dominance}
    In single source task setting, we assume $\MO$ has Optimal Value Dominance (OVD) over $\ME$, s.t., $\forall h\in[H]$, for all $s_h\in\cS_h$ (or only for those $s_h$ with $d^*_{\tO,h}(s_h) > 0$), we have:
    $
    \VO{,h}^*(s_h) \geq \VE{,h}^*(s_h) - \frac{\Delta_{\min}}{2(H+1)}
    $.
    In TRL-MST setting, we assume each $M_{\tO,w}$ has OVD over $\ME$.\footnote{For convenience, we include the bandit setting as a special case with $H=0$; see also Def.~\ref{def:close_state}, \ref{def:transferable_states} and \ref{def:transferable_states_MT}.}
\end{assumption}
We remark that Assump.~\ref{assump:opt_value_dominance}  is a rather mild condition that naturally holds with reward shaping.  
Note that since $V^*_{\tE,h}(\cdot)\leq H - h$, by shifting the reward function of $\MO$ to $r'_{\tO,h}(\cdot,\cdot)=r_{\tO,h}(\cdot,\cdot) + 1$, we immediately obtain the OVD property. 
Even though, such a reward shift may impair the set of transferable states as we will introduce in Def.~\ref{def:transferable_states}.
We provide several reasonable settings in Appx.~\ref{appx:examples_OVD} including identical model \citep{huang2022tiered}, small model difference, and known model difference, where Assump.~\ref{assump:opt_value_dominance} is satisfied and there exists a non-empty set of transferable states. We also point out that several existing work on transfer RL assumed something similar or even stronger \citep{golowich2022can, gupta2022unpacking}, which we defer a thorough discussion to Appx.~\ref{appx:assumption_comparison}.

\begin{assumption}\label{assump:lower_bound_Delta_min}
    The learner has access to a quantity $\tilde{\Delta}_{\min}$ satisfying $0 \leq \tilde{\Delta}_{\min} \leq \Delta_{\min}$.
\end{assumption}
The final one is about the knowledge of a lower bound of $\Delta_{\min}$, which can always be satisfied by choosing $\tilde{\Delta}_{\min}=0$. Nevertheless, it would be more beneficial if the learner has access to some quantity $\tilde{\Delta}_{\min}$ closer to $\Delta_{\min}$ than 0.
As we introduce below, the magnitude of $\tilde{\Delta}_{\min}$ is related to how we quantify the similarity between $\MO$ and $\ME$ and which states we expect to benefit from knowledge transfer. 
Below we focus on the single source task setting and defer the discussion for TRL-MST setting to Sec. \ref{sec:multiple_source_task}. 
\begin{definition}[$\epsilon$-Close]\label{def:close_state}
    Task $\ME$ is $\epsilon$-close to task $\MO$ on $s_h$ at step $h$ for some $\epsilon > 0$, if $\VO{,h}^*(s_h) - \VE{,h}^*(s_h) \leq \epsilon$ and $\pi^*_{\tE}(s_h)=\pi^*_{\tO}(s_h)$.
\end{definition}
\begin{definition}[$\lambda$-Transferable States]\label{def:transferable_states}
   State $s_h$ is $\lambda$-transferable for some $\lambda > 0$, if $d^*_\tO(s_h) > \lambda$ and $\ME$ is $\frac{\tilde{\Delta}_{\min}}{4(H+1)}$-close to $\MO$ on $s_h$. The set of $\lambda$-transferable states at $h \in [H]$ is denoted as $\cZ_{h}^\lambda$. 
\end{definition}

\noindent We regard $s_h$ in $\MO$ has transferable knowledge to $\ME$, if it can be reached by optimal policy in $\MO$ and the optimal value and action at $s_h$ for two tasks are similar. Here the condition
$d^*_\tO(s_h) > 0$ is necessary since in $\MO$, only the states reachable by $\pi^*_\tO$ can be explored sufficiently by $\algO$ 
due to its near optimal regret.
Combining with Assump.~\ref{assump:opt_value_dominance}, one can observe that the value difference on transferable states are controlled by $|V^*_{\tO,h}(s_h) - V^*_{\tE,h}(s_h)|=O(\frac{\tilde{\Delta}_{\min}}{H})\leq O(\frac{\Delta_{\min}}{H})$.
As we will show in Thm.~\ref{thm:necessity_Delta_min} in Sec. \ref{sec:lower_bound},  the term $O(\Delta_{\min})$ is indeed unimprovable if we expect robustness. 
\section{Lower Bound Results: Necessary Condition for Robust Transfer}\label{sec:lower_bound}
Now we establish lower bounds that show Assump.~\ref{assump:opt_value_dominance} is necessary and how the magnitude of $\Delta_{\min}$ restricts the robust transfer objective.
The results in this section are based on two-armed Bernoulli bandits for simplicity, and the proofs are deferred to Appx. \ref{appx:lower_bound}.
By extending these hard instances to RL case, there is a gap caused by the additional $\frac{1}{H}$ in Assump.~\ref{assump:opt_value_dominance} and Def.~\ref{def:transferable_states}, which comes from the requirement of our algorithm design, and we leave it to the future work.

\textbf{Justification for Assump. \ref{assump:opt_value_dominance}}
We show that if Assump.~\ref{assump:opt_value_dominance} is violated, it is impossible to have algorithms $(\algO,\algE)$ to simultaneously achieve constant regret when $\MO = \ME$, while retaining sub-linear regret for all the cases regardless of the similarity between $\MO$ and $\ME$.
Here we require constant regret on $\MO=\ME$ since we believe it is a minimal expectation to achieve benefits in transfer when the two tasks are identical.
Intuitively, without Assump.~\ref{assump:opt_value_dominance}, even if we know $\muO{}(i^*_\tO) = \muE{}(i^*_\tO)$, we cannot ensure $i^*_\tO$ is the optimal arm in $\ME$.
Then, if $(\algO,\algE)$ can achieve constant regret on $\MO = \ME$, the algorithm must stop exploration on the arm $i \neq i^*_\tO$ after finite steps, and thus, it suffers from linear regret in another instance of $\ME$ where $i^*_\tO \neq i^*_\tE$.

Moreover, Thm.~\ref{thm:necessity_of_OVD} holds even if the learner has full information of $M_\tO$, where the setting degenerates to normal transfer RL since there is no need to explore $M_\tO$. This explains why similar assumptions to Assump.~\ref{assump:opt_value_dominance} are considered in previous transfer RL works \citep{golowich2022can,gupta2022unpacking}.

\begin{restatable}{theorem}{ThmNeceOVD}\label{thm:necessity_of_OVD}
    Under the violation of Assump. \ref{assump:opt_value_dominance}, even regardless of the optimality of $\algO$, for each algorithm pair $(\algO, \algE)$, it cannot simultaneously (1) achieve constant regret for the case when $\MO = \ME$ and (2) ensure sub-linear regret in all the other cases.
\end{restatable}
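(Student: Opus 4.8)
The plan is to run a change-of-measure argument on two-armed Bernoulli bandits, making precise the intuition sketched above. I would fix a single low-tier task $\MO$ with arm $1$ optimal, say $\muO(1)=\tfrac12$ and $\muO(2)=\tfrac12-\Delta_{\min}$, and consider two high-tier tasks that share this very same $\MO$: the base task $\ME^{(1)}=\MO$ (so arm $1$ is optimal and Assump.~\ref{assump:opt_value_dominance} holds with equality), and an alternative task $\ME^{(2)}$ that agrees with $\ME^{(1)}$ on arm $1$ but takes $\muE(2)=\tfrac12+\Delta_{\min}$, so that arm $2$ becomes optimal with gap $\Delta_{\min}$. By construction $\ME^{(2)}$ violates Assump.~\ref{assump:opt_value_dominance}, since $\max_i\muO(i)=\tfrac12<(\tfrac12+\Delta_{\min})-\tfrac{\Delta_{\min}}{2}$. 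Crucially, $\MO$ is held fixed across the two instances, which is exactly what makes the statement hold \emph{regardless of the optimality of $\algO$}: nothing about $\algO$'s behavior in $\MO$ will enter the argument.

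I would then argue by contradiction. Suppose $(\algO,\algE)$ attains constant regret whenever $\MO=\ME$; applied to $\ME^{(1)}$ this forces the expected number $\EE_1[N_2]$ of times $\algE$ pulls the suboptimal arm $2$ to be bounded by a constant $C$ independent of $K$, because $\Regret_K(\ME^{(1)})=\Delta_{\min}\cdot\EE_1[N_2]$. Next I would bound the KL divergence between the laws $\mathbb{P}_1,\mathbb{P}_2$ of the full observation history (including every datum gathered from $\MO$) under the two instances. Since the two instances differ only in the reward law of arm $2$ of the high-tier task and coincide on $\MO$, the chain rule for KL divergence gives $\mathrm{KL}(\mathbb{P}_1\|\mathbb{P}_2)=\EE_1[N_2]\cdot\mathrm{KL}\!\big(\mathrm{Ber}(\tfrac12-\Delta_{\min})\,\|\,\mathrm{Ber}(\tfrac12+\Delta_{\min})\big)\le C\cdot d$, where $d=O(\Delta_{\min}^2)$ is a fixed constant; hence the divergence is bounded by some $c_0$ uniformly in $K$.

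Finally I would convert this bounded divergence into a linear-regret lower bound on $\ME^{(2)}$ via the Bretagnolle--Huber inequality applied to the event $A=\{N_2\le K/2\}$, which yields $\mathbb{P}_1(A^c)+\mathbb{P}_2(A)\ge\tfrac12 e^{-c_0}$. Under $\ME^{(1)}$, Markov's inequality gives $\mathbb{P}_1(N_2>K/2)\le 2C/K\to 0$, so for all large $K$ we obtain $\mathbb{P}_2(N_2\le K/2)=\mathbb{P}_2(N_1\ge K/2)\ge\tfrac14 e^{-c_0}$, i.e.\ with constant probability $\algE$ pulls the now-suboptimal arm $1$ at least $K/2$ times under $\ME^{(2)}$. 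This forces $\Regret_K(\ME^{(2)})\ge\Delta_{\min}\cdot\tfrac{K}{2}\cdot\tfrac14 e^{-c_0}=\Omega(K)$, contradicting sub-linear regret in this OVD-violating instance and completing the impossibility. The main obstacle I anticipate is the divergence bookkeeping for the coupled two-task process: making rigorous that $\algE$'s dependence on the shared $\MO$ data contributes nothing to $\mathrm{KL}(\mathbb{P}_1\|\mathbb{P}_2)$ (so that only the rare high-tier arm-$2$ pulls enter the bound), and handling possible internal randomization of the algorithm pair within the chain-rule decomposition.
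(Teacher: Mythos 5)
Your proposal is correct, and it uses exactly the paper's hard-instance pair: a fixed two-armed Bernoulli low-tier task and two high-tier tasks that agree on arm $1$ but place arm $2$ at $\mu\pm\Delta_{\min}$, with the key observation that $\MO$ is held fixed so nothing about $\algO$ enters the indistinguishability argument. Where you diverge is purely in the change-of-measure machinery. The paper restricts to deterministic algorithms, writes the likelihood ratio of a full trajectory explicitly, lower-bounds it by $\bigl(\frac{1-\mu-\Delta}{1-\mu+\Delta}\bigr)^{N_{\tE}(2;\tau_K)}$, and uses Markov's inequality to confine $N_{\tE}(2;\tau_K)$ to a constant on a high-probability event, which yields $\Regret_K(\ME^{(2)})\geq (1-\delta)\,c^{C/(\Delta\delta)}(K-C/(\Delta\delta))\Delta=\Omega(K)$ directly. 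You instead invoke the KL chain rule to get $\mathrm{KL}(\mathbb{P}_1\|\mathbb{P}_2)=\EE_1[N_2]\cdot\mathrm{KL}(\mathrm{Ber}(\tfrac12-\Delta_{\min})\|\mathrm{Ber}(\tfrac12+\Delta_{\min}))\leq c_0$ and then apply Bretagnolle--Huber to the event $\{N_2\leq K/2\}$. Both are sound; your route handles internally randomized algorithms for free (the conditional action kernels cancel in the divergence decomposition, whereas the paper has to argue determinism w.l.o.g.) and outsources the bookkeeping to standard lemmas, while the paper's explicit-ratio computation is more elementary and self-contained. The two technical caveats you flag --- that the shared $\MO$ stream contributes zero KL, and that randomization is absorbed by the chain rule --- are exactly the points the standard divergence-decomposition lemma settles, so neither is a genuine gap; you only need $\Delta_{\min}<\tfrac12$ so the Bernoulli parameters are valid and the single-arm KL is finite, which matches the paper's implicit constraint $0<\mu-\Delta<\mu+\Delta<1$.
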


\textbf{$\Delta_{\min}$ in Tolerance Error is Inevitable}\quad
Next, we show that, if $\ME$ and $\MO$ are $\Delta$-close for some $\Delta\geq \frac{\Delta_{\min}}{2}$, in general we cannot expect to achieve constant regret on $\ME$ by leveraging $\MO$ without other loss.
Similar to Thm.~\ref{thm:necessity_of_OVD}, the main idea is to construct different instances for $\ME$ with different optimal arms and cannot be distinguished within finite number of trials.

\begin{restatable}{theorem}{ThmDeltaminLB}[Transferable States are Restricted by $\Delta_{\min}$]\label{thm:necessity_Delta_min}
    Under Assump.~\ref{assump:opt_value_dominance}, regardless of the optimality of $\algO$, given arbitrary $\Delta_{\min}$ and arbitrary $\Delta \in [\frac{\Delta_{\min}}{2}, \Delta_{\min}]$, for each algorithm pair $(\algO, \algE)$, it cannot simultaneously (1) achieve constant regret for the case when $\MO$ and $\ME$ with minimal gap $\Delta_{\min}$ are $\Delta$-close, and (2) ensure sub-linear regret in all other cases.
\end{restatable}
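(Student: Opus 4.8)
\emph{Proof proposal.} The plan is to reuse the change-of-measure (indistinguishability) scheme behind Thm.~\ref{thm:necessity_of_OVD}, but on a two-armed Bernoulli family engineered so that the boundary $\Delta \geq \frac{\Delta_{\min}}{2}$ is exactly the slack needed for a \emph{deceptive} alternative instance to remain OVD-feasible. Fix $\MO$ with arm means $\muO{}(1)=\frac12$ and $\muO{}(2)=\frac12-\Delta_{\min}$, so $i^*_\tO=1$ and the gap of $\MO$ is $\Delta_{\min}$. For case (1) take the close instance $\ME$ with $\muE{}(1)=\frac12-\Delta$ and $\muE{}(2)=\frac12-\Delta-\Delta_{\min}$: then $i^*_\tE=i^*_\tO=1$, the gap of $\ME$ is $\Delta_{\min}$, the optimal values satisfy $\VO{}^*-\VE{}^*=\muO{}(1)-\muE{}(1)=\Delta$ so the pair is exactly $\Delta$-close, and OVD holds since $\frac12\geq(\frac12-\Delta)-\frac{\Delta_{\min}}{2}$. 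This realizes case (1) with minimal gap $\Delta_{\min}$.

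Next I would construct the deceptive alternative $\ME'$ that keeps $\MO$ and $\muE{}(1)$ unchanged but lifts the second arm to $\mu_\tE'(2)=\frac12-\Delta+\Delta_{\min}$, so that arm $2$ becomes optimal with gap $\Delta_{\min}$. The decisive check is OVD for $\ME'$: with $H=0$ it reads $\muO{}(1)\geq \mu_\tE'(2)-\frac{\Delta_{\min}}{2}$, i.e.\ $\frac12\geq \frac12-\Delta+\Delta_{\min}-\frac{\Delta_{\min}}{2}$, which holds \emph{if and only if} $\Delta\geq \frac{\Delta_{\min}}{2}$ --- precisely the hypothesis. Hence $\ME'$, paired with the same $\MO$, is a legitimate ``other case,'' and crucially it differs from $\ME$ only in the law of arm $2$, the arm that is suboptimal under $\ME$. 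Choosing the baseline mean so that all four arm means lie in a compact subinterval of $(0,1)$ keeps every per-pull binary KL bounded.

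The remainder is the standard two-point argument. Suppose toward a contradiction that (1) holds, i.e.\ $\Regret_K(\ME)\leq C$ for a constant $C$ uniform in $K$; since only arm $2$ is suboptimal in $\ME$ with gap $\Delta_{\min}$, this forces $\EE_{\ME}[N_2(K)]\leq C/\Delta_{\min}$, where $N_j(K)$ is the number of pulls of arm $j$ by $\algE$. Because $\MO$ is identical across the two worlds, the only observations whose distribution differs are $\algE$'s arm-$2$ rewards on $\ME$, so the divergence decomposition gives total trajectory KL $\leq \EE_{\ME}[N_2(K)]\cdot\mathrm{KL}\big(\muE{}(2)\,\|\,\mu_\tE'(2)\big)$, a constant. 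Applying Bretagnolle--Huber to $E=\{N_1(K)>K/2\}$, together with Markov's inequality on $\ME$ (which gives $P_{\ME}(E^c)=P_{\ME}(N_2(K)\geq K/2)\to 0$), yields $P_{\ME'}(E)\geq c>0$ for all large $K$. Since arm $1$ is suboptimal by $\Delta_{\min}$ in $\ME'$, we conclude $\Regret_K(\ME')\geq \Delta_{\min}\cdot\frac{K}{2}\cdot c=\Omega(K)$, contradicting requirement (2).

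The step I expect to demand the most care is the divergence decomposition inside the tiered, parallel protocol: although $\algE$ may condition on all data generated by $\algO$ and on its own history, one must verify that, with $\MO$ held fixed and observations from $\ME$ entering the joint process only through $\algE$'s own pulls, every likelihood factor except $\algE$'s arm-$2$ rewards on $\ME$ cancels between the two worlds, so that the joint-trajectory KL reduces to the single term $\EE_{\ME}[N_2(K)]\cdot\mathrm{KL}(\cdot\|\cdot)$. I would establish this by writing the trajectory likelihood as a product over all rounds of both algorithms and cancelling the common factors. The rest --- converting constant regret into the bound on $\EE_{\ME}[N_2]$, and combining Bretagnolle--Huber with Markov --- is routine.
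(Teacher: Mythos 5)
Your proposal is correct and matches the paper's proof essentially exactly: with $\mu=\tfrac12$ your three instances $(\MO,\ME,\ME')$ coincide with the paper's $(M,M',M'')$ (the paper's $\Delta$ is $\Delta_{\min}$ and its $\Delta'$ is your closeness parameter $\Delta$), including the same boundary check that OVD for the deceptive instance holds precisely when $\Delta\geq\frac{\Delta_{\min}}{2}$, and the same reduction from constant regret to a uniform bound on the pulls of the distinguishing arm. The only difference is the final change-of-measure tool: where you invoke the divergence decomposition plus Bretagnolle--Huber, the paper lower-bounds the trajectory likelihood ratio directly by $\left(\frac{\mu-\Delta_{\min}-\Delta}{\mu+\Delta_{\min}-\Delta}\right)^{N_{\tE}(2;\tau_K)}$ and combines it with Markov's inequality on $N_{\tE}(2;\tau_K)$, handling the shared low-tier observations by exactly the cancellation you flag as the delicate step.
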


\section{Robust Tiered MAB/RL with Single Source Task}\label{sec:single_task_setting}
In this section, we study Tiered MAB and Tiered RL when a single low-tier task $\MO$ is available.
The key challenge compared with \citep{huang2022tiered} is that we do not have knowledge about whether $\MO$ and $\ME$ are similar or not so the pure exploitation will not work.
Instead, the algorithm should be able to identify whether $\MO$ and $\ME$ are close enough to transfer by data collected so far, and balance between the exploration by itself and the exploitation from $\MO$ at the same time.

To overcome the challenge, we identify a state-wise checking event, such that, under Assump. \ref{assump:opt_value_dominance}, if $s_h$ is transferable, the event is true almost all the time, and otherwise, every mistake will reduce the uncertainty so the chance the event holds is limited.
By utilizing it, our algorithm can wisely switch between optimistic exploration and pessimistic exploitation and achieve robust transfer.
In Sec. \ref{sec:single_task_bandit}, we start with the MAB setting and illustrate the main idea, and in Sec. \ref{sec:single_task_RL}, we generalize our techniques to RL setting, and discuss how to overcome the challenges brought by state transition.

\subsection{Robust Transfer in Tiered Multi-Armed Bandits}\label{sec:single_task_bandit}

The algorithm is provided in Alg. \ref{alg:Bandit_Setting}.
We choose $\algO$ as UCB, and $\algE$ as an exploitation-or-UCB style algorithm branched by a checking event in line \ref{line:checking_cond_MAB}, which is the key step to avoid negative transfer.

\begin{algorithm}
    \textbf{Initilize}: $\alpha > 2;~ \NO{}^1(i),~\NE{}^1(i),~\hmuO{}^1(i),~\hmuE{}^1(i)\gets 0,~\forall i \in \cA;$ $f(k):=1+16A^2(k+1)^2$\\
    Pull each arm at the beginning $A$ iterations\\
    \For{$k=A+1,A+2,...,K$}{
        $\omuO{}^k(i) \gets \hmuO{}^k(i) + \sqrt{\frac{2\alphaO\log f(k)}{\NOempty^{k}(i)}}, \quad \overline{\pi}_{\tO}^k \gets \arg\max_i \omuO{}^k(i),\quad \piO{}^k \gets \overline{\pi}_{\tO}^k.$\\
        $\umuO{}^k(i) \gets \hmuO{}^k(i) - \sqrt{\frac{2\alphaO\log f(k)}{\NOempty^{k}(i)}}, \quad \underline{\pi}_{\tO}^k \gets \arg\max_i \umuO{}^k(i).$\\
        $\omuE{}^k(i) \gets \hmuE{}^k(i) + \sqrt{\frac{2\alphaE\log f(k)}{\NEempty^{k}(i)}}, \quad \overline{\pi}_{\tE}^k \gets \arg\max_i \omuE{}^k(i).$\\
        \textbf{if} $\umuO{}^k(\underline{\pi}_{\tO}^k) \leq \omuE{}^k(\underline{\pi}_{\tO}^k) + \epsilon $ and $\NOempty^k(\underline{\pi}_{\tO}^k) > k / \ratio$ \label{line:checking_cond_MAB}
        \textbf{then} $\pi_{\tE}^k \gets \underline{\pi}_{\tO}^k$
        \textbf{else} $\pi_{\tE}^k \gets \overline{\pi}_{\tE}^k$.\par
        Interact $\ME/\MO$ by $\pi_{\tE}^k/\piO{}^k$; Update $\NO{}^{k+1}/\NE{}^{k+1}$ and empirical mean $\hat{\mu}_{\tO}^{k+1}/\hat{\mu}_{\tE}^{k+1}$.
    }
    \caption{Robust Tiered MAB}\label{alg:Bandit_Setting}
\end{algorithm}

\textbf{Key Insights: Separation between Transferable and Non-Transferable Cases}\quad
To understand our checking event, we consider the following two cases: (1) $\MO$ and $\ME$ are $\epsilon$-close, and (2) $i^*_\tE \neq i^*_\tO$ (in the rest cases we have $i^*_\tE = i^*_\tO$ but $\muO(i^*_\tO) > \muE(i^*_\tO) + \epsilon$, so exploiting from $\MO$ is harmless).
Recall Assump. \ref{assump:opt_value_dominance}, in Case 1, we have $\muO(i^*_\tO) \leq \muE(i^*_\tO) + \epsilon$, while in Case 2, with an appropriate choice of $\epsilon$ (e.g. $\epsilon=\frac{\tilde{\Delta}_{\min}}{4}<\frac{\Delta_{\min}}{4}$), we have $\muO(i^*_\tO) \geq \muE(i^*_\tE) - \frac{\Delta_{\min}}{2} \geq \muE(i^*_\tO) + \epsilon + \frac{\DeltaE(i^*_\tO)}{2}$, which reveals the separation between two cases.
As a result, if we can construct an uncertainty-based upper bound $\omuE{}^k(i^*_\tO)$ for $\muE{}(i^*_\tO)$, we should expect the event $\cE:=\muO{}(i^*_\tO) < \omuE{}^k(i^*_\tO) + \epsilon$ almost always be true in Case 1, while in Case 2, everytime $\cE$ occurs and $\algE$ takes $i^*_\tO$, the ``self-correction'' is triggered: the uncertainty is reduced so the estimation $\omuE{}^k(i^*_\tO)$ gets closer to $\muO{}(i^*_\tO)$, and because of the separation between $\muO{}(i^*_\tO)$ and $\muE{}(i^*_\tO)$, the number of times that $\cE$ is true is limited.
The remaining issue is that we do not know $\muO{}(i^*_\tO)$ and $i^*_\tO$, and we approximate them with LCB value $\umuO{}^k(\cdot)$ and its greedy policy.
The additional checking event $\NO{}^k(\upiO{}^k) > k/2$ is used to increase the confidence that $\upiO{}^k = i^*_\tO$ once transfer, which also contributes to reducing the regret.
Finally, to achieve constant regret, we use $\alpha$ to control the total failure rate to be $\sum_{k=1}^{+\infty} k^{-\Theta(\alpha)}=C$ for some constant $C$.
We summarize the main result in Thm. \ref{thm:regret_bound_in_bandit} and defer the proof to Appx. \ref{appx:proof_for_bandits}.

\begin{restatable}{theorem}{ThmBanditRegret}[Tiered MAB with Single Source Tasks]\label{thm:regret_bound_in_bandit}
    Under Assump. \ref{assump:unique_optimal_policy}, \ref{assump:opt_value_dominance} and \ref{assump:lower_bound_Delta_min}, by running Alg. \ref{alg:Bandit_Setting} with $\epsilon=\frac{\tilde{\Delta}_{\min}}{4}$ and $\alpha > 2$, we always have
        $
        \Regret_{K}(\ME) = O(\sum_{\DeltaE{}(i)>0} \frac{1}{\DeltaE{}(i)}\log K)$.
    Moreover, if $\ME$ and $\MO$ are $\frac{\tilde{\Delta}_{\min}}{4}$-close, we have:
        $
        \Regret_{K}(\ME) = O(\sum_{\DeltaE{}(i)>0} \frac{1}{\DeltaE{}(i)}\log\frac{A}{\Delta_{\min}})
        $.
\end{restatable}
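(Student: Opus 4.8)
The plan is to condition on a high-probability concentration event and to split $\Regret_K(\ME)=\EE[\sum_{k}\DeltaE{}(\pi_\tE^k)]$ according to the branch taken in line~\ref{line:checking_cond_MAB}: \emph{exploration} rounds, where the checking event fails and $\pi_\tE^k=\overline{\pi}_\tE^k$, and \emph{exploitation} rounds, where it holds and $\pi_\tE^k=\underline{\pi}_\tO^k$. Both claims then reduce to bounding these two contributions; the benefit bound additionally requires showing that every suboptimal action is confined to a $K$-independent initial phase.

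First I would fix the good event $\mathcal{G}$ on which $|\hmuO{}^k(i)-\muO(i)|\le b_\tO^k(i)$ and $|\hmuE{}^k(i)-\muE(i)|\le b_\tE^k(i)$ hold simultaneously for all arms $i$ and rounds $k$, where $b_\tO^k(i)=\sqrt{2\alpha\log f(k)/\NOempty^k(i)}$ and $b_\tE^k(i)=\sqrt{2\alpha\log f(k)/\NEempty^k(i)}$ are the bonuses used by the algorithm. By Hoeffding, a single interval fails with probability at most $2f(k)^{-4\alpha}$; a union bound over the at most $k$ possible counts, the $A$ arms and the two tasks gives per-round failure probability $O(A\,k\,f(k)^{-4\alpha})$, and since $f(k)=1+16A^2(k+1)^2$ and $\alpha>2$ these sum over $k$ to $O(1)$. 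Because the regret charged to $\mathcal{G}^c$ is at most one per round, it contributes only $O(1)$ in expectation. On $\mathcal{G}$, the standard gap-dependent bookkeeping for $\algO$ (UCB) shows each $j\ne i^*_\tO$ is pulled in $\MO$ at most $O(\log f(k)/\DeltaO{}(j)^2)$ times; hence for every $k$ beyond a constant $k_0=O(\tfrac{A}{\Delta_{\min}^2}\log\tfrac{A}{\Delta_{\min}})$ we have both $\underline{\pi}_\tO^k=i^*_\tO$ and $\NOempty^k(i^*_\tO)>k/2$, so in particular $b_\tO^k(i^*_\tO)\le\sqrt{4\alpha\log f(k)/k}$ is small. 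The exploration contribution is then the usual UCB term: an exploration pull of suboptimal $i$ forces $2b_\tE^k(i)\ge\DeltaE{}(i)$, bounding such pulls by $O(\log f(k)/\DeltaE{}(i)^2)$ and their regret by $O(\sum_{\DeltaE{}(i)>0}\tfrac{1}{\DeltaE{}(i)}\log K)$.

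The crux is the exploitation regret. Exploitation of an arm $\ne i^*_\tO$ needs $\NOempty^k(\underline{\pi}_\tO^k)>k/2$, which on $\mathcal{G}$ is impossible once $k\ge k_0$, so it costs $O(1)$; exploitation with $\underline{\pi}_\tO^k=i^*_\tO=i^*_\tE$ (Case~1 and the remaining cases) is optimal and free. The delicate case is $i^*_\tO\ne i^*_\tE$ (Case~2). Here the separation derived in the ``Key Insights'' gives $\muO(i^*_\tO)\ge\muE(i^*_\tO)+\epsilon+\tfrac12\DeltaE{}(i^*_\tO)$, while the triggered checking inequality $\umuO{}^k(i^*_\tO)\le\omuE{}^k(i^*_\tO)+\epsilon$ together with $\mathcal{G}$ yields $\muO(i^*_\tO)-2b_\tO^k(i^*_\tO)\le\muE(i^*_\tO)+2b_\tE^k(i^*_\tO)+\epsilon$. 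Combining the two and absorbing the small $b_\tO^k(i^*_\tO)$ for $k\ge k_0$ forces $b_\tE^k(i^*_\tO)\gtrsim\DeltaE{}(i^*_\tO)$, i.e.\ $\NEempty^k(i^*_\tO)=O(\log f(k)/\DeltaE{}(i^*_\tO)^2)$. Since every such exploitation increments $\NEempty(i^*_\tO)$, this caps the number of wrong transfers and bounds their regret by $O(\tfrac{1}{\DeltaE{}(i^*_\tO)}\log K)$, completing the robustness bound. This self-correction step is the main obstacle: it is what makes a wrong transfer self-limiting, and it must be carried out so that the uncertainty $b_\tO^k(i^*_\tO)$ inherited from $\MO$ never swamps the gap $\DeltaE{}(i^*_\tO)$ --- which is exactly why the $\NOempty^k(\underline{\pi}_\tO^k)>k/2$ test and the choice $\epsilon=\tfrac{\tilde{\Delta}_{\min}}{4}$ enter.

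For the benefit bound I would use that $\tfrac{\tilde{\Delta}_{\min}}{4}$-closeness places us in Case~1, where $i^*_\tO=i^*_\tE$ and $\muO(i^*_\tO)\le\muE(i^*_\tO)+\epsilon$. On $\mathcal{G}$ this gives $\umuO{}^k(i^*_\tO)\le\muO(i^*_\tO)\le\muE(i^*_\tO)+\epsilon\le\omuE{}^k(i^*_\tO)+\epsilon$, so for every $k\ge k_0$ the checking event holds with $\underline{\pi}_\tO^k=i^*_\tO$ and the algorithm transfers the optimal arm at zero regret. Thus all exploration and all wrong exploitation is confined to $k<k_0$, and the gap-dependent counts above, now evaluated at $k_0$ with $\log f(k_0)=O(\log\tfrac{A}{\Delta_{\min}})$, yield the $K$-independent bound $O(\sum_{\DeltaE{}(i)>0}\tfrac{1}{\DeltaE{}(i)}\log\tfrac{A}{\Delta_{\min}})$. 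The only nonstandard ingredient throughout is the coupling of the two confidence systems inside the checking event, which I would handle by keeping the $\MO$- and $\ME$-side bonuses $b_\tO^k,b_\tE^k$ explicit at every step.
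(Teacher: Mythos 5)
Your proposal is correct and follows essentially the same route as the paper's proof: the split of $\NE{}^{K}(i)$ by which branch of line~\ref{line:checking_cond_MAB} fired, the self-correction step showing that a passed check with $i^*_\tO\neq i^*_\tE$ forces $b_\tE^k(i^*_\tO)=\Omega(\DeltaE{}(i^*_\tO))$ and hence caps $\NE{}^{k}(i^*_\tO)=O(\log f(k)/\DeltaE{}^2(i^*_\tO))$, and the observation that $\frac{\tilde{\Delta}_{\min}}{4}$-closeness makes the check pass permanently after a constant time are exactly the decomposition of Lem.~\ref{lem:upper_bound_NEK} and Cases 1/2 of the paper's argument. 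The only substantive difference is bookkeeping: the paper uses per-arm burn-in thresholds $\tk_i,\bar k_i=O(\DeltaE{}^{-2}(i)\log\frac{A}{\Delta_{\min}})$ inside that decomposition so that every $\DeltaE{}(i)$-weighted count lands in the stated $\sum_i \DeltaE{}^{-1}(i)$ form, whereas your single global $k_0=O(A\Delta_{\min}^{-2}\log\frac{A}{\Delta_{\min}})$ charged at full regret leaves a $K$-independent but quantitatively larger additive constant.
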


\textbf{Comparison with \citep{rouyer2020tsallis,huang2022tiered}} \quad
As we can see, our algorithm can automatically achieve constant regret if tasks are similar while retaining near-optimal otherwise.
Notably, even  when $\ME=\MO$, our regret bound $\tilde{O}(\sum_{\DeltaE{}(i)>0} \frac{1}{\DeltaE{}(i)})$  is strictly better than  $\tilde{O}(\sqrt{\frac{A}{\Delta_{\min}}}\sqrt{\sum_{\DeltaE(i) > 0}\frac{1}{\DeltaE(i)}})$ in \citep{rouyer2020tsallis} and $\tilde{O}(\sum_{\DeltaE(i) > 0}  (A-i)(\frac{1}{\DeltaE(i)}- \frac{\DeltaE(i)}{\DeltaE(i-1)^2}))$ in \citep{huang2022tiered} under the same setting.
\subsection{Robust Transfer in Tiered Tabular RL}\label{sec:single_task_RL}

\noindent In this section, we focus on RL setting. We provide the algorithm in Alg.~\ref{alg:RL_Setting}, where we defer the details of \textbf{ModelLearning} function and the requirements for \textbf{Bonus} function to Appx.~\ref{appx:RL_add_Alg_Cond_Nota}.
In the following, we first highlight our main result. 
\begin{restatable}{theorem}{ThmRegretBound}[Tiered RL with Single Source Tasks]\label{thm:regret_upper_bound}
    Under Assump. \ref{assump:unique_optimal_policy}, \ref{assump:opt_value_dominance} and \ref{assump:lower_bound_Delta_min}, Cond.~\ref{cond:requirement_on_algO} for $\algO$ and Cond.~\ref{cond:bonus_term} for \textbf{Bonus} function,
    by running Alg.~\ref{alg:RL_Setting} with $\epsilon=\frac{\tilde{\Delta}_{\min}}{4(H+1)}$, $\alpha > 2$, an arbitrary $\lambda > 0$, we have
        $$
        \Regret_{K}(\ME) = O\Big(
        SH \sum_{h=1}^H \sum_{(s_h,a_h)\in \cS_h\times\cA_h \setminus \cC^{\lambda}_h} (\frac{H}{\Delta_{\min}}\wedge \frac{1}{\DeltaE(s_h,a_h)})\log (SAHK)\Big).
        $$
\end{restatable}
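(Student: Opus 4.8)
The plan is to lift the bandit argument behind Thm.~\ref{thm:regret_bound_in_bandit} to the episodic setting by coupling a gap-dependent surplus decomposition with the state-wise separation between transferable and non-transferable pairs. First I would set up the high-probability event $\cE_{\text{good}}$ on which all confidence estimates hold simultaneously across every $h\in[H]$ and every iteration $k$: the Bonus-based bounds $\uVE{,h}^k \le V^{*}_{\tE,h} \le \oVE{,h}^k$ for $\ME$ (valid by Cond.~\ref{cond:bonus_term}) and the bounds $\uVO{,h}^k \le V^{*}_{\tO,h}\le \oVO{,h}^k$ for $\MO$ (valid by Cond.~\ref{cond:requirement_on_algO}). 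Because the per-iteration failure probability is driven by $f(k)^{-\Theta(\alpha)}$ with $\alpha>2$, the summed failure mass $\sum_k \Pr(\neg\cE_{\text{good}})$ is a constant, so the pseudo-regret accrued off $\cE_{\text{good}}$ is $O(1)$ and is absorbed into the bound. All remaining reasoning is conducted on $\cE_{\text{good}}$.

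Second, on $\cE_{\text{good}}$ I would expand the per-episode regret $V^{*}_{\tE,1}(s_1)-V^{\piE{}^k}_{\tE,1}(s_1)$ via the value-difference lemma into the occupancy-weighted sum over $h$ of step-wise surpluses $\surplusE{,h}^k(s_h,a_h)$ induced by $\piE{}^k$, and then invoke the clipping technique of \citep{simchowitz2019non,dann2021beyond}: each surplus is either clipped to zero (when below the $\tfrac{\Delta_{\min}}{H}$-scale threshold) or charged to the local gap $\DeltaE(s_h,a_h)$. This yields, up to logarithmic factors, a cumulative bound of the form $\sum_h\sum_{(s_h,a_h)}\big(\tfrac{H}{\Delta_{\min}}\wedge\tfrac{1}{\DeltaE(s_h,a_h)}\big)\cdot\sum_k d^{\piE{}^k}_h(s_h,a_h)$, where the $SH$ prefactor originates from propagating Bonus errors through the $H$-step Bellman backups together with the union bound over $\cS$.

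Third, the tiered-specific heart of the argument is to split $\cS_h\times\cA_h$ into the transferable set $\cC^{\lambda}_h$ and its complement and bound the visit sums $\sum_k d^{\piE{}^k}_h(s_h,a_h)$ separately. For $(s_h,a_h)\in\cC^{\lambda}_h$ I would mirror the Case~1 argument of the bandit proof: since $d^{*}_{\tO}(s_h)>\lambda$ (Def.~\ref{def:transferable_states}), the near-optimality of $\algO$ forces the counts in $\MO$ at $s_h$ to grow linearly, so the checking event $\uVO{,h}^k(s_h)\le \oVE{,h}^k(s_h)+\epsilon$ together with the visitation threshold (the RL analogue of $\NO{}^k(\upiO{}^k)>k/\ratio$) holds for all but $O(1)$ episodes; on those episodes $\algE$ transfers the action $\upiO{,h}^k(s_h)=\pi^{*}_{\tO}(s_h)=\pi^{*}_{\tE}(s_h)$ by $\epsilon$-closeness, so only constant regret accrues on transferable pairs. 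For $(s_h,a_h)\notin\cC^{\lambda}_h$ I would split by the branch taken: when the checking event is false $\algE$ plays optimistic UCB and incurs exactly the clipped gap-dependent surplus above; when it is erroneously true I would run the Case~2 self-correction argument, where each mistaken transfer increases $\NE{,h}^k(s_h,\cdot)$, shrinks the Bonus uncertainty, and drives $\oVE{,h}^k(s_h)$ toward $V^{*}_{\tE,h}(s_h)$, while the separation guaranteed by Assump.~\ref{assump:opt_value_dominance} and the failure of $\epsilon$-closeness keep $\uVO{,h}^k(s_h)$ bounded away from $V^{*}_{\tE,h}(s_h)$, eventually making the checking event impossible and capping the mistaken transfers at the same $\tfrac{H}{\Delta_{\min}}\wedge\tfrac{1}{\DeltaE(s_h,a_h)}$ order.

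Summing the non-transferable contributions then gives the stated bound, while the transferable pairs and the failure events together contribute only $O(1)$. I expect the main obstacle to be the RL version of the self-correction step: in the bandit case each erroneous transfer directly increments the relevant arm count, but in RL a non-transferable $(s_h,a_h)$ is visited only when the trajectory reaches it, so I must carefully couple the shrinking of $\algE$'s uncertainty at $(s_h,a_h)$ with the global regret accounting and propagate it through the horizon — this is also where the extra $\tfrac{H}{\Delta_{\min}}$ clipping threshold and the $SH$ prefactor naturally arise. A secondary subtlety is that the lower bound transferred from $\MO$ is trustworthy only on states with $d^{*}_{\tO}>\lambda$, which is exactly why the guarantee is restricted to $\cC^{\lambda}_h$ and why the visitation threshold is needed to certify $\upiO{,h}^k(s_h)=\pi^{*}_{\tO}(s_h)$ before any transfer is committed.
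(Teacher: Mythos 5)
Your overall architecture matches the paper's: a good event with constant total failure mass, a clipped surplus decomposition in the style of \citep{simchowitz2019non}, and a three-way split of state-action pairs into transferable pairs (constant regret via the checking event and the visitation threshold certifying $\upiO{,h}^k(s_h)=\pi^*_\tO(s_h)$), blocked descendants, and pairs with $d^*_\tE(s_h,a_h)>0$ where the decaying bonus is eventually clipped away. The treatment of $\cC^{\lambda}_h$ and of the burn-in is essentially Lem.~\ref{lem:effects_on_similar_states} and Thm.~\ref{thm:regret_upper_bound_detailed}.

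However, there is a genuine gap at exactly the point you flag as "the main obstacle" but do not resolve: the loss of overestimation under the ``Trust and Exploit'' branch. Your entire second and third steps presuppose that the optimistic value $\tVE{,h}^k$ remains an overestimate of $V^*_{\tE,h}$, since both the surplus/clipping machinery and the self-correction argument (``each mistaken transfer increases $\NE{,h}^k$'') require the trajectory to keep reaching the mistaken state, which in turn requires its ancestors' value estimates not to be deflated. But when the checking event erroneously holds at a non-transferable $s_h$ and the algorithm assigns $\piE{}^k(s_h)\gets\upiO{,h}^k(s_h)\neq\pi^*_{\tE,h}(s_h)$, the naive update $\tVE{,h}^k(s_h)\gets\tQ_{\tE,h}^k(s_h,\piE{}^k)$ is no longer an upper bound on $V^*_{\tE,h}(s_h)$, and the underestimate propagates backward and can permanently suppress visits to $s_h$, stalling self-correction. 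The paper's resolution is the $\frac{1}{H}$-inflation in line~\ref{line:revision_to_overestimation} of Alg.~\ref{alg:RL_Setting} together with Thm.~\ref{thm:overestimation}: under the checking event with $\epsilon=\tilde{\Delta}_{\min}/4(H+1)$, OVD, and the convergence of PVI on frequently visited states, a wrong transferred action forces $\tQ^k_{\tE,h}(s_h,\piO{}^*)-Q^*_{\tE,h}(s_h,\piO{}^*)\geq\frac{H}{H+1}\bigl(V^*_{\tE,h}(s_h)-Q^*_{\tE,h}(s_h,\piO{}^*)\bigr)$, so adding $\frac{1}{H}$ of the estimated uncertainty interval $\tQ^k_{\tE,h}-\uQE{,h}^{\piE{}^k}$ restores the overestimation by induction over $h$. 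Also note the checking event compares $\uVO{,h}^k(s_h)$ against $\tQ_{\tE,h}^k(s_h,\upiO{,h}^k)$ at the specific transferred action, not against an upper bound on the state value; the separation argument needs this action-level comparison. Without supplying the inflation step (or an equivalent device) and proving the resulting overestimation, the proof does not go through.
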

Here the set $\cC^\lambda_h\subset \cS_h\times\cA_h$ captures the benefitable state-action pairs to be introduced later.
For simplicity, in Thm.~\ref{thm:regret_upper_bound} above, we omit all constant terms independent with $K$ that may include $\lambda^{-1}$, $\Delta_{\min}^{-1}$ or $\log 1/d^*_h(\cdot)$. The complete version of Thm.~\ref{thm:regret_upper_bound} can be found in Thm.~\ref{thm:regret_upper_bound_detailed}. As we can see, comparing with pure online learning algorithms \citep{simchowitz2019non, xu2021fine, dann2021beyond}, in our setting, $\ME$ only suffers non-constant regret on a subset of the state-action space. The $SH$ factor may be further improved by choosing better \textbf{Bonus} functions than our Example~\ref{example:choice_of_B} given in Appx. \ref{appx:RL_add_Alg_Cond_Nota}.

Different from the bandit setting, the state transition causes more challenges. In the following, we first explain the algorithm design to highlight how we overcome the difficulties, and then provide the analysis and proof sketch. Detailed proofs can be found in Appx.~\ref{appx:proof_for_RL}.

\begin{algorithm}[t]
    \textbf{Input}: 
    Ratio $\lambda \in (0,1)$; $\alpha > 2$;
    Auxiliary functions \textbf{Bonus} and \textbf{ModelLearning};
    Sequence of confidence level $(\delta_k)_{k\geq 1}$ with $\delta_k = 1/SAHk^\alpha$;
    $\epsilon := \tilde{\Delta}_{\min}/4(H+1)$ for some $\tilde{\Delta}_{\min} \leq \Delta_{\min}$\\
    \textbf{Initialize}: $D^0_{\tO}/D^0_{\tE} \gets \{\}$; $\forall k,~\underline{V}_{(\cdot),H+1}^k,\underline{Q}_{(\cdot),H+1}^k,\tVE{,H+1}^k,\tQE{,H+1}^k\gets 0$.\\
    \For{$k=1,2,...$}{
        $\piO{} \gets \algO(D^{k-1}_{\tO})$;\\
        $\{\hmPO{,h}^k\}_{h=1}^H \gets \textbf{ModelLearning}(D_{\tO}^{k-1}),~\{\bonus_{\tO,h}^k\}_{h=1}^H \gets \textbf{Bonus}(D_{\tO}^{k-1},~\delta_k)$.\\
        \For{$h=H,H-1...,1$}{
            $\uQO{,h}^k(\cdot,\cdot) \gets \max\{0, \rO{,h}(\cdot,\cdot) + \hmPO{,h}^k\uVO{,h+1}^k(\cdot,\cdot) - \bonus_{\tO,h}^k(\cdot,\cdot)\}.$ \\
            $\uVO{,h}^k(\cdot) = \max_a \uQO{,h}^k(\cdot,a),\quad \upiO{,h}^k(\cdot) \gets \arg\max_a \uQO{,h}^k(\cdot,a).$
        }
        $\{\hmPE{,h}^k\}_{h=1}^H \gets \textbf{ModelLearning}(D_{\tE}^{k-1})$;   $\quad \{\bonus_{\tE,h}^k\}_{h=1}^H \gets \textbf{Bonus}(D_{\tE}^{k-1},~\delta_k)$.\\
        \For{$h=H,H-1...,1$}{
            $\uQE{,h}^{\pi_{\tE}^k}(\cdot,\cdot) \gets \max\{0, \rE{,h}(\cdot,\cdot) + \hmPE{,h}^{k}\uVE{,h+1}^k(\cdot,\cdot) - \bonus_{\tE,h}^k(\cdot,\cdot)\}$\\
            $\tQ_{\tE,h}^{k}(\cdot,\cdot) \gets \min\{H, r_{\tE,h}(\cdot,\cdot) + \hmPE{,h}^k \tV^k_{\tE, h+1}(\cdot,\cdot) + \bonus_{\tE,h}^k(\cdot,\cdot)\}$.\\
            \For{$s_h\in\cS_h$}{
                \If{$\uVO{,h}^k(s_h) \leq \tQ_{\tE,h}^{k}(s_h,\upiO{,h}^k) + \epsilon$ and $\max_a \NO{,h}^k(s_h,a) > \frac{\lambda}{3} k$ \label{line:checking_cond_RL}}{
                    $\pi_{\tE}^k(s_h) \gets \arg\max_a \NO{,h}^k(s_h,a)$.     
                    // \blue{``Trust and Exploit'' Branch}
                }
                \lElse{
                    $\pi_{\tE}^k(\cdot) \gets \arg\max_a \tQ_{\tE,h}^{k}(\cdot,a)$.
                    // \blue{``Explore by itself'' Branch} 
                }  
                $\tV^k_{\tE,h}(s_h) \gets \min\{H, \tQ_{\tE,h}^{k}(s_h,\pi_{\tE}^k)+\frac{1}{H}(\tQ_{\tE,h}^{k}(s_h,\pi_{\tE}^k) - \uQE{,h}^{\pi_{\tE}^k}(s_h,\pi_{\tE}^k))\}$ \label{line:revision_to_overestimation}\\
                $\uVE{,h}^{\pi_{\tE}^k}(s_h) = \uQE{,h}^{\pi_{\tE}^k}(s_h,\pi_{\tE}^k)$.
            }
        }
        Deploy $\piO{}/\pi_{\tE}$ to $\MO/\ME$ and get $\tauO^k/\tauE^k$;~and update $D_{\tO}^{k},D_{\tE}^{k}$.
    }
    \caption{Robust Tiered RL}\label{alg:RL_Setting}
\end{algorithm}

\textbf{Technical Challenges and Algorithm Design}\label{sec:RL_alg_design}
Similar to MAB setting, for $\MO$ we choose an arbitrary near-optimal algorithm, and for $\ME$ we set up a state-wise checking event $\uVO{,h}^k(\cdot)\leq \tQ^k_{\tE,h}(\cdot,\pi_{\tO,h}^k) + \epsilon$ in line \ref{line:checking_cond_RL} to determine whether to exploit from $\MO$ or not. Here $\uVO{,h}^k$ and $\tQ^k_{\tE,h}$ serve as lower and upper bounds for $V^*_\tO$ and $Q^*_\tE$, and $\uVO{,h}^k$ and $\upiO{,h}^k$ are constructed by Pessimistic Value Iteration \citep{jin2021pessimism}, which can be shown to converge to $V^*_{\tO,h}$ and $\piO{,h}^*$, respectively.
Similar to \citep{huang2022tiered}, for the choice of $\algO$ and the bonus term used to construct lower/upper confidence estimation, we consider general algorithm framework under Cond.~\ref{cond:requirement_on_algO} and Cond.~\ref{cond:bonus_term} in Appx. \ref{appx:RL_add_Alg_Cond_Nota}.

To overcome challenges resulting from state transition, we make two major modifications when moving from MAB to RL setting.  First of all, because of the constraint on the optimality of $\algO$, we cannot expect $\MO$ to provide useful information on those $(s_h,a_h)$ with $d^*_\tO(s_h,a_h) = 0$ since they will not be explored sufficiently. Therefore, in the checking event in line \ref{line:checking_cond_RL}, we include $\max_a \NO{,h}^k(s_h,a) > \Theta(\lambda k)$ as a criterion, where $\lambda$ is a hyper-parameter chosen and input to the algorithm.
Intuitively, for all $s_h,a_h$, we should expect $\NO{,h}^k(s_h,a_h)\approx \tilde{O}(d^*_\tO(s_h,a_h)\cdot k)$ when $k$ is large enough.
Therefore, by comparing $\NO{,h}^k$ with $\lambda k$, we can filter out those $s_h$ with $d^*_\tO(s_h) < \lambda$ to avoid harm from inaccurate estimation.

Secondly and more importantly, different from MAB setting, besides the error occurred at a particular step, we also need to handle the error accumulated during the back-propagation process of value iteration.
In our case, this is reflected by the loss of overestimation when we incorporate selective exploitation into the optimism-based exploration framework.
To see this, suppose at some $s_h$, we have an overestimation on optimal value $Q^*_{\tE,h}$ denoted as $\tilde{Q}_{\tE,h}^k$.
When the checking criterion is satisfied, if we mimic the MAB setting, i.e., assign $\pi_{\tO,h}^k$ to $\pi_{\tE,h}^k$ and update value by $\tilde{V}_{\tE,h}^k(s_h) \gets \tilde{Q}_{\tE,h}^k(s_h,\pi_{\tE,h}^k)$, 
when $\pi^*_\tO(s_h) \neq \pi^*_{\tE,h}(s_h)$, $\tilde{V}_{\tE,h}^k(s_h)$ is no longer guaranteed to be an overestimation for $V^*_{\tE,h}(s_h)$.
As $\tilde{V}_{\tE,h}^k(s_h)$ involves in back-propagation, it will pull down the estimation value for its ancestor states, thus reducing the chance to visit $s_h$ and slowing down the ``self-correction process'' which works well in MAB setting.

The key insight to overcome such difficulty is that, if the checking event holds yet $\pi^*_\tO(s_h) \neq \pi^*_{\tE,h}(s_h)$, the gap between $\tQ^k_{\tE,h}(s_h,\piO{}^*)$ and $Q^*_{\tE,h}(s_h,\piO{}^*)$ should not be small, and we can show that $\tQ^k_{\tE,h}(s_h,\upiO{}^k) \approx \tQ^k_{\tE,h}(s_h,\piO{}^*) \geq Q^*_{\tE,h}(s_h,\piO{}^*) + \Theta(\frac{H}{H+1}\DeltaE(s_h,\piO{}^*))$ with the choice of $\epsilon = O(\tilde{\Delta}_{\min}/H)$.
Therefore, revising $\tQ^k_{\tE,h}(s_h,\upiO{}^k)$ by adding $1/H$ of the gap $\tQ^k_{\tE,h}(s_h,\upiO{}^k)-Q^*_{\tE,h}(s_h,\piO{}^*)$ (line \ref{line:revision_to_overestimation}) is enough to guarantee the overestimation.
Lastly, since $Q^*_{\tE,h}(s_h,\piO{}^*)$ in unknown,  we construct an underestimation $\uQE{,h}^{\piE{}^k}(s_h,\piO{}^*)$ and use it instead.
As a result, we have the following theorem, where the clip function is defined by $\text{Clip}[x|w] := x \cdot \mathbb{I}[x \geq w]$.
\begin{restatable}{theorem}{ThmOverEst}\label{thm:overestimation}
    There exists $k_{ost} = \Poly(S,A,H,\lambda^{-1},\Delta_{\min}^{-1})$, such that, for all $k \geq k_{ost}$,
    on some event $\cE_k$ with $\mP(\cE_k) \leq 3\delta_k$, we have $\QE{,h}^*(s_h,a_h) \leq \tQE{,h}^k(s_h,a_h),~\VE{,h}^*(s_h) \leq \tVE{,h}^k(s_h),\forall h \in [H],~s_h\in\cS_h,~a_h\in\cA_h$ and
    \begin{align}
        \VE{,1}^*(s_1) - \VE{,1}^{\pi_{\tE}^k}(s_1) \leq 2e\EE_{\pi_{\tE}^k}\left[\sum_{h=1}^H \Clip{\min\{H, 4\bonus^k_{\tE,h}(s_h,a_h)\}}{\big|\frac{\Delta_{\min}}{4eH}\vee \frac{\DeltaE(s_h,a_h)}{4e}}\right].\label{eq:regret_bound}
    \end{align}
\end{restatable}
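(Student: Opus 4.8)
My plan is to establish both conclusions — optimism ($\QE{,h}^*\le\tQE{,h}^k$, $\VE{,h}^*\le\tVE{,h}^k$) and the clipped regret bound~\eqref{eq:regret_bound} — off a single low-probability failure event, with the real work concentrated in the step folding selective exploitation into optimistic value iteration (line~\ref{line:revision_to_overestimation}). I would define the failure event $\cE_k$ as the union of three events of probability $O(\delta_k)$, giving $\mP(\cE_k)\le 3\delta_k$, so that all claimed inequalities hold off $\cE_k$: (i) the concentration making \textbf{Bonus} a valid confidence width per Cond.~\ref{cond:bonus_term}, so that $\hmPE{,h}^k V\pm\bonus^k_{\tE,h}$ sandwiches $\mPE{,h}V$ for the value functions in play; (ii) the analogue for $\MO$ together with Cond.~\ref{cond:requirement_on_algO}, which for $k\ge k_{ost}$ forces PVI to have converged on the states reachable by $\piO{,h}^*$, i.e.\ $\uVO{,h}^k(s_h)\ge\VO{,h}^*(s_h)-o(\Delta_{\min}/H)$ and $\upiO{,h}^k(s_h)=\piO{,h}^*(s_h)$ there; and (iii) the count concentration $\NO{,h}^k(s_h,a_h)=\tilde\Theta(d^*_{\tO,h}(s_h,a_h)k)$, so that the filter $\max_a\NO{,h}^k(s_h,a)>\tfrac{\lambda}{3}k$ fires only at states with $d^*_{\tO,h}(s_h)>\lambda$ and there selects $\arg\max_a\NO{,h}^k(s_h,a)=\piO{,h}^*(s_h)$, matching the action used inside the checking event. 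The polynomial $k_{ost}$ is whatever makes (ii)--(iii) simultaneous.

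I would then prove optimism by backward induction on $h$. Given $\tVE{,h+1}^k\ge\VE{,h+1}^*$, event (i) gives $\tQE{,h}^k(s_h,a_h)\ge\min\{H,\QE{,h}^*(s_h,a_h)\}=\QE{,h}^*(s_h,a_h)$ everywhere. In the ``explore'' branch $\pi_{\tE}^k(s_h)$ is $\tQE{,h}^k$-greedy, so $\tQE{,h}^k(s_h,\pi_{\tE}^k)\ge\VE{,h}^*(s_h)$ and the nonnegative revision in line~\ref{line:revision_to_overestimation} only helps. The decisive case is the ``trust'' branch with $\pi_{\tE}^k(s_h)=\piO{,h}^*(s_h)\ne\piE{,h}^*(s_h)$, where $\tQE{,h}^k(s_h,\pi_{\tE}^k)$ can dip below $\VE{,h}^*(s_h)$. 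Here I would chain the checking inequality $\uVO{,h}^k(s_h)\le\tQE{,h}^k(s_h,\upiO{,h}^k)+\epsilon$ with (ii) and Assump.~\ref{assump:opt_value_dominance} to obtain $\tQE{,h}^k(s_h,\pi_{\tE}^k)\ge\VE{,h}^*(s_h)-\tfrac{3\Delta_{\min}}{4(H+1)}$ (using $\epsilon=\tilde\Delta_{\min}/4(H+1)\le\Delta_{\min}/4(H+1)$ against the $\Delta_{\min}/2(H+1)$ of OVD), and pair it with the pessimistic bound $\uQE{,h}^{\pi_{\tE}^k}(s_h,\pi_{\tE}^k)\le\QE{,h}^*(s_h,\pi_{\tE}^k)=\VE{,h}^*(s_h)-\DeltaE(s_h,\pi_{\tE}^k)$. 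Substituting both into $\tVE{,h}^k(s_h)=(1+\tfrac1H)\tQE{,h}^k(s_h,\pi_{\tE}^k)-\tfrac1H\uQE{,h}^{\pi_{\tE}^k}(s_h,\pi_{\tE}^k)$, the $\VE{,h}^*(s_h)$ contributions cancel and leave $\tVE{,h}^k(s_h)\ge\VE{,h}^*(s_h)-\tfrac{3\Delta_{\min}}{4H}+\tfrac1H\DeltaE(s_h,\pi_{\tE}^k)$; since a suboptimal action has $\DeltaE(s_h,\pi_{\tE}^k)\ge\Delta_{\min}$, the last two terms are positive and optimism is restored. This is exactly the role the $\tfrac1H$-revision is designed to play.

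For the regret bound I would start from optimism, $\VE{,1}^*(s_1)-\VE{,1}^{\pi_{\tE}^k}(s_1)\le\tVE{,1}^k(s_1)-\VE{,1}^{\pi_{\tE}^k}(s_1)$, and unroll the surplus recursion with $a_h=\pi_{\tE}^k(s_h)$ and $E_h:=\tVE{,h}^k(s_h)-r_{\tE,h}(s_h,a_h)-\mPE{,h}\tVE{,h+1}^k(s_h,a_h)$. Event (i) bounds the concentration part of $E_h$ by $2\bonus^k_{\tE,h}$, while the revision part equals $\tfrac{1}{H+1}G_h$ with $G_h:=\tVE{,h}^k-\uVE{,h}^{\pi_{\tE}^k}$ obeying $G_h=(1+\tfrac1H)(\hmPE{,h}^kG_{h+1}+2\bonus^k_{\tE,h})$; the factor $(1+\tfrac1H)^H\le e$ produces the leading $e$ and inflates the effective bonus multiple from $2\bonus$ to $4\bonus$. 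Crucially, the optimism computation above also shows that at every visited pair with $\DeltaE(s_h,a_h)>0$ the surplus $\tQE{,h}^k(s_h,a_h)-\QE{,h}^*(s_h,a_h)$ is at least $\tfrac12\DeltaE(s_h,a_h)$, which is exactly the ``surplus dominates the gap'' precondition of the clipping lemma of \citep{simchowitz2019non,dann2021beyond}. Invoking it (equivalently, starting from the exact identity $\VE{,1}^*(s_1)-\VE{,1}^{\pi_{\tE}^k}(s_1)=\EE_{\pi_{\tE}^k}[\sum_h\DeltaE(s_h,a_h)]$) converts the bonus-surplus sum into the clipped gap-dependent form of~\eqref{eq:regret_bound}, the $(1+\tfrac1H)$-amplification accounting for the $2e$ prefactor and the $4e$ in the clip threshold.

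The hard part will be the trust-branch optimism: it is the single place where OVD, the threshold $\epsilon=\tilde\Delta_{\min}/4(H+1)$, the count filter $\lambda$, and the precise $\tfrac1H$-revision must all interlock, and where PVI convergence (hence $k_{ost}$) is genuinely used — the cancellation yields a strictly positive $\tfrac{\Delta_{\min}}{4H}$ slack only because every constant lines up, and shrinking $\epsilon$ or dropping the revision breaks it. A secondary difficulty is making the $G_h$-recursion rigorous without leaking more than the $e$ factor, since a careless per-step bound would let the revision terms accumulate an extra $H$.
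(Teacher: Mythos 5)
Your plan follows the paper's proof essentially step for step: the same three-part failure event ($\cEBk$, $\cEOk$, $\cECk$), the same backward induction with the explore / trust-and-agree / trust-and-disagree trichotomy, the same chaining of the checking event, PVI convergence, OVD, and the $1/H$-revision in the hard case, and the same surplus recursion with the $(1+1/H)^H\le e$ amplification feeding into the Simchowitz--Jamieson clipping step. The only slip is arithmetic: the trust-branch bound should be $\tQE{,h}^k(s_h,\pi_{\tE}^k)\ge \VE{,h}^*(s_h)-\frac{\Delta_{\min}}{H+1}$ (your $\frac{3\Delta_{\min}}{4(H+1)}$ omits the PVI convergence error $\frac{\Delta_{\min}}{4(H+1)}$), after which the cancellation against $\frac{1}{H}\DeltaE(s_h,\pi_{\tE}^k)\ge\frac{\Delta_{\min}}{H}$ closes with zero rather than positive slack --- the argument still goes through.
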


\textbf{Benefits of Knowledge Transfer}\quad 
We first take a look at $k \geq k_{ost}$. As implied from Eq.~\eqref{eq:regret_bound}, we can upper bound the regret on each $s_h,a_h$ 
by summing over the RHS of Eq.~\eqref{eq:regret_bound}.
Note that by Cond.~\ref{cond:bonus_term}, $b^k_{\tE,h}(s_h,a_h) = {O}(\frac{\Poly(SAH)\log k}{\sqrt{\NE{,h}^k(s_h,a_h)}})$ and $\EE[\NE{,h}^k(s_h,a_h)] = \sum_{k'=1}^{k-1} d^{\piE{}^k}(s_h,a_h)$, we can establish the near-optimal 
regret bound with similar techniques in \citep{simchowitz2019non} regardless of the similarity between $\ME$ and $\MO$.
Moreover, because of the knowledge transfer from $\MO$, we can achieve better regret bounds for $\ME$. 
In the following, we characterize three subclasses of state-action pairs, on which $\algE$ only suffers constant regret.
\emph{First of all}, for those $s_h \in \cZ^{\lambda}_h$, we can expect the checking event almost always hold for arbitrary $k$. Hence, when $k$ is large enough, $\piE{,h}^k(s_h) = \upiO{,h}^k(s_h)\approx \pi^*_{\tE,h}(s_h)$, implying $\algE$ will almost never take sub-optimal actions at $s_h$ since then. 
We denote this first subclass as $\cC^{1,\lambda}_h := \{(s_h,a_h) | s_h \in \cZ^{\lambda}_h, a_h \neq \pi_{\tE,h}^*(s_h)\}$.
\emph{Secondly}, note that, given a state $s_h$, if all possible trajectories starting from $s_1$ to $s_h$ have overlap with $\cC^{1,\lambda}_{h'}$ for some $h'\in[h-1]$, when $k$ is large enough, $\pi_{\tE}^k$ will almost have no chance to reach $s_h$ and will not suffer the regret at $s_h$.
For convenience, we define function $\text{Block}(\{\cC^{1,\lambda}_{\ph}\}_{\ph=1}^{h-1}, s_h)$ which takes \texttt{True} for those states described above, and takes \texttt{False} for the others. Then, we define the second subclass by $\cC^{2,\lambda}_h := \{(s_h,a_h)|\text{Block}(\{\cC^{1,\lambda}_{\ph}\}_{\ph=1}^{h-1}, s_h) = \texttt{True},~s_h\not\in \cZ^{\lambda}_h,~a_h\in\cA_h\}$.
\emph{Finally}, for those $s_h,a_h$ with $d^*_{\tE}(s_h,a_h) > 0$, we can show $\NE{,h}^k(s_h,a_h)\approx d^*_{\tE}(s_h,a_h) k$. Therefore, $b^k_{\tE,h}(s_h,a_h) \propto \log k/\sqrt{\NE{,h}^k(s_h,a_h)}$ in Eq.~\eqref{eq:regret_bound} will decay and the clipping operator will take effect, which leads to constant regret.
This third subclass is denoted by $\cC^*_h := \{(s_h,a_h) | d^*_\tE(s_h,a_h) > 0\}$.
Based on the above discussion, we define $\cC_h^{\lambda}:=\cC_h^{\lambda,1} \cup \cC_h^{\lambda,2} \cup \cC^*_h$ to be the benefitable states set in Thm.~\ref{thm:regret_upper_bound}.

For $k \leq k_{ost}$, for the lack of overestimation, we simply use $H$ to upper bound the value gap $V^* - V^{\pi^k_{\tE}}$. This results in a $\Poly(S,A,H,\lambda^{-1},\Delta^{-1}_{\min})$ burn-in term, which was omitted in Thm.~\ref{thm:regret_upper_bound} since it is independent with $K$.
Besides, by the definition of $k_{ost}$ in Thm.~\ref{thm:overestimation}, we can see the trade-off of choosing $\lambda$: a smaller $\lambda$ can enlarge $\cC^\lambda_h$ so we have constant regret on more state-action pairs, while it also results in the delay of overestimation by the larger $k_{ost}$.

\textbf{Constant Regret in the Entire MDP}\quad
We may expect constant regret in the entire $\ME$ in some special cases.
Note that, if $\forall h\in[H], \forall s_h$ with $d^*_{\tE}(s_h) > 0$, $s_h \in \cZ^{\lambda}_h$, we have $\cC^{\lambda}_h = \cS_h\times\cA_h$, $\Regret_K(\ME)$ will be independent w.r.t. $K$.
From this perspective, if $\lambda$ is chosen appropriately, e.g. $\lambda \leq \min_{s_h} d^*_{\tE}(s_h)$, we can recover the constant regret under the setting $\MO = \ME$ in \citep{huang2022tiered}.

\textbf{Choice of $\lambda$}\quad In this paper, we do not treat $\lambda$ as a parameter to optimize. In practice, without prior knowledge about $\max_{s_h} d^*_\tO(s_h)$, one may choose $\lambda = O(1/S)$ to ensure some chance that transferable states exist, since there exists at least some states satisfying $d^*_\tO(s_h) \geq 1/S$.

\section{Robust Tiered MAB/RL with Multiple Low-Tier Tasks}\label{sec:multiple_source_task}

Now, we focus on the case when a source task set $\cM_\tO := \{M_{\tO,w}\}_{w=1}^W$ is available (see Frw.~\ref{framework:general_learning_framework_MT} in Appx.~\ref{appx:supplementary_introduction}).
Our objective is to achieve benefits on those states $s_h$ as long as there exists some task $w \in [W]$ such that $M_{\tO,w}$ and $\ME$ are close on $s_h$, while retaining near-optimal regret in other cases under Assump.~\ref{assump:opt_value_dominance}.
The key challenge comparing with single task case is that, $\algE$ has to identify for each state which task in $\cM_\tO$ is the appropriate one to leverage.
The main novelty and contribution in this section is a task selection mechanism we call \emph{``Trust till Failure''}, which can automatically adapt to the similar task if it exists.
We first highlight the main results for MAB and RL setting.
\begin{restatable}{theorem}{ThmMTMAB}[Tiered MAB with Multiple Source Tasks]\label{thm:regret_bound_MAB_MT}
    Under Assump. \ref{assump:unique_optimal_policy}, \ref{assump:opt_value_dominance}, and \ref{assump:lower_bound_Delta_min}, by running Alg.~\ref{alg:Bandit_Setting_MT} with $\cM_\tO =\{M_{\tO,w}\}_{w=1}^W$ and $\MO$, with $\epsilon=\frac{\tilde{\Delta}_{\min}}{4}$ and $\alpha > 2$, we always have:
        $
        \Regret_{K}(\ME) = O(\sum_{\DeltaE{}(i) > 0} \frac{1}{\DeltaE{}(i)}\log (\Task K)).$
    Moreover, if at least one task in $\cM_\tO$ is $\frac{\tilde{\Delta}_{\min}}{4}$-close to $\ME$, we further have:
        $
        \Regret_{K}(\ME) = O(\sum_{\DeltaE{}(i) > 0} \frac{1}{\DeltaE{}(i)}\log\frac{A\Task}{\Delta_{\min}}).
        $
\end{restatable}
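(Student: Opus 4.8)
The plan is to lift the single-source analysis behind Thm.~\ref{thm:regret_bound_in_bandit} to the source set $\cM_\tO=\{M_{\tO,w}\}_{w=1}^W$, where the two genuinely new ingredients are (i) a union bound over the $W$ sources in the definition of the good event, which is exactly what upgrades the $\log K$ and $\log\frac{A}{\Delta_{\min}}$ factors to $\log(\Task K)$ and $\log\frac{A\Task}{\Delta_{\min}}$, and (ii) an accounting argument for the \emph{Trust till Failure} selector showing that trusting wrong sources costs only a bounded (and, when a close source exists, constant) number of suboptimal pulls \emph{in total}, rather than a cost that scales linearly in $\Task$.

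First I would set up the good event. Taking the confidence radius so that its argument absorbs an extra $\Task$ factor (e.g.\ replacing $f(k)=1+16A^2(k+1)^2$ by $f(k)=1+16A^2\Task^2(k+1)^2$), a union bound over the $W$ sources, the $A$ arms, and all iterations makes the total failure probability $\sum_k \Theta(\Task A)\,f(k)^{-\alpha}$, which for $\alpha>2$ still sums to a constant independent of $K$. On this event all bounds $\umuO{,w}^k,\omuO{,w}^k$ (every source $w$) and $\omuE{}^k$ (task $\ME$) are simultaneously valid with width $\Theta(\sqrt{\log(\Task A k)/N})$; this $\sqrt{\log\Task}$ inflation is the sole place the $\log\Task$ overhead enters.

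Next I would reuse the case separation of the single-source proof, now applied to each source individually, and then assemble. Under Assump.~\ref{assump:opt_value_dominance} (OVD holds for every $M_{\tO,w}$), fix $w$: if $M_{\tO,w}$ is $\epsilon$-close to $\ME$ then $\mu_{\tO,w}(i^*_{\tO,w})\le\muE{}(i^*_{\tO,w})+\epsilon$, so its checking event holds almost always and exploiting $w$ is near-optimal; otherwise $i^*_{\tO,w}\ne i^*_\tE$ and OVD forces the quantitative gap $\mu_{\tO,w}(i^*_{\tO,w})\ge\muE{}(i^*_\tE)-\tfrac{\Delta_{\min}}{2}\ge\muE{}(i^*_{\tO,w})+\epsilon+\tfrac{\DeltaE{}(i^*_{\tO,w})}{2}$, so each trust-and-pull of $w$'s recommended arm shrinks $\omuE{}^k$ at that arm and the event $\umuO{,w}^k(\cdot)\le\omuE{}^k(\cdot)+\epsilon$ can pass only $\tilde O(\log(\Task A)/\DeltaE{}(\cdot)^2)$ times before being permanently falsified. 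The crucial observation is that this self-correction tightens the \emph{shared} uncertainty $\omuE{}^k(i)$ of the pulled $\ME$-arm $i$, so the number of suboptimal trust-pulls of any fixed arm $i$ is $\tilde O(\log(\Task A)/\DeltaE{}(i)^2)$ \emph{regardless of which source recommended it}; summing over arms and adding the standard UCB bound from the ``explore by itself'' branch yields the always-valid $\Regret_K(\ME)=O(\sum_{\DeltaE{}(i)>0}\tfrac{1}{\DeltaE{}(i)}\log(\Task K))$. For the improved bound, when some $M_{\tO,w^\star}$ is $\tfrac{\tilde\Delta_{\min}}{4}$-close its checking event never fails, so \emph{Trust till Failure} locks onto $w^\star$ (or an equally good source) once all wrong sources have exhausted their finite self-correction budgets, which occurs within the first $O(\tfrac{A}{\Delta_{\min}}\log\frac{A\Task}{\Delta_{\min}})$ rounds; thereafter no regret is incurred, giving the constant bound $O(\sum_{\DeltaE{}(i)>0}\tfrac{1}{\DeltaE{}(i)}\log\frac{A\Task}{\Delta_{\min}})$.

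The main obstacle is exactly this per-arm (rather than per-(arm,source)) accounting: a priori the selector could be fooled a few times by each of the $\Task$ distinct bad sources, making mistakes pile up additively in $\Task$. Ruling this out requires arguing that, because the checking threshold $\omuE{}^k(\cdot)+\epsilon$ depends only on the pulled $\ME$-arm's tightened confidence, repeated bad recommendations---even from \emph{different} sources pointing at the same suboptimal arm---become progressively harder to accept; converting this shared-uncertainty reduction into a clean per-arm mistake count while correctly threading the $\sqrt{\log(\Task A k)}$ width through the regret decomposition is the delicate part, and is where the \emph{Trust till Failure} scheduling (versus naively treating the $\Task$ sources independently) does the real work.
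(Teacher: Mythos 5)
Your treatment of the first, always-valid bound is essentially the paper's: inflate $f(k)$ by a factor of $\Task$ so the union bound over sources costs only $\log\Task$ inside the confidence radius, decompose $\NE{}^K(i)$ per arm (the checking event compares every source against the \emph{same} optimistic estimate $\omuE{}^k(i)$, so mistakes are charged to arms rather than to arm--source pairs), and close with the standard UCB terms. That part is sound and matches the paper's Cases 2-(a)--(c).

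The gap is in your route to the second, constant bound. You argue that each wrong source has a ``finite self-correction budget,'' is then ``permanently falsified,'' and that the selector therefore ``locks onto'' a good source after $O(\frac{A}{\Delta_{\min}}\log\frac{A\Task}{\Delta_{\min}})$ rounds. Neither half of this holds as stated. First, the budget is not finite: a trust-pull of arm $i$ is consistent with the checking event whenever $\NE{}^k(i) \lesssim \alpha\log f(k)/\DeltaE(i)^2$, and $\log f(k)$ grows with $k$; a source eliminated at time $k_0$ re-enters $\cI^k$ at sufficiently large $k$ once the radius $\sqrt{2\alpha\log f(k)/\NE{}^k(i)}$ has regrown, so the total budget over horizon $K$ is $O(\log(\Task K)/\DeltaE(i)^2)$ --- which reproduces the first bound but cannot yield a constant one. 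Second, ``marginally similar'' sources ($\epsilon'$-close for $\epsilon'$ slightly above $\epsilon$) can occupy the trusted slot for arbitrarily long before failing, and when they fail at a late round the uniform re-selection from $\cI^k$ can land on a dissimilar source again. The paper's actual mechanism, which your sketch omits, is (i) the action-inheritance rule in line \ref{line:bandit_MT_TS_mid} of Alg.~\ref{alg:Bandit_Setting_MT}: once the selector reaches any task whose recommended arm is $i^*_\tE$, every subsequent hand-over preserves the recommended action, so the played arm stays $i^*_\tE$ no matter how often the trusted task changes thereafter; and (ii) the interval pigeonhole of Lem.~\ref{lem:absorbing_to_similar_task}: if on all of $[k/2,k]$ the trusted task recommends a suboptimal arm, one of the $A-1$ suboptimal arms must absorb at least $k/(2(A-1))$ pulls on that interval, which for $k\ge k^*=\Theta(\frac{\alpha A}{\Delta_{\min}^2}\log\frac{\alpha A\Task}{\Delta_{\min}})$ contradicts the checking event except with probability $O(A/k^{2\alpha-2})$. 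Summing this tail gives constant regret after $k^*$, and the per-arm budgets evaluated at horizon $k^*$ give the $\log\frac{A\Task}{\Delta_{\min}}$ factor before it. (Also, your burn-in $O(\frac{A}{\Delta_{\min}}\log\cdot)$ should be $O(\frac{A}{\Delta_{\min}^2}\log\cdot)$: resolving a gap of order $\Delta_{\min}$ requires $\Delta_{\min}^{-2}$ samples.)
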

\begin{restatable}{theorem}{ThmRegretBoundRLMT}[Tiered RL with Multiple Source Tasks]\label{thm:regret_bound_RL_MT}
    Under Assump. \ref{assump:unique_optimal_policy}, \ref{assump:opt_value_dominance}, \ref{assump:lower_bound_Delta_min}, and 
    Cond.~\ref{cond:bonus_term},~\ref{cond:requirement_on_algO_MT},
    by running Alg.~\ref{alg:RL_Setting_MT} in Appx.~\ref{appx:RL_add_Alg_Cond_Nota_MT} with $\epsilon=\frac{\tilde{\Delta}_{\min}}{4(H+1)}$, $\alpha > 2$ and any $\lambda > 0$, we have 
    $$
    \Regret_{K}(\ME) = O\Big(SH\sum_{h=1}^H \sum_{(s_h,a_h) \in \cS_h\times\cA_h \setminus \cC^{\lambda,[W]}_h} (\frac{H}{\Delta_{\min}}\wedge \frac{1}{\DeltaE(s_h,a_h)})\log (SAH\Task K)\Big).
    $$
\end{restatable}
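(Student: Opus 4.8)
The plan is to follow the same two-stage architecture as the single-source proof of Thm.~\ref{thm:regret_upper_bound}: first establish a multi-task analogue of the overestimation guarantee Thm.~\ref{thm:overestimation}, then sum the resulting clipped-bonus regret decomposition. Concretely, I would show there is a burn-in time $k_{ost}=\Poly(S,A,H,W,\lambda^{-1},\Delta_{\min}^{-1})$ (now also polynomial in $W$) and a per-round failure event $\cE_k$ with summable probability such that, for all $k\ge k_{ost}$ off $\cE_k$, the optimistic estimate stays valid, $\QE{,h}^*(s_h,a_h)\le \tQE{,h}^k(s_h,a_h)$, and the per-round regret of $\pi_{\tE}^k$ obeys the same clipped-bonus bound as Eq.~\eqref{eq:regret_bound}. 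To cover the $W$ source tasks I take the confidence schedule $\delta_k=1/(SAH\Task k^\alpha)$ in place of $1/(SAHk^\alpha)$, so that a single union bound over all tasks controls the total failure; since $\alpha>2$ these masses stay summable, and the inflated confidence is exactly what turns the $\log(SAHK)$ of Thm.~\ref{thm:regret_upper_bound} into the claimed $\log(SAH\Task K)$.

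The new core of the argument is the analysis of the \emph{``Trust till Failure''} task-selection rule, which replaces the single checking event of line~\ref{line:checking_cond_RL} by a state-wise scan over the candidate tasks. At each $s_h$ I maintain a pointer into $[W]$ and trust the current task $w$, taking its pessimistic greedy action $\upiO{,w,h}^k(s_h)$, only while its checking event $\uVO{,w,h}^k(s_h)\le \tQ_{\tE,h}^k(s_h,\upiO{,w,h}^k)+\epsilon$ together with $\max_a\NO{,w,h}^k(s_h,a)>\tfrac{\lambda}{3}k$ holds, advancing the pointer on failure and permanently discarding the failed task. The two facts to prove are: (i) \emph{persistence}: if $s_h$ is $\lambda$-transferable for task $w$ (i.e.\ $s_h\in\cZ^{\lambda,w}_h$), then for $k\ge k_{ost}$ its checking event holds with high probability, so a transferable task is almost never discarded; and (ii) \emph{bounded elimination}: for a non-transferable task, every round in which it is trusted but $\pi^*_{\tO,w}(s_h)\ne\pi^*_{\tE,h}(s_h)$ triggers the same ``self-correction'' as in the single-source case, since the $1/H$-revision of line~\ref{line:revision_to_overestimation} keeps $\tQ_{\tE,h}^k(s_h,\cdot)$ an overestimation while the visit to the wrong action shrinks the uncertainty, so each wrong task can be trusted only $\Poly(S,A,H,\lambda^{-1},\Delta_{\min}^{-1})$ times before its event fails persistently. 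Combining (i) and (ii), after each state has cycled through at most $W$ candidates the pointer settles, and the total number of pre-settlement mistakes is $O(W\cdot\Poly)$, which I fold into $k_{ost}$ rather than the leading term; this is precisely why $W$ enters only through the $\log W$ inside the logarithm.

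With the selection rule controlled, the remaining steps mirror Sec.~\ref{sec:single_task_RL}. I define the stitched benefitable set by taking the first subclass to be $\cC^{1,\lambda,[W]}_h:=\{(s_h,a_h)\mid s_h\in\bigcup_{w\in[W]}\cZ^{\lambda,w}_h,\ a_h\ne\pi^*_{\tE,h}(s_h)\}$, then the ``blocked'' subclass $\cC^{2,\lambda,[W]}_h$ relative to $\{\cC^{1,\lambda,[W]}_{h'}\}_{h'<h}$ exactly as before, and $\cC^*_h:=\{(s_h,a_h)\mid d^*_\tE(s_h,a_h)>0\}$ unchanged, setting $\cC^{\lambda,[W]}_h:=\cC^{1,\lambda,[W]}_h\cup\cC^{2,\lambda,[W]}_h\cup\cC^*_h$. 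On $\cC^{1,\lambda,[W]}_h$ persistence gives $\pi_{\tE,h}^k(s_h)=\pi^*_{\tE,h}(s_h)$ for large $k$; $\cC^{2,\lambda,[W]}_h$ is unreachable in the limit; and on $\cC^*_h$ the bonus $\bonus^k_{\tE,h}\propto \log k/\sqrt{\NE{,h}^k}$ decays so the clip in Eq.~\eqref{eq:regret_bound} activates, each giving $O(1)$ regret. For $(s_h,a_h)\notin\cC^{\lambda,[W]}_h$ I sum the clipped bonuses with the standard occupancy/pigeonhole argument of \citep{simchowitz2019non}, obtaining the stated $SH(\tfrac{H}{\Delta_{\min}}\wedge\tfrac{1}{\DeltaE(s_h,a_h)})\log(SAH\Task K)$ per pair.

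The main obstacle I anticipate is step (ii) of the selection analysis: proving that the self-correction remains quantitatively effective when the trusted task is the \emph{wrong} one among several candidates. Unlike the single-source case, a discarded task may have corrupted the back-propagated value $\tQ_{\tE,h}^k$ at ancestors of $s_h$ before being eliminated, so I must verify that the $1/H$-revision still certifies overestimation uniformly over every candidate task simultaneously and that the elimination counts compose additively across $W$ rather than multiplicatively. Establishing this uniform-over-$w$ overestimation invariant, and confirming that it degrades $k_{ost}$ only polynomially in $W$ while leaving the leading regret untouched, is where the bulk of the technical work lies.
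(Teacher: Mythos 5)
Your high-level architecture (a multi-task overestimation theorem under the schedule $\delta_k = 1/SAH\Task k^\alpha$, a generic $\log(\Task K)$ regret bound, then constant regret on a stitched benefitable set) matches the paper, but there are two genuine gaps in the part that is actually new, namely the analysis of the task-selection rule.

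First, the mechanism you analyze is not the one in Alg.~\ref{alg:RL_Setting_MT}, and the difference matters. You describe a pointer that ``permanently discards'' a failed task and claim each wrong task's checking event eventually ``fails persistently.'' Neither holds: the candidate set $\cI^k_{s_h}$ is recomputed every round with a confidence level that shrinks in $k$, so the bonus $\propto\sqrt{\log(1/\delta_k)/\NE{,h}^k(s_h,a_h)}$ grows for a fixed visit count and a previously rejected task re-enters $\cI^k_{s_h}$ once $s_h$ has not been visited for a while. This is exactly the ``marginally similar task'' obstruction the paper isolates in the bandit discussion, and the resolution is the action-inheritance priority (hand over to a task recommending the same action as the previously trusted one), which guarantees that once the trusted recommendation equals $\pi^*_{\tE}(s_h)$ at some round in a window $[k/2,k]$, the trusted task can only switch among tasks recommending that same action thereafter (Lem.~\ref{lem:absorb_to_sim_task_RL}). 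Your proposal never invokes this rule, and without it the settling argument does not close.

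Second, your first subclass $\cC^{1,\lambda,[W]}_h$ drops the constraint $d^*_{\tE}(s_h) > 0$ that appears in Def.~\ref{def:transferable_states_MT}, and your ``bounded elimination'' step cannot supply it. To rule out a wrong-but-passing task at $s_h$ you need $\tQ_{\tE,h}^k(s_h,\cdot)$ to contract at $s_h$, which requires visits to $s_h$ under $\piE{}^k$; because $\delta_k$ shrinks, the number of visits needed to reject a wrong task grows like $\log k$, so at a state with $d^*_{\tE}(s_h) = 0$ the accumulated mistakes are $O(\log K)$ rather than $O(1)$ --- no improvement over the generic bound. The paper's proof of Lem.~\ref{lem:absorb_to_sim_task_RL} makes this explicit through the event $\cE_{k,2}$ (that $s_h$ is reached with positive probability within $[k/2,k]$), whose control uses $d^*_{\tE}(s_h) > 0$ together with the generic sublinear bound of Thm.~\ref{thm:regret_bound_RL_MT_general} and Azuma--Hoeffding, and yields a state-dependent burn-in $\iota^*_{s_h}$ scaling with $1/d^*_{\tE}(s_h)$, which is absent from your account. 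Restoring the $d^*_{\tE}(s_h)>0$ condition in the benefitable set and replacing the discard-pointer analysis with the action-inheritance argument would bring the proof in line with the paper's.
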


\begin{algorithm}
    \textbf{Initilize}: $\alpha > 2;~ \NO{}^1(i),~\NE{}^1(i),~\hmuO{}^1(i),~\hmuE{}^1(i)\gets 0,~\forall i \in \cA;$ $f(k):=1+16TA^2(k+1)^2$\\
    Pull each arm at the beginning $A$ iterations. Set $w^A \gets \Null$.\\
    \For{$k=A+1,2,...,K$}{
        \For{$\task=1,2...,\Task$}{
            $\omuO{,\task}^k(i) \gets \hmuO{,\task}^k(i) + \sqrt{\frac{2\alphaO\log f(k)}{\NO{,\task}(i)}}, \quad \overline{\pi}_{\tO,\task}^k \gets \arg\max_i \omuO{,\task}^k(i),\quad\piO{,\task}^k \gets \overline{\pi}_{\tO,\task}^k.$\\
            $\umuO{,\task}^k(i) \gets \hmuO{,\task}^k(i) - \sqrt{\frac{2\alphaO\log f(k)}{\NO{,\task}(i)}}, \quad \underline{\pi}_{\tO,\task}^k \gets \arg\max_i \umuO{,\task}^k(i).$
        }
        $\omuE{}^k(i) \gets \hmuE{}^k(i) + \sqrt{\frac{2\alphaE\log f(k)}{\NEempty^{k}(i)}}, \quad \overline{\pi}_{\tE}^k \gets \arg\max_i \omuE{}^k(i).$\\
        $\cI^k \gets \{\task\in[\Task]| \umuO{,\task}^k(\underline{\pi}_{\tO,\task}^k) \leq \omuE{}^k(\underline{\pi}_{\tO,\task}^k) + \epsilon $ and $\NO{,\task}^k(\underline{\pi}_{\tO,\task}^k) > k / \ratio\}$ \label{line:bandit_MT_TS_start}\\
        \lIf{$\cI^k = \emptyset$}{
            $\task^k \gets \Null,\quad \piE{}^k \gets \overline{\pi}_{\tE}^k$
        }
        \Else{
            \lIf{$\task^{k-1} \neq \Null$ and $\task^{k-1} \in \cI^k $}{
                $\task^k \gets \task^{k-1},\quad \piE{}^k \gets \upiO{,\task^k}^k$
            }
            \ElseIf{$\task^{k-1} \neq \Null$ and $\exists \task \in \cI^k$ such that $\piE{}^{k-1} = \arg\max_i \NO{,\task}^k(i)$}{
                    $\task^k \gets w,\quad \piE{}^k \gets \upiO{,w}^k$\label{line:bandit_MT_TS_mid}
            }
            \lElse{
                $\task^k \sim \text{Unif}(\cI^k),\quad \piE{}^k \gets \upiO{,\task^k}^k$
            }
        }\label{line:bandit_MT_TS_end}
        Interact with $\ME/\{M_{\tO,}\}_{\task=1}^\Task$ by $\pi_{\tE}^k/\{\piO{,\task}^k\}_{\task=1}^\Task$; \\
        Update $\{\NO{,w}^{k+1}\}_{w=1}^W,\NE{}^{k+1}$ and empirical mean $\{\hat{\mu}_{\tO,w}^{k+1}\}_{w=1}^W,\hat{\mu}_{\tE}^{k+1}$ for each arm.
    }
    \caption{Robust Tiered MAB with Multiple Source Tasks}\label{alg:Bandit_Setting_MT}
\end{algorithm}
For the lack of space, in the following, we only analyze the bandit setting to explain the key idea of our task selection strategy.
For the RL setting, we defer to Appx.~\ref{appx:RL_multi_task_version} the algorithm Alg.~\ref{alg:RL_Setting_MT}, detailed version of Thm.~\ref{thm:regret_bound_RL_MT} (Thm.~\ref{thm:regret_upper_bound_detailed}), defintion of transferable set $\cC^{\lambda,[W]}_h$ (Def.~\ref{def:benefitable_states_MT}), and technical details.

\textbf{Algorithm Design and Proof Sketch for Bandit Setting}~The algorithm is provided in Alg. \ref{alg:Bandit_Setting_MT}.
Comparing with Alg. \ref{alg:Bandit_Setting} in single task setting, 
the main difference is the task selection strategy from line \ref{line:bandit_MT_TS_start} to line \ref{line:bandit_MT_TS_end}.
We first examine each source task with a checking event similar to single task setting, and collect those feasible tasks passing the test to $\cI^k$.
Intuitively, for those $M_{\tO,w^*}$ close to $\MO$, we expect $M_{\tO,w} \in \cI^k$ holds almostly for arbitrary $k > 0$, while for the other $M_{\tO,w'}$, if it takes the position of $w^k$, following $M_{\tO,w'}$ will reduce the uncertainty and it will be ruled out from $\cI^k$, eventually. So we expect $w^k$ can ``escape'' from dissimilar source tasks but be absorbed to the similar task if exists.
Therefore, if $\cI^k$ is empty, $\algE$ will do exploration by itself.
Otherwise, we choose one from $\cI^k$ to transfer the action until it fails on the checking event.
However, for any $\epsilon$ the algorithm chosen, those ``marginally similar'' source tasks (denoted as $M_{\tO,\tilde{w}}$), which are $\epsilon'$-close to $\ME$ for some $\epsilon'$ only slightly larger than $\epsilon$, may cause some trouble.
Because the checking event will finally eliminate $M_{\tO,\tilde{w}}$ since they are not $\epsilon$-close, but it may occupy the position $w^k$ for a long time before elimination, especially when $\epsilon'$ is extremely close to $\epsilon$.
After eliminating $M_{\tO,\tilde{w}}$, $\algE$ needs to re-select one from $\cI^k$. Now since other sub-optimal arms in $\ME$ haven't been chosen for a long time and the confidence level $\delta_k=O(1/k^\alpha)$ is decreasing, $\cI^k$ will include those dissimilar tasks again, which causes difficulty to identify the true similar task.
To solve this issue, once the previous trusted task fails, we give priority to the task recommending the same action as the previous one (line \ref{line:bandit_MT_TS_mid}).
As a result, since $M_{\tO,\tilde{w}}$ and $M_{\tO,w^*}$ share the  optimal action, after the elimination of $M_{\tO,\tilde{w}}$, we can expect $w^k$ to only switch among those tasks $M_{\tO,w}$ with $\pi^*_{\tO,w} = \pi^*_\tE$.
We highlight this technical novelty to Lem.~\ref{lem:absorbing_to_similar_task} below, and defer all the proofs to Appx.~\ref{appx:bandit_multi_task_version}.
\begin{restatable}{lemma}{LemAbsSimTask}[Absorbing to Similar Task]\label{lem:absorbing_to_similar_task}
    Under Assump. \ref{assump:unique_optimal_policy}, \ref{assump:opt_value_dominance} and \ref{assump:lower_bound_Delta_min},
    there exists a constant $c^*$, s.t., if there exists at least one $\task^* \in [\Task]$ such that $M_{\tO,\task^*}$ is $\frac{\tilde{\Delta}_{\min}}{4}$-close to $\ME$, by running Alg.~\ref{alg:Bandit_Setting_MT} with $\epsilon = \frac{\tilde{\Delta}_{\min}}{4}$ and $\alpha > 2$, for any $k \geq k^* := c^*\frac{\alpha A }{\Delta_{\min}^2}\log\frac{\alpha A \Task}{\Delta_{\min}}$, we have $
        \Pr(\piE{}^k \neq i^*_{\tE}) = O(\frac{A}{k^{2\alpha-2}})$.
\end{restatable}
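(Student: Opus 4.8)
The plan is to isolate a deterministic \emph{clean event} over a short window of recent iterations on which the trajectory of the trusted‑task index $w^k$ can be analyzed completely, and then to charge the entire failure probability to that event. Write $\cW^* := \{w \in [W] : i^*_{\tO,w} = i^*_\tE\}$ for the set of source tasks whose optimal arm coincides with that of $\ME$; by Assump.~\ref{assump:unique_optimal_policy} and Def.~\ref{def:close_state} the $\tfrac{\tilde{\Delta}_{\min}}{4}$-close task $w^*$ lies in $\cW^*$. Fix $k \ge k^*$ and set a window length $\tau := \Theta\big(\tfrac{\alpha A}{\Delta_{\min}^2}\log f(k)\big)$ chosen so that $\tau \le k/2$. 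I would let $\cG$ be the event that, for every $j \in [k-\tau,k]$, every arm and every task, the anytime confidence intervals (width $\sqrt{2\alpha\log f(j)/N}$) of all $\mu_{\tO,w}$ and of $\muE$ are valid, and each low-tier UCB sub-routine has already identified its optimum, i.e.\ $\upiO{,w}^j = \arg\max_i \NO{,w}^j(i) = i^*_{\tO,w}$ with $\NO{,w}^j(i^*_{\tO,w}) > j/2$. Since $f(j)=\Theta(WA^2 j^2)$ and $k-\tau \ge k/2$, a union bound over the window, the $A$ arms and the $W$ tasks shows $\Pr(\neg\cG) = O(A/k^{2\alpha-2})$ for $k \ge k^*$; this will be the only source of error.

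On $\cG$ I would first record two separations that follow from OVD (Assump.~\ref{assump:opt_value_dominance}) with $\epsilon=\tfrac{\tilde{\Delta}_{\min}}{4}\le\tfrac{\Delta_{\min}}{4}$. For a \emph{bad} task $w \notin \cW^*$, its arm $i := i^*_{\tO,w}$ is suboptimal in $\ME$, so $\DeltaE(i)\ge\Delta_{\min}$; combining $\mu_{\tO,w}(i) \ge \muE(i^*_\tE) - \tfrac{\Delta_{\min}}{2}$ (OVD, with $H=0$) and $\muE(i) \le \muE(i^*_\tE)-\Delta_{\min}$ yields $\mu_{\tO,w}(i)-\muE(i) \ge \tfrac{\Delta_{\min}}{2} \ge 2\epsilon$. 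For the close task $w^*$, Def.~\ref{def:close_state} gives $\mu_{\tO,w^*}(i^*_\tE) \le \muE(i^*_\tE)+\epsilon$, so on $\cG$ the checking event of line~\ref{line:bandit_MT_TS_start} always holds for $w^*$; hence $w^*\in\cI^j$, in particular $\cI^j\neq\emptyset$ throughout the window and $\algE$ never enters the explore‑by‑itself branch. The first separation then implies a \emph{self‑correction} bound: whenever $\algE$ follows a bad task $w$ it pulls $i$ in $\ME$, and once $\NE{}^j(i)$ exceeds $n_{\max} := \Theta(\alpha\log f(j)/\Delta_{\min}^2)$ the value condition fails deterministically for \emph{every} task with optimal arm $i$; therefore each of the at most $A-1$ suboptimal arms is pulled at most $n_{\max}$ times inside the window, so the window contains at most $(A-1)n_{\max}\le\tau$ bad pulls.

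The core of the argument—and the step I expect to be the main obstacle—is showing that $\cW^*$ is \emph{absorbing} on $\cG$, which is exactly where the ``same‑action'' priority rule (line~\ref{line:bandit_MT_TS_mid}) must be handled carefully. If $w^{j-1}\in\cW^*$ then $\piE{}^{j-1}=i^*_\tE$; if $w^{j-1}$ stays feasible we keep it (still in $\cW^*$), and if it fails, the priority rule searches for a feasible task whose most‑pulled arm equals $\piE{}^{j-1}=i^*_\tE$, i.e.\ a task in $\cW^*\cap\cI^j$, which exists since $w^*\in\cW^*\cap\cI^j$ on $\cG$; hence $w^j\in\cW^*$. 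The delicate point is that dissimilar tasks can \emph{re‑enter} $\cI^j$ at larger $j$ (their counts freeze while $f$ keeps growing, so their confidence width regrows), yet the priority rule forbids switching to them because their most‑pulled arm differs from $i^*_\tE$; this is precisely what stops the re‑selection caused by eliminating a marginally‑similar task from drifting back to a genuinely dissimilar one (the phenomenon flagged in the algorithm discussion). Verifying this invariance against every branch of lines~\ref{line:bandit_MT_TS_start}–\ref{line:bandit_MT_TS_mid} is the part requiring the most care.

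Finally I would combine the pieces. Since at most $\tau$ rounds of the length‑$\tau$ window can be bad pulls, the window must contain a round in which $\algE$ follows a task in $\cW^*$; from that round onward the absorption invariance keeps $w^j\in\cW^*$ up to $j=k$, so $\piE{}^k = \upiO{,w^k}^k = i^*_{\tO,w^k} = i^*_\tE$. Hence $\{\piE{}^k \neq i^*_\tE\}\subseteq \neg\cG$, and the bound $\Pr(\piE{}^k \neq i^*_\tE) = O(A/k^{2\alpha-2})$ follows for all $k \ge k^* = c^*\tfrac{\alpha A}{\Delta_{\min}^2}\log\tfrac{\alpha A W}{\Delta_{\min}}$, the $\log(\alpha A W/\Delta_{\min})$ arising because $f$ carries the factor $W$ (the union over source tasks) and $\tau=\Theta(A\,n_{\max})$ must exceed the total bad‑pull budget evaluated self‑consistently at $k=k^*$.
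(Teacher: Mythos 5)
Your proof is correct and follows essentially the same route as the paper's: the OVD-induced separation $\mu_{\tO,w}(i) - \mu_{\tE}(i) \geq \Delta_{\min}/2$ for tasks whose optimal arm differs from $i^*_{\tE}$, the counting argument showing that only $O(A\,\alpha\log f(k)/\Delta_{\min}^2)$ rounds can be spent trusting such tasks before the checking event permanently excludes them, and the absorption into $\{w : i^*_{\tO,w} = i^*_{\tE}\}$ via the action-inheritance rule are exactly the paper's three ingredients. The only cosmetic difference is that you package the self-correction step as a deterministic count on a clean event over a window of length $\tau$, whereas the paper works over $[k/2,k]$ and bounds the probability that a dissimilar task remains in $\cI^{k'}$ once its recommended arm's count in $\ME$ is already large; both give the same $O(A/k^{2\alpha-2})$ bound.
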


\section{Experiments}
In this section, we evaluate our most representative algorithm, Alg. 7, in multiple source tasks setting.

\textbf{Experiments Setting}\footnote{Code is available at \url{https://github.com/jiaweihhuang/Robust-Tiered-RL}}
We set $S=A=3$ and $H=5$. The details for construction of source and target tasks are defered to Appx.~\ref{appx:experiments}.
We adapt StrongEuler in \citep{simchowitz2019non} as online learning algorithm to solve source tasks, and use the bonus function in \citep{simchowitz2019non} as the bonus function in our Alg. 7.
We evaluate our algorithm when $W = 0, 1, 2, 5$, where $W = 0$ means the high-tier task $M_{\text{Hi}}$ is simply solved by normal online learning method (StrongEuler) without any parallel knowledge transfer.
We choose $\lambda = 0.3 \approx 1/S$ in Alg. 7, and in the MDP instance we test, across all $S\cdot H=15$ states, for $W=1,2,5$, the number of transferable states would be 6, 9 and 13, respectively.

We choose iteration number $K = 1e7$, where we start the transfer since $k=5e5$ to avoid large "burn-in" terms. As we can see, after the transfer starts, the regret in target task will suddenly increase for a while, because the target task has to make some mistakes and learn from it as a result of the model uncertainty. However, because of our algorithm design, the negative transfer will terminate after a very short period. As predicted by our theory, by adding more and more source tasks which can introduce new transferable states, the target task will suffer less and less regret.

\begin{figure*}[h!]
\begin{center}
    \includegraphics[width=0.55\textwidth]{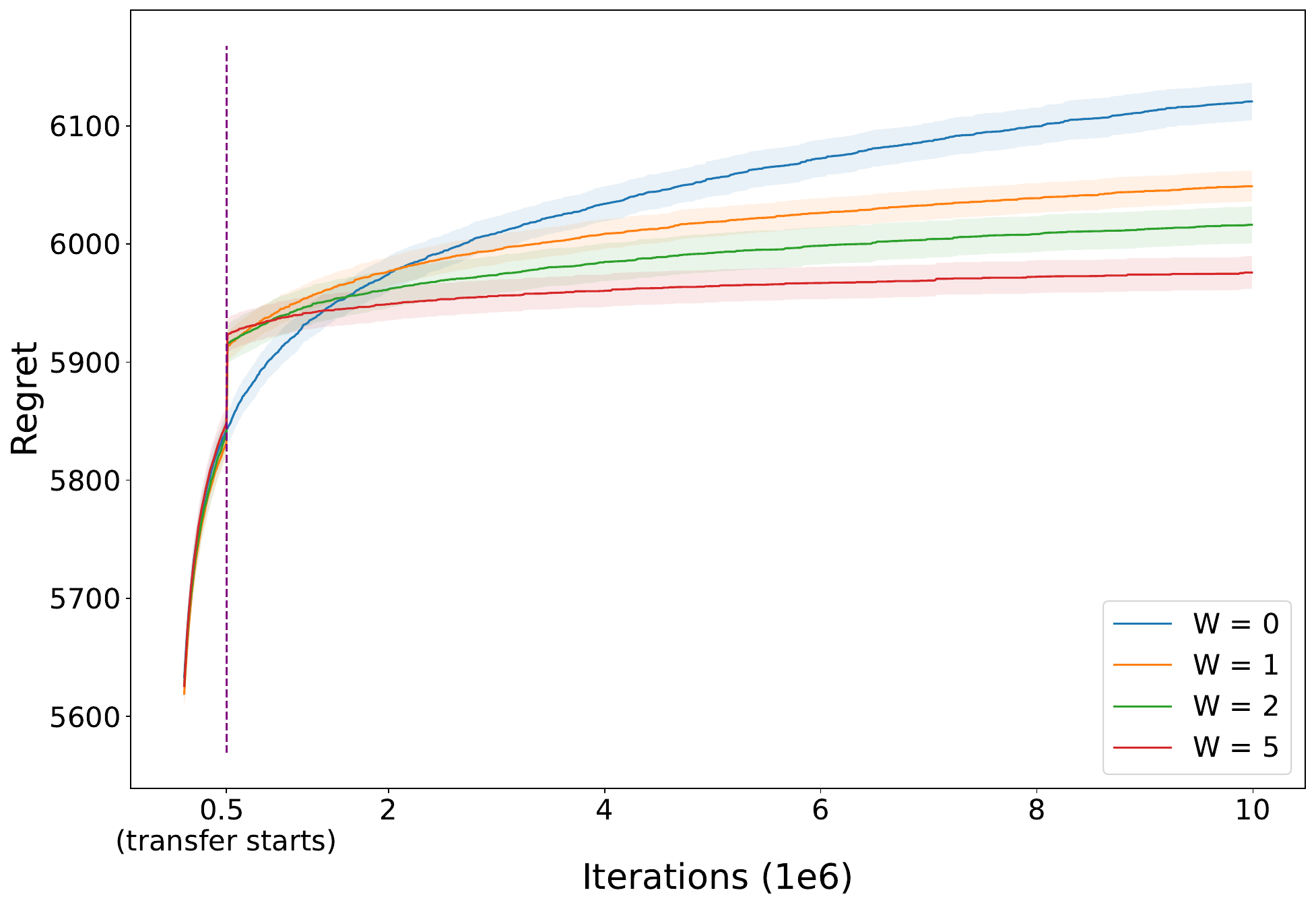}
    \caption{
    \textbf{Regret in the Target Task given Multiple Source Tasks} 
    We report the result when $W$ source tasks are available with $W=0,1,2,5$.
    The shadows indicate 96\% confidence interval.
    }
\end{center}
\end{figure*}

\section{Conclusion and Future Work}
In this paper, we study how to do robust parallel transfer RL when single or multiple source tasks are avilable, without knowledge on models' similarity.
The possible future directions include relaxing assumptions, better strategies to leveraging multiple source tasks, and identifying mild structural assumptions allowing for more aggressive transfer, and we defer to Appx.~\ref{appx:open_problems} for more details.

\newpage
\begin{ack}
The authors would like to thank Andreas Krause for valuable discussion. The work is supported by ETH research grant and Swiss National Science Foundation (SNSF) Project Funding No. 200021-207343.
\end{ack}
\bibliographystyle{plainnat}
\bibliography{references}

\newpage
\tableofcontents
\newpage
\appendix
\section{Extended Introduction}\label{appx:supplementary_introduction}
\subsection{Tiered-RL Framework}\label{appx:general_learning_framework}
\begin{algorithm}[H]
    Initialize $D^1_{\tO},D^1_{\tE} \gets \{\}.$ \\
    \For{$k = 1,2,...,K$}{
        {$\piO{}^k \gets \algO(D^k_\tO)$}; \quad
        {$\piO{}^k$} interacts with $\MO$, and collect data {$\tau_{\tO}^k$}; $\quad D^{k+1}_\tO = D^k_\tO \cup \{\tau_{\tO}^k\}$\\
        {$\piE{}^k \gets \algE(D^k_\tE)$}; \quad ~
        {$\piE{}^k$} interacts with $\ME$, and collect data {$\tau_{\tE}^k$}; \quad  
        $D^{k+1}_\tE = D^k_\tE \cup \{\tau_{\tE}^k\}$. 
    }
    \caption{The Tiered RL Framework with Single Low-Tier Task}\label{framework:general_learning_framework}
\end{algorithm}
\begin{algorithm}[H]
    Initialize $D^1_{\tO} \gets \{\};~D^1_{\tE,w} \gets \{\},~\forall w\in[W].$ \\
    \For{$k = 1,2,...,K$}{
        \For{$w \in [W]$}{
            $\piO{,w}^k \gets \algO(D^k_{\tE,w})$;\quad $\piO{,w}^k$ interacts with $M_{\tO,w}$, and collect data $\tau_{\tO,w}^k$.\\
            $D_{\tO,w}^{k+1} = D^k_\tE \cup \{\tau_{\tO,w}^k\}$.
        }
        $\piE{}^k \gets \algE(D^k_\tE)$; \quad $\piE{}^k$ interacts with $\ME$, and collect data $\tau_{\tE}^k$;\quad $D_\tE^{k+1} = D^k_\tE \cup \{\tau_{\tE}^k\}$. 
    }
    \caption{The Tiered RL Framework with Multiple Low-Tier Tasks}\label{framework:general_learning_framework_MT}
\end{algorithm}

\subsection{Other Related Works}\label{appx:comparison_with_previous}

\paragraph{Online and Offline RL} 
In normal online RL/MAB setting, the learner targets at actively explore the environment while balancing the trade-off between the exploration and exploitation \citep{lattimore2020bandit,agarwal2019reinforcement, simchowitz2019non, dann2021beyond,azar2017minimax,jin2018q}
Differently, motivated by many real world scenarios where historical data are available, offline RL considers how to do pure exploitation given the pre-collected dataset without additional exploration and new information collection, and theoretical works mainly focus on methods for sufficiently exploitation \citep{zhan2022offline, uehara2021pessimistic, xie2021bellman, jin2021pessimism, liu2020provably,jiang2020minimax, chen2022offline}.

Recently, there is also a line of work studying the settings lying between pure online and offline RL, 
such as hybrid setting where offline data is available for online exploration \citep{xie2021policy, wagenmaker2022leveraging}, and efficient batched exploration with limited policy deployments \citep{huang2022towards}.
Tiered RL framework can be regarded as another approach to bridging the online and offline setting, where we do online learning in the high-tier task with a gradually updated dataset from low-tier task for reference.

\paragraph{Detailed Comparison with Previous Transfer RL Paper about Assumptions}\label{appx:assumption_comparison}
The most recent works in transfer RL are \citep{golowich2022can} and \citep{gupta2022unpacking}. In general we are not comparable because of our different settings, but we can observe some similarity of our assumptions and the way to capture the transferable states.

\citep{golowich2022can} considered the case where a predicted Q-function $\{\tilde{Q}_h(\cdot,\cdot)\}_{h=1}^H$ is provided for each state action pair, which can be regarded as $\{Q^*_{\tO,h}(\cdot,\cdot)\}_{h=1}^H$ in our setting.
The key assumption in their paper is the ``approximate distillation'' condition (Def. 3.1 in \citep{golowich2022can}), which assumed that for each $s_h$, there exists $a_h \in \cA_h$ such that $\Delta(s_h,a_h) + \max\{0,Q^*_h(s_h,a_h) - \tQ_h(s_h,a_h)\} \leq \epsilon$. 
However, according to Eq.~(2) in their Thm. 3.1, there is an $\epsilon'TH = 4\epsilon(H+1)TH$ term in the regret of their algorithm (where $T$ is the episode number). 
Therefore, in order to achieve regret sub-linear to $T$, they need $\epsilon = T^{-\alpha}$ for some $\alpha > 0$. 
As $T \rightarrow +\infty$, we have $\epsilon \rightarrow 0$, then their ``approximate distillation'' condition will reduce to $V^*_{\tO,h}(s_h) \geq V^*_{\tE,h}(s_h)$, which is a stronger version of our OVD condition in Assump.~\ref{assump:opt_value_dominance}.

As for \citep{gupta2022unpacking}, the authors assumed that there is a value function and parameter $\beta$ such that $\beta \tilde{V}_h(s_h)$ (i.e. $V^*_{\tO}(s_h)$ in our setting) forms an overestimation for $V^*_h(s_h)$ in target task, which is also similar to our Assump.~\ref{assump:opt_value_dominance}.
Besides, although they didn't make it explicitly, to achieve provable benefits, they also require such a overestimation $\beta \tilde{V}_h(s_h)$ should not deviate too far away from the true value $V^*_h(s_h)$.
To see this, in Sec. 5.1 of \citep{gupta2022unpacking}, they use $V^*_h(s) \geq \Delta + \tQ^u_h(s_h,a_h)$ to characterize state-action pairs with regret reduction, where $\tQ^u_h(s_h,a_h) := \EE_{s'}[r(s_h,a_h) + \beta \tV_{h+1}(s')]$, and $\beta \tV_{h+1}$ has to stay close to $V^*_h(s)$ for such condition to be realizable.

Finally, both \citep{golowich2022can,gupta2022unpacking} assumed the condition holds for each state action pair, while our Assump.~\ref{assump:opt_value_dominance} can only require the overestimation on states reachable by optimal policy.

\subsection{Examples for Assump. \ref{assump:opt_value_dominance}}\label{appx:examples_OVD}
\begin{example}[Identical Model \citep{huang2022tiered}]\label{example:OVD_1} For arbitrary $h\in[H],s_h\in\cS_h,a_h\in\cA_h$, $r_{\tO,h}(s_h,a_h) = r_{\tE,h}(s_h,a_h)$, $\mPE{,h}(\cdot|s_h,a_h) = \mPO{,h}(\cdot|s_h,a_h)$.
\end{example}
\begin{example}[Small Model Error]\label{example:OVD_2} For arbitrary $h\in[H],s_h\in\cS_h,a_h\in\cA_h$, $|r_{\tO,h}(s_h,a_h) - r_{\tE,h}(s_h,a_h)| \leq \frac{\Delta_{\min}}{4H(H+1)}$, $\|\mPE{,h}(\cdot|s_h,a_h) - \mPO{,h}(\cdot|s_h,a_h)\|_1 \leq \frac{\Delta_{\min}}{4H^2(H+1)}$.
\end{example}
\begin{example}[Known Model Difference]\label{example:OVD_3} Suppose there exists known quantities $\xi_r$ and $\xi_{\mP}$ such that, for arbitrary $h\in[H],s_h\in\cS_h,a_h\in\cA_h$:
    \begin{align*}
        |r_{\tO,h}(s_h,a_h) - r_{\tE,h}(s_h,a_h)| \leq \xi_r,\quad \|\mPE{,h}(\cdot|s_h,a_h) - \mPO{,h}(\cdot|s_h,a_h)\|_1 \leq \xi_{\mP}.
    \end{align*}
    Then, one can revise the reward function of $\MO$ to $r_{\tO}'$ defined by $r_{\tO,h}'(s_h,a_h) = r_{\tO,h}(s_h,a_h) + \xi_r + (H-h)\xi_{\mP}$, and the new MDP $M_{\tO'} = \{\cS,\cA,\mP_{\tO}, r_{\tO}',H\}$ has optimal value dominance on $\ME$.
\end{example}
\paragraph{Proofs for Examples Above}
Ex.~\ref{example:OVD_1} is obvious, we just prove the rest two. First of all, for arbitrary $h\in[H]$ and $s_h \in \cS_h$, we should have:
\begin{align*}
    & \VE{,h}^*(s_h) - \VO{,h}^*(s_h)\leq \QE{,h}^*(s_h,\piE{}^*) - \QO{,h}^*(s_h,\piE{}^*) \\
    = & r_{\tE,h}(s_h,\piE{}^*) - r_{\tO,h}(s_h,\piE{}^*) + (\mPE{,h} - \mPO{,h})\VE{,h+1}^*(s_h,\piE{}^*) + \mPO{,h}(\VE{,h+1}^*-\VO{,h+1}^*)(s_h,\piE{}^*)\\
    \leq & r_{\tE,h}(s_h,\piE{}^*) - r_{\tO,h}(s_h,\piE{}^*) + (\mPE{,h} - \mPO{,h})\VE{,h+1}^*(s_h,\piE{}^*)+ \mPO{,h}(\VE{,h+1}^*-\QO{,h+1}^*(\cdot,\piE{}^*))(s_h,\piE{}^*) \\
    \leq & ...\\
    \leq & \EE_{\MO, \piE{}^*}[\sum_{\ph = h}^H r_{\tE,h}(s_\ph,a_\ph) - r_{\tO,\ph}(s_\ph,a_\ph) + (\mPE{,\ph} - \mPO{,\ph})\VE{,\ph+1}^*(s_\ph,a_\ph) |s_h]\\
    \leq & \EE_{\MO, \piE{}^*}[\sum_{\ph = h}^H |r_{\tE,h}(s_\ph,a_\ph) - r_{\tO,\ph}(s_\ph,a_\ph)| + (H-h)\cdot \|\mPE{,\ph}(\cdot|s_\ph,a_\ph) - \mPO{,\ph}(\cdot|s_\ph,a_\ph)\|_1 |s_h],
\end{align*}
Therefore, in Example \ref{example:OVD_2}, we should expect:
\begin{align*}
    \VE{,h}^*(s_h) - \VO{,h}^*(s_h) \leq (H - h)\cdot \frac{\Delta_{\min}}{4H(H+1)} + (H-h) \cdot (H-h) \cdot \frac{\Delta_{\min}}{4H^2(H+1)}  \leq \frac{\Delta_{\min}}{2(H+1)}.
\end{align*}
Besides, for Example \ref{example:OVD_3}, we have:
\begin{align*}
    \VE{,h}^*(s_h) - V_{\tO',h}^*(s_h) \leq & \EE_{\MO,\piE{}^*}[\sum_{\ph = h}^H r_{\tE,h}(s_\ph,a_\ph) - r_{\tO,\ph}(s_\ph,a_\ph) + \xi_r \\
    & \quad \quad \quad \quad  + (H-h)\xi_{\mP} + (H-h) \cdot \|\mPE{,\ph}(\cdot|s_\ph,a_\ph) - \mPO{,\ph}(\cdot|s_\ph,a_\ph)\|_1 |s_h]\\
    \leq & 0
\end{align*}
Therefore, both Example \ref{example:OVD_2} and \ref{example:OVD_3} satisfy Assump. \ref{assump:opt_value_dominance}.

\subsection{Detailed Discussion on Open Problems}\label{appx:open_problems}
We believe there are many interesting directions to follow in the future and highlight in three aspects: 

First of all, we conjecture our unique optimal policy assumption can be relaxed and the $O(\frac{1}{H})$ factor in Def.~\ref{def:transferable_states} and Def.~\ref{def:transferable_states_MT} can be removed by advanced techniques.
It's also important to study how to get rid of lower bound knowledge in $\Delta_{\min}$.

Secondly, in TRL-MST setting, for those $s_h$ such that there are multiple source tasks $M_{\tO,\task_1},M_{\tO,\task_2},...,M_{\tO,\task_j} \in \cM_{\tO}$ close to $\ME$, beyond the constant regret, one may consider to integrate the information from those source tasks together to further accelerate the learning; 
Moreover, although we do not make it explicitly, it's possible to combine our techniques in Sec.~\ref{sec:multiple_source_task} with existing MT-RL algorithms to develop algorithms with guarantees about the reduction on not only the total regret but also some specific tasks.

Finally, although we show in Sec.~\ref{sec:lower_bound} that robust transfer objective requires OVD assumption, and the model difference tolerance is at most $O(\Delta_{\min})$, we conjecture that, there might exists milder assumptions about the structure of source and target tasks and the prior knowledge about it, which may eliminate out our hard instance.
Additionaly, in some cases, it may be reasonable to relax the objective by allowing some chance of negative transfer in part of target tasks. Then, we can do more aggressive transfer without too much concern on the algorithm's overall performance. These potential directions are left for future work.

\section{Proofs for Lower Bound}\label{appx:lower_bound}
\ThmNeceOVD*
\begin{proof}
    Consider the two-armed bandit setting.
    Given arbitrary $\Delta,\mu \in (0,1)$ satisfying $0 < \mu - \Delta < \mu < \mu + \Delta$, we can construct two two-armed Bernoulli bandit problem $M$ and $M'$ such that:
    \begin{align*}
        \mu_{M}(1) = \mu_{M'}(1) = \mu;\quad \mu_{M}(2) = \mu - \Delta;\quad \mu_{M'}(2) = \mu + \Delta.
    \end{align*}
    We choose $M$ to be the low-tier task, i.e. $\MO = M$, and choose $M$ and $M'$ to be the high-tier task. 
    Note that the minimal gap in $M$ and $M'$ is $\Delta$, and $\mu_M(1) = \mu_{M'}(2) - \Delta < \mu_{M'}(2) - \frac{\Delta}{2}$, which implies $\MO$ does not have optimal value dominance on $\ME$ when $\MO = M$ and $\ME = M'$.

    Now, we consider the following learning process: the learner will get access to the low-tier task $\MO=M$, and the high-tier task $\ME$ will be uniformly randomly selected between $M$ and $M'$, while the learner does not know which it is.
    Without loss of generality, we consider deterministic algorithms $\algO$ and $\algE$ (since one can first generate the randomness before the learning process), i.e. for arbitrary step $k$, given the interaction history $\tau_k := (a_{\tE}^1,r_{\tE}^1,a_{\tO}^1,r_{\tO}^1,...,a_{\tE}^{k-1},r_{\tE}^{k-1},a_{\tO}^{k-1},r_{\tO}^{k-1})$, the policy $(\pi^k_{\tO},\pi^k_{\tE})$ produced by $\algO(\tau_k)$ and $\algE(\tau_k)$ is fixed, where $a^k_\tO,r^k_\tO$ (or $a^k_\tE,r^k_\tE$) denotes the arm pulled and the reward observed in task $\MO$ (or $\ME$) at iteration $k$.

    In the following, we will use $\Pr_{\algO,\algE}^{\MO,\ME}(\cdot)$ to denote the probability if the learner use algorithm pair $(\algO,\algE)$ and solve task pair $(\MO,\ME)$.
    Note that the pseudo-regret of $\algE$ when $\ME=M$ can be written as:
    \begin{align}
        \Regret_K(\ME;\ME=M) =& \sum_{\tau_K:\Pr{}_{\algO,\algE}^{M,M}(\tau_K)>0} \Pr{}_{\algO,\algE}^{M,M}(\tau_K) N_{\tE}(2;\tau_K)\Delta.\label{eq:decompose_pseudo_regret}
    \end{align}
    where $N_{\tE}(i;\tau_K)$ denotes the number of times the $i$-th arm is pulled in task $\ME$ in trajectory $\tau_K$.

    Because both $\MO$ and $\ME$ are two-armed Bernoulli bandits, and each arm in those MDPs has non-zero probability mass on both value 0 and 1 and the algorithms are deterministic, for arbitrary $k \geq 1$ and $\tau_k$, $\Pr{}_{\algO,\algE}^{M,M}(\tau_k) > 0$ if and only if $\Pr{}_{\algO,\algE}^{M,M}(\tau_k) > 0$.

    Now, we consider the following probability ratio, for arbitrary $\tau_k$ with $\Pr{}_{\algO,\algE}^{M,M}(\tau_k) > 0$:
    \begin{align*}
        \frac{\Pr{}_{\algO,\algE}^{M,M'}(\tau_k)}{\Pr{}_{\algO,\algE}^{M,M}(\tau_k)} =& \frac{\Pr{}_{M'}(r_{\tE}=r_{\tE}^{k-1}|a_{\tE}^{k-1})}{\Pr{}_{M}(r_{\tE}=r_{\tE}^{k-1}|a_{\tE}^{k-1})} \cdot \frac{\Pr{}_{\algO,\algE}^{M,M'}(\tau_{k-1})}{\Pr{}_{\algO,\algE}^{M,M}(\tau_{k-1})} \tag{$\Pr{}_{M}(\cdot)$ denotes the probability of event on model $M$}\\
        =& \prod_{k'=1}^k \frac{\Pr{}_{M'}(r_{\tE}=r_{\tE}^{k'}|a_{\tE}^{k'})}{\Pr{}_{M}(r_{\tE}=r_{\tE}^{k'}|a_{\tE}^{k'})} \geq (\frac{1-\mu-\Delta}{1-\mu+\Delta})^{N_{\tE}(2;\tau_k)}.\numberthis\label{eq:lower_bound_ratio}
    \end{align*}
    where for the first equality, we use the fact that the algorithms are deterministic, and the randomness of $r_{\tO}^{k-1}$ only depends on $a_{\tO}^{k-1}$ so it cancels out; the last inequality is because that the ratio is 1 if $a_{\tE}^{k'}=1$ and the ratio can be lower bounded by $(1-\mu-\Delta)/(1-\mu+\Delta)$ otherwise,
    Therefore, combining with Eq.~\eqref{eq:decompose_pseudo_regret}, we have:
    \begin{align*}
        \Regret_K(\ME;\ME=M') =&  \sum_{\tau_K} \Pr{}_{\algO,\algE}^{M,M'}(\tau_K) N_{\tE}(1;\tau_K)\Delta\\
        =& \sum_{\tau_K: \Pr{}_{\algO,\algE}^{M,M}(\tau_K) > 0} \Pr{}_{\algO,\algE}^{M,M}(\tau_K)
        \frac{\Pr{}_{\algO,\algE}^{M,M'}(\tau_K)}{\Pr{}_{\algO,\algE}^{M,M}(\tau_K)} N_{\tE}(1;\tau_K) \Delta\\
        \geq & \sum_{\tau_K: \Pr{}_{\algO,\algE}^{M,M}(\tau_K) > 0} \Pr{}_{\algO,\algE}^{M,M}(\tau_K)
        (\frac{1-\mu-\Delta}{1-\mu+\Delta})^{N_{\tE}(2;\tau_K)} N_{\tE}(1;\tau_K) \Delta \numberthis\label{eq:regret_ratio}
    \end{align*}
    Suppose the algorithm pair $(\algO,\algE)$ can achieve constant regret $C$ when $(\MO,\ME)=(M,M)$, i.e.
    \begin{align*}
        \EE_{\algO,\algE,M,M}[N_{\tE}(2;\tau_K)\Delta]\leq C,\quad \forall K \geq 1.
    \end{align*}
    then, according to Markov inequality, for arbitrary constant $\delta \in (0,1)$ we have:
    \begin{align}
        \Pr{}_{\algO,\algE}^{M,M}(N_{\tE}(2;\tau_K) \leq \frac{C}{\Delta \delta}) \geq 1-\delta,\quad \forall K \geq 1.
    \end{align}
    which is equivalent to (note that $\tau_K$ is the random variable) 
    \begin{align*}
        \sum_{\tau_K:N_{\tE}(2;\tau_K)\leq \frac{C}{\Delta \delta}}\Pr{}_{\algO,\algE}^{M,M}(\tau_K) \geq 1-\delta.
    \end{align*}
    Combining with Eq.~\eqref{eq:regret_ratio}, by choosing an arbitrary constant $\delta \in (0,1)$, for arbitrary $K \geq 1$, we have:
    \begin{align*}
        \Regret_K(\ME;\ME=M') \geq & \sum_{\tau_K:N_{\tE}(2;\tau_K)\leq \frac{C}{\Delta \delta}}\Pr{}_{\algO,\algE}^{M,M}(\tau_K) (\frac{1-\mu-\Delta}{1-\mu+\Delta})^{N_{\tE}(2;\tau_K)}N_{\tE}(1;\tau_K)\Delta\\
        \geq & \sum_{\tau_K:N_{\tE}(2;\tau_K)\leq \frac{C}{\Delta \delta}}\Pr{}_{\algO,\algE}^{M,M}(\tau_K) (\frac{1-\mu-\Delta}{1-\mu+\Delta})^{\frac{C}{\Delta\delta}}(K-\frac{C}{\Delta\delta})\Delta\\
        \geq & (1-\delta)\cdot (\frac{1-\mu-\Delta}{1-\mu+\Delta})^{\frac{C}{\Delta\delta}}(K-\frac{C}{\Delta\delta})\Delta\\
        = & O(K).
    \end{align*}
    which finishes the proof.
\end{proof}

\ThmDeltaminLB*
\begin{proof}
    We can construct three two-armed Bernoullis bandit problem $M,M'$ and $M''$ such that:
    \begin{align*}
        &\mu_M(1) = \mu,\quad \mu_M(2) = \mu - \Delta;\\
        &\mu_{M'}(1) = \mu - \Delta',\quad \mu_{M'}(2) = \mu - \Delta - \Delta';\\
        &\mu_{M''}(1) = \mu - \Delta',\quad \mu_{M''}(2) = \mu + \Delta - \Delta';
    \end{align*}
    where $\Delta$ and $\mu$ are chosen to satisfy $0 < \mu - 2\Delta < \mu < \mu + \Delta$, and $\Delta' \in [\frac{\Delta}{2}, \Delta]$. 
    Note that by construction, $\Delta$ is $\Delta_{\min}$.
    Now, consider the following learning process, the learner will be provided $M$ as the low-tier task $\MO$, and the high-tier task $\ME$ will be uniformly sampling from $\{M',M''\}$. 
    Easy to check that, $\MO=M$, has optimal value dominance on $\ME$ when $\ME = M'$ or $\ME = M''$. 
    Next, we want to show that, for arbitrary algorithm pair $(\algO,\algE)$, if the learner can achieve constant regret when $\ME=M'$, it must achieve linear regret when $\ME = M''$.

    The remaining proof is similar to the proof for Thm. \ref{thm:necessity_of_OVD}. First of all, we have:
    \begin{align}
        \Regret_K(\ME;\ME=M') =& \sum_{\tau_K:\Pr{}_{\algO,\algE}^{M,M'}(\tau_K)>0} \Pr{}_{\algO,\algE}^{M,M'}(\tau_K) N_{\tE}(2;\tau_K)\Delta.\label{eq:decompose_pseudo_regret_2}
    \end{align}
    As an analogue of Eq.~\eqref{eq:lower_bound_ratio}, we have:\begin{align*}
        \frac{\Pr{}_{\algO,\algE}^{M,M'}(\tau_k)}{\Pr{}_{\algO,\algE}^{M,M''}(\tau_k)}\geq (\frac{\mu-\Delta-\Delta'}{\mu+\Delta-\Delta'})^{N_{\tE}(2;\tau_k)}.
    \end{align*}
    Combining with Eq.~\eqref{eq:decompose_pseudo_regret_2}, we have:
    \begin{align*}
        \Regret_K(\ME;\ME=M'') \geq & \sum_{\tau_K: \Pr{}_{\algO,\algE}^{M,M'}(\tau_K) > 0} \Pr{}_{\algO,\algE}^{M,M'}(\tau_K)
        (\frac{\mu-\Delta-\Delta'}{\mu+\Delta-\Delta'})^{N_{\tE}(2;\tau_K)} N_{\tE}(1;\tau_K) \Delta \numberthis\label{eq:regret_ratio_2}
    \end{align*}
    Suppose the algorithm pair $(\algO,\algE)$ can achieve constant regret $C$ when $(\MO,\ME)=(M,M')$, we must have:
    \begin{align*}
        \sum_{\tau_K:N_{\tE}(2;\tau_K)\leq \frac{C}{\Delta \delta}}\Pr{}_{\algO,\algE}^{M,M'}(\tau_K) \geq 1-\delta.
    \end{align*}
    By choosing an arbitrary fixed constant $\delta \in (0,1)$, for arbitrary $K$, we have:
    \begin{align*}
        \Regret_K(\ME;\ME=M'') \geq & \sum_{\tau_K:N_{\tE}(2;\tau_K)\leq \frac{C}{\Delta \delta}}\Pr{}_{\algO,\algE}^{M,M'}(\tau_K) (\frac{1-\mu-\Delta}{1-\mu+\Delta})^{N_{\tE}(2;\tau_K)}N_{\tE}(1;\tau_K)\Delta\\
        \geq & \sum_{\tau_K:N_{\tE}(2;\tau_K)\leq \frac{C}{\Delta \delta}}\Pr{}_{\algO,\algE}^{M,M'}(\tau_K) (\frac{1-\mu-\Delta}{1-\mu+\Delta})^{\frac{C}{\Delta\delta}}(K-\frac{C}{\Delta\delta})\Delta\\
        \geq & (1-\delta)\cdot (\frac{\mu-\Delta-\Delta'}{\mu+\Delta-\Delta'})^{\frac{C}{\Delta\delta}}(K-\frac{C}{\Delta\delta})\Delta\\
        = & O(K).
    \end{align*}
    which finishes the proof.
\end{proof}

\section{Proofs for Tiered MAB with Single Source/Low-Tier Task}\label{appx:proof_for_bandits}
\begin{lemma}[Concentration Inequality]\label{lem:bandit_concentration}
    In Alg. \ref{alg:Bandit_Setting}, at each iteration $k$, we have:
    \begin{align*}
        \Pr(|\muO(i) - \hmuO{}^k(i)| \geq \sqrt{\frac{2 \alphaO \log f(k)}{\NO{}^{k}(i)}}) \leq \frac{2}{f(k)^\alphaO} \leq \frac{1}{8Ak^{2\alpha}}, \quad \forall i \in [A]\\
        \Pr(|\muE(i) - \hmuE{}^k(i)| \geq \sqrt{\frac{2 \alphaE \log f(k)}{\NE{}^{k}(i)}}) \leq \frac{2}{f(k)^\alphaO} \leq \frac{1}{8Ak^{2\alpha}}, \quad \forall i \in [A]
    \end{align*}
\end{lemma}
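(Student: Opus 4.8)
The plan is to combine Hoeffding's inequality at a fixed sample size with a union bound over the random number of pulls $\NO{}^k(i)$. The two displayed inequalities have identical structure (rewards in $[0,1]$, the same confidence radius), so it suffices to establish the first; the second follows verbatim with $\tE$ replacing $\tO$.

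First I would fix an arm $i$ and view its reward stream as a pre-sampled i.i.d.\ sequence in $[0,1]$, writing $\bar\mu_n(i)$ for the mean of its first $n$ draws; on the event $\{\NO{}^k(i)=n\}$ one has $\hmuO{}^k(i)=\bar\mu_n(i)$. For each fixed $n$, Hoeffding's inequality $\Pr(|\muO(i)-\bar\mu_n(i)|\geq t)\leq 2\exp(-2nt^2)$ with the choice $t=\sqrt{2\alpha\log f(k)/n}$ yields a tail of $2f(k)^{-4\alpha}$, since then $2nt^2=4\alpha\log f(k)$. Note this single-$n$ bound is already far stronger than what is claimed, which is exactly the slack that will pay for the union bound.

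Next I would remove the dependence on the random count $\NO{}^k(i)$. Because every arm is pulled once during the first $A$ rounds and at most once per subsequent round, $1\leq\NO{}^k(i)\leq k$. Decomposing the target event over the possible values $n\in\{1,\dots,k\}$ and using $\{\NO{}^k(i)=n\}\cap B\subseteq B$ to drop the indicator in each summand, a union bound gives a total tail of at most $2k\,f(k)^{-4\alpha}$. Since $f(k)=1+16A^2(k+1)^2\geq k$ and $\alpha>2$, we have $k\leq f(k)^{3\alpha}$, so this is bounded by $2f(k)^{-\alpha}$, which is the first stated inequality.

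For the final bound $\tfrac{1}{8Ak^{2\alpha}}$, I would use $f(k)>16A^2k^2$, whence $f(k)^{\alpha}>16^{\alpha}A^{2\alpha}k^{2\alpha}\geq 16A\,k^{2\alpha}$ (as $\alpha>2$ and $A\geq1$); rearranging gives $2f(k)^{-\alpha}\leq \tfrac{1}{8Ak^{2\alpha}}$. The only genuine obstacle is that $\NO{}^k(i)$ is data-dependent, so Hoeffding cannot be applied directly at the realized sample count; the pre-sampling and union-bound device circumvents this, and the generous slack engineered into $f(k)$ comfortably absorbs the extra factor $k$. One could alternatively replace the union bound with a time-uniform maximal inequality to avoid the factor $k$ entirely, but this is unnecessary here.
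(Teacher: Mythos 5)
Your proof is correct. The paper actually states Lemma \ref{lem:bandit_concentration} without any proof, treating it as a standard UCB-style concentration fact (the same $f(k)$ and argument appear in \citep{huang2022tiered}); your pre-sampling plus union-bound-over-$n$ argument is exactly the standard derivation being implicitly invoked, and your constant bookkeeping ($2kf(k)^{-4\alpha}\le 2f(k)^{-\alpha}\le \tfrac{1}{8Ak^{2\alpha}}$ using $f(k)\ge k$ and $f(k)>16A^2k^2$ with $\alpha>2$) checks out.
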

As a direct result, we have the following lemma:
\begin{restatable}{lemma}{LemUnderEst}[Valid Under Estimation]\label{lem:valid_under_est}
    For arbitrary $i\in[A]$, if $\muO(i) \leq \muE(i) + \epsilon$, for arbitrary iteration $k$ in Alg. \ref{alg:Bandit_Setting}, we have:
    \begin{align*}
        \Pr(\umuO{}(i) \leq \omuE{}(i) + \epsilon) \geq 1 - \frac{4}{f(k)^\alpha} \geq 1 - \frac{1}{4Ak^{2\alpha}}
    \end{align*}
\end{restatable}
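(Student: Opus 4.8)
The plan is to reduce the claim to the two-sided guarantee of Lem.~\ref{lem:bandit_concentration} through a single union bound. First I would introduce the good event $\mathcal{G}$ on which both concentration bounds hold simultaneously, namely $|\muO(i) - \hmuO{}^k(i)| < \sqrt{2\alphaO\log f(k)/\NO{}^{k}(i)}$ and $|\muE(i) - \hmuE{}^k(i)| < \sqrt{2\alphaE\log f(k)/\NE{}^{k}(i)}$. By Lem.~\ref{lem:bandit_concentration} together with a union bound, $\Pr(\mathcal{G}^c) \leq 2/f(k)^\alpha + 2/f(k)^\alpha = 4/f(k)^\alpha$, and the claimed numerical bound $4/f(k)^\alpha \leq 1/(4Ak^{2\alpha})$ follows simply by doubling the slack inequality $2/f(k)^\alpha \leq 1/(8Ak^{2\alpha})$ already recorded in Lem.~\ref{lem:bandit_concentration}.

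Next, on $\mathcal{G}$ I would convert the concentration bounds into the one-sided statements that the confidence bounds sandwich the true means. Since $\umuO{}^k(i) = \hmuO{}^k(i) - \sqrt{2\alphaO\log f(k)/\NO{}^{k}(i)}$, the lower tail of the first concentration event yields $\umuO{}^k(i) \leq \muO(i)$; symmetrically, since $\omuE{}^k(i) = \hmuE{}^k(i) + \sqrt{2\alphaE\log f(k)/\NE{}^{k}(i)}$, the upper tail of the second yields $\muE(i) \leq \omuE{}^k(i)$. Chaining these with the hypothesis $\muO(i) \leq \muE(i) + \epsilon$ gives, on $\mathcal{G}$, the string $\umuO{}^k(i) \leq \muO(i) \leq \muE(i) + \epsilon \leq \omuE{}^k(i) + \epsilon$, which is precisely the event whose probability we want to control.

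The statement then follows because $\{\umuO{}^k(i) \leq \omuE{}^k(i) + \epsilon\} \supseteq \mathcal{G}$, so its probability is at least $1 - \Pr(\mathcal{G}^c) \geq 1 - 4/f(k)^\alpha$. There is no genuine obstacle here — as the text flags, this is a direct corollary of the concentration inequality — so the only care required is bookkeeping: keeping the directions of the LCB and UCB inequalities straight, and verifying that the factor-of-two loss incurred by the union bound is absorbed by the slack built into the $1/(8Ak^{2\alpha})$ bound of Lem.~\ref{lem:bandit_concentration}.
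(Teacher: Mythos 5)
Your proposal is correct and follows essentially the same route as the paper: a union bound over the two concentration events of Lem.~\ref{lem:bandit_concentration} to get the $4/f(k)^\alpha$ failure probability, then chaining $\umuO{}^k(i) \leq \muO(i) \leq \muE(i) + \epsilon \leq \omuE{}^k(i) + \epsilon$ on the good event. The bookkeeping (directions of the LCB/UCB inequalities and the factor-of-two in the numerical bound) is all handled correctly.
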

\begin{proof}
    According to Lem. \ref{lem:bandit_concentration}, w.p. at least $1-\frac{4}{f(k)^\alpha}$ we have:
    \begin{align*}
        \Pr(\umuO{}(i) \leq \omuE{}(i) + \epsilon)\geq &\Pr(\{\umuO{}(i) \leq \muO(i) \}\cap \{\muE(i) + \epsilon \leq \omuE{}(i) + \epsilon\}) \geq 1-\frac{4}{f(k)^\alpha} \geq 1 - \frac{1}{4Ak^{2\alpha}}.
    \end{align*}
\end{proof}
Next, we recall two useful lemma: Lemma 4.2 and Lemma D.1 from \citep{huang2022tiered}.
\begin{lemma}[Property of UCB; Lem 4.2 in \citep{huang2022tiered}]\label{lem:upper_bound_of_Nk_geq_k_div_scalar}
    With the choice that $f(k)=1+16A^2(k+1)^2$, there exists a constant $c$, for arbitrary $i$ with $\DeltaO(i) > 0$ and arbitrary $\scalar\in[1, 4A]$, in UCB algorithm, we have:
    \begin{align*}
        \Pr(\NO{}^k(i) \geq \frac{k}{\scalar}) \leq \frac{2}{k^{2\alpha-1}},\quad\forall k \geq \scalar + c\cdot\frac{\alpha \scalar}{\DeltaO^2(i)} \log(1+\frac{\alpha A}{\Delta_{\min}}).
    \end{align*}
\end{lemma}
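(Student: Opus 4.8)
The plan is to run the standard optimism argument for UCB, exploiting that a suboptimal arm can only be selected when its confidence radius is still wide, i.e. when it has been pulled few times, and to convert the event ``$\NO{}^k(i)$ is large'' into the event ``a concentration bound of Lem.~\ref{lem:bandit_concentration} failed at a \emph{late} iteration,'' whose probability decays polynomially in $k$. Throughout I fix the suboptimal arm $i$ (so $\DeltaO(i)>0$) and the optimal arm $i^*_\tO$, and I set the threshold $u := \tfrac{8\alpha\log f(k)}{\DeltaO(i)^2}$.

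For any iteration $t\le k$, call the \emph{good event at $t$} the intersection of the two two-sided concentration events of Lem.~\ref{lem:bandit_concentration} for arms $i$ and $i^*_\tO$; its complement has probability at most $4/f(t)^\alpha$. The key deterministic claim is: if arm $i$ is pulled at iteration $t\le k$ while $\NO{}^t(i) > u$, then the good event at $t$ must fail. Indeed $\NO{}^t(i) > u \ge \tfrac{8\alpha\log f(t)}{\DeltaO(i)^2}$ (using $f(t)\le f(k)$) forces the radius $\sqrt{2\alpha\log f(t)/\NO{}^t(i)} < \DeltaO(i)/2$, so on the good event $\omuO{}^t(i) = \hmuO{}^t(i) + \sqrt{2\alpha\log f(t)/\NO{}^t(i)} < \muO(i)+\DeltaO(i) = \muO(i^*_\tO) \le \omuO{}^t(i^*_\tO)$, contradicting the UCB selection rule $\omuO{}^t(i)\ge \omuO{}^t(i^*_\tO)$.

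Next I translate the count event into a failure event. Writing $m:=\lceil k/\scalar\rceil$, the event $\{\NO{}^k(i)\ge m\}$ means the $m$-th pull of arm $i$ occurs at some iteration $\tau_m\le k-1$; since arm $i$ is pulled at most once per iteration, $\tau_m\ge m\ge k/\scalar$, and at that pull $\NO{}^{\tau_m}(i)=m-1$. Provided $m-1>u$, the deterministic claim applies, so the good event fails at $\tau_m\ge \lceil k/\scalar\rceil$. Hence $\{\NO{}^k(i)\ge m\}\subseteq\bigcup_{t=\lceil k/\scalar\rceil}^{k}\{\text{good event fails at }t\}$, and a union bound with $f(t)\ge 16A^2(t+1)^2$ gives
\[
\Pr(\NO{}^k(i)\ge k/\scalar)\le \sum_{t=\lceil k/\scalar\rceil}^{k}\frac{4}{f(t)^\alpha}\le \frac{4}{(16A^2)^\alpha}\sum_{t\ge k/\scalar}(t+1)^{-2\alpha}\le \frac{4\,\scalar^{2\alpha-1}}{(16A^2)^\alpha(2\alpha-1)}\,k^{1-2\alpha}.
\]
Here it is essential that the confidence level $f$ carries the extra $A^2$: plugging $\scalar\le 4A$ into the prefactor yields $\tfrac{4\cdot 4^{2\alpha-1}A^{2\alpha-1}}{16^\alpha A^{2\alpha}(2\alpha-1)}=\tfrac{1}{A(2\alpha-1)}\le 2$, so the bound collapses to exactly $2/k^{2\alpha-1}$, the $(16A^2)^\alpha$ in the denominator absorbing the $\scalar^{2\alpha-1}\le(4A)^{2\alpha-1}$ blow-up.

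The remaining and genuinely delicate step is to certify the claimed burn-in: the hypothesis $m-1>u$ is $k/\scalar-1>\tfrac{8\alpha\log f(k)}{\DeltaO(i)^2}$, i.e. $k-\scalar>\tfrac{8\alpha\scalar}{\DeltaO(i)^2}\log f(k)$, which is \emph{self-referential} because $\log f(k)=\log(1+16A^2(k+1)^2)$ itself grows with $k$. I would bound $\log f(k)\le C_1(\log A+\log k)$ for a universal $C_1$ and invoke the standard fact that $k\ge a\log k+b$ holds once $k\gtrsim a\log a+b$, here with $a=\Theta(\tfrac{\alpha\scalar}{\DeltaO(i)^2})$ and $b=\Theta(\tfrac{\alpha\scalar}{\DeltaO(i)^2}\log A+\scalar)$. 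Using $\scalar\le 4A$ and $\DeltaO(i)\ge\Delta_{\min}$ to write $\log a=O(\log\tfrac{\alpha A}{\Delta_{\min}})$ and $\log A=O(\log\tfrac{\alpha A}{\Delta_{\min}})$, the threshold reduces to $k\ge\scalar+c\,\tfrac{\alpha\scalar}{\DeltaO(i)^2}\log(1+\tfrac{\alpha A}{\Delta_{\min}})$ for a large enough universal constant $c$, which is precisely the stated range. I expect this transcendental-inequality bookkeeping, rather than the optimism argument, to be the main obstacle, since it is what pins down the exact form of the burn-in and ties together $\scalar$, $A$, and $\Delta_{\min}$.
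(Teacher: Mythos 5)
Your argument is correct. Note that the paper itself gives no proof of this lemma --- it is imported verbatim as Lemma 4.2 of \citep{huang2022tiered} --- but your self-contained derivation is the standard one for such statements: the optimism step (a suboptimal arm pulled with $\NO{}^t(i) > \frac{8\alpha\log f(t)}{\DeltaO^2(i)}$ forces a concentration failure for $i$ or $i^*_\tO$), the observation that $\NO{}^k(i)\geq k/\scalar$ forces such a failure at some iteration $t \geq k/\scalar$ so the union bound only pays $\sum_{t\geq k/\scalar} 4/f(t)^\alpha = O(\scalar^{2\alpha-1}k^{1-2\alpha}/(16A^2)^\alpha)$, the cancellation of $\scalar^{2\alpha-1}\leq (4A)^{2\alpha-1}$ against the $(16A^2)^\alpha$ built into $f$, and the resolution of the self-referential condition $k/\scalar - 1 > \frac{8\alpha\log f(k)}{\DeltaO^2(i)}$ via $k\gtrsim a\log a + b$, which is exactly what produces the stated burn-in $\scalar + c\,\frac{\alpha\scalar}{\DeltaO^2(i)}\log(1+\frac{\alpha A}{\Delta_{\min}})$. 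The only bookkeeping worth double-checking, which you handle correctly, is that $\tau_m\geq m\geq \lceil k/\scalar\rceil$ and that the $m$-th pull ($m\geq 2$) falls after the initial round-robin so the UCB selection rule actually applies at $\tau_m$.
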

\begin{lemma}[Lemma D.1 in \citep{huang2022tiered}]\label{lem:combining_UCB_with_LCB_new}
    Given an arm $i$, we separate all the arms into two parts depending on whether its gap is larger than $\DeltaO(i)$ and define $\Gl_i:=\{\iota|\DeltaO(\iota) > \DeltaO(i) / 2\}$ and $\Gu_i:=\{\iota|\DeltaO(\iota) \leq \DeltaO(i) / 2\}$.
    With the choice that $f(k)=1+16A^2(k+1)^2$, there is a constant $c$, such that for arbitrary $i$ with $\DeltaO(i) > 0$, for $\underline{\pi}_{\tO}^k$ in Alg \ref{alg:Bandit_Setting}, there exists a constant $c$, such that:
    \begin{align}
        \Pr(i = \underline{\pi}_{\tO}^k) \leq 2/k^{2\alpha}+2A/k^{2\alpha-1},\quad\forall k \geq k_i := 8\alpha 
        c\Big(\sum_{\iota \in \Gl_i} \frac{1}{\DeltaO^2(\iota)} + 
        \frac{4|\Gu_i|}{\DeltaO^2(i)}\Big)\log(1+\frac{\alpha A}{\Delta_{\min}})\label{eq:def_ki}
    \end{align}
\end{lemma}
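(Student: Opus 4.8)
The plan is to condition on the event that all empirical means in $\MO$ concentrate at iteration $k$, and then show that selecting a suboptimal arm $i$ as the LCB-greedy choice $\underline{\pi}_{\tO}^k = \arg\max_\iota \umuO{}^k(\iota)$ forces a pull-count configuration that UCB (which governs the actual pulls in $\MO$) makes extremely unlikely. Throughout, $i^*_\tO$ denotes the optimal arm of $\MO$ and $i$ is the fixed suboptimal arm with $\DeltaO(i)>0$. First I would set up the concentration event: by Lem.~\ref{lem:bandit_concentration} and a union bound over the $A$ arms, with probability at least $1-\tfrac{1}{k^{2\alpha}}$ we have $\umuO{}^k(\iota)\leq \muO(\iota)$ and $\umuO{}^k(\iota)\geq \muO(\iota)-2\sqrt{\tfrac{2\alpha\log f(k)}{\NO{}^k(\iota)}}$ for every $\iota$ simultaneously; that is, the LCB validly lower-bounds the truth and lies at most two bonus widths below it. This supplies the $2/k^{2\alpha}$ term.

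The key deterministic implication comes next. If $i=\underline{\pi}_{\tO}^k$, then $i$ must beat every arm on the LCB, in particular every better arm $\iota\in\Gu_i$ (those with $\DeltaO(\iota)\leq \DeltaO(i)/2$). From $\umuO{}^k(i)\geq \umuO{}^k(\iota)$ and the two concentration bounds, $\muO(i)\geq \muO(\iota)-2\sqrt{\tfrac{2\alpha\log f(k)}{\NO{}^k(\iota)}}$, so $2\sqrt{\tfrac{2\alpha\log f(k)}{\NO{}^k(\iota)}}\geq \DeltaO(i)-\DeltaO(\iota)\geq \DeltaO(i)/2$, giving $\NO{}^k(\iota)\leq \tfrac{32\alpha\log f(k)}{\DeltaO(i)^2}$. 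Summing over $\Gu_i$ bounds the pulls absorbed by better arms, so the worse arms $\Gl_i$ (each with gap $>\DeltaO(i)/2>0$) must absorb $\sum_{\iota\in\Gl_i}\NO{}^k(\iota)\geq k-\tfrac{32\alpha|\Gu_i|\log f(k)}{\DeltaO(i)^2}$.

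Finally I would contradict this using the UCB behaviour of $\algO$. Each $\iota\in\Gl_i$ is a genuine suboptimal arm, so its pull count is controlled by Lem.~\ref{lem:upper_bound_of_Nk_geq_k_div_scalar}. For $k\geq k_i$ the $\tfrac{|\Gu_i|}{\DeltaO(i)^2}$ contribution inside $k_i$ guarantees $\tfrac{32\alpha|\Gu_i|\log f(k)}{\DeltaO(i)^2}\leq k/2$, so the previous step forces $\sum_{\iota\in\Gl_i}\NO{}^k(\iota)\geq k/2$. Applying Lem.~\ref{lem:upper_bound_of_Nk_geq_k_div_scalar} with per-arm fractional thresholds and a union bound over the at most $A$ arms of $\Gl_i$ shows this collective over-pull occurs with probability at most $\tfrac{2A}{k^{2\alpha-1}}$, which is the second term. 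Collecting the concentration-failure and over-pull probabilities gives $\Pr(i=\underline{\pi}_{\tO}^k)\leq \tfrac{2}{k^{2\alpha}}+\tfrac{2A}{k^{2\alpha-1}}$.

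The hard part is matching the threshold $k_i$ exactly, and two points need care. First, the count bounds carry the self-referential factor $\log f(k)$, whereas $k_i$ carries the $k$-independent $\log(1+\tfrac{\alpha A}{\Delta_{\min}})$; this separation of scales is precisely what Lem.~\ref{lem:upper_bound_of_Nk_geq_k_div_scalar} has already packaged, so I would route the $\Gl_i$ estimates through that lemma rather than re-deriving a raw UCB count bound. Second, a naive uniform pigeonhole over $\Gl_i$ (taking $\nu=2|\Gl_i|$) only yields the lossy dependence $\tfrac{|\Gl_i|}{\DeltaO(i)^2}$, which can be much larger than the claimed $\sum_{\iota\in\Gl_i}\tfrac{1}{\DeltaO(\iota)^2}$. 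To recover the tight dependence I would instead choose per-arm thresholds $\nu_\iota\propto \DeltaO(\iota)^2$ (normalized so the count caps $k/\nu_\iota$ sum below $k/2$), which makes every validity threshold $\tfrac{\alpha\nu_\iota}{\DeltaO(\iota)^2}\log(\cdots)$ equal to $\Theta\big(\alpha(\sum_{\iota\in\Gl_i}\tfrac{1}{\DeltaO(\iota)^2})\log(\cdots)\big)$, matching $k_i$; the only wrinkle is clipping $\nu_\iota$ to the admissible range $[1,4A]$ for large-gap arms, which are pulled negligibly anyway.
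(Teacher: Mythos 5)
The paper does not actually prove this lemma --- it is imported by citation from \citep{huang2022tiered} --- so there is no in-paper proof to compare against, but your reconstruction is correct and follows exactly the argument the statement's structure encodes: condition on two-sided concentration (cost $2/k^{2\alpha}$), note that $i=\underline{\pi}_{\tO}^k$ forces $\NO{}^k(\iota)\leq 32\alpha\log f(k)/\DeltaO^2(i)$ for every $\iota\in\Gu_i$ (since $\muO(\iota)-\muO(i)\geq \DeltaO(i)/2$ there), conclude that for $k\geq k_i$ the arms of $\Gl_i$ must jointly absorb at least $k/2$ pulls, and rule this out through Lem.~\ref{lem:upper_bound_of_Nk_geq_k_div_scalar} with a union bound (cost $2A/k^{2\alpha-1}$). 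Your two refinements are precisely the ones needed to match the stated $k_i$: the per-arm thresholds $\nu_\iota\propto \DeltaO^2(\iota)\sum_{\iota'\in\Gl_i}\DeltaO^{-2}(\iota')$ recover the $\sum_{\iota\in\Gl_i}\DeltaO^{-2}(\iota)$ dependence that a uniform pigeonhole would degrade to $|\Gl_i|/\DeltaO^2(i)$, and the clipping of $\nu_\iota$ to $[1,4A]$ is indeed harmless, since $\nu_\iota\geq 2$ automatically, clipping at $4A$ implies $4A/\DeltaO^2(\iota)=O\big(\sum_{\iota'\in\Gl_i}\DeltaO^{-2}(\iota')\big)$ so every validity threshold stays $O(\alpha\sum_{\iota'\in\Gl_i}\DeltaO^{-2}(\iota')\log(1+\alpha A/\Delta_{\min}))$, and the additive $\nu_\iota\leq 4A$ term is absorbed because $\sum_{\iota\in\Gl_i}\DeltaO^{-2}(\iota)+4|\Gu_i|/\DeltaO^2(i)\geq A$ gives $k_i\geq 8\alpha c A\log(1+\alpha A/\Delta_{\min})$.
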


\begin{lemma}\label{lem:behavior_of_UCB}
    We denote $k_i' := 3A + c\cdot \frac{3 \alpha A}{\DeltaO^2(i)}\log(1+\frac{\alpha A}{\Delta_{\min}})$ and $\tilde{k}_i := 3 + c\cdot \frac{3 \alpha }{\DeltaO^2(i)}\log(1+\frac{\alpha A}{\Delta_{\min}})$, where $c$ is specified in Lem. \ref{lem:upper_bound_of_Nk_geq_k_div_scalar}, and denote $k_{\max} := \max_{i \neq i^*} \max\{k_i,~k_i'\}$, where $k_i$ is defined in Lemma \ref{lem:combining_UCB_with_LCB_new}, we have:
    \begin{align}
        &\Pr(i^* = \upiO{}^k) = 1 - \sum_{i \neq i^*} \Pr(i = \upiO{}^k) \geq 1 - \frac{2A}{k^{2\alpha}} - \frac{2A^2}{k^{2\alpha-1}},\quad \forall k \geq k_{\max}. \label{eq:converge_of_upiOk}\\
        &\Pr(\NO{}^k(i^*) > \frac{k}{2}) \geq \Pr(\NO{}^k(i^*) \geq \frac{2k}{3}) \geq 1 - \sum_{i\neq i^*}\Pr(\NO{}^k(i) \leq \frac{k}{3A}) \geq 1 - \frac{2A}{k^{2\alpha - 1}},\quad \forall k \geq k_{\max}.\label{eq:over_half}\\
        &\Pr(\NO{}^k(i) > \frac{k}{2}) \leq \Pr(\NO{}^k(i) \geq \frac{k}{3}) \leq \frac{2}{k^{2\alpha - 1}},\quad \forall k \geq \tilde{k}_i.
    \end{align}
\end{lemma}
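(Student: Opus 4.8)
The plan is to derive all three displays directly from the two UCB lemmas just stated, namely the count bound in Lem.~\ref{lem:upper_bound_of_Nk_geq_k_div_scalar} and the LCB-misidentification bound in Lem.~\ref{lem:combining_UCB_with_LCB_new}, supplemented only by a union bound and a pigeonhole counting argument. The one structural fact I would invoke at the outset is that, under Assump.~\ref{assump:unique_optimal_policy}, the optimal arm $i^*$ is the unique arm with zero gap, so every $i \neq i^*$ satisfies $\DeltaO(i) > 0$; this is exactly the hypothesis under which both cited lemmas apply, and it lets me range the sums over the at most $A-1$ suboptimal arms.

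For \eqref{eq:converge_of_upiOk} I would first note that the equality is free: the greedy LCB choice $\upiO{}^k$ is a single arm, so the events $\{i = \upiO{}^k\}_{i \in [A]}$ partition the sample space and $\Pr(i^* = \upiO{}^k) = 1 - \sum_{i \neq i^*}\Pr(i = \upiO{}^k)$. Then I would bound each summand by Lem.~\ref{lem:combining_UCB_with_LCB_new}, which gives $\Pr(i = \upiO{}^k) \leq 2/k^{2\alpha} + 2A/k^{2\alpha-1}$ once $k \geq k_i$, and sum over the fewer than $A$ suboptimal arms. Since $k_{\max} \geq \max_{i \neq i^*} k_i$ by definition, the bound $\sum_{i \neq i^*}\Pr(i = \upiO{}^k) \leq 2A/k^{2\alpha} + 2A^2/k^{2\alpha-1}$ holds uniformly for $k \geq k_{\max}$, which is the claim.

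The counting step is the heart of \eqref{eq:over_half}. The first inequality there is mere event inclusion, $\{\NO{}^k(i^*) \geq 2k/3\} \subseteq \{\NO{}^k(i^*) > k/2\}$. For the second, I would use that the total number of pulls of $\MO$ through round $k$ equals $k$: if every suboptimal arm has $\NO{}^k(i) < k/(3A)$, then $\NO{}^k(i^*) = k - \sum_{i \neq i^*}\NO{}^k(i) > k - (A-1)\cdot k/(3A) > 2k/3$. Taking complements and applying a union bound reduces the task to controlling $\Pr(\NO{}^k(i) \geq k/(3A))$ for each suboptimal $i$, which is exactly Lem.~\ref{lem:upper_bound_of_Nk_geq_k_div_scalar} with $\scalar = 3A$ (admissible since $3A \leq 4A$); its threshold specializes to $k_i'$ and its bound to $2/k^{2\alpha-1}$. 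Summing over the suboptimal arms and using $k_{\max} \geq \max_{i \neq i^*} k_i'$ yields $\Pr(\NO{}^k(i^*) \geq 2k/3) \geq 1 - 2A/k^{2\alpha-1}$ for $k \geq k_{\max}$. The third display is the cleanest: $\{\NO{}^k(i) > k/2\} \subseteq \{\NO{}^k(i) \geq k/3\}$, and invoking Lem.~\ref{lem:upper_bound_of_Nk_geq_k_div_scalar} with $\scalar = 3$ supplies both the threshold $\tilde{k}_i$ and the bound $2/k^{2\alpha-1}$ directly.

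I do not anticipate a genuine obstacle here, since the hard probabilistic work, coupling UCB concentration with the gap $\DeltaO$, is already packaged inside the two cited lemmas. The only care required is bookkeeping: matching each choice of $\scalar$ to the correct burn-in threshold ($\scalar = 3A$ produces $k_i'$, $\scalar = 3$ produces $\tilde{k}_i$), confirming $3A \leq 4A$ so that $\scalar$ stays in the admissible range $[1,4A]$, and verifying that $k_{\max}$ dominates every per-arm threshold $k_i$ and $k_i'$ so the union-bounded statements hold simultaneously for all $k \geq k_{\max}$. I would also flag the mild convention that the cumulative pull count of $\MO$ equals $k$ at round $k$, which is what powers the pigeonhole identity behind \eqref{eq:over_half}.
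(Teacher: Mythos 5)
Your proof is correct and takes essentially the same route as the paper's, which consists of a single line invoking Lem.~\ref{lem:upper_bound_of_Nk_geq_k_div_scalar} and Lem.~\ref{lem:combining_UCB_with_LCB_new}; you have supplied exactly the intended details, namely the partition/union-bound step for \eqref{eq:converge_of_upiOk}, the pigeonhole count for \eqref{eq:over_half}, and the choices $\scalar = 3A$ and $\scalar = 3$ that produce the thresholds $k_i'$ and $\tilde{k}_i$ respectively. The only remark worth adding is that the middle expression in \eqref{eq:over_half} should read $\Pr(\NO{}^k(i) \geq \frac{k}{3A})$ rather than $\leq$, which is precisely what your pigeonhole argument delivers.
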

\begin{proof}
    By applying Lem. \ref{lem:combining_UCB_with_LCB_new} and Lem. \ref{lem:upper_bound_of_Nk_geq_k_div_scalar} we can obtain the results.
\end{proof}

\begin{lemma}\label{lem:upper_bound_NEK}
    For arbitrary $K \geq A + 1$ and arbitrary $k_0 \leq K$, and $i\neq i^*_{\tE}$, we have:
    \begin{align*}
        \NE{}^{K}&(i) \leq  k_0 + \sum_{k=k_0 + 1}^{K} \mathbb{I}[\{\umuO{}^k(\underline{\pi}_\tO^k) \leq \omuE{}^k(\underline{\pi}_\tO^k) + \epsilon\} \cap \{\NO{}^k(\underline{\pi}_\tO^k) > k/\ratio\}\cap\{i = \upiO{}^k\}\cap\{\pi_{\tE}^k = i\}]\\
        &+\sum_{k=k_0 + 1}^{K} \mathbb{I}[0 \geq \hmuE{}^k(i^*_{\tE}) + \sqrt{\frac{2\alphaE \log f(k)}{\NE{}^k(i^*_{\tE})}} - \muE(i^*_{\tE})]\\
        &+\sum_{k=k_0 + 1}^{K} \mathbb{I}[\tilde{\mu}_{\tE}^k(i) + \sqrt{\frac{2\alphaE \log f(K)}{k}} - \muE(i) - \DeltaE(i)\geq 0]\numberthis\label{eq:decomposition_of_N}
    \end{align*}
\end{lemma}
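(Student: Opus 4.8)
The claimed inequality is a purely pathwise (deterministic) counting decomposition of the pulls of the sub-optimal arm $i\neq i^*_{\tE}$: no probability enters at this stage, and the concentration estimates of Lem.~\ref{lem:bandit_concentration} are only invoked afterwards to control each of the three sums. The plan is therefore to classify every iteration $k$ at which $\pi_\tE^k=i$ according to which branch of line~\ref{line:checking_cond_MAB} was executed, and then to post-process the ``explore by itself'' branch by the standard UCB peeling argument. First I would write $\NE{}^K(i)\le k_0+\sum_{k=k_0+1}^K \mathbb{I}[\pi_\tE^k=i]$, bounding the contribution of the first $k_0$ iterations trivially by $k_0$.

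For each $k>k_0$ with $\pi_\tE^k=i$, exactly one branch fired. Either the checking event $\{\umuO{}^k(\upiO{}^k)\le \omuE{}^k(\upiO{}^k)+\epsilon\}\cap\{\NO{}^k(\upiO{}^k)>k/\ratio\}$ held and $\pi_\tE^k=\upiO{}^k$, which (since this additionally forces $i=\upiO{}^k$) contributes precisely the first indicator of~\eqref{eq:decomposition_of_N}; or the else-branch fired, so $\pi_\tE^k=\opiE{}^k=i$. For the else-branch the key observation is that $\opiE{}^k=i$ means $i$ maximises the optimistic index, hence $\omuE{}^k(i)\ge \omuE{}^k(i^*_{\tE})$. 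I then split on the validity of the optimal arm's upper bound: if $\omuE{}^k(i^*_{\tE})\le \muE(i^*_{\tE})$, i.e. $0\ge \hmuE{}^k(i^*_{\tE})+\sqrt{2\alphaE\log f(k)/\NE{}^k(i^*_{\tE})}-\muE(i^*_{\tE})$, this pull is charged to the second sum; otherwise $\omuE{}^k(i)\ge \omuE{}^k(i^*_{\tE})>\muE(i^*_{\tE})=\muE(i)+\DeltaE(i)$, so the empirical index of $i$ over-estimates its value by at least the gap, i.e. $\hmuE{}^k(i)+\sqrt{2\alphaE\log f(k)/\NE{}^k(i)}>\muE(i)+\DeltaE(i)$.

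The final move is to convert this last family of ``over-optimism'' events into the third sum. Here I would use the standard Auer--Cesa-Bianchi--Fischer reparametrisation: each else-branch pull of $i$ increments $\NE{}(i)$, so these events occur at distinct values $m=\NE{}^k(i)\le k\le K$; using monotonicity of $f$ (so $\log f(k)\le \log f(K)$) and writing $\tilde\mu_\tE^k(i)$ for the empirical mean of the first $k$ samples of arm $i$ under the usual pre-drawn-reward coupling, re-indexing by the pull count yields exactly $\sum \mathbb{I}[\tilde\mu_\tE^k(i)+\sqrt{2\alphaE\log f(K)/k}-\muE(i)-\DeltaE(i)\ge 0]$. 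The step I expect to need the most care is precisely this reparametrisation and the attendant bookkeeping: arm $i$ is pulled in \emph{both} branches, so $\NE{}^k(i)$ is inflated by the trust-branch pulls — one must note this only shrinks the bonus and is harmless — and one has to keep the index ranges of the three sums consistent, since the third is naturally a sum over pull-counts of $i$ while the first two are sums over iterations, with the leading $k_0$ absorbing the low-count contributions in all cases.
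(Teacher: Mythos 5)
Your proposal is correct and follows essentially the same route as the paper's proof: bound the first $k_0$ iterations trivially, split each later pull of $i$ according to whether the trust branch or the UCB branch fired, decompose the UCB-branch event $\omuE{}^k(i)\ge\omuE{}^k(i^*_\tE)$ into ``the optimal arm's UCB undershoots its mean'' plus ``arm $i$'s UCB exceeds $\muE(i)+\DeltaE(i)$'', and re-index the latter by pull count with $\log f(k)\le\log f(K)$ to obtain the $\tilde\mu_\tE^k(i)$ sum. The bookkeeping caveats you flag (the trust-branch pulls inflating $\NE{}^k(i)$, and the iteration-index versus pull-count index ranges) are exactly the points the paper handles implicitly in its final reparametrisation step.
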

\begin{proof}
    \begin{align*}
        &\NE{}^{K}(i) = \sum_{k=1}^{K} \mathbb{I}[\pi_{\tE}^k = i]\leq k_0 + \sum_{k_0 + 1}^K \mathbb{I}[\pi_{\tE}^k = i] \\
        \leq & k_0 + \sum_{k=k_0 + 1}^{K} \mathbb{I}[\underbrace{\{\umuO{}^k(\underline{\pi}_\tO^k) \leq \omuE{}^k(\underline{\pi}_\tO^k) + \epsilon\} \cap \{\NO{}^k(\underline{\pi}_\tO^k) > k/\ratio\}\cap\{i = \upiO{}^k\}}_{e_1}\cap\{\pi_{\tE}^k = i\}]\\
        &+\sum_{k=k_0 + 1}^{K} \mathbb{I}[\underbrace{\{\hmuE{}^k(i) + \sqrt{\frac{2\alphaE \log f(k)}{\NE{}^k(i)}} - \muE(i) - \DeltaE(i)\geq \hmuE{}^k(i^*_{\tE}) + \sqrt{\frac{2\alphaE \log f(k)}{\NE{}^k(i^*_{\tE})}} - \muE(i^*_{\tE})\}}_{e_2} \cap \{\pi_{\tE}^k = i\}]. \tag{If $\pi_{\tE}^k = i$ happens, one of $e_1$ and $e_2$ must hold}
    \end{align*}
    For the second term, we have:
    \begin{align*}
        &\sum_{k=k_0 + 1}^{K} \mathbb{I}[\{\hmuE{}^k(i) + \sqrt{\frac{2\alphaE \log f(k)}{\NE{}^k(i)}} - \muE(i) - \DeltaE(i)\geq \hmuE{}^k(i^*_{\tE}) + \sqrt{\frac{2\alphaE \log f(k)}{\NE{}^k(i^*_{\tE})}} - \muE(i^*_{\tE})\} \cap \{\pi_{\tE}^k = i\}]\\
        \leq &\sum_{k=k_0 + 1}^{K} \mathbb{I}[0 \geq \hmuE{}^k(i^*_{\tE}) + \sqrt{\frac{2\alphaE \log f(k)}{\NE{}^k(i^*_{\tE})}} - \muE(i^*_{\tE})]\\
        &+\sum_{k=k_0 + 1}^{K} \mathbb{I}[\{\hmuE{}^k(i) + \sqrt{\frac{2\alphaE \log f(k)}{\NE{}^k(i)}} - \muE(i) - \DeltaE(i)\geq 0\}\cap\{\pi_{\tE}^k = i\}] \tag{$\mathbb{I}[a \geq b] \leq \mathbb{I}[a \geq c] + \mathbb{I}[c \geq b]$; $\mathbb{I}[a\cap b] \leq \mathbb{I}[a]$}\\
        \leq & \sum_{k=k_0 + 1}^{K} \mathbb{I}[0 \geq \hmuE{}^k(i^*_{\tE}) + \sqrt{\frac{2\alphaE \log f(k)}{\NE{}^k(i^*_{\tE})}} - \muE(i^*_{\tE})]+\sum_{k=k_0 + 1}^{K} \mathbb{I}[\tilde{\mu}_{\tE}^k(i) + \sqrt{\frac{2\alphaE \log f(K)}{k}} - \muE(i) - \DeltaE(i)\geq 0]
    \end{align*}

    where in the last step, $\tilde{\mu}_{\tE}^k(i)$ is defined to be the average of $k$ random samples from reward distribution of arm $i$ in $\ME$, and we replace $\NE{}^k(i)$ in the denominator with increasing $k$ since the indicator function equals 1 only when $\{\pi_{\tE}^k = i\}$, which implies that $\NE{}^k(i)$ should increase by 1. 
\end{proof}

\ThmBanditRegret*
\begin{proof}
    We first study the case when $\ME$ and $\MO$ satisfy Def. \ref{def:close_state}.
    \paragraph{Case 1: $\ME$ and $\MO$ are $\epsilon$-close}
    In this case, since $i^*_{\tO}=i^*_{\tE}$, we use $i^*$ to denote the common optimal arm. As a result of Lem. \ref{lem:valid_under_est}, we have:
    \begin{align*}
        \Pr(\umuO{}^k(i^*) \leq \omuE{}^k(i^*) + \epsilon) \geq 1 - \frac{1}{4Ak^{2\alpha}}.
    \end{align*}
    We consider the same $k_{\max}$ defined in Lem. \ref{lem:behavior_of_UCB}, as a result of Lem. \ref{lem:behavior_of_UCB}, for arbitrary $K \geq k_{\max} + 1$,
    \begin{align*}
        \sum_{k = k_{\max} + 1}^K \Pr(\pi_{\tE}^k \neq i^*_{\tE}) \leq& \sum_{k = k_{\max} + 1}^K \Pr(\umuO{}^k(i^*) > \omuE{}^k(i^*) + \epsilon) + \Pr(\NO{}^k(i^*) \leq \frac{k}{2}) + \Pr(i^* \neq \upiO{}^k)\\
        \leq & \sum_{k = k_{\max} + 1}^K \frac{1}{4Ak^{2\alpha}} + \frac{2A}{k^{2\alpha}} + \frac{2A^2}{k^{2\alpha-1}} + \frac{2A}{k^{2\alpha - 1}}\\
        \leq & \sum_{k = k_{\max} + 1}^\infty \frac{8A^2}{k^{2\alpha-1}} \leq \frac{8A^2}{(2\alpha-2)k_{\max}^{2\alpha - 2}}.
    \end{align*}
    Therefore, all we need to do is to upper bound the regret up to step $k_{\max}$. 
    In the following, we separately upper bound $\EE[\NE{}^{k_{\max}}(i)]$ for $i\neq i^*$ for two cases depending on the comparison between $\DeltaE(i)$ and $\DeltaO(i)$. 
    \paragraph{Case 1-(a) $0<\DeltaE(i) \leq 4\DeltaO(i)$} 
    Recall $\tk_i$ in Lem. \ref{lem:behavior_of_UCB}. In this case, since $\DeltaO^{-1}(i) \leq \DeltaE^{-1}(i)$, we have $\tk_i = O(\frac{1}{\DeltaE^2(i)}\log\frac{A}{\Delta_{\min}})$, and by taking expectation over Eq.~\eqref{eq:decomposition_of_N}:
    \begin{align*}
        \EE[\NE{}^{K}(i)] \leq & \tk_i + \sum_{k=\tk_i + 1}^{K} \Pr(\{\NO{}^k(\underline{\pi}_\tO^k) \geq \frac{k}{2}\}) + \sum_{k=1}^{K} \Pr(0 \geq \hmuE{}^k(i_{\tE}^*) + \sqrt{\frac{2\alphaE \log f(k)}{\NE{}^k(i_{\tE}^*)}} - \muE(i_{\tE}^*))\\
        &+\EE[\sum_{k=1}^{K} \mathbb{I}[\tilde{\mu}_{\tE}^k(i) + \sqrt{\frac{2\alphaE \log f(k)}{k}} - \muE(i) - \DeltaE(i)\geq 0]]\\
        \leq & \tk_i + \sum_{k=\tk_i + 1}^{K} \frac{2}{k^{2\alpha - 1}} + \sum_{k=1}^K \frac{1}{8Ak^{2\alpha}} + 1 + \frac{2}{\DeltaE^2(i)}(\alpha\log f(K) + \sqrt{\pi \alpha\log f(K)} + 1)\tag{Lem. 8.2 in \citep{lattimore2020bandit}}\\
        = & O(\frac{1}{\DeltaE^2(i)}\log \frac{ AK}{\Delta_{\min}}) 
    \end{align*}
    \paragraph{Case 1-(b) $\DeltaE(i) > 4\DeltaO(i) > 0$}
    We introduce $\bar{k}_i := \frac{c_{\tE,i}\alpha}{\DeltaE^2(i)}\log \frac{A}{\Delta_{\min}}$, where $c_{\tE,i}$ is the minimal constant, such that when $k \geq \frac{c_{\tE,i}\alpha}{\DeltaE^2(i)}\log \frac{\alpha A}{\Delta_{\min}}$, we always have $k \geq  \frac{256\alpha\log f(k)}{\DeltaE^2(i)}$. Therefore, for all $k \geq \bar{k}_i$, $\NO{}^k(i) > \frac{k}{2}$ implies $\NO{}^k(i) \geq \frac{128\alphaO\log f(k)}{\DeltaE^2(i)}$ and we have:
    \begin{align*}
        & \mathbb{I}[\{\umuO{}^k(i) \leq \omuE{}^k(i) + \epsilon\}\cap\{\NO{}^k(i) \geq \frac{k}{2}\}\cap\{i = \upiO{}^k\}\cap\{\pi_{\tE}^k = i\}] \\
        =&\mathbb{I}[\{\umuO{}^k(i) - \muO(i) + \frac{\DeltaE(i)}{4} \leq \omuE{}^k(i) \pm \muE(i) \pm \muE(i^*) \pm \muO(i^*) - \muO(i) + \epsilon + \frac{\DeltaE(i)}{4}\}\\
        &\quad \cap\{\NO{}^k(i) \geq \frac{128\alphaO\log f(k)}{\DeltaE^2(i)}\}\cap\{i = \upiO{}^k\}\cap\{\pi_{\tE}^k = i\}]\\
        \leq &\mathbb{I}[\{\umuO{}^k(i) - \muO(i) + \frac{\DeltaE(i)}{4}\leq \omuE{}^k(i) - \muE(i) - \DeltaE(i) + \frac{\DeltaE(i)}{4} + \frac{\DeltaE(i)}{8} + \frac{\DeltaE(i)}{8} + \frac{\DeltaE(i)}{4}\} \tag{$\DeltaO(i)\leq \frac{\DeltaE(i)}{4}$; Optimal value dominance ($\muE(i^*) - \muO(i^*) \leq \frac{\Delta_{\min}}{2}\leq \frac{\DeltaE(i)}{8}$); $\epsilon < \frac{\Delta_{\min}}{4}\leq \frac{\DeltaE(i)}{8}$}\\
        & \quad \cap\{\NO{}^k(i) \geq \frac{128\alphaO\log f(k)}{\DeltaE^2(i)}\}\cap\{i = \upiO{}^k\}\cap\{\pi_{\tE}^k = i\}]\\
        \leq& \mathbb{I}[\{\umuO{}^k(i) - \muO(i) + \frac{\DeltaE(i)}{4}\leq \omuE{}^k(i) - \muE(i) -  \frac{\DeltaE(i)}{4}\}\cap\{\NO{}^k(i) \geq \frac{128\alphaO\log f(k)}{\DeltaE^2(i)}\}\cap\{i = \upiO{}^k\}\cap\{\pi_{\tE}^k = i\}] \tag{$\DeltaO(i) + \frac{\Delta_{\min}}{2} + \epsilon + \frac{\DeltaE(i)}{4} \leq \frac{\DeltaE(i)}{4} + \frac{\DeltaE(i)}{8} + \frac{\DeltaE(i)}{16} + \frac{\DeltaE(i)}{4} < \frac{3\DeltaE(i)}{4}$}\\
        \leq& \mathbb{I}[\{\umuO{}^k(i) - \muO(i) + \frac{\DeltaE(i)}{4}\leq 0\}\cap\{\NO{}^k(i) \geq \frac{128\alphaO\log f(k)}{\DeltaE^2(i)}\}\cap\{i = \upiO{}^k\}\cap\{\pi_{\tE}^k = i\}] \\
        & + \mathbb{I}[\{0\leq \omuE{}^k(i) - \muE(i) -  \frac{\DeltaE(i)}{4}\}\cap\{i = \upiO{}^k\}\cap\{\pi_{\tE}^k = i\}]\\
        =& \mathbb{I}[\{\hmuO{}^k(i) - \muO(i) \leq \sqrt{\frac{2\alphaO \log f(k)}{\NO{}^k(i)}} - \frac{\DeltaE(i)}{4}\}\cap\{\NO{}^k(i) \geq \frac{128\alphaO\log f(k)}{\DeltaE^2(i)}\}] \\
        & + \mathbb{I}[\{\hmuE{}^k(i) - \muE(i) + \sqrt{\frac{2\alphaE \log f(k)}{\NE{}^k(i)}} \geq \frac{\DeltaE(i)}{4}\}\cap\{i = \upiO{}^k\}\cap\{\pi_{\tE}^k = i\}] \\
        \leq& \mathbb{I}[\hmuO{}^k(i) - \muO(i) \leq -\sqrt{\frac{2\alphaO \log f(k)}{\NO{}^k(i)}}] + \mathbb{I}[\{\hmuE{}^k(i) - \muE(i) + \sqrt{\frac{2\alphaE \log f(k)}{\NE{}^k(i)}} \geq \frac{\DeltaE(i)}{4}\}\cap\{i = \upiO{}^k\}\cap\{\pi_{\tE}^k = i\}].\numberthis\label{eq:upper_bound_case1b}
    \end{align*}
    By taking the expectation over both sides of Eq.~\eqref{eq:decomposition_of_N}, we have:
    \begin{align*}
        &\EE[\NE{}^{K}(i)] \\
        \leq & \bar{k}_i + \sum_{k=\bar{k}_i + 1}^{K} \Pr(\hmuO{}^k(i) - \muO(i) \leq -\sqrt{\frac{2\alphaO \log f(k)}{\NO{}^k(i)}}) + \sum_{k=1}^{K} \Pr(0 \geq \hmuE{}^k(i_{\tE}^*) + \sqrt{\frac{2\alphaE \log f(k)}{\NE{}^k(i_{\tE}^*)}} - \muE(i_{\tE}^*))\\
        & + \EE[\sum_{k=\bar{k}_i + 1}^{K}\mathbb{I}[\{\hmuE{}^k(i) - \muE(i) + \sqrt{\frac{2\alphaE \log f(k)}{\NE{}^k(i)}} \geq \frac{\DeltaE(i)}{4}\}\cap\{i = \upiO{}^k\}\cap\{\pi_{\tE}^k = i\}]] \\
        &+\EE[\sum_{k=1}^{K} \mathbb{I}[\tilde{\mu}_{\tE}^k(i) + \sqrt{\frac{2\alphaE \log f(k)}{k}} - \muE(i) - \DeltaE(i)\geq 0]]\\
        \leq & \bar{k}_i + \sum_{k=\bar{k}_i + 1}^{K} \Pr(\hmuO{}^k(i) - \muO(i) \leq -\sqrt{\frac{2\alphaO \log f(k)}{\NO{}^k(i)}})  + \sum_{k=1}^{K} \Pr(0 \geq \hmuE{}^k(i_{\tE}^*) + \sqrt{\frac{2\alphaE \log f(k)}{\NE{}^k(i_{\tE}^*)}} - \muE(i_{\tE}^*))\\
        &+ 2 \EE[\sum_{k=1}^{K} \mathbb{I}[\tilde{\mu}_{\tE}^k(i) + \sqrt{\frac{2\alphaE \log f(k)}{k}} - \muE(i) \geq \frac{\DeltaE(i)}{4}]]\\
        \leq & \bar{k}_i + \sum_{k=\bar{k}_i + 1}^{K} \frac{2}{k^{2\alpha - 1}} + 2 + \frac{64}{\DeltaE^2(i)}(\alpha\log f(K) + \sqrt{\pi \alpha\log f(K)} + 1)\tag{Lem. 8.2 in \citep{lattimore2020bandit}}\\
        = & O(\frac{1}{\DeltaE^2(i)}\log \frac{AK}{\Delta_{\min}}).
    \end{align*}
    Since $k_{\max}=\Poly(A,\frac{1}{\Delta_{\min}})$, combining both cases, we have:
    \begin{align*}
        \Regret_{K}(\ME) =& \sum_{i \neq i^*}\DeltaE(i)\cdot\EE[\NE{}^{K}(i)]\leq \sum_{i \neq i^*} \DeltaE(i)\EE[\NE{}^{k_{\max}}(i)] + \sum_{k = k_{\max} + 1}^K \Pr(\pi_{\tE}^k \neq i^*_{\tE}) \\
        =&O(\sum_{i \neq i^*} \frac{1}{\DeltaE(i)}\log\frac{A}{\Delta_{\min}}).
    \end{align*}

    \paragraph{Case 2: $\ME$ and $\MO$ are not $\epsilon$-close}
    In that case, we will use $i^*_{\tO}$ and $i^*_{\tE}$ to denote optimal arm in $\MO$ and $\ME$, respectively, and either $i^*_{\tO} = i^*_{\tE}$ but $\muO{}(i^*_{\tO}) \geq \muO{}(i^*_{\tE}) + \epsilon$, or $i^*_{\tO}\neq i^*_{\tE}$ and as a result of Assump. \ref{assump:opt_value_dominance}:
    \begin{align}
        \muE{}(i_{\tO}^*) = \muE{}(i_{\tE}^*) - \DeltaE(i_{\tO}^*) \leq \muO{}(i_{\tO}^*) + \frac{\Delta_{\min}}{2} - \DeltaE(i_{\tO}^*) \leq \muO{}(i_{\tO}^*) - \frac{\DeltaE(i_{\tO}^*)}{2}.\label{eq:gap_between_optimal_arms}
    \end{align}
    Next, we first separate arms other than $i^*_{\tO}$ and $i^*_{\tE}$ into three cases:
    \paragraph{Case 2-(a) $i \neq i^*_{\tO}, i\neq i^*_{\tE}$ and $0<\DeltaE(i) \leq 4\DeltaO(i)$}
    The analysis is the same as Case 1-(a), and we have $\EE[\NE{}^{K}(i)]  = O(\frac{1}{\DeltaE^2(i)}\log \frac{AK}{\Delta_{\min}}).$
    \paragraph{Case 2-(b) $i \neq i^*_{\tO}, i\neq i^*_{\tE}$ and $\DeltaE(i) > 4\DeltaO(i) > 0$}
    The analysis is the same as Case 1-(b), and we have $\EE[\NE{}^{K}(i)] = O(\frac{1}{\DeltaE^2(i)}\log \frac{AK}{\Delta_{\min}})$.
    \paragraph{Case 2-(c) Others}
    If $i^*_{\tO} = i^*_{\tE}$, $\ME$ suffers no regret when choosing $i=i^*_{\tO}$, and therefore:
    \begin{align*}
        \Regret_{K}(\ME) =& \sum_{i \neq i_{\tE}^*} \DeltaE(i)\cdot \EE[\NE{}^k(i)] \leq \sum_{i \neq i_{\tE}^*} \DeltaE(i)\cdot O(\frac{1}{\DeltaE^2(i)}\log\frac{AK}{\Delta_{\min}}) =O(\sum_{i \neq i_{\tE}^*} \frac{1}{\DeltaE(i)}\log\frac{AK}{\Delta_{\min}}).
    \end{align*}
    In the following, we study the case when $i^*_{\tO} \neq i^*_{\tE}$.
    For arm $i = i^*_{\tO}$, we define $k_{\max}' = \frac{c_{\max}' \alpha}{\DeltaE(i_{\tO}^*)^2}\log\frac{\alpha A}{\Delta_{\min}}$, where $c_{\max}'$ is the minimal constant, such that for all $k \geq \frac{c_{\max}' \alpha}{\DeltaE(i_{\tO}^*)^2}\log\frac{\alpha A}{\Delta_{\min}}$, we always have $k \geq \frac{1024\alpha\log f(k)}{\DeltaE(i_{\tO}^*)^2}$.
    Similar to Eq.~\eqref{eq:upper_bound_case1b}, we check the following event for $k \geq k_{\max}'$:
    \begin{align*}
        &\mathbb{I}[\{\umuO{}^k(\underline{\pi}_\tO^k) \leq \omuE{}^k(\underline{\pi}_\tO^k) + \epsilon\} \cap \{\NO{}^k(\underline{\pi}_\tO^k) > k/\ratio\}\cap \{i_{\tO}^* = \upiO{}^k\}  \cap\{\pi_{\tE}^k = i_{\tO}^*\}]\\
        =&\mathbb{I}[\{\hmuO{}^k(i_{\tO}^*) - \muO{}(i_{\tO}^*) - \sqrt{\frac{2\alpha\log f(k)}{\NO{}^k(i_{\tO}^*)}} \leq \hmuE{}^k(i_{\tO}^*) - \muE{}(i_{\tO}^*) + (\muE{}(i_{\tO}^*) - \muO{}(i_{\tO}^*)) + \sqrt{\frac{2\alpha\log f(k)}{\NE{}^k(i_{\tO}^*)}} + \epsilon \} \\
        &~~\cap \{\NO{}^k(\underline{\pi}_\tO^k) > k/\ratio\}\cap\{i_{\tO}^* = \upiO{}^k\}\cap\{\pi_{\tE}^k = i_{\tO}^*\}]\\
        \leq&\mathbb{I}[\{\hmuO{}^k(i_{\tO}^*) - \muO{}(i_{\tO}^*) - \sqrt{\frac{2\alpha\log f(k)}{\NO{}^k(i_{\tO}^*)}} \leq \hmuE{}^k(i_{\tO}^*) - \muE{}(i_{\tO}^*) - \frac{\DeltaE(i_{\tO}^*)}{4} + \sqrt{\frac{2\alpha\log f(k)}{\NE{}^k(i_{\tO}^*)}}\} \tag{As a result of Eq.~\eqref{eq:gap_between_optimal_arms}, $\muE{}(i_{\tO}^*) - \muO{}(i_{\tO}^*) + \epsilon \leq -\frac{\DeltaE(i_{\tO}^*)}{2} + \frac{\Delta_{\min}}{4} \leq -\frac{\DeltaE(i_{\tO}^*)}{4}$}\\
        &~~\cap \{\NO{}^k(\underline{\pi}_\tO^k) > k/\ratio\}\cap\{i_{\tO}^* = \upiO{}^k\}\cap\{\pi_{\tE}^k = i_{\tO}^*\}]\\
        \leq&\mathbb{I}[\{\hmuO{}^k(i_{\tO}^*) - \muO{}(i_{\tO}^*) - \sqrt{\frac{2\alpha\log f(k)}{\NO{}^k(i_{\tO}^*)}} \leq - \frac{\DeltaE(i^*_{\tO})}{8}\} \cap \{\NO{}^k(\underline{\pi}_\tO^k) > k/\ratio\}\cap\{i_{\tO}^* = \upiO{}^k\}\cap\{\pi_{\tE}^k = i_{\tO}^*\}]\\
        &+ \mathbb{I}[\{\hmuE{}^k(i_{\tO}^*) - \muE{}(i_{\tO}^*) + \sqrt{\frac{2\alpha\log f(k)}{\NE{}^k(i_{\tO}^*)}}\geq \frac{\DeltaE(i_{\tO}^*)}{8}\}\cap \{\NO{}^k(\underline{\pi}_\tO^k) > k/\ratio\}\cap\{i_{\tO}^* = \upiO{}^k\}\cap\{\pi_{\tE}^k = i_{\tO}^*\}]\\
        \leq&\mathbb{I}[\hmuO{}^k(i_{\tO}^*) - \muO{}(i_{\tO}^*) \leq -\sqrt{\frac{2\alpha\log f(k)}{\NO{}^k(i_{\tO}^*)}}] + \mathbb{I}[\{\hmuE{}^k(i_{\tO}^*) - \muE{}(i_{\tO}^*) + \sqrt{\frac{2\alpha\log f(k)}{\NE{}^k(i_{\tO}^*)}}\geq \frac{\DeltaE(i_{\tO}^*)}{8}\}\cap\{\pi_{\tE}^k = i_{\tO}^*\}]. \numberthis\label{eq:case_2c}  
    \end{align*}
    Therefore, by taking the expectation on both side of Eq.~\eqref{eq:decomposition_of_N} and leveraging the above bound, we have:
    \begin{align*}
        \EE[\NE{}^k(i_{\tO}^*)]
        \leq& k_{\max}' + \EE[\sum_{k=k_{\max}' + 1}^{K} \mathbb{I}[\hmuO{}^k(i_{\tO}^*) - \muO{}(i_{\tO}^*) \leq -\sqrt{\frac{2\alpha\log f(k)}{\NO{}^k(i_{\tO}^*)}}]]\\
        & +\EE[\sum_{k=k_{\max}' + 1}^{K} \mathbb{I}[\{\hmuE{}^k(i_{\tO}^*) - \muE{}(i_{\tO}^*) + \sqrt{\frac{2\alpha\log f(k)}{\NE{}^k(i_{\tO}^*)}}\geq \frac{\DeltaE(i_{\tO}^*)}{8}\}\cap\{\pi_{\tE}^k = i_{\tO}^*\}]]\\
        &+\sum_{k=1}^{K} \Pr(0 \geq \hmuE{}^k(i_{\tE}^*) + \sqrt{\frac{2\alphaE \log f(k)}{\NE{}^k(i_{\tE}^*)}} - \muE(i_{\tE}^*))\\
        &+\EE[\sum_{k=1}^{K} \mathbb{I}[\tilde{\mu}_{\tE}^k(i_{\tO}^*) + \sqrt{\frac{2\alphaE \log f(K)}{k}} - \muE(i_{\tO}^*) - \DeltaE(i_{\tO}^*)\geq 0]]\\
        \leq& k_{\max}' + \sum_{k=k_{\max}' + 1}^{K} \Pr(\hmuO{}^k(i_{\tO}^*) - \muO{}(i_{\tO}^*) \leq -\sqrt{\frac{2\alpha\log f(k)}{\NO{}^k(i_{\tO}^*)}})\\
        &+\sum_{k=1}^{K} \Pr(0 \geq \hmuE{}^k(i_{\tE}^*) + \sqrt{\frac{2\alphaE \log f(k)}{\NE{}^k(i_{\tE}^*)}} - \muE(i_{\tE}^*))\\
        &+2\EE[\sum_{k=1}^{K} \mathbb{I}[\tilde{\mu}_{\tE}^k(i_{\tO}^*) + \sqrt{\frac{2\alphaE \log f(K)}{k}} - \muE(i_{\tO}^*) \geq \frac{\DeltaE(i_{\tO}^*)}{8}]]\\
        \leq & k_{\max}' + 2\sum_{k=k_{\max}' + 1}^{K} \frac{2}{f(k)^\alpha} + 2\cdot (1 + \frac{128}{\DeltaE^2(i_{\tO}^*)}(\alpha \log f(K) + \sqrt{\pi \alpha \log f(K)} + 1))\\
        =& O(\frac{1}{\DeltaE^2(i_{\tO}^*)}\log\frac{AK}{\Delta_{\min}})
    \end{align*}
    As a result, for arbitrary $K$, we also have:
    \begin{align*}
        \Regret_{K}(\ME) =& \sum_{i \neq i_{\tE}^*} \DeltaE(i)\cdot \EE[\NE{}^k(i)] \leq \sum_{i \neq i_{\tE}^*} \DeltaE(i)\cdot O(\frac{1}{\DeltaE^2(i)}\log\frac{AK}{\Delta_{\min}}) =O(\sum_{i \neq i_{\tE}^*} \frac{1}{\DeltaE(i)}\log\frac{AK}{\Delta_{\min}}).
    \end{align*}
\end{proof}

\section{Proofs for RL Setting with Single Source/Low-Tier Task}\label{appx:proof_for_RL}
\subsection{Missing Algorithms, Conditions and Notations}\label{appx:RL_add_Alg_Cond_Nota}
\begin{algorithm}[h!]
    \textbf{Input}: Dataset $D$.\\
    \For{$h=1,2,...,H$}{
        \For{$s_h \in \cS_h ,a_h \in \cA_h$}{
            Use $N_{h}(s_h,a_h)$ and $N_{h}(s_h,a_h,s_{h+1})$ to denote the number of times state, action (next state) occurs in the dataset $D$.\\
            $
            \hmP_{h}(\cdot|s_h,a_h)\gets 
            \begin{cases}
                0,\quad & \text{if } N_{h}(s_h,a_h)=0;\\
                \frac{N_{h}(s_h,a_h,\cdot)}{N_{h}(s_h,a_h)},\quad & \text{otherwise}.
            \end{cases}
            $\\
        }
    }
    \Return $\{\hmP_1,\hmP_2,...,\hmP_H\}$.
    \caption{ModelLearning}\label{alg:Model_Learning}
\end{algorithm}
\begin{condition}[Condition on $\algO$]\label{cond:requirement_on_algO}
    $\algO$ is an algorithm which returns deterministic policies at each iteration, and there exists $C_1,C_2$ only depending on $S,A,H$ and $\Delta_{\min}$ but independent of $k$, such that for arbitrary $k\geq 2$, we have $\Pr(\cEOk) \geq 1-\frac{1}{k^\alpha}$ for $\cEOk$ defined below:
    $$
    \cEOk := \{\sum_{\tk=1}^k \VO{,1}^*(s_1)-\VO{,1}^{\pi_{\tO}^\tk}(s_1) \leq C_1 + \alpha C_2 \log k\}.
    $$
\end{condition}
\begin{remark}
We consider such condition to avoid unnecessary discussion on analyzing $\algO$.
Although most of the existing near-optimal algorithms fixed the confidence level before the running of algorithm, as analyzed in Appx. G in \citep{huang2022tiered}, one may combine those algorithms with doubling trick to realize Cond.~\ref{cond:requirement_on_algO} for $\algE$, only at the cost of increase the the regret of $\algO$ to $O(\log^2 K)$.
\end{remark}

\begin{condition}[Condition on function \textbf{Bonus} in Alg. \ref{alg:RL_Setting}]\label{cond:bonus_term}
    Given a confidence sequence $\{\delta_k\}_{k=1}^K$ with $\delta_1,\delta_2,...,\delta_K\in (0,1/2)$, we define the following event at iteration $k\in[K]$ during the running of Alg.~\ref{alg:RL_Setting}:
    \begin{align*}
        \cEBk := & \bigcap_{\substack{(\cdot)\in\{\tE,\tO\},\\ h\in[H], \\ s_h\in\cS_h,a_h\in\cA_h}}\Big\{\{H\cdot \|\hmP_{(\cdot),h}^k(s_h,a_h) - \mP_{(\cdot),h}(s_h,a_h)\|_1 < \bonus_{(\cdot), h}^k(s_h,a_h) \leq B_1\sqrt{\frac{\log (B_2/\delta_k)}{N_{(\cdot),h}^k(s_h,a_h)}}\}\Big\}
    \end{align*}
    we consider the choice of \textbf{Bonus} such that there exists such a $B_1$ and $B_2$ only depending on $S,A,H$ but independent of $\delta_k$, $k$ or $\Delta$, and $\Pr(\cEBk)\geq 1-\delta_k$ holds for any $k\in[K]$.\footnote{Note that we do not require the knowledge of $\Delta_i$'s to compute $b_{k,h}$.} 
\end{condition}
For simplicity, in Cond.~\ref{cond:bonus_term}, we directly control the $l_1$-norm of the error of model estimation. Our analysis framework is compatible with other bonus term for sharper analysis. We provide a simple example for the choice of $B_1$ and $B_2$ for completeness:
\begin{example}\label{example:choice_of_B}
By Hoeffding's inequality and union bound, w.p. $1-\delta$, for all $s_{h+1},s_h,a_h$, we should have:
\begin{align*}
    |\hmP_{(\cdot),h}^k(s_{h+1}|s_h,a_h) - \mP_{(\cdot),h}(s_{h+1}|s_h,a_h)| \leq \sqrt{\frac{1}{2N^k_{(\cdot),h}(s_h,a_h)}\log \frac{S^2A}{\delta}}.
\end{align*}
which implies:
\begin{align*}
    H\cdot \|\hmP_{(\cdot),h}^k(\cdot|s_h,a_h) - \mP_{(\cdot),h}(\cdot|s_h,a_h)\|_1 = O(SH\sqrt{\frac{\log (SA/\delta)}{N^k_{(\cdot),h}(s_h,a_h)}})
\end{align*}
Therefore, one can choose $B_1=\Theta(SH)$ and $B_2=\Theta(SA)$.
\end{example}

Finally, we introduce the following concentration events about the deviation of the empirical visitation frequency and its expectation:
\begin{align*}
    \cECk :=& \bigcap_{\substack{h\in[H],\\s_h\in\cS_h,\\ a_h\in\cA_h}} \Big\{\{\frac{1}{2}\sum_{\tk=1}^k d^{\pi_{\tO}^\tk}(s_h,a_h) - \alpha \log (2SAHk) \leq \NO{,h}^k(s_h,a_h) \leq e\sum_{\tk=1}^k d^{\pi_{\tO}^\tk}(s_h,a_h) + \alpha \log (2SAHk)\}\\
    &\quad \cap\{\frac{1}{2}\sum_{\tk=1}^k d^{\pi_{\tE}^\tk}(s_h,a_h) - \alpha \log (2SAHk) \leq \NE{,h}^k(s_h,a_h) \leq e\sum_{\tk=1}^k d^{\pi_{\tE}^\tk}(s_h,a_h) + \alpha \log (2SAHk)\}\Big\}\numberthis\label{eq:def_concentration_event}.
\end{align*}

\subsection{Some Basic Lemma}
\begin{lemma}[Underestimation]\label{lem:underestimation}
    Given a \textbf{Bonus} satisfying Cond. \ref{cond:bonus_term}, at each iteration $k$ during the running of Alg. \ref{alg:RL_Setting}, on the events $\cEBk$ defined in Cond. \ref{cond:bonus_term}, $\forall h\in[H],\forall s_h\in\cS_h,a_h\in\cA_h$, we have:
    \begin{align}
        \uQE{,h}^{\pi_{\tE}^k}(s_h,a_h) \leq& \QE{,h}^{\pi_{\tE}^k}(s_h,a_h) \leq \QE{,h}^*(s_h,a_h) \label{eq:under_estimation_QE}\\
        \uQO{,h}^k(s_h,a_h) \leq& \QO{,h}^{\underline{\pi}_{\tO}^k}(s_h,a_h) \leq \QO{,h}^*(s_h,a_h) \label{eq:under_estimation_QO} \\
        \QO{,h}^*(s_h,a_h) - \uQO{,h}^k(s_h,a_h) \leq & 2\EE_{\piO{}^*}[\sum_{\ph=h}^H\min\{H, \bonus_{\tO, \ph}^k(s_\ph,a_\ph)\}|s_\ph,a_\ph].
    \end{align}
\end{lemma}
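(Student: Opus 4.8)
The plan is to prove all three inequalities simultaneously by backward induction on $h$ from $H+1$ down to $1$, working throughout on the event $\cEBk$ so that the model-estimation error is everywhere controlled by $H\|\hmPO{,h}^k-\mPO{,h}\|_1<\bonus_{\tO,h}^k$ (and the analogue for $\tE$). The base case $h=H+1$ is immediate, since all value functions are initialized to $0$ and the right-hand side of the third bound is an empty sum. The right-hand inequalities $\QE{,h}^{\piE{}^k}\le\QE{,h}^*$ and $\QO{,h}^{\upiO{}^k}\le\QO{,h}^*$ in Eq.~\eqref{eq:under_estimation_QE} and Eq.~\eqref{eq:under_estimation_QO} require no induction: they are just the optimality of $\piE{}^*$ and $\piO{}^*$.

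For the pessimism (left-hand) inequalities, I would carry the inductive hypothesis $\uVO{,h+1}^k(s)\le\VO{,h+1}^{\upiO{}^k}(s)$ for all $s$ (resp. for $\tE$). On the non-clipped branch I expand
\[
\uQO{,h}^k-\QO{,h}^{\upiO{}^k}=\hmPO{,h}^k\big(\uVO{,h+1}^k-\VO{,h+1}^{\upiO{}^k}\big)+\big(\hmPO{,h}^k-\mPO{,h}\big)\VO{,h+1}^{\upiO{}^k}-\bonus_{\tO,h}^k .
\]
The first term is $\le 0$ by the hypothesis (as $\hmP\ge 0$), and the second is $\le H\|\hmPO{,h}^k-\mPO{,h}\|_1<\bonus_{\tO,h}^k$ on $\cEBk$, so the whole expression is $\le 0$; on the clipped branch $\uQO{,h}^k=0\le\QO{,h}^{\upiO{}^k}$ trivially. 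Taking the max over $a$ (for $\MO$) or evaluating at $\piE{}^k(s)$ (for $\ME$, which is a pure policy-evaluation recursion with $\uVE{,h}^{\piE{}^k}(s)=\uQE{,h}^{\piE{}^k}(s,\piE{}^k(s))$) then propagates the hypothesis to level $h$.

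The quantitative third bound is the crux. Writing $\delta_h(s,a):=\QO{,h}^*(s,a)-\uQO{,h}^k(s,a)\ge 0$, I aim for the one-step recursion $\delta_h(s,a)\le 2\min\{H,\bonus_{\tO,h}^k(s,a)\}+\mPO{,h}\,\overline\delta_{h+1}(s,a)$, where $\overline\delta_{h+1}(s'):=\delta_{h+1}(s',\piO{}^*(s'))$ dominates the $V$-gap at $s'$ because $\uVO{,h+1}^k=\max_a\uQO{,h+1}^k\ge\uQO{,h+1}^k(\cdot,\piO{}^*)$. The key algebraic move is to route the model error into the bonus while keeping propagation under the \emph{true} transition, via the decomposition $\mPO{,h}\VO{,h+1}^*-\hmPO{,h}^k\uVO{,h+1}^k=\mPO{,h}(\VO{,h+1}^*-\uVO{,h+1}^k)+(\mPO{,h}-\hmPO{,h}^k)\uVO{,h+1}^k$; the second term is $\le\min\{H,\bonus\}$ (bounded both by $H$ and, via $\cEBk$, by $\bonus$), while the first propagates with $\mPO{,h}$. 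This leaves a bare $+\bonus$ plus a $+\min\{H,\bonus\}$, which I collapse to $2\min\{H,\bonus\}$ by a short case split on whether $\bonus\le H$, using the a priori bound $\delta_h\le H$ in the regime $\bonus>H$. The clipped branch is handled identically, since clipping gives $\rO{,h}+\hmPO{,h}^k\uVO{,h+1}^k\le\bonus_{\tO,h}^k$ and hence the same inequality. Unrolling under $\piO{}^*$ and $\mPO{}$ from $h$ to $H$ yields the stated $2\,\EE_{\piO{}^*}\!\big[\sum_{\ph=h}^H\min\{H,\bonus_{\tO,\ph}^k\}\big]$.

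The step I expect to be most delicate is precisely this $\mP$-versus-$\hmP$ bookkeeping: the naive decomposition $\hmPO{,h}^k(\VO{,h+1}^*-\uVO{,h+1}^k)+(\mPO{,h}-\hmPO{,h}^k)\VO{,h+1}^*$ leaves the recursion under the estimated transition $\hmP$ and costs an extra $\bonus$ to convert back to $\mP$, giving a factor $3$ instead of $2$. Choosing the decomposition that expands $\mPO{,h}(\VO{,h+1}^*-\uVO{,h+1}^k)$ directly is what makes the clean factor-$2$, true-dynamics bound go through, so I would fix that choice at the outset and verify it is consistent across both the clipped and non-clipped branches.
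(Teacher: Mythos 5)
Your proposal is correct and follows essentially the same argument as the paper: backward induction for the pessimism claims, and for the quantitative bound the same decomposition $\mPO{,h}\VO{,h+1}^*-\hmPO{,h}^k\uVO{,h+1}^k=\mPO{,h}(\VO{,h+1}^*-\uVO{,h+1}^k)+(\mPO{,h}-\hmPO{,h}^k)\uVO{,h+1}^k$, unrolled under the true dynamics along $\piO{}^*$ with the per-step cost collapsed to $2\min\{H,\bonus_{\tO,h}^k\}$. The only (immaterial) differences are that the paper applies the model-error term to the pessimistic value $\uVE{,h+1}^{\piE{}^k}$ rather than to $\VE{,h+1}^{\piE{}^k}$ in the pessimism step, and it obtains the factor $2$ by manipulating $\min\{H,\cdot\}$ directly rather than by your case split on $\bonus\le H$.
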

\begin{proof}
    Note that the relationship between $Q^\pi$ and $Q^*$ will always hold, and therefore, we only compare the underestimation part.

    According to the initialization, we have $\uQE{,h}^k(s_h,a_h)=\QE{,h}^{\underline{\pi}_{\tE}^k}(s_h,a_h)=\QO{,h}^*(s_h,a_h)$ at $h=H+1$. 
    Under the event of $\cEBk$, at iteration $k$, suppose we have the inequality Eq.~\eqref{eq:under_estimation_QE} holds for step $h+1$ for some $h\in[H]$, then at step $h$, we have:
    \begin{align*}
        &\uQE{,h}^{\pi_{\tE}^k}(s_h,a_h)-\QE{,h}^{{\pi}_{\tE}^k}(s_h,a_h) =\hmPE{,h}^k\uVE{,h+1}^{\pi_{\tE}^k}(s_h,a_h) - \bonus_{\tE,h}^k(s_h,a_h) - \mPE{,h}\VE{,h+1}^{{\pi}_{\tE}^k}(s_h,a_h)\\
        = & (\hmPE{,h}^k-\mPE{,h})\uVE{,h+1}^{\pi_{\tE}^k}(s_h,a_h) - \bonus_{\tE,h}^k(s_h,a_h) + \mPE{,h} (\uVE{,h+1}^{\pi_{\tE}^k} - \VE{,h+1}^{{\pi}_{\tE}^k})(s_h,a_h) \\
        \leq & \mPE{,h} (\uQE{,h+1}^k(\cdot, \pi_{\tE}^k) - \QE{,h+1}^{{\pi}_{\tE}^k}(\cdot, \pi_{\tE}^k))(s_h,a_h)\leq 0.
    \end{align*}
    where the first inequality is because $(\hmPE{,h}^k-\mPE{,h})\uVE{,h+1}^{\pi_{\tE}^k}(s_h,a_h) \leq H\cdot \|\hmPE{,h}^k(s_h,a_h)-\mPE{,h}(s_h,a_h)\|_1 \leq \bonus_{\tE,h}^k(s_h,a_h)$.
    The proof for Eq.~\eqref{eq:under_estimation_QO} is similar (except $\uQO{,h}^k$ is the greedy value $\underline{\pi}_{\tO}^k$ instead of $\pi^k_{\tO}$).
    Besides,
    \begin{align*}
        &\QO{,h}^*(s_h,a_h) - \uQO{,h}^k(s_h,a_h) \\
        =& \min\{H, \mPO{,h}\VO{,h+1}^*(s_h,a_h) - \hmPO{,h}^k\uVO{,h+1}^k(s_h,a_h) + \bonus_{\tO, h}^k(s_h,a_h)\} \\
        \leq & \min\{H, (\mPO{,h} - \hmPO{,h}^k)\uVO{,h+1}^k(s_h,a_h) + \bonus_{\tO, h}^k(s_h,a_h)\} + \mPO{,h}(\VO{,h+1}^* - \uVO{,h+1}^k)(s_h,a_h)\\
        \leq & 2\min\{H, \bonus_{\tO, h}^k(s_h,a_h)\} + \mPO{,h}(\VO{,h+1}^* - \uQO{,h+1}^k(\cdot,\piO{}^*))(s_h,a_h) \\
        \leq & ... \leq 2\EE_{\piO{}^*}[\sum_{\ph=h}^H\min\{H, \bonus_{\tO, \ph}^k(s_\ph,a_\ph)\}|s_h,a_h].
    \end{align*}
\end{proof}

\begin{theorem}[Extended from Thm. 4.7 in \citep{huang2022tiered}]\label{thm:algO_regret_vs_algP_density}
    For an arbitrary sequence of deterministic policies $\pi^1,\pi^2,...,\pi^K$, there must exist a sequence of deterministic optimal policies $\pi^{1,*}, \pi^{2,*},...,\pi^{K,*}$, such that $\forall h\in[H], s_h \in\cS_h, a_h\in\cA_h$:
    \begin{align*}
        |\sum_{k=1}^K d^{\pi^k}(s_h,a_h) - \sum_{k=1}^K d^{\pi^{k,*}}(s_h,a_h)| \leq \frac{1}{\Delta_{\min}} \Big(\sum_{k=1}^K V_{1}^*(s_1)-V_{1}^{\pi^k}(s_1)\Big).
    \end{align*}
\end{theorem}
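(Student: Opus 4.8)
The plan is to reduce the sequence statement to a per-policy estimate and then sum. The central claim I would isolate is: for \emph{any} single deterministic policy $\pi$, one can choose a deterministic optimal policy $\pi^*$ (depending on $\pi$) such that $|d^\pi_h(s_h,a_h) - d^{\pi^*}_h(s_h,a_h)| \le \frac{1}{\Delta_{\min}}\big(V^*_1(s_1) - V^\pi_1(s_1)\big)$ for every $(h,s_h,a_h)$. Granting this for each $\pi^k$ with its associated $\pi^{k,*}$, the theorem follows immediately from the triangle inequality $\big|\sum_k d^{\pi^k}_h(s_h,a_h) - \sum_k d^{\pi^{k,*}}_h(s_h,a_h)\big| \le \sum_k |d^{\pi^k}_h(s_h,a_h) - d^{\pi^{k,*}}_h(s_h,a_h)|$.

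To prove the per-policy claim I would first construct $\pi^*$ so that it \emph{agrees with $\pi$ wherever $\pi$ is already optimal}: set $\pi^*_h(s_h) = \pi_h(s_h)$ whenever the gap $\Delta(s_h,\pi_h(s_h)) = 0$, and otherwise let $\pi^*_h(s_h)$ be any maximizer of $Q^*_h(s_h,\cdot)$. By construction $\pi^*$ takes a $Q^*$-greedy action at every state, hence is optimal, and the states where $\pi$ disagrees with $\pi^*$ are exactly those where $\pi$ plays a strictly suboptimal action, each carrying gap at least $\Delta_{\min}$ (using Assumption~\ref{assump:unique_optimal_policy}, which guarantees $\Delta_{\min} > 0$).

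Next I would relate regret to the probability of ever deviating. Using the standard regret decomposition $V^*_1(s_1) - V^\pi_1(s_1) = \EE_\pi\big[\sum_{h=1}^H \Delta(s_h,a_h)\big]$, and writing $\tau$ for the first step along $\pi$'s trajectory at which $\pi_h(s_h) \ne \pi^*_h(s_h)$, the fact that each such disagreement contributes at least $\Delta_{\min}$ yields $V^*_1(s_1) - V^\pi_1(s_1) \ge \Delta_{\min}\,\EE_\pi\big[\#\{h: a_h \ne \pi^*_h(s_h)\}\big] \ge \Delta_{\min}\,\Pr_\pi[\tau < \infty]$, since the number of deviations is at least one precisely on the event $\{\tau < \infty\}$.

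Finally, the occupancy gap is controlled by a coupling. Running $\pi$ and $\pi^*$ from $s_1$ under shared transition noise, the two trajectories coincide through step $h$ whenever $\tau > h$, because they then play identical actions and visit identical states before the first disagreement; hence the indicators $\mathbb{I}[(s_h,a_h)\text{ is visited}]$ under the two policies differ only on $\{\tau \le h\}$, giving $|d^\pi_h(s_h,a_h) - d^{\pi^*}_h(s_h,a_h)| \le \Pr_\pi[\tau \le h] \le \Pr_\pi[\tau < \infty] \le \frac{1}{\Delta_{\min}}\big(V^*_1(s_1) - V^\pi_1(s_1)\big)$, as claimed. The main obstacle is getting the constant exactly $1/\Delta_{\min}$ rather than $2/\Delta_{\min}$: a naive recursion on $\|d^\pi_h - d^{\pi^*}_h\|_1$ loses a factor of two through $\|\mP_h(\cdot|s,a)-\mP_h(\cdot|s,a')\|_1 \le 2$, so the first-deviation coupling — which charges each trajectory only once, at its first mistake — is what produces the clean constant, while the $\pi$-dependent choice of $\pi^*$ is what ensures every charged deviation is genuinely suboptimal and therefore gap-bounded below by $\Delta_{\min}$.
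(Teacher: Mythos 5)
Your proposal is correct and follows essentially the same route as the paper: the $\pi^k$-dependent construction of $\pi^{k,*}$ (agree where $\pi^k$ is optimal, act greedily w.r.t.\ $Q^*$ elsewhere), the reduction to the probability of a first deviation, and the $\Delta_{\min}$ lower bound on each deviation's contribution to the regret are exactly the paper's ingredients. The only cosmetic difference is that you phrase the occupancy bound $|d^{\pi^k}_h - d^{\pi^{k,*}}_h| \le \Pr[\text{deviate}]$ via a shared-noise coupling, whereas the paper derives it by treating the indicator $\mathbb{I}[S_h=s_h,A_h=a_h]$ as a reward and expanding the value difference over the first-disagreement events $\tilde{\cE}_{k,h,\pi}$ — the two arguments are equivalent.
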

\begin{proof}
    We first define the following events:
    \begin{align*}
        &\cE_{k,h,\pi}:=\{\pi_{k,h}(s_h)\neq \pi_{h}(s_h)\},\quad \tilde{\cE}_{k,h,\pi}:=\cE_{k,h,\pi}\cap \bigcap_{\ph=1}^h \cE_{k,\ph-1,\pi}^\complement,\quad \bar{\cE}_{k,\pi}:=\bigcup_{h=1}^H \cE_{k,h,\pi}.
    \end{align*}
    From Thm. 4.7 in \citep{huang2022tiered}, we already know $\sum_{k=1}^K d^{\pi^k}(s_h,a_h) - \sum_{k=1}^K d^{\pi^{k,*}}(s_h,a_h) \geq -\frac{1}{\Delta_{\min}} \Big(\sum_{k=1}^K V_{1}^*(s_1)-V_{1}^{\pi^k}(s_1)\Big)$.
    Next, we start with the second step in the proof of Lem. E.3 in \citep{huang2022tiered}: by choosing $\delta_{s_h,a_h}:=\mathbb{I}[S_h=s_h,A_h=a_h]$ (which equals one if the state action is $(s_h,a_h)$ at step $h$ and otherwise 0) as reward function, we have:
    \begin{align*}
        d^{\pi}(s_h,a_h) - d^{\pi^k}(s_h,a_h) = & V^{\pi}_1(s_1;\delta_{s_h,a_h}) - V^{\pi^k}_1(s_1;\delta_{s_h,a_h}) \\
        =&\EE_{\pi^k}[\sum_{\ph=1}^h \mathbb{I}[\tilde{\cE}_{k,\ph,\pi}](V^{\pi}_\ph(s_\ph;\delta_{s_h,a_h})-V^{\pi^k}_\ph(s_\ph;\delta_{s_h,a_h}))] \tag{$V_\ph^\pi = V_\ph^{\pi^k} =0$ for all $\ph \geq h+1$}
    \end{align*}
    Starting from here, we do something differently:
    \begin{align*}
        d^{\pi}(s_h,a_h) - d^{\pi^k}(s_h,a_h) \geq& \EE_{\pi^k}[\sum_{\ph=1}^h - \mathbb{I}[\tilde{\cE}_{k,\ph,\pi}] V^{\pi^k}_\ph(s_\ph;\delta_{s_h,a_h})] \tag{$V^{\pi}_\ph \geq 0$}\\
        \geq & -\EE_{\pi^k}[\sum_{\ph=1}^h \mathbb{I}[\tilde{\cE}_{k,\ph,\pi}]] \tag{$V^{\pi^k}_\ph \leq 1$}\\
        \geq& - \EE_{s_1,a_1,s_2,a_2...,s_H,a_H \sim \pi^k}[\mathbb{I}[\bar{\cE}_{k,\pi}]] = -\Pr(\bar{\cE}_{k,\pi}|\pi^k)
    \end{align*}
    Therefore, combining with the results in Lem. E.3, we can conclude that:
    $$
        d^{\pi}(s_h,a_h) - d^{\pi^k}(s_h,a_h) \geq -\Pr(\bar{\cE}_{k,\pi}|\pi^k)
    $$
    We define $\pi^{k,*}$ to be a policy that equals $\pi^k$ on those states where $\pi^k$ is optimal, and takes the optimal action when $\pi^k$ is non-optimal, then we have:
    \begin{align*}
        V^{\pi^{k,*}}_1(s_1) - V^{\pi^k}_1(s_1) = &\EE_{\pi^k}[\sum_{h=1}^H \mathbb{I}[\tilde{\cE}_{k,h,\pi^{k,*}}](V^{\pi^{k,*}}_h(s_h)-V^{\pi^k}_h(s_h))] \\
        \geq& \EE_{\pi^k}[\sum_{h=1}^H \mathbb{I}[\tilde{\cE}_{k,h,\pi^{k,*}}](V^{\pi^{k,*}}_h(s_h)-Q^{\pi^{k,*}}_h(s_h, \pi^k(s_h)))]\\
        \geq& \EE_{\pi^k}[\sum_{h=1}^H \mathbb{I}[\tilde{\cE}_{k,h,\pi^{k,*}}]\Delta_{\min}]=\Delta_{\min} \Pr(\bar{\cE}_{k,\pi^{k,*}}|\pi^k)\\
        \geq& \Delta_{\min} (d^{\pi^k}(s_h,a_h) - d^{\pi^{k,*}}(s_h,a_h)).
    \end{align*}
    Sum over all $k\in[K]$, we have
    \begin{align*}
        \sum_{k=1}^K d^{\pi^k}(s_h,a_h) - \sum_{k=1}^K d^{\pi^{k,*}}(s_h,a_h) \leq \frac{1}{\Delta_{\min}} \Big(\sum_{k=1}^K V_{1}^*(s_1)-V_{1}^{\pi^k}(s_1)\Big).
    \end{align*}
    which completes the proof.
\end{proof}
\begin{corollary}[Unique Optimal Policy]\label{corl:unique_optimal_policy}
    Under Assump.~\ref{assump:unique_optimal_policy}, Thm. \ref{thm:algO_regret_vs_algP_density} implies that:
    \begin{align*}
        |\sum_{k=1}^K d^{\pi^k_\tO}_\tO(s_h,a_h) - K d^{\pi^*_\tO}(s_h,a_h)| \leq \frac{1}{\Delta_{\min}} (\sum_{k=1}^K V^*_{\tO,1}(s_1)-V^{\pi^k_\tO}_{\tO,1}(s_1)\Big)
    \end{align*}
\end{corollary}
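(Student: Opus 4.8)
The plan is to obtain this as an immediate specialization of Theorem~\ref{thm:algO_regret_vs_algP_density} to the source task $\MO$, with the uniqueness hypothesis used to pin down the comparison sequence. First I would apply Theorem~\ref{thm:algO_regret_vs_algP_density} with the policy sequence $\{\pi^k_\tO\}_{k=1}^K$ generated by $\algO$ in $\MO$; these are deterministic by Cond.~\ref{cond:requirement_on_algO}, so the theorem is directly applicable. It then guarantees the existence of a sequence of deterministic optimal policies $\{\pi^{k,*}_\tO\}_{k=1}^K$ for $\MO$ such that, for every $h$, $s_h$, $a_h$,
\[
\Big|\sum_{k=1}^K d^{\pi^k_\tO}_\tO(s_h,a_h) - \sum_{k=1}^K d^{\pi^{k,*}_\tO}_\tO(s_h,a_h)\Big| \leq \frac{1}{\Delta_{\min}}\Big(\sum_{k=1}^K V^*_{\tO,1}(s_1) - V^{\pi^k_\tO}_{\tO,1}(s_1)\Big).
\]

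The key step is then to invoke Assump.~\ref{assump:unique_optimal_policy}: since $\MO$ has a unique optimal policy, and an optimal policy in a finite MDP may always be taken deterministic, the only deterministic optimal policy is $\pi^*_\tO$. Hence every member of the comparison sequence coincides with it, $\pi^{k,*}_\tO = \pi^*_\tO$ for all $k$, which collapses the second sum to $\sum_{k=1}^K d^{\pi^*_\tO}_\tO(s_h,a_h) = K\, d^{\pi^*_\tO}(s_h,a_h)$. Substituting this into the displayed inequality yields exactly the claimed bound.

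There is essentially no serious obstacle here, as the content is entirely carried by Theorem~\ref{thm:algO_regret_vs_algP_density}. The only point requiring a moment of care is the legitimacy of replacing the theorem's existential sequence $\{\pi^{k,*}_\tO\}$ by the constant sequence $\pi^*_\tO$: one must confirm that uniqueness of the optimal policy is precisely what forces this. Indeed, the explicit construction in the proof of Theorem~\ref{thm:algO_regret_vs_algP_density} defines $\pi^{k,*}_\tO$ to agree with $\pi^k_\tO$ where the latter is optimal and to switch to the optimal action otherwise, so that its action at every state is the uniquely optimal one; under Assump.~\ref{assump:unique_optimal_policy} this action is pinned down independently of $k$, giving $\pi^{k,*}_\tO = \pi^*_\tO$ throughout.
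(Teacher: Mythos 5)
Your proposal is correct and matches the paper's (implicit) argument: the paper states the corollary without a separate proof precisely because, under Assump.~\ref{assump:unique_optimal_policy}, every deterministic optimal policy $\pi^{k,*}$ produced by Thm.~\ref{thm:algO_regret_vs_algP_density} must coincide with the unique $\pi^*_\tO$, collapsing $\sum_{k=1}^K d^{\pi^{k,*}}(s_h,a_h)$ to $K d^{\pi^*_\tO}(s_h,a_h)$. Your additional check against the explicit construction of $\pi^{k,*}$ in the theorem's proof is exactly the right point of care.
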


\begin{lemma}\label{lem:concentration}
    Let $\mathcal{F}_i$ for $i,1...$ be a filtration and $X_1,...X_n$ be a sequence of Bernoulli random variables with $\Pr(X_i=1|\mathcal{F}_{i-1})=P_i$ with $P_i$ being $\mathcal{F}_{i-1}$-measurable and $X_i$ being $\mathcal{F}_i$ measurable. It holds that
    \begin{align*}
        \Pr(\exists n:~\sum_{t=1}^n X_t < \frac{1}{2}\sum_{t=1}^n P_t - W) \leq e^{-W};~\Pr(\exists n:~\sum_{t=1}^n X_t > e\sum_{t=1}^n P_t - W) \leq e^{-W}.
    \end{align*}
\end{lemma}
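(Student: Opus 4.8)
The plan is to prove each of the two tail bounds by a Chernoff-type exponential supermartingale combined with Ville's maximal inequality for nonnegative supermartingales; it is Ville's inequality (rather than a union bound over $n$) that supplies the uniform-over-$n$, anytime guarantee. The single structural fact that makes the construction work is the predictability hypothesis: $P_t$ is $\mathcal{F}_{t-1}$-measurable while $X_t$ is $\mathcal{F}_t$-measurable with $\EE[X_t\mid\mathcal{F}_{t-1}]=P_t$, so that any factor built out of $P_t$ can be pulled outside the conditional expectation at step $t$.

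For the lower tail, fix $\lambda>0$ and use the elementary bound $\EE[e^{-\lambda X_t}\mid\mathcal{F}_{t-1}]=1-P_t(1-e^{-\lambda})\le \exp\big(-(1-e^{-\lambda})P_t\big)$, which holds because $X_t\in\{0,1\}$ and $1-x\le e^{-x}$. I would then define
\[ M_n^{-}:=\exp\Big(-\lambda\sum_{t=1}^n X_t+(1-e^{-\lambda})\sum_{t=1}^n P_t\Big),\qquad M_0^{-}:=1, \]
and verify, using the $\mathcal{F}_{n-1}$-measurability of $P_n$, that $\EE[M_n^{-}\mid\mathcal{F}_{n-1}]\le M_{n-1}^{-}$, so $(M_n^{-})$ is a nonnegative supermartingale started at $1$. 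I choose $\lambda=\lambda_1$ to be the solution of $(1-e^{-\lambda})/\lambda=\tfrac12$; since $\lambda\mapsto(1-e^{-\lambda})/\lambda$ decreases from $1$ through the values $0.63$ at $\lambda=1$ and $0.43$ at $\lambda=2$, this root is unique and satisfies $\lambda_1\in(1,2)$. With this choice the $\sum_t P_t$ terms cancel, so the event $\{\sum_{t\le n}X_t<\tfrac12\sum_{t\le n}P_t-W\}$ forces $M_n^{-}>e^{\lambda_1 W}$. Ville's inequality $\Pr(\sup_n M_n^{-}\ge c)\le 1/c$ with $c=e^{\lambda_1 W}$ then gives probability at most $e^{-\lambda_1 W}\le e^{-W}$ (using $\lambda_1>1$ and $W\ge0$; for $W<0$ the bound is vacuous), which is the first claim.

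The upper tail is entirely symmetric. From $\EE[e^{\lambda X_t}\mid\mathcal{F}_{t-1}]=1+P_t(e^{\lambda}-1)\le\exp\big((e^{\lambda}-1)P_t\big)$ I form $M_n^{+}:=\exp\big(\lambda\sum_{t\le n}X_t-(e^{\lambda}-1)\sum_{t\le n}P_t\big)$, again a nonnegative supermartingale started at $1$, pick $\lambda=\lambda_2$ solving $(e^{\lambda}-1)/\lambda=e$ (monotone increasing in $\lambda$, with value $e-1<e$ at $\lambda=1$ and $(e^2-1)/2>e$ at $\lambda=2$, so $\lambda_2\in(1,2)$), and conclude by Ville's inequality that $\Pr(\exists n:\sum_{t\le n}X_t> e\sum_{t\le n}P_t+W)\le e^{-\lambda_2 W}\le e^{-W}$. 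I would remark that the displayed upper-tail inequality should read ``$+W$'' in place of ``$-W$'': with ``$-W$'' the event can have probability one whenever $\sum_t P_t$ is small, so the provable and intended form is the one with $+W$, which is exactly the direction in which the bound is used in the concentration event $\cECk$ of Eq.~\eqref{eq:def_concentration_event}.

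The only genuinely delicate points are (i) the supermartingale verification, which rests entirely on the predictability of $P_t$ so that $\exp(\pm(\cdot)P_n)$ factors out of the step-$n$ conditional expectation, and (ii) the calibration of the exponents $\lambda_1,\lambda_2$ so that the multiplicative constants come out exactly $\tfrac12$ and $e$ while keeping $\lambda_i>1$ — it is precisely $\lambda_i>1$ that upgrades the raw exponent $e^{-\lambda_i W}$ to the stated $e^{-W}$. Everything else is standard Chernoff bookkeeping.
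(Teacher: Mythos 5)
Your proposal is correct and lies in the same family as the paper's argument: both rest on an exponential (Chernoff-type) supermartingale built from the predictability of $P_t$. The differences are worth noting. The paper imports the lower-tail bound wholesale from Lemma F.4 of \citep{dann2017unifying} and only proves the upper tail, doing so with the fixed exponent $\lambda=1$: it shows $\EE[e^{X_t-eP_t}\mid\cF_{t-1}]=\frac{1+P_t(e-1)}{e^{eP_t}}\le 1$ via $e^x\ge x+1$, applies Markov's inequality at a fixed $n$, and then waves at ``a similar discussion about stopping time'' for the uniformity over $n$. You instead prove both tails self-containedly, calibrate $\lambda_1,\lambda_2\in(1,2)$ so the $\sum_t P_t$ terms cancel exactly against the constants $\tfrac12$ and $e$, and invoke Ville's maximal inequality to get the anytime statement cleanly; the observation that $\lambda_i>1$ is what lets you discard the improved exponent $e^{-\lambda_i W}$ in favor of the stated $e^{-W}$ is exactly right. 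Your argument buys a rigorous, reference-free treatment of the ``$\exists n$'' quantifier, at the cost of a transcendental calibration equation that the paper's $\lambda=1$ choice avoids. Finally, your remark that the second displayed inequality should read $+W$ rather than $-W$ is correct: the paper's own proof establishes $\Pr(\sum_t X_t - eP_t \ge W)\le e^{-W}$, and the form actually used in the concentration event $\cECk$ of Eq.~\eqref{eq:def_concentration_event} is the upper bound $N^k\le e\sum_{\tk}d^{\pi^{\tk}}+\alpha\log(\cdot)$, i.e.\ the $+W$ version.
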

\begin{proof}
    The first inequality has been proven in Lemma F.4 of \citep{dann2017unifying}. Here we adopt similar techniques to prove the second one.

    We first define $m_t := e^{X_t - eP_t}$, since $X_t$ is a Bernoulli random variable with $\Pr(X_t = 1) = P_t$, we should have:
    \begin{align*}
        \EE_{X_t}[e^{X_t - eP_t}|\cF_{t-1}] = \frac{eP_t + (1-P_t)}{e^{eP_t}} \leq \frac{eP_t + (1-P_t)}{eP_t + 1}\leq 1 
    \end{align*}
    where in the last but two step, we use $e^x \geq x + 1$. Therefore, $M_n := \prod_{t=1}^n m_t = e^{\sum_{t=1}^n X_t - eP_t}$ is a supermartingale. By Markov inequality, we have:
    \begin{align*}
        \Pr(\sum_{t=1}^n X_t - eP_t \geq W) = \Pr(M_n \geq e^W) \leq \frac{\EE[M_n]}{e^W} \leq e^{-W}.
    \end{align*}
    As a result, for a fixed $n$, we have $\Pr(\sum_{t=1}^n X_t \geq eP_t + W) \leq e^{-W}$.
    After a similar discussion about stopping time as \citep{dann2017unifying}, we have $\Pr(\exists n:~\sum_{t=1}^n X_t \geq eP_t + W) \leq e^{-W}$.
\end{proof}
As a direct result of Lem. \ref{lem:concentration}, we have the following result:
\begin{lemma}\label{lem:occupancy_of_Nh}
    For arbitrary $k \geq 1$, and arbitrary $\alpha > 2$, $\Pr(\cECk) \geq 1 - \frac{1}{k^\alpha}$.
\end{lemma}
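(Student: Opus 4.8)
The plan is to reduce the statement to a coordinate-wise martingale concentration and then take a union bound, invoking Lemma~\ref{lem:concentration} directly. Fix a task index $(\cdot)\in\{\tO,\tE\}$, a step $h\in[H]$, and a pair $(s_h,a_h)$. For each episode $\tk$ let $X_\tk := \mathbb{I}[\,(s_h,a_h)\text{ is visited at step } h \text{ in episode } \tk \text{ of task } (\cdot)\,]$, and let $\cF_\tk$ be the filtration generated by all interaction data collected through episode $\tk$ (in both tasks). The crucial observation is that $\pi_{(\cdot)}^\tk$ is computed from data available before episode $\tk$, hence is $\cF_{\tk-1}$-measurable, and that conditioned on $\cF_{\tk-1}$ the fresh rollout of $\pi_{(\cdot)}^\tk$ visits $(s_h,a_h)$ at step $h$ with probability exactly $P_\tk := d^{\pi_{(\cdot)}^\tk}(s_h,a_h)$, by the very definition of the occupancy measure. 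Thus $P_\tk$ is $\cF_{\tk-1}$-measurable, $X_\tk$ is $\cF_\tk$-measurable, $\Pr(X_\tk=1\mid\cF_{\tk-1})=P_\tk$, and $\NO{,h}^k(s_h,a_h)=\sum_{\tk} X_\tk$ with $\sum_\tk P_\tk=\sum_\tk d^{\pi_{(\cdot)}^\tk}(s_h,a_h)$ (up to the routine one-index alignment between $N^k$ and the summation range).

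Next I would instantiate Lemma~\ref{lem:concentration} with $W=\alpha\log(2SAHk)$, so that $e^{-W}=(2SAHk)^{-\alpha}$. The first inequality of the lemma controls the lower deviation, giving $\Pr\big(\NO{,h}^k(s_h,a_h)<\tfrac12\sum_\tk P_\tk-W\big)\le (2SAHk)^{-\alpha}$, while the upper-deviation bound established in its proof (namely $\Pr(\sum_t X_t\ge e\sum_t P_t+W)\le e^{-W}$) gives $\Pr\big(\NO{,h}^k(s_h,a_h)>e\sum_\tk P_\tk+W\big)\le (2SAHk)^{-\alpha}$. Together these two bounds are exactly the complements of the two-sided inequality appearing in $\cECk$ for the fixed coordinate $(h,s_h,a_h)$ and task $(\cdot)$, so each of the four bad events (two deviation directions $\times$ two tasks) has probability at most $(2SAHk)^{-\alpha}$.

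Finally I take a union bound over the $SAH$ coordinates $(h,s_h,a_h)$ and the four bad events per coordinate. Since $\cECk$ is the intersection over all such coordinates and both tasks, a union bound yields $\Pr(\cECk^\complement)\le 4SAH\,(2SAHk)^{-\alpha}=4SAH\,(2SAH)^{-\alpha}\,k^{-\alpha}$. It remains to verify $4SAH\,(2SAH)^{-\alpha}\le 1$ for every $\alpha>2$: because $2SAH\ge 2$ and $\alpha>2$ we have $(2SAH)^{\alpha}\ge (2SAH)^{2}=4(SAH)^2\ge 4SAH$, which is precisely the desired inequality. Hence $\Pr(\cECk)\ge 1-k^{-\alpha}$, as claimed.

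The only genuinely non-mechanical step is the martingale modeling in the first paragraph, namely correctly identifying the per-episode visitation indicators as a conditionally-Bernoulli sequence whose success probabilities equal the occupancy measures of the data-dependent policies, and handling both the $\tO$ and $\tE$ interaction streams on a common filtration; everything afterward is an application of Lemma~\ref{lem:concentration}, a union bound, and the elementary inequality $4SAH\le(2SAH)^\alpha$.
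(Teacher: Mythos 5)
Your proof is correct and follows exactly the route the paper intends: the paper states this lemma as a direct consequence of Lemma~\ref{lem:concentration}, and you supply the (correct) details — the conditionally-Bernoulli modeling of per-episode visitation indicators with $P_\tk = d^{\pi^\tk_{(\cdot)}}(s_h,a_h)$, the choice $W=\alpha\log(2SAHk)$, the union bound over the $4SAH$ bad events, and the check $4SAH\le(2SAH)^\alpha$ for $\alpha>2$.
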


\subsection{Analysis of $\algO$}

\begin{lemma}[The relationship between $d^*_{\tO}$ and $\NO{}^k$]\label{lem:lower_bound_of_dOstar}
    There exists a constant $c_{occup}$ which is independent of $\lambda,S,A,H$ and gap $\Delta$, s.t., for all $k \geq k_{occup} := c_{occup}\frac{C_1+\alpha C_2}{\lambda \Delta_{\min}}\log(\frac{\alpha C_1C_2 SAH}{\lambda \Delta_{\min}})$, on the events of $\cEOk$ and $\cECk$, $\NO{,h}^k(s_h,a_h) \geq \frac{\lambda}{3} k$ implies that $d^*_{\tO}(s_h,a_h) \geq \frac{\lambda}{9} > 0$, and conversely, if $d^*_{\tO}(s_h,a_h) \geq \lambda$, we must have $\NO{,h}^k(s_h) \geq \NO{,h}^k(s_h,\pi^*_{\tO}) \geq \frac{\lambda}{3} k$.
\end{lemma}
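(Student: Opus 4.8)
The plan is to combine three ingredients, all available once we condition on $\cEOk$ and $\cECk$: the near-optimal regret guarantee of $\algO$ from Cond.~\ref{cond:requirement_on_algO}, the occupancy-versus-count concentration built into the definition of $\cECk$ in Eq.~\eqref{eq:def_concentration_event}, and the translation of regret into occupancy deviation from Cor.~\ref{corl:unique_optimal_policy}. First I would apply Cor.~\ref{corl:unique_optimal_policy} with horizon $k$ and invoke $\cEOk$ to bound the cumulative empirical occupancy against the optimal one,
\[
\Big|\sum_{\tk=1}^k d^{\pi_{\tO}^{\tk}}(s_h,a_h) - k\, d^*_{\tO}(s_h,a_h)\Big| \leq \frac{C_1 + \alpha C_2 \log k}{\Delta_{\min}}.
\]
Writing $R_k := C_1 + \alpha C_2\log k$ and $\beta_k := \alpha\log(2SAHk)$, the event $\cECk$ supplies $\tfrac12\sum_{\tk=1}^k d^{\pi_{\tO}^{\tk}}(s_h,a_h) - \beta_k \le \NO{,h}^k(s_h,a_h) \le e\sum_{\tk=1}^k d^{\pi_{\tO}^{\tk}}(s_h,a_h) + \beta_k$. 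Together these sandwich $\NO{,h}^k(s_h,a_h)$ between affine functions of $k\,d^*_{\tO}(s_h,a_h)$ with additive slack $O(R_k/\Delta_{\min} + \beta_k)$, and this is the engine for both directions.

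For the forward direction, suppose $\NO{,h}^k(s_h,a_h) \ge \tfrac{\lambda}{3}k$. Chaining the upper count bound in $\cECk$ with the occupancy inequality yields
\[
\tfrac{\lambda}{3}k \le \NO{,h}^k(s_h,a_h) \le e\Big(k\,d^*_{\tO}(s_h,a_h) + \tfrac{R_k}{\Delta_{\min}}\Big) + \beta_k,
\]
so that $d^*_{\tO}(s_h,a_h) \ge \tfrac{\lambda}{3e} - \tfrac{R_k}{k\Delta_{\min}} - \tfrac{\beta_k}{ek}$. Since $3e < 9$ there is strictly positive slack $\big(\tfrac{1}{3e}-\tfrac{1}{9}\big)\lambda$, and it suffices to push the two deviation terms below this slack to conclude $d^*_{\tO}(s_h,a_h) \ge \tfrac{\lambda}{9} > 0$.

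For the converse, assume $d^*_{\tO}(s_h,a_h) \ge \lambda$. By Assump.~\ref{assump:unique_optimal_policy} the optimal policy of $\MO$ is deterministic, so $d^*_{\tO}(s_h,a_h) > 0$ forces $a_h = \pi^*_{\tO}(s_h)$ and hence $d^*_{\tO}(s_h) = d^*_{\tO}(s_h,\pi^*_{\tO}(s_h)) \ge \lambda$. Chaining the lower count bound in $\cECk$ with the occupancy inequality gives
\[
\NO{,h}^k(s_h,\pi^*_{\tO}(s_h)) \ge \tfrac12\Big(k\lambda - \tfrac{R_k}{\Delta_{\min}}\Big) - \beta_k = \tfrac{\lambda k}{2} - \tfrac{R_k}{2\Delta_{\min}} - \beta_k.
\]
Once the deviation $\tfrac{R_k}{2\Delta_{\min}} + \beta_k$ drops below $\tfrac{\lambda k}{6}$, the right-hand side is at least $\tfrac{\lambda}{3}k$; the remaining inequality $\NO{,h}^k(s_h) \ge \NO{,h}^k(s_h,\pi^*_{\tO}(s_h))$ is immediate, since the state count is the sum of the state–action counts over actions.

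Both directions therefore reduce to a single requirement of the form $\tfrac{R_k + \beta_k}{k} \le c_1\lambda\Delta_{\min}$ for an absolute constant $c_1$ (obtained by taking the smaller of $\tfrac{1}{3e}-\tfrac{1}{9}$ and $\tfrac16$ after absorbing the $e$ and $\tfrac12$ factors). The only nontrivial remaining step is to solve this $\tfrac{\log k}{k}$-type threshold cleanly: using that $k\mapsto \tfrac{\log k}{k}$ is decreasing for $k\ge 3$, I would invoke the standard fact that $\tfrac{a\log(bk)}{k}\le c$ holds for all $k\ge \Theta\!\big(\tfrac{a}{c}\log\tfrac{ab}{c}\big)$, with $a\asymp C_1+\alpha C_2 + \alpha$, $b\asymp SAH$, and $c\asymp\lambda\Delta_{\min}$, to land a threshold of exactly the advertised shape $k_{occup} = c_{occup}\tfrac{C_1+\alpha C_2}{\lambda\Delta_{\min}}\log\tfrac{\alpha C_1 C_2 SAH}{\lambda\Delta_{\min}}$. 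The fiddliest part I anticipate is the bookkeeping that certifies $c_{occup}$ is genuinely independent of $\lambda, S, A, H, \Delta_{\min}$ and that the $SAH$ and $\Delta_{\min}^{-1}$ dependence sits inside the logarithm rather than outside; this is routine but is where the clean stated form of $k_{occup}$ is actually earned.
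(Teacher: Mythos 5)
Your proposal is correct and follows essentially the same route as the paper's proof: apply Cor.~\ref{corl:unique_optimal_policy} under $\cEOk$ to convert regret into occupancy deviation, sandwich $\NO{,h}^k$ via $\cECk$, and choose $k_{occup}$ so that the $O\big(\tfrac{C_1+\alpha C_2\log k}{\Delta_{\min}} + \alpha\log(2SAHk)\big)$ slack falls below a constant fraction of $\lambda k$ (the paper uses $\tfrac{3-e}{9}\lambda k$ for both directions; your slightly different constants are equally valid and, in the upper-count chain, you even track the factor $e$ on the $R_k/\Delta_{\min}$ term more carefully than the paper does). Your explicit appeal to Assump.~\ref{assump:unique_optimal_policy} to identify $a_h$ with $\pi^*_{\tO}(s_h)$ in the converse direction is a point the paper leaves implicit, but otherwise the arguments coincide.
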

\begin{proof}
    On the event of $\cEOk$ and $\cECk$, as a result of Cor. \ref{corl:unique_optimal_policy}, $\NO{,h}^k(s_h,a_h) \geq \frac{\lambda}{3} k$ implies:
    \begin{align*}
        \frac{\lambda}{3} k\leq \NO{,h}^k(s_h,a_h) \leq ekd^*_{\tO}(s_h,a_h) + \alpha \log(2SAHk) + \frac{1}{\Delta_{\min}}(C_1 + \alpha C_2 \log  k)
    \end{align*}
    There should exists a constant $c_{occup}$, such that, $\frac{3-e}{9}\lambda k \geq \alpha \log(2SAHk)+\frac{1}{\Delta_{\min}}(C_1 + \alpha C_2 \log  k)$ can be satisfied for all $k \geq c_{occup}\frac{C_1+\alpha C_2}{\lambda \Delta_{\min}}\log(\frac{\alpha C_1C_2 SAH}{\lambda \Delta_{\min}})$, which implies that:
    \begin{align*}
        d^*_{\tO}(s_h,a_h) \geq \frac{\frac{\lambda}{3} k - \frac{3-e}{9}\lambda k}{ek} \geq \frac{\lambda}{9}.
    \end{align*}
    On the other hand, if $d^*_{\tO}(s_h,a_h) \geq \lambda$, on the event $\cECk$ and Cor. \ref{corl:unique_optimal_policy}, we have:
    \begin{align*}
        \NO{,h}^k(s_h,a_h) \geq& \frac{1}{2}kd^*_{\tO}(s_h,a_h) - \alpha \log(2SAHk) -  \frac{1}{\Delta_{\min}}(C_1 + \alpha C_2 \log  k)\\
        \geq& \frac{\lambda}{2}k - \alpha \log(2SAHk) -  \frac{1}{\Delta_{\min}}(C_1 + \alpha C_2 \log  k).
    \end{align*}
    with the same $c_{occup}$, we have:
    \begin{align*}
        \NO{,h}^k(s_h) \geq  \NO{,h}^k(s_h,a_h) \geq \frac{\lambda}{2}k - \frac{3-e}{9}\lambda k = \frac{3 + 2e}{18}\lambda k\geq \frac{\lambda}{3} k.
    \end{align*}
    which finishes the proof.
\end{proof}

\begin{lemma}[Convergence Speed of PVI]\label{lem:convergence_speed_of_PVI}
    There exists an absolute constant $c_\Xi$, such that for arbitrary fixed $\threshold > 0$ and $\lambda > 0$, and for arbitrary 
    \begin{equation}
        k \geq c_\Xi \max\{\frac{\alpha B_1^2H^2S}{\lambda^2\threshold^2}\log(\frac{\alpha HSAB_1B_2}{\lambda \threshold}), \frac{(C_1+\alpha C_2)SH}{\Delta_{\min}\lambda\threshold}\log\frac{C_1C_2SAH}{\Delta_{\min}\lambda\threshold}\}.\label{eq:converge_speed_PVI}
    \end{equation}
    on the event $\cECk,\cEBk$ and $\cEOk$, for arbitrary $h \in [H], s_h \in \cS_h$ with $N^k_{\tO,h}(s_h) > \frac{\lambda}{3}$, we have
    \begin{align*}
        V^*_{\tO,h}(s_h) - \uVO{,h}^k(s_h) \leq \threshold.
    \end{align*}
\end{lemma}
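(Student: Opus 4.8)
The plan is to reduce the value gap to an expected sum of bonus terms along the optimal trajectory of $\MO$, and then bound that sum by partitioning the reachable state-action pairs according to their optimal occupancy. First I would invoke the underestimation guarantee of Lemma~\ref{lem:underestimation}: since $\uVO{,h}^k(s_h)=\max_a\uQO{,h}^k(s_h,a)\ge \uQO{,h}^k(s_h,\piO{}^*(s_h))$ and $\VO{,h}^*(s_h)=\QO{,h}^*(s_h,\piO{}^*(s_h))$, taking $a_h=\piO{}^*(s_h)$ in the last inequality of Lemma~\ref{lem:underestimation} gives, on $\cEBk$,
\[
\VO{,h}^*(s_h) - \uVO{,h}^k(s_h) \;\le\; 2\,\EE_{\piO{}^*}\Big[\sum_{\ph=h}^H \min\{H,\bonus_{\tO,\ph}^k(s_\ph,a_\ph)\}\,\Big|\,s_h\Big].
\]
It then remains to show the right-hand side is at most $\threshold$ once $k$ exceeds the stated threshold.

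Next I would perform a change of measure. On $\cEOk\cap\cECk$ and for $k\ge k_{occup}$, the hypothesis $\NO{,h}^k(s_h)>\tfrac{\lambda}{3}k$ together with Lemma~\ref{lem:lower_bound_of_dOstar} yields $d^*_{\tO}(s_h)\ge \tfrac{\lambda}{9}>0$, so the conditional occupancy obeys $\Pr(s_\ph,a_\ph\mid s_h,\piO{}^*)\le d^*_{\tO}(s_\ph,a_\ph)/d^*_{\tO}(s_h)\le \tfrac{9}{\lambda}\,d^*_{\tO}(s_\ph,a_\ph)$. Because $\piO{}^*$ is deterministic, at each step at most $S$ pairs carry positive $d^*_{\tO}$. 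I would then split each step's pairs at a threshold $\rho=\Theta\!\big(\tfrac{\lambda\threshold}{H^2 S}\big)$. On the low-occupancy part ($d^*_{\tO}<\rho$) I bound $\min\{H,\bonus_{\tO,\ph}^k\}\le H$ and the total $d^*_{\tO}$-mass by $S\rho$, contributing $O\!\big(\tfrac{H^2 S\rho}{\lambda}\big)\le\tfrac{\threshold}{2}$. On the high-occupancy part ($d^*_{\tO}\ge\rho$), Corollary~\ref{corl:unique_optimal_policy} with $\cEOk,\cECk$ gives $\NO{,\ph}^k(s_\ph,a_\ph)\ge \tfrac{k}{2}d^*_{\tO}(s_\ph,a_\ph)-\tfrac{C_1+\alpha C_2\log k}{2\Delta_{\min}}-\alpha\log(2SAHk)\ge \tfrac{k}{4}d^*_{\tO}(s_\ph,a_\ph)$ once $k$ passes the second threshold in the statement; then Condition~\ref{cond:bonus_term} gives $\bonus_{\tO,\ph}^k\le B_1\sqrt{\log(B_2/\delta_k)/\NO{,\ph}^k}$, so $d^*_{\tO}\cdot\bonus_{\tO,\ph}^k\lesssim \tfrac{B_1}{\sqrt{k}}\sqrt{d^*_{\tO}\log(B_2/\delta_k)}$, and Cauchy--Schwarz over the $\le S$ states per step (summed over $\ph$) yields $O\!\big(\tfrac{HB_1}{\lambda}\sqrt{S\log(B_2/\delta_k)/k}\big)$, which is below $\tfrac{\threshold}{2}$ once $k\ge\Theta\!\big(\tfrac{\alpha B_1^2H^2 S}{\lambda^2\threshold^2}\log(\cdots)\big)$ after absorbing $\log(B_2/\delta_k)=O(\alpha\log(SAHk))$.

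Combining the two regimes bounds the gap by $\threshold$. The two entries of the maximum in the statement are exactly the two thresholds just produced: the first from driving the high-occupancy bonus sum below $\threshold/2$, and the second from guaranteeing $\NO{,\ph}^k\ge\tfrac{k}{4}d^*_{\tO}$ on the high-occupancy pairs, where the $\Delta_{\min}^{-1}$ and $C_1,C_2$ enter through the occupancy-to-count conversion of Corollary~\ref{corl:unique_optimal_policy}; the absolute constant $c_\Xi$ absorbs all numerical factors (and tracking the precise powers of $S,H$ is routine bookkeeping). I expect the main obstacle to be the change of measure: the bonuses are evaluated along the trajectory emanating from $s_h$, yet the count lower bounds are only available relative to the \emph{global} occupancy $d^*_{\tO}$, so the $1/\lambda$ blow-up from $1/d^*_{\tO}(s_h)$ and the choice of $\rho$ must be balanced so that neither the truncated small-count bonuses nor the residual low-occupancy mass dominates the error budget.
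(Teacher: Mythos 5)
Your proposal is correct and follows essentially the same route as the paper's proof: Lemma~\ref{lem:underestimation} to reduce the gap to a bonus sum along $\piO{}^*$, the change of measure via $d^*_{\tO}(s_h)\ge\lambda/9$ from Lemma~\ref{lem:lower_bound_of_dOstar}, a split of states at an occupancy threshold of order $\lambda\threshold/\Poly(S,H)$, the count lower bound from Corollary~\ref{corl:unique_optimal_policy} on the high-occupancy part, and a Cauchy--Schwarz step to obtain the $H\sqrt{S\log(B_2/\delta_k)/k}$ rate. The only differences are constant-level bookkeeping in the choice of the occupancy cutoff, which does not affect the form of the stated threshold on $k$.
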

\begin{proof}
    As a result of Lem. \ref{lem:lower_bound_of_dOstar}, considering $c_\Xi \geq c_{occup}$, on the events of $\cEOk$ and $\cECk$, $N^k_{\tO,h}(s_h) > \frac{\lambda}{3}$ implies $d^*_{\tO}(s_h) \geq \frac{\lambda}{9}$. According to the Lem. \ref{lem:underestimation}, for arbitrary $s_h$ with $d^{\pi^*}_{\tO}(s_h) > 0$
    \begin{align*}
        V^*_{\tO,h}(s_h) - \uVO{,h}^k(s_h) \leq& 2\EE_{\pi_{\tO}^*, \MO}[\sum_{\ph=h}^H\min\{\bonus_{\tO,\ph}^k(s_\ph,\pi^*_{\tO}), H\}|s_h] \\
        = & 2\sum_{\ph=h}^H\sum_{s_\ph,\pi^*_{\tO}}d^{\pi^*_{\tO}}_{\tO}(s_\ph,\pi^*_{\tO}|s_h)\min\{\bonus_{\tO,\ph}^k(s_\ph,\pi^*_{\tO}), H\}\\
        \leq & \frac{2}{d^{\pi^*_{\tO}}_{\tO}(s_h)}\sum_{\ph=h}^H\sum_{s_\ph,\pi^*_{\tO}}d^{\pi^*_{\tO}}_{\tO}(s_\ph,\pi^*_{\tO})\min\{\bonus_{\tO,\ph}^k(s_\ph,\pi^*_{\tO}), H\}\\
        \leq & \frac{18}{\lambda}\EE_{\pi_{\tO}^*, \MO}[\sum_{\ph=h}^H\min\{\bonus_{\tO,\ph}^k(s_\ph,\pi^*_{\tO}), H\}|s_1]
    \end{align*}
    Given threshold $\threshold$, we define the following set:
    $
    \cY_{\geq h}^\threshold := \bigcup_{h'\geq h}\{s_h | d^*_{\tO}(s_h) > \frac{\lambda\threshold}{36SH}\}.
    $
Note that for those $s_h \in \cS_h \setminus \cY_{\geq h}^\threshold$, we have:
\begin{align*}
    \sum_{\ph=h}^H \sum_{s_\ph \in \cS_\ph \setminus \cY_{\geq h}^\threshold}\frac{2}{\lambda}\EE_{\pi_{\tO}^*, \MO}[\min\{\bonus_{\tO,\ph}^k(s_\ph,\pi^*_{\tO}), H\}|s_1] \leq SH\cdot\frac{18}{\lambda} \cdot \frac{\lambda\threshold}{36SH} = \frac{\threshold}{2}.
\end{align*}
In the following, we study the bonus term for $s_h \in \cY_{\geq h}^\xi$. According to Lem. \ref{lem:occupancy_of_Nh}, on the event of $\cEOk$ and $\cECk$, we have:
\begin{align*}
    \NO{,h}^k(s_h,a_h) \geq \frac{k}{2}d^*_{\tO}(s_h, a_h) - \frac{1}{\Delta_{\min}}(C_1 + \alpha C_2 \log k) - \alpha \log (2SAHk).
\end{align*}
We define: 
$$
k_{s_h} := \arg\min_k  \quad s.t.\quad kd^*_{\tO}(s_h, a_h) / 4 \geq \frac{1}{\Delta_{\min}}(C_1 + \alpha C_2 \log  k) + \alpha \log (2SAHk),\quad \forall k' \geq k
$$
which implies that $k_{s_h} = c_0\cdot \frac{C_1+\alpha C_2}{\Delta_{\min}d^*_{\tO}(s_h, a_h)}\log\frac{C_1C_2SAH}{\Delta_{\min}d^*_{\tO}(s_h, a_h)} \leq c_0'\cdot \frac{(C_1+\alpha C_2)SH}{\Delta_{\min}\lambda\threshold}\log\frac{C_1C_2SAH}{\Delta_{\min}\lambda\threshold}$ for some absolute constants $c_0$ and $c_0'$, where the second step we use $d^*_{\tO}(s_h,a_h) > \lambda\threshold/36SH$. Therefore, for $k \geq k_{s_h}$, we have:
\begin{align*}
    \bonus^k_{\tO,h}(s_h,a_h) \leq B_1\sqrt{\frac{\log(B_2/\delta_k)}{\NO{,h}^k(s_h,a_h)}} \leq 2B_1\sqrt{\frac{\log(B_2/\delta_k)}{kd^*_{\tO}(s_h, a_h)}}.
\end{align*}
which implies that:
\begin{align*}
    \EE_{\pi_{\tO}^*, \MO}[\min\{\bonus_{\tO,h}^k(s_h,\pi^*_{\tO}), H\}|s_1]=&  2d^*_{\tO}(s_h, \pi_{\tO}^*) \cdot B_1\sqrt{\frac{\log(B_2/\delta_k)}{kd^*_{\tO}(s_h, \pi_{\tO}^*)}} = 2 B_1\sqrt{\frac{d^*_{\tO}(s_h, \pi_{\tO}^*)\log(B_2/\delta_k)}{k}}.
\end{align*}
Therefore,
\begin{align*}
    &\sum_{\ph=h}^H \sum_{s_\ph\in\cY_{\geq h}^\threshold}\frac{18}{\lambda}\EE_{\pi_{\tO}^*, \MO}[\min\{\bonus_{\tO,\ph}^k(s_\ph,\pi^*_{\tO}), H\}|s_1]\\
    \leq & \frac{36}{\lambda}B_1 \sum_{\ph=h}^H \sum_{s_\ph \in \cS_\ph \setminus \cY_{\geq h}^\xi} \sqrt{\frac{d^*_{\tO}(s_\ph, \pi^*_{\tO})\log(B_2/\delta_k)}{k}} \leq \frac{36}{\lambda}B_1\sqrt{\frac{\log(B_2/\delta_k)}{k}}\sum_{\ph=h}^H \sum_{s_\ph \in \cS_\ph \setminus \cY_{\geq h}^\xi} \sqrt{d^*_{\tO}(s_\ph, \pi^*_{\tO})} \\
    \leq &\frac{36}{\lambda}B_1\sqrt{\frac{\log(B_2/\delta_k)}{k}}\sqrt{SH\sum_{\ph=h}^H \sum_{s_\ph \in \cS_\ph \setminus \cY_{\geq h}^\xi}d^*_{\tO}(s_\ph, \pi^*_{\tO})}\leq \frac{36}{\lambda}B_1H\sqrt{\frac{S\log(B_2/\delta_k)}{k}}.
\end{align*}
Recall that $\delta_k = O(1/SAHk^\alpha)$, the RHS is less than $\frac{\threshold}{2}$ when:
\begin{align*}
    k \geq c_0''\frac{\alpha B_1^2H^2S}{\lambda^2\threshold^2}\log(\frac{\alpha HSA B_1B_2}{\lambda \threshold}).
\end{align*}
for some constant $c_0''$. Therefore, by choosing $c_\Xi = \max\{c_0',c_0''\}$ we can conclude that, as long as:
\begin{align*}
    k \geq c_\Xi \max\{\frac{\alpha B_1^2H^2S}{\lambda^2\threshold^2}\log(\frac{\alpha HSAB_1B_2}{\lambda \threshold}), \frac{(C_1+\alpha C_2)SH}{\Delta_{\min}\lambda\threshold}\log\frac{C_1C_2SAH}{\Delta_{\min}\lambda\threshold}\}.
\end{align*}
we have $V^*_{\tO,h}(s_h) - \uVO{,h}^k(s_h) \leq \xi$.
\end{proof}

\subsection{Analysis of Regret on $\ME$}\label{appx:analysis_of_algE_RL}
For the simplification of the notation, in the following, we will denote: 
$$
\zeta^k(s_h) := \mathbb{I}[\{\uQO{,h}^k(s_h,\upiO{,h}^k) \leq \tQ_{\tE,h}^{k}(s_h,\upiO{,h}^k) + \epsilon\}\cap\{\NO{,h}^k(s_h) > \frac{\lambda}{3} k\}].
$$
In another word, $\zeta^k(s_h) = 1$ if and only if we will trust $\MO$ at state $s_h$ in Alg. \ref{alg:RL_Setting} and therefore set $\piE{}^k$ by exploiting information from $\MO$. 
Next, we define the surplus.

\begin{definition}[Definition of Surplus in Pessimistic Algorithm setting]\label{def:deficit}
    We define the surplus for pessimistic estimation in $\MO$ and optimistic estimation in $\ME$:
    \begin{align*}
        \surplusE{,h}^k(s_h,a_h) =& \tQE{,h}^k(s_h,a_h) - \mP_{\tE, h}\tVE{,h+1}^k(s_h,a_h) + \mP_{\tE, h}\uVE{,h+1}^{\pi_{\tE}^k}(s_h,a_h) - \uQE{,h}^{\pi_{\tE}^k}(s_h,a_h).
    \end{align*}
    We also define:
    \begin{align*}
        \dsurplusE{,h}^k(s_h,a_h) := \Clip{\surplusE{,h}^k(s_h,a_h)}{\Big|\max\{\frac{\Delta_{\min}}{4eH}, \frac{\DeltaE(s_h,a_h)}{4e}\}}.
    \end{align*}
    and
    \begin{align*}
        \ddot{Q}_{\tE,h}^\pi(s_h,a_h) := \rE{,h}(s_h,a_h) + \mPE{,h}\ddot{V}_{\tE,h+1}^\pi(s_h,a_h) + e\dsurplusE{,h}^k(s_h,a_h),\quad \ddot{V}_{\tE,h}^\pi(s_h) := \ddot{Q}_{\tE,h}^\pi(s_h,\pi).
    \end{align*}
\end{definition}
We first show that $\tilde{V}^k_{\tE,h}$ will be an overestimation eventually.

\ThmOverEst*
\begin{proof}
    In this theorem, $\cE_k$ denotes the event $\cEOk \cap \cEBk \cap \cECk$.
    \paragraph{Part 1: Proof of Overestimation}
    We do the proof by induction. First of all, note that the overestimation is true for horizon $h=H+1$, since all the value is zero. Next, we assume the overestimation is true for step $h+1$, we will show it holds for step $h$. 
    We first show $\tQE{,h}^k$ is an overestimation:
    \begin{align*}
        &\tQ_{\tE,h}^{k}(s_h,a_h) - Q^*_{\tE,h}(s_h,a_h) \\
        =& \min\{H-Q^*_{\tE,h}(s_h,a_h), \hmPE{,h}^k\tV_{\tE,h+1}^k(s_h,a_h) + \bonus_{\tE,h}^k(s_h,a_h)- \mPE{,h} V^*_{\tE,h+1}(s_h,a_h)\}\\
        =& \min\{H-Q^*_{\tE,h}(s_h,a_h), (\hmPE{,h}^k-\mPE{,h})\tV_{\tE,h+1}^k(s_h,a_h) + \bonus_{\tE,h}^k(s_h,a_h) + \mPE{,h} (\tV_{\tE,h+1}^k-V^*_{\tE,h+1})(s_h,a_h)\}\\
        \geq & \min\{H-Q^*_{\tE,h}(s_h,a_h), \mPE{,h} (\tV_{\tE,h+1}^k-V^*_{\tE,h+1})(s_h,a_h)\} \geq 0
    \end{align*}
    where the first inequality is because of event $\cEBk$. In the following, we separate to three cases:
    \paragraph{Case 1: $\zeta^k(s_h,a_h) = 0$}
    In this case, 
    \begin{align*}
        \tVO{,h}^k(s_h) = \max_a \tQ_{\tE,h}^{k}(s_h,a) \geq \tQ_{\tE,h}^{k}(s_h,\pi_{\tE}^*) \geq Q^*_{\tE,h}(s_h,\pi_{\tE}^*) = V^*_{\tE,h}(s_h).
    \end{align*}
    \paragraph{Case 2: $\zeta^k(s_h,a_h) = 1$ and $\upiO{,h}^k(s_h) = \pi_{\tE,h}^*(s_h)$}
    In this case, as a result of Lem. \ref{lem:underestimation}, we have:
    \begin{align*}
        \tVO{,h}^k(s_h) = \tQ_{\tE,h}^{k}(s_h,\pi_{\tE}^*) + \frac{1}{H}(\tQ_{\tE,h}^{k}(s_h,\pi_{\tE}^*) - \uQE{,h}^{\pi^k_{\tE}}(s_h,\pi_{\tE}^*)) \geq \tQ_{\tE,h}^{k}(s_h,\pi_{\tE}^*) \geq Q^*_{\tE,h}(s_h,\pi_{\tE}^*) = V^*_{\tE,h}(s_h).
    \end{align*}
    \paragraph{Case 3: $\zeta^k(s_h,a_h) = 1$ and $\upiO{,h}^k(s_h) \neq \pi_{\tE,h}^*(s_h)$}   
    This case is more complicated. Intuitively, we want to show that after $k_{ost}$, in case 3, the ``uncerntainty'' must be high, and therefore, adding $1/H$ of the uncerntainty interval will ensure the overestimation.

    As a result of Lem. \ref{lem:convergence_speed_of_PVI}, we choose $k_{ost}$ by plugging $\threshold = \frac{\Delta_{\min}}{4(H+1)}$ into Eq.~\eqref{eq:converge_speed_PVI}, which yields:
    \begin{align}
        k_{ost} := c_{ost}\cdot \max\{\alpha\frac{B_1^2H^2S}{\lambda^2\Delta_{\min}^2}\log(\alpha HSAB_1B_2), \frac{(C_1+\alpha C_2)SH}{\lambda\Delta_{\min}^2}\log\frac{C_1C_2SAH}{\Delta_{\min}\lambda}\},
    \end{align}
    for some constant $c_{ost}$. Then, for arbitrary $k \geq k_{ost}$, on the event of $\cEBk, \cECk, \cEOk$, case 3 implies that:
    \begin{align*}
        V^*_{\tO,h}(s_h) - \uVO{,h}^k(s_h) \leq \frac{\Delta_{\min}}{4(H+1)}.
    \end{align*}
    Combining with Lem. \ref{lem:underestimation}, it directly implies that $\upiO{,h}^k(s_h) = \pi_{\tO,h}^*(s_h)$. Therefore, in the following, we directly use $\pi_{\tO,h}^*$ to refer $\upiO{,h}^k(s_h)$. Then first observation is that, under Cond.~\ref{assump:opt_value_dominance}, in this case, we have:
    \begin{align*}
        \VE{,h}^*(s_h) - \tQ_{\tE,h}^k(s_h,\pi_{\tO}^*) \leq & \VE{,h}^*(s_h) - \uQO{,h}^k(s_h,\piO{,h}^*) + \epsilon \\
        = & \VE{,h}^*(s_h) - \uVO{,h}^k(s_h) + \epsilon \\
        \leq & \VO{,h}^*(s_h) + \frac{\Delta_{\min}}{2(H+1)} - \uVO{,h}^k(s_h) + \epsilon \\
        = & V^*_{\tO,h}(s_h) - \uVO{,h}^k(s_h) + \frac{\Delta_{\min}}{2(H+1)} + \epsilon \\
        \leq & \frac{\Delta_{\min}}{H+1} \leq \frac{1}{H+1} (\VE{,h}^*(s_h) - \QE{,h}^*(s_h,\pi_{\tO}^*))
    \end{align*}
    which implies that:
    \begin{align*}
        \VE{,h}^*(s_h) \leq&  \tQ_{\tE,h}^k(s_h,\pi_{\tO}^*) + \frac{1}{H} (\tQ_{\tE,h}^k(s_h,\pi_{\tO}^*) - \QE{,h}^*(s_h,\pi_{\tO}^*)\\
        \leq & \tQ_{\tE,h}^k(s_h,\pi_{\tO}^*) + \frac{1}{H} (\tQ_{\tE,h}^k(s_h,\pi_{\tO}^*) - \uQE{,h}^{\pi_{\tE}^k}(s_h,\pi_{\tO}^*) \tag{Lem. \ref{lem:underestimation}}\\
        = & \tVO{,h}^k(s_h)
    \end{align*}
    which finishes the proof for overestimation.
    \paragraph{Part 2: Proof for Eq.~\eqref{eq:regret_bound}}

    The following proof relies on Lem. \ref{lem:bounds_of_surplus} and Lem. \ref{lem:surplus_vs_overestimation_gap}, whose proofs we just provide after finishing the proof for this theorem.
    The first observation is, for arbitrary policy $\pi_{\tE}$,
    \begin{align*}
        \VE{,h}^*(s_h) - \VE{,h}^{\pi_{\tE}}(s_h) =& \QE{,h}^*(s_h,\pi_{\tE}^*) - \QE{,h}^*(s_h,\pi_{\tE})\\
        =& \DeltaE(s_h,\pi_{\tE}) + \QE{,h}^*(s_h,\pi_{\tE}) - \QE{,h}^*(s_h,\pi_{\tE}) \\
        = & \DeltaE(s_h,\pi_{\tE}) + \mPE{,h}(\VE{,h+1}^* - \VE{,h+1}^{\pi_{\tE}})(s_h,\pi_{\tE})\\
        = & ...\\
        = & \EE_{\pi_{\tE},\ME}[\sum_{\ph=h}^H \DeltaE(s_\ph,a_\ph)|s_h].\numberthis\label{eq:gap_and_regret}
    \end{align*}
    Besides, according to the definition of $\dsurplusE{,h}$, we have:
    \begin{align*}
        \ddot{V}_{\tE,h}^{\pi_{\tE}^k}(s_h) - \VE{,h}^{\pi_{\tE}^k}(s_h) =& e\EE_{\pi_{\tE}^k}[\sum_{\ph=h}^H \dsurplusE{\ph}^k(s_\ph,a_\ph)|s_h] \\
        \geq & e\EE_{\pi_{\tE}^k}[\sum_{\ph=h}^H \surplusE{\ph}^k(s_\ph,a_\ph) - \epsClip - \frac{\DeltaE(s_\ph,a_\ph)}{4e}|s_h] \\
        \geq & e\EE_{\pi_{\tE}^k}[\sum_{\ph=h}^H \surplusE{\ph}^k(s_\ph,a_\ph) - \frac{\DeltaE(s_\ph,a_\ph)}{4e}|s_h]- eH\cdot \frac{\Delta_{\min}}{4eH}.\\
        \geq & \tVE{,h}^k(s_h)-\VE{,h}^{\pi_{\tE}^k}(s_h) - \frac{1}{4}(\VE{,h}^*(s_h) - \VE{,h}^{\pi_{\tE}^k}(s_h)) - \frac{\Delta_{\min}}{4}.\tag{Eq.~\eqref{eq:gap_and_regret} and Lem. \ref{lem:surplus_vs_overestimation_gap}}\\
        \geq & \frac{3}{4}(\VE{,h}^*(s_h) - \VE{,h}^{\pi_{\tE}^k}(s_h)) - \frac{\Delta_{\min}}{4}. \tag{Overestimation}
    \end{align*}
    If $\pi_{\tE}^k(s_h) \neq \pi_{\tE}^*(s_h)$, since $\VE{,h}^*(s_h) - \VE{,h}^{\pi_{\tE}^k}(s_h) \geq \DeltaE(s_h,\pi_{\tE}^k)$, we further have:
    \begin{align*}
        \ddot{V}_{\tE,h}^{\pi_{\tE}^k}(s_h) - \VE{,h}^{\pi_{\tE}^k}(s_h) \geq \frac{1}{2}(\VE{,h}^*(s_h) - \VE{,h}^{\pi_{\tE}^k}(s_h)) + \frac{\DeltaE(s_h,\pi_{\tE}^k)}{4} - \frac{\Delta_{\min}}{4} \geq \frac{1}{2}(\VE{,h}^*(s_h) - \VE{,h}^{\pi_{\tE}^k}(s_h)).
    \end{align*}
    otherwise,
    \begin{align*}
        \ddot{V}_{\tE,h}^{\pi_{\tE}^k}(s_h) - \VE{,h}^{\pi_{\tE}^k}(s_h) =& \dsurplusE{,h}(s_h,\pi_{\tE}^k) + \EE_{\pi_{\tE}^k,\ME}[\ddot{V}_{\tE,h+1}^{\pi_{\tE}^k}(s_{h+1}) - \VE{,h+1}^{\pi_{\tE}^k}(s_{h+1})|s_h]\\
        \geq & \EE_{\pi_{\tE}^k,\ME}[\ddot{V}_{\tE,h+1}^{\pi_{\tE}^k}(s_{h+1}) - \VE{,h+1}^{\pi_{\tE}^k}(s_{h+1})|s_h].
    \end{align*}
    Therefore, we have:
    \begin{align*}
        \ddot{V}_{\tE,1}^{\pi_{\tE}^k}(s_1) - \VE{,1}^{\pi_{\tE}^k}(s_1) \geq & \EE_{\pi_{\tE}^k, \ME}[\sum_{h=1}^H \mathbb{I}[\tilde\cE_{\tE,h}^k](\ddot{V}_{\tE,h}^{\pi_{\tE}^k}(s_h) - \VE{,h}^{\pi_{\tE}^k}(s_h))]\\
        \geq &\frac{1}{2}\EE_{\pi_{\tE}^k, \ME}[\sum_{h=1}^H \mathbb{I}[\tilde\cE_{\tE,h}^k](\VE{,h}^{*}(s_h) - \VE{,h}^{\pi_{\tE}^k}(s_h))]\\
        = & \frac{1}{2}(\VE{,h}^{*}(s_1) - \VE{,h}^{\pi_{\tE}^k}(s_1)).
    \end{align*}
    Combining Lem. \ref{lem:bounds_of_surplus} and the definition of $\ddot{V}_{\tE,1}^{\pi_{\tE}^k}(s_1) $, we can finish the proof for Eq.~\eqref{eq:regret_bound}.
\end{proof}

\begin{lemma}[Upper and lower bounds of the surplus]\label{lem:bounds_of_surplus}
    For arbitrary $k$, on the event of $\cEBk$, we have:
    \begin{align*}
        \forall h \in [H],~\forall s_h\in\cS_h,a_h\in\cA_h,\quad 0\leq \surplusE{,h}^k(s_h,a_h) \leq \min\{H, 4\bonus_{\tE,h}^k(s_h,a_h)\}
    \end{align*}
\end{lemma}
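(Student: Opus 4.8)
The plan is to prove both inequalities pointwise at a fixed $(s_h,a_h)$, treating the next‑step quantities $\tVE{,h+1}^k$ and $\uVE{,h+1}^{\pi_{\tE}^k}$ as given and relying only on the event $\cEBk$ (which supplies $H\|\hmPE{,h}^k(s_h,a_h)-\mPE{,h}(s_h,a_h)\|_1<\bonus_{\tE,h}^k(s_h,a_h)$) together with the structural range bounds $0\le \uVE{,h+1}^{\pi_{\tE}^k}\le \tVE{,h+1}^k\le H$. I would first record these range facts by a downward induction on $h$: the ordering $\uVE\le\tVE$ propagates because $\hmPE{,h}^k\tVE{,h+1}^k\ge \hmPE{,h}^k\uVE{,h+1}^{\pi_{\tE}^k}$ and the extra $\tfrac1H(\tQE{,h}^k-\uQE{,h}^{\pi_{\tE}^k})$ term added in line~\ref{line:revision_to_overestimation} is nonnegative, while the two‑sided clipping $\min\{H,\cdot\}$ and $\max\{0,\cdot\}$ keeps every value in $[0,H]$.

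For the upper bound I would write $\surplusE{,h}^k=(\tQE{,h}^k-\uQE{,h}^{\pi_{\tE}^k})-\mPE{,h}(\tVE{,h+1}^k-\uVE{,h+1}^{\pi_{\tE}^k})(s_h,a_h)$. Since $\tQE{,h}^k\le H$, $\uQE{,h}^{\pi_{\tE}^k}\ge 0$, and the ordering $\tVE\ge \uVE$ makes the $\mPE{,h}$‑term nonnegative, the bound $\surplusE{,h}^k\le H$ is immediate. For the bound by $4\bonus_{\tE,h}^k$, the key observation is monotonicity of the surplus in the two clips: replacing the clipped $\tQE{,h}^k=\min\{H,\cdot\}$ by its unclipped argument only increases it, and replacing $\uQE{,h}^{\pi_{\tE}^k}=\max\{0,\cdot\}$ by its unclipped argument only decreases it, so the surplus is at most its fully unclipped version, which equals $(\hmPE{,h}^k-\mPE{,h})(\tVE{,h+1}^k-\uVE{,h+1}^{\pi_{\tE}^k})(s_h,a_h)+2\bonus_{\tE,h}^k(s_h,a_h)$. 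On $\cEBk$ the first term is at most $H\|\hmPE{,h}^k-\mPE{,h}\|_1<\bonus_{\tE,h}^k$ (using $\|\tVE-\uVE\|_\infty\le H$), giving $\surplusE{,h}^k<3\bonus_{\tE,h}^k\le 4\bonus_{\tE,h}^k$.

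For the lower bound I would split the surplus into an optimistic and a pessimistic part by adding and subtracting the reward, writing $\surplusE{,h}^k=\big(\tQE{,h}^k-\rE{,h}-\mPE{,h}\tVE{,h+1}^k\big)-\big(\uQE{,h}^{\pi_{\tE}^k}-\rE{,h}-\mPE{,h}\uVE{,h+1}^{\pi_{\tE}^k}\big)$ and showing the first bracket is $\ge 0$ and the second is $\le 0$. The pessimistic bracket is the easy one: if $\uQE$ is unclipped it equals $(\hmPE{,h}^k-\mPE{,h})\uVE{,h+1}^{\pi_{\tE}^k}-\bonus_{\tE,h}^k<0$ on $\cEBk$, and if $\uQE{,h}^{\pi_{\tE}^k}=0$ it equals $-\rE{,h}-\mPE{,h}\uVE{,h+1}^{\pi_{\tE}^k}\le 0$. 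The optimistic bracket is analogous when $\tQE$ is unclipped (it is $(\hmPE{,h}^k-\mPE{,h})\tVE{,h+1}^k+\bonus_{\tE,h}^k>0$); the delicate case is $\tQE{,h}^k=H$, where the bracket is $H-\rE{,h}-\mPE{,h}\tVE{,h+1}^k$ and its nonnegativity requires $\rE{,h}+\mPE{,h}\tVE{,h+1}^k\le H$.

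That last inequality is exactly where the main obstacle lies. Plain truncation at $H$ does not by itself guarantee it, since $\mPE{,h}\tVE{,h+1}^k$ can be as large as $H$ while the reward is positive; what is really needed is the sharper, step‑dependent ceiling $\tVE{,h+1}^k\le H-h$ (so that $\rE{,h}+\mPE{,h}\tVE{,h+1}^k\le 1+(H-h)\le H$), which is the standard device that makes top‑clipping compatible with optimism. The crux of the proof is therefore to carry this remaining‑horizon ceiling through the same downward induction as the ordering bound, verifying that the $+\tfrac1H(\tQE-\uQE)$ correction term does not push $\tVE$ above it; once the ceiling is secured, the three cases above close at once and yield $0\le \surplusE{,h}^k\le\min\{H,4\bonus_{\tE,h}^k\}$.
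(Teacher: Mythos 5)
Your treatment of the two upper bounds is sound and in fact more careful than the paper's own proof. The paper simply writes $\surplusE{,h}^k=(\hmPE{,h}^k-\mPE{,h})\tVE{,h+1}^k+(\mPE{,h}-\hmPE{,h}^k)\uVE{,h+1}^{\pi_{\tE}^k}+2\bonus_{\tE,h}^k$ as an identity, which is only valid when neither the $\min\{H,\cdot\}$ nor the $\max\{0,\cdot\}$ clip is active, and then reads off $0\le\surplusE{,h}^k\le 4\bonus_{\tE,h}^k$ from $\cEBk$; your monotonicity observation (un-clipping $\tQ$ can only raise the surplus, and un-clipping $\uQ$ can only raise it as well) is exactly what is needed to make the $\le 4\bonus_{\tE,h}^k$ half rigorous, and your $\le H$ half matches the paper's (both rely on the ordering $\tVE{,h+1}^k\ge\uVE{,h+1}^{\pi_{\tE}^k}$, which you propagate by a direct induction rather than routing through Thm.~\ref{thm:overestimation} as the paper does).

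The genuine gap is in the lower bound, and it sits exactly where you located it — but the repair you propose does not go through for this algorithm. You want the remaining-horizon ceiling $\tVE{,h+1}^k\le H-h$ so that $\rE{,h}+\mPE{,h}\tVE{,h+1}^k\le H$ in the top-clipped case. However, Alg.~\ref{alg:RL_Setting} clips $\tQ_{\tE,h}^{k}$ and $\tV_{\tE,h}^k$ at the uniform level $H$, not at $H-h+1$, and Cond.~\ref{cond:bonus_term} does not bound $\bonus_{\tE,h}^k$ by the remaining horizon; already at the base case $h=H$ one has $\tQ_{\tE,H}^{k}=\min\{H,\rE{,H}+\bonus_{\tE,H}^k\}=H$ whenever the bonus exceeds $H-1$ (e.g.\ when $\NE{,H}^k$ is small), so the ceiling fails before the induction can start. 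Concretely, take $H=2$, $\tVE{,2}^k(\cdot)=2$, $\uVE{,2}^{\pi_{\tE}^k}(\cdot)=0$, $\rE{,1}=1$, $\hmPE{,1}^k=\mPE{,1}$ and $\bonus_{\tE,1}^k=0.1$: then $\tQE{,1}^k=2$, $\uQE{,1}^{\pi_{\tE}^k}=0.9$, and $\surplusE{,1}^k=2-2+0-0.9<0$, all consistent with $\cEBk$. So the one case your argument leaves open — $\tQ$ clipped at $H$ while $\uQ$ is strictly positive — is a real one; the paper's proof silently assumes it away by using the unclipped identity, and closing it would require either a step-dependent clip in the algorithm or a weakening of the claimed lower bound, not the induction you sketch.
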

\begin{proof}
    According to Alg. \ref{alg:RL_Setting}, we should have $\tQE{,h}^k,~\tVE{,h+1}^k,~\uVE{,h+1}^{\pi_{\tE}^k},~\uQE{,h}^{\pi_{\tE}^k} \in[0,H]$. 
    By Lem. \ref{lem:underestimation} and Thm. \ref{thm:overestimation}, we also have $\tVE{,h+1}^k(\cdot) \geq \uVE{,h+1}^{\pi_{\tE}^k}(\cdot)$, which implies that $\surplusE{,h}^k(\cdot,\cdot) \leq H$.
    Besides,
    \begin{align*}
        \surplusE{,h}^k(s_h,a_h) = (\hmPE{,h}^k - \mPE{,h})\tVE{,h+1}^k(s_h,a_h) + (\mPE{,h} - \hmPE{,h}^k)\uVE{,h+1}^{\pi_{\tE}^k}(s_h,a_h) + 2\bonus_{\tE,h}^k(s_h,a_h).
    \end{align*}
    On the event of $\cEBk$, we have $0 \leq \surplusE{,h}^k(s_h,a_h) \leq 4\bonus_{\tE,h}^k(s_h,a_h)$, which finishes the proof.
\end{proof}
\begin{lemma}[Relationship between surplus and overestimation gap]\label{lem:surplus_vs_overestimation_gap}
    Under the same condition of Thm. \ref{thm:overestimation},
    \begin{align*}
        \tVE{,h}^k(s_h)-\VE{,h}^{\pi_{\tE}^k}(s_h)\leq e\EE_{\pi_{\tE}^k, \ME}[\sum_{\ph=h}^H \surplusE{,h}^k(s_h,\pi_{\tE}^k)|s_h].
    \end{align*}
\end{lemma}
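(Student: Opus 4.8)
The plan is to avoid tracking the overestimation gap $g_h(s_h) := \tVE{,h}^k(s_h) - \VE{,h}^{\pi_{\tE}^k}(s_h)$ directly, because the $\tfrac1H$-correction introduced in line~\ref{line:revision_to_overestimation} prevents the gap from satisfying a self-contained recursion. Instead I would track the full \emph{width} $w_h(s_h) := \tVE{,h}^k(s_h) - \uVE{,h}^{\pi_{\tE}^k}(s_h)$ between the optimistic and pessimistic estimates, and only compare $w_h$ to $g_h$ at the very end: since $\uVE{,h}^{\pi_{\tE}^k}(s_h) \leq \VE{,h}^{\pi_{\tE}^k}(s_h)$ by Eq.~\eqref{eq:under_estimation_QE} of Lem.~\ref{lem:underestimation}, we have $g_h(s_h) \leq w_h(s_h)$, so it suffices to bound the width.

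First, writing $a := \pi_{\tE}^k(s_h)$ and dropping the harmless $\min\{H,\cdot\}$ in line~\ref{line:revision_to_overestimation} (which can only lower $\tVE{,h}^k$), the update reads $\tVE{,h}^k(s_h) \leq (1+\tfrac1H)\tQE{,h}^k(s_h,a) - \tfrac1H\,\uQE{,h}^{\pi_{\tE}^k}(s_h,a)$. Subtracting $\uVE{,h}^{\pi_{\tE}^k}(s_h) = \uQE{,h}^{\pi_{\tE}^k}(s_h,a)$ collapses this into
\[
w_h(s_h) \leq \Big(1+\tfrac1H\Big)\Big(\tQE{,h}^k(s_h,a) - \uQE{,h}^{\pi_{\tE}^k}(s_h,a)\Big).
\]
The second step is to recognize the bracket as exactly one surplus term plus the propagated width: by the definition of $\surplusE{,h}^k$ in Def.~\ref{def:deficit}, one has $\tQE{,h}^k(s_h,a) - \uQE{,h}^{\pi_{\tE}^k}(s_h,a) = \surplusE{,h}^k(s_h,a) + \mPE{,h}(\tVE{,h+1}^k - \uVE{,h+1}^{\pi_{\tE}^k})(s_h,a) = \surplusE{,h}^k(s_h,a) + \mPE{,h}w_{h+1}(s_h,a)$. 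This yields the clean one-step recursion $w_h(s_h) \leq (1+\tfrac1H)\big(\surplusE{,h}^k(s_h,a) + \mPE{,h}w_{h+1}(s_h,a)\big)$.

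Third, I would unroll this recursion from $h$ down to $H+1$ (where $w_{H+1}=0$) along the trajectory law induced by $\pi_{\tE}^k$, obtaining $w_h(s_h) \leq \EE_{\pi_{\tE}^k}[\sum_{\ph=h}^H (1+\tfrac1H)^{\ph-h+1}\surplusE{,\ph}^k(s_\ph,a_\ph)\mid s_h]$. Since every exponent is at most $H$, the prefactor obeys $(1+\tfrac1H)^{\ph-h+1} \leq (1+\tfrac1H)^H \leq e$; and because $\surplusE{,\ph}^k \geq 0$ on $\cEBk$ by Lem.~\ref{lem:bounds_of_surplus}, replacing each prefactor by the constant $e$ can only increase the bound. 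This gives $w_h(s_h) \leq e\,\EE_{\pi_{\tE}^k}[\sum_{\ph=h}^H \surplusE{,\ph}^k(s_\ph,a_\ph)\mid s_h]$, and combining with $g_h \leq w_h$ proves the claim.

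The main obstacle is conceptual rather than computational: one must resist recursing on $g_h$ and instead introduce the width $w_h$, since the $\tfrac1H$-correction term is precisely $\tfrac1H(\tQE{,h}^k - \uQE{,h}^{\pi_{\tE}^k})$, whose forward propagation naturally brings in $w_{h+1}$ rather than $g_{h+1}$. Once this substitution is made the recursion closes with a tidy $(1+\tfrac1H)$ amplification per step, and the nonnegativity $\surplusE{,\ph}^k \geq 0$ from Lem.~\ref{lem:bounds_of_surplus} is exactly what licenses collapsing the geometric factor to the constant $e$.
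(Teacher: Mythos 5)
Your proposal is correct and follows essentially the same route as the paper's proof: both bound the gap by the optimistic–pessimistic width via Lem.~\ref{lem:underestimation}, extract the $(1+\tfrac1H)$ factor from the update rule in line~\ref{line:revision_to_overestimation}, decompose the $Q$-level width into one surplus term plus the propagated next-step width, and unroll to get $(1+\tfrac1H)^{\ph-h+1}\le e$ using the nonnegativity of the surplus. Your explicit naming of the width $w_h$ and the remark that nonnegativity from Lem.~\ref{lem:bounds_of_surplus} is what licenses collapsing the geometric factor are just slightly more careful statements of steps the paper performs implicitly.
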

\begin{proof}
\begin{align*}
    &\tVE{,h}^k(s_h)-\VE{,h}^{\pi_{\tE}^k}(s_h) \leq \tVE{,h}^k(s_h) - \uVE{,h}^{\pi_{\tE,k}}(s_h) \tag{Lem. \ref{lem:underestimation}}\\
    \leq& (1+\frac{1}{H})(\tQE{,h}^k(s_h,\pi_{\tE}^k) - \uQE{,h}^{\pi_{\tE,k}}(s_h,\pi_{\tE}^k)) \tag{Update rule in Alg. \ref{alg:RL_Setting} and $\tQE{,h}^k \geq \uQE{,h}^{\pi_{\tE,k}}$}\\
    = & (1+\frac{1}{H})(\tQE{,h}^k(s_h,\pi_{\tE}^k) - \mP_{\tE, h}\tVE{,h+1}^k(s_h,\pi_{\tE}^k) + \mP_{\tE, h}\uVE{,h+1}^k(s_h,\pi_{\tE}^k) - \uQE{,h}^{\pi_{\tE,k}}(s_h,\pi_{\tE}^k))\\
    & + (1+\frac{1}{H}) \mP_{\tE, h}(\tVE{,h+1}^k-\uVE{,h+1}^k)(s_h,\pi_{\tE}^k)\\
    = &  (1+\frac{1}{H})\surplusE{,h}^k(s_h,\pi_{\tE}^k) +  (1+\frac{1}{H}) \mP_{\tE, h}(\tVE{,h+1}^k-\uVE{,h+1}^k)(s_h,\pi_{\tE}^k)\\
    \leq & ... \\
    \leq & \EE_{\pi_{\tE}^k, \ME}[\sum_{\ph=h}^H (1+\frac{1}{H})^{\ph-h+1}\surplusE{,h}^k(s_h,\pi_{\tE}^k) ]\\
    \leq & e\EE_{\pi_{\tE}^k, \ME}[\sum_{\ph=h}^H \surplusE{,h}^k(s_h,\pi_{\tE}^k)].
\end{align*}
\end{proof}
In the following lemma, we show the benefits of transfer action between similar states.
\begin{lemma}[Benefits on Similar States]\label{lem:effects_on_similar_states}
    If $s_h$ in $\ME$ is $\epsilon$-close to $s_h$ in $\MO$, i.e. satisfying the property in Def. \ref{def:close_state}, and $d^*_{\tO}(s_h) \geq \lambda$, where $\lambda$ is the hyper-parameter in Alg. \ref{alg:RL_Setting}, then, for arbitrary $k \geq k_{occup} := c_{occup}\frac{C_1+\alpha C_2}{\lambda \Delta_{\min}}\log(\frac{\alpha C_1C_2 SAH}{\lambda \Delta_{\min}})$, on the events of $\cEOk,\cECk, \cEBk$, we have $\pi_{\tE}^k(s_h) = \pi^*_{\tE}(s_h)$. 
\end{lemma}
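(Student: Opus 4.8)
The plan is to show that at iteration $k$ the state $s_h$ enters the ``Trust and Exploit'' branch (line~\ref{line:checking_cond_RL}) and that the action taken there, $\arg\max_a \NO{,h}^k(s_h,a)$, coincides with $\piE{}^*(s_h)$. Concretely I must verify (i) the two clauses of the checking event, namely $\max_a \NO{,h}^k(s_h,a) > \frac{\lambda}{3}k$ and $\uVO{,h}^k(s_h) \leq \tQE{,h}^k(s_h,\upiO{,h}^k)+\epsilon$; and (ii) that $\arg\max_a \NO{,h}^k(s_h,a) = \piO{}^*(s_h)$, which by $\epsilon$-closeness (Def.~\ref{def:close_state}) equals $\piE{}^*(s_h)$. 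Throughout I work on $\cEOk \cap \cECk \cap \cEBk$.

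\textbf{Count clause and action identification.} Since the optimal policy is unique (Assump.~\ref{assump:unique_optimal_policy}), $d^*_\tO(s_h,\piO{}^*(s_h)) = d^*_\tO(s_h) \geq \lambda$, so Lem.~\ref{lem:lower_bound_of_dOstar} gives $\NO{,h}^k(s_h,\piO{}^*) \geq \frac{\lambda}{3}k$ for $k \geq k_{occup}$ (in fact its proof yields $\tfrac{3+2e}{18}\lambda k > \tfrac{\lambda}{3}k$, so the strict count clause holds). For identification, every $a \neq \piO{}^*(s_h)$ has $d^*_\tO(s_h,a)=0$, so Cor.~\ref{corl:unique_optimal_policy} together with $\cECk$ and $\cEOk$ bounds $\NO{,h}^k(s_h,a) \leq \frac{e}{\Delta_{\min}}(C_1+\alpha C_2\log k) + \alpha\log(2SAHk) = O(\tfrac{\log k}{\Delta_{\min}})$. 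For $k \geq k_{occup}$ the linear term $\frac{\lambda}{3}k$ strictly dominates this logarithmic bound (the same comparison that underlies Lem.~\ref{lem:lower_bound_of_dOstar}), so $\piO{}^*(s_h)$ is the unique maximizer of $a \mapsto \NO{,h}^k(s_h,a)$, i.e. the selected action equals $\piO{}^*(s_h)=\piE{}^*(s_h)$.

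\textbf{Value clause.} I chain underestimation, $\epsilon$-closeness, and overestimation. On $\cEBk$, Lem.~\ref{lem:underestimation} gives $\uVO{,h}^k(s_h) \leq \VO{,h}^*(s_h)$; Def.~\ref{def:close_state} gives $\VO{,h}^*(s_h) \leq \VE{,h}^*(s_h)+\epsilon = \QE{,h}^*(s_h,\piE{}^*)+\epsilon$; and Thm.~\ref{thm:overestimation} gives $\QE{,h}^*(s_h,\piE{}^*) \leq \tQE{,h}^k(s_h,\piE{}^*)$. To replace $\piE{}^*$ by $\upiO{,h}^k$ I use that PVI has converged at $s_h$: taking $\xi = \Delta_{\min}/2 < \Delta_{\tO,\min}$ in Lem.~\ref{lem:convergence_speed_of_PVI} forces $\VO{,h}^*(s_h)-\uVO{,h}^k(s_h) \leq \xi$, while for any suboptimal $a'$ we have $\uQO{,h}^k(s_h,a') \leq \QO{,h}^*(s_h,a') \leq \VO{,h}^*(s_h)-\Delta_{\tO,\min}$; hence the maximizer $\upiO{,h}^k(s_h)$ of $\uQO{,h}^k(s_h,\cdot)$ must equal $\piO{}^*(s_h)=\piE{}^*(s_h)$. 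Combining yields $\uVO{,h}^k(s_h) \leq \tQE{,h}^k(s_h,\upiO{,h}^k)+\epsilon$, so both clauses hold, the branch is entered, and $\piE{}^k(s_h)=\piE{}^*(s_h)$.

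\textbf{Main obstacle.} The delicate step is the value clause, and specifically the need to pin down $\upiO{,h}^k(s_h)=\piE{}^*(s_h)$ before invoking overestimation: Thm.~\ref{thm:overestimation} only gives $\tQE{,h}^k \geq \QE{,h}^*$, and $\QE{,h}^*(s_h,a)$ lies a full gap below $\VE{,h}^*(s_h)$ for suboptimal $a$, so an unidentified $\upiO{,h}^k$ could make $\tQE{,h}^k(s_h,\upiO{,h}^k)$ undershoot $\VE{,h}^*(s_h)$ and break the check. Both the PVI-convergence step (Lem.~\ref{lem:convergence_speed_of_PVI} with $\xi \asymp \Delta_{\min}$) and the overestimation step (Thm.~\ref{thm:overestimation}, valid only for $k \geq k_{ost}$) require $k$ of order $k_{ost}$, which is polynomially larger than $k_{occup}$; only the count clause and the action identification genuinely need just $k \geq k_{occup}$. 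A fully rigorous version therefore takes the threshold to be $\max\{k_{occup},k_{ost}\}$, and I expect the author's proof to invoke the overestimation guarantee of Thm.~\ref{thm:overestimation} exactly at this point.
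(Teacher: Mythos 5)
Your proof is correct and follows essentially the same route as the paper's: the count clause and the identification of $\arg\max_a \NO{,h}^k(s_h,a)$ with $\pi^*_\tO(s_h)$ via Lem.~\ref{lem:lower_bound_of_dOstar} and uniqueness of the optimal policy, and the value clause by chaining underestimation, $\epsilon$-closeness, and overestimation of $\tQE{,h}^k$. Your flag about the threshold is also well taken: the paper's proof merely asserts the value clause on $\cEBk$ without noting that the overestimation (Thm.~\ref{thm:overestimation}) and the identification $\upiO{,h}^k(s_h)=\pi^*_\tO(s_h)$ both require $k \gtrsim k_{ost}$ rather than just $k \geq k_{occup}$, which is harmless only because the lemma is invoked downstream exclusively for $k \geq k_{start}=\max\{k_{ost},k_{occup}\}$ (Thm.~\ref{thm:regret_upper_bound_detailed}).
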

\begin{proof}
    As a result of Lem. \ref{lem:lower_bound_of_dOstar}, we have, for arbitrary $s_h$ with $d^*_{\tO}(s_h)=d^*_{\tO}(s_h,\pi^*_{\tO}) \geq \lambda$, after $k \geq c_{occup}\frac{C_1+\alpha C_2}{\lambda \Delta_{\min}}\log(\frac{\alpha C_1C_2 SAH}{\lambda \Delta_{\min}})$, we should have:
    \begin{align*}
        \NO{,h}^k(s_h) \geq \NO{,h}^k(s_h,\pi^*_{\tO}) \geq \frac{\lambda}{3} k.
    \end{align*}
    On the events of $\cEBk$, and the value dominance condition Def. \ref{assump:opt_value_dominance}, we also have:
    \begin{align*}
        \uQO{,h}^k(s_h,\upiO{,h}^k) \leq \tQE{,h}^k(s_h,\upiO{,h}^k) + \epsilon.
    \end{align*}
    which implies that the algorithm will choose the ``trust'' branch and choose $\pi_{\tE,h}^k(s_h) = \upiO{,h}^k(s_h)$. 
    
    On the other hand, if there is another $a_h \neq \pi^*_{\tO}(s_h)$ satisfying $\NO{,h}^k(s_h,a_h) \geq \frac{\lambda}{3} k$, by applying Lem. \ref{lem:lower_bound_of_dOstar} again, we can make a contradication and therefore, we must have 
    $$
    \NO{,h}^k(s_h,\pi^*_{\tO}) \geq \frac{\lambda}{3} k > \NO{,h}^k(s_h,a_h),\quad \forall a_h \neq \pi^*_{\tO}(s_h).
    $$
    which implies that $\pi_{\tE}^k(s_h) = \pi^*_{\tO}(s_h)$ in Alg. \ref{alg:RL_Setting}. Given that $s_h$ is $\epsilon$-close between $\ME$ and $\MO$, we directly have $\pi_{\tE}^k(s_h) = \pi_{\tE}^*(s_h)$.
\end{proof}

\begin{theorem}[Detailed Version of Thm.\ref{thm:regret_upper_bound}]\label{thm:regret_upper_bound_detailed}
    Under Assump. \ref{assump:unique_optimal_policy}, \ref{assump:opt_value_dominance} and \ref{assump:lower_bound_Delta_min}, Cond.~\ref{cond:requirement_on_algO} for $\algO$ and Cond.~\ref{cond:bonus_term} for \textbf{Bonus} function,
    by running Alg.~\ref{alg:RL_Setting} with $\epsilon=\frac{\tilde{\Delta}_{\min}}{4(H+1)}$, $\alpha > 2$, an any $\lambda > 0$, we have
    \begin{align*}
        \Regret_{K}(\ME) = & O\Big(H\cdot \max\{\alpha\frac{S^3H^4}{\lambda^2\Delta_{\min}^2}\log(\alpha SAH), \frac{(C_1+\alpha C_2)SH}{\lambda\Delta_{\min}^2}\log\frac{C_1C_2SAH}{\Delta_{\min}\lambda}\} \\
        &\quad \quad + \sum_{h=1}^H \sum_{(s_h,a_h) \in \cC^*_h} \frac{SH^2}{\Delta_{\min}} \log (SAH (K \wedge \frac{1}{\lambda \Delta_{\min} d^*_{\tE}(s_h)})) \\
        & \quad \quad + SH \sum_{h=1}^H \sum_{(s_h,a_h)\in \cS_h\times\cA_h \setminus \cC^{\lambda}_h} (\frac{H}{\Delta_{\min}}\wedge \frac{1}{\DeltaE(s_h,a_h)})\log (SAHK) \Big)\\
        =& O\Big(
        SH \sum_{h=1}^H \sum_{(s_h,a_h)\in \cS_h\times\cA_h \setminus \cC^{\lambda}_h} (\frac{H}{\Delta_{\min}}\wedge \frac{1}{\DeltaE(s_h,a_h)})\log (SAHK)\Big)\numberthis\label{eq:final_bound_RL}.
    \end{align*}
    \end{theorem}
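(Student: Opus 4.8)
The plan is to bound $\Regret_K(\ME)=\sum_{k=1}^K\EE[\VE{,1}^*(s_1)-\VE{,1}^{\piE{}^k}(s_1)]$ by splitting the horizon of episodes at the burn-in threshold $k_{ost}$ supplied by Thm.~\ref{thm:overestimation}. For $k\leq k_{ost}$ I would bound each gap trivially by $H$, contributing $Hk_{ost}=\Poly(S,A,H,\lambda^{-1},\Delta_{\min}^{-1})$, the first ($K$-independent) term of the detailed bound. For $k>k_{ost}$, off the high-probability event of Thm.~\ref{thm:overestimation} (failure probability $O(\delta_k)$ with $\delta_k=1/SAHk^\alpha$, $\alpha>2$) I again bound the gap by $H$; since $\sum_k H\delta_k=O(1)$ this failure contribution is a constant. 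On the good event, Eq.~\eqref{eq:regret_bound} gives the per-episode surplus bound, which I expand over the state-action occupancy $d^{\piE{}^k}$ to obtain
\[
\sum_{k>k_{ost}}2e\sum_{h=1}^H\sum_{s_h,a_h}d^{\piE{}^k}(s_h,a_h)\,\Clip{\min\{H,4\bonus^k_{\tE,h}(s_h,a_h)\}}{\Big|\tfrac{\Delta_{\min}}{4eH}\vee\tfrac{\DeltaE(s_h,a_h)}{4e}},
\]
after which the task reduces to bounding the per-pair contributions according to the partition $\cC^\lambda_h=\cC^{1,\lambda}_h\cup\cC^{2,\lambda}_h\cup\cC^*_h$ versus its complement.

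Second, I would dispatch the transfer-benefit classes. For $(s_h,a_h)\in\cC^{1,\lambda}_h$ (suboptimal actions at $\lambda$-transferable states), Lem.~\ref{lem:effects_on_similar_states} shows that for $k\geq k_{occup}$ on $\cEOk\cap\cECk\cap\cEBk$ the algorithm sets $\piE{}^k(s_h)=\pi^*_{\tE}(s_h)$; since $\piE{}^k$ is deterministic, $d^{\piE{}^k}(s_h,a_h)=0$ for every $a_h\neq\pi^*_{\tE}(s_h)$, so these pairs contribute nothing beyond failure-event and pre-$k_{ost}$ constants (and $k_{ost}\geq k_{occup}$ by construction). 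For $(s_h,a_h)\in\cC^{2,\lambda}_h$, the defining $\text{Block}$ property forces every trajectory reaching $s_h$ to first take a suboptimal action at some transferable state in $\cC^{1,\lambda}_{\ph}$, $\ph<h$; but the previous step forbids exactly those actions on the good event, so $d^{\piE{}^k}(s_h)=0$ and the whole row contributes zero. Thus $\cC^{1,\lambda}_h$ and $\cC^{2,\lambda}_h$ add only $K$-independent constants.

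Third, I would handle $\cC^*_h$ and the complement $\cS_h\times\cA_h\setminus\cC^\lambda_h$ by a clipping/peeling argument in the style of \citep{simchowitz2019non}. Using $\cECk$ (Eq.~\eqref{eq:def_concentration_event}, valid w.p.\ $\geq1-1/k^\alpha$ by Lem.~\ref{lem:occupancy_of_Nh}) I relate $\NE{,h}^k(s_h,a_h)$ to the cumulative occupancy $\sum_{\tk\leq k}d^{\piE{}^\tk}(s_h,a_h)$ up to an additive $O(\log(SAHk))$ term, so that $\bonus^k_{\tE,h}\leq B_1\sqrt{\log(B_2/\delta_k)/\NE{,h}^k}$ (Cond.~\ref{cond:bonus_term}) decays like $1/\sqrt{N}$. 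The clip at level $w_{s_h,a_h}=\frac{\Delta_{\min}}{4eH}\vee\frac{\DeltaE(s_h,a_h)}{4e}$ zeroes the summand once $\NE{,h}^k\gtrsim B_1^2\log(\cdot)/w_{s_h,a_h}^2$; summing $\min\{H,4\bonus\}$ over at most that many active visits gives a per-pair bound $\tilde{O}\big(B_1^2(\tfrac{H}{\Delta_{\min}}\wedge\tfrac{1}{\DeltaE(s_h,a_h)})\big)$ with log factor $\log(B_2/\delta_K)=O(\alpha\log(SAHK))$. For $\cC^*_h$ (where $d^*_\tE(s_h,a_h)>0$, hence $a_h=\pi^*_{\tE}(s_h)$ and $\DeltaE=0$) the counts grow linearly, clipping engages by episode $O(1/(\lambda\Delta_{\min}d^*_\tE(s_h)))$, yielding the constant second term; for the complement it yields the $\log(SAHK)$ third term. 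Plugging $B_1=\Theta(SH)$ (Example~\ref{example:choice_of_B}) produces the stated $SH$ prefactor.

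Finally, I would assemble the three contributions into the detailed bound and simplify to Eq.~\eqref{eq:final_bound_RL} by noting that the burn-in term and the $\cC^*_h$ term are independent of $K$ and are absorbed into the big-$O$ of the dominant $\log(SAHK)$ term. The main obstacle I anticipate is the clipping summation in the third step: converting the episode-indexed, occupancy-weighted sum of decaying bonuses into a clean per-pair gap-dependent bound requires coupling the empirical counts to their expected occupancies through $\cECk$ uniformly over all $(h,s_h,a_h)$ and all $k$, and then peeling the sum against the adaptive clip threshold. The transfer-specific zeroing of $\cC^{1,\lambda}_h$ and $\cC^{2,\lambda}_h$ is the conceptually novel ingredient, but it is already delivered by Lem.~\ref{lem:effects_on_similar_states} once the reachability propagation through $\text{Block}$ is argued.
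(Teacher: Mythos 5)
Your proposal follows essentially the same route as the paper's proof: the burn-in split at $k_{ost}$ (the paper uses $k_{start}=\max\{k_{ost},k_{occup}\}$), the failure-event absorption into a constant, the occupancy-weighted clipping decomposition of Eq.~\eqref{eq:regret_bound}, the zeroing of $\cC^{1,\lambda}_h$ and $\cC^{2,\lambda}_h$ via Lem.~\ref{lem:effects_on_similar_states} and the Block property, and the integral/peeling bound for the remaining pairs with $B_1=\Theta(SH)$. The one place you assert rather than argue is that the counts on $\cC^*_h$ grow linearly: the paper proves this by first deriving the loose $O(\log K)$ bound (Eq.~\eqref{eq:RL_regret_loose}) and then converting realized regret into occupancy deviation via Thm.~\ref{thm:algO_regret_vs_algP_density} together with an Azuma--Hoeffding bound on the martingale of value gaps, a bootstrapping step your sketch would need to make explicit.
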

\begin{proof}
    We consider $k_{start} : = \max\{k_{ost}, k_{occup}\}$.
    We first study the regret part after $k \geq k_{start}$:
    \begin{align*}
        &\EE[\sum_{k=k_{start}+1}^K \VE{,1}^*(s_1) - \VE{,1}^{\piE{}^k}(s_1)] \\
        \leq & \sum_{k=k_{start}+1}^K 2e\EE_{\pi_{\tE}^k}[\sum_{h=1}^H \Clip{\min\{H, 4\bonus^k_{\tE,h}(s_h,a_h)\}}{\Big|\frac{\Delta_{\min}}{4eH}\vee \frac{\DeltaE(s_h,a_h)}{4e}}\mathbb{I}[\cEBk\cap\cEOk\cap\cECk]] \\
        & + \sum_{k=k_{start}+1}^K H\cdot \Pr(\cEBk^\complement\cup\cEOk^\complement\cup\cECk^\complement).
    \end{align*}
    Since the failure rate for events $\cEBk, \cEOk, \cECk$ is only at the level of $k^{-\Theta(\alpha)}$, the second part is constant, and we mainly focus on the first term. For all state action $(s_h,a_h)$, and for all $k \geq k_{start}$, we have:
    \begin{align*}
        &\EE_{\pi_{\tE}^k}[\Clip{\min\{H, 4\bonus^k_{\tE,h}(s_h,a_h)\}}{\Big|\frac{\Delta_{\min}}{4eH}\vee \frac{\DeltaE(s_h,a_h)}{4e}}\mathbb{I}[\cEBk\cap\cEOk\cap\cECk]]\\
        =&d^{\pi_{\tE}^k}_{\tE}(s_h,a_h)\Clip{\min\{H, 4B_1\sqrt{\frac{\log(B_2 k^\alpha)}{\NE{,h}^k(s_h,a_h)}}\}}{\Big|\frac{\Delta_{\min}}{4eH}\vee \frac{\DeltaE(s_h,a_h)}{4e}}\\
        \leq&d^{\pi_{\tE}^k}_{\tE}(s_h,a_h)\Clip{\min\{H, 4B_1\sqrt{\frac{\alpha\log(B_2K)}{\NE{,h}^k(s_h,a_h)}}}{\Big|\frac{\Delta_{\min}}{4eH}\vee \frac{\DeltaE(s_h,a_h)}{4e}} \numberthis\label{eq:relaxed_clipping_bound}
    \end{align*}
    Under the event of $\cECk$, as a result of Lem. \ref{thm:algO_regret_vs_algP_density}, we have:
    \begin{align*}
        \NE{,h}^k(s_h,a_h) \geq \frac{1}{2} \sum_{k' = 1}^{k-1} d^{\piE{}^{k'}}_{\tE}(s_h,a_h) - \alpha \log(2SAHk) \geq \frac{1}{2} \sum_{k' = 1}^{k-1} d^{\piE{}^{k'}}_{\tE}(s_h,a_h) - \alpha \log(2SAHK)
    \end{align*}
    We denote $\tau_{s_h,a_h}^K := \min_k~s.t.~\forall k' \geq k,~\frac{1}{4}\sum_{k'=1}^{k-1}d^{\pi_{\tE}^k}_{\tE}(s_h,a_h) \geq \alpha \log (2SAHK)$. Then we have:
    \begin{align*}
        \eqref{eq:relaxed_clipping_bound} \leq & d^{\pi_{\tE}^k}_{\tE}(s_h,a_h)\Clip{\min\{H, 8B_1\sqrt{\frac{\alpha \log(B_2 K)}{\sum_{k'=1}^{k-1}d^{\pi_{\tE}^k}_{\tE}(s_h,a_h)}}\}}{\Big|\frac{\Delta_{\min}}{4eH}\vee \frac{\DeltaE(s_h,a_h)}{4e}},\quad \forall k \geq \tau_{s_h,a_h}^K.
    \end{align*}
    Therefore, for arbitrary $s_h,a_h$, there exists an absolute constant $c_{s_h,a_H}$, such that:
    \begin{align*}
        &\sum_{k=k_{start}+1}^K \EE_{\pi_{\tE}^k}[\Clip{\min\{H, 4\bonus^k_{\tE,h}(s_h,a_h)\}}{\Big|\frac{\Delta_{\min}}{4eH}\vee \frac{\DeltaE(s_h,a_h)}{4e}}]\\
        \leq & H\cdot \sum_{k=1}^{\tau_{s_h,a_h}^K} d_{\tE}^{\piE{}^k}(s_h,a_h) + \sum_{k=\tau_{s_h,a_h}^K+1}^K \EE_{\pi_{\tE}^k}[\Clip{\min\{H, 4B_1\sqrt{\frac{\alpha\log(B_2 K)}{\NE{,h}^k(s_h,a_h)}}}{\Big|\frac{\Delta_{\min}}{4eH}\vee \frac{\DeltaE(s_h,a_h)}{4e}}]\\
        \leq & c_{s_h,a_h} H \log (2SAHK)) + \sum_{k=\tau_{s_h,a_h}^K + 1}^K d^{\pi_{\tE}^k}_{\tE}(s_h,a_h)\Clip{\min\{H, 8B_1\sqrt{\frac{\alpha \log(B_2 K)}{\sum_{k'=1}^{k-1}d^{\pi_{\tE}^k}_{\tE}(s_h,a_h)}}\}}{\Big|\frac{\Delta_{\min}}{4eH}\vee \frac{\DeltaE(s_h,a_h)}{4e}} \\
        \leq & c_{s_h,a_h} H \log (2SAHK) +  c_{s_h,a_h} \cdot  \int_{\alpha \log (2SAHK)}^{K/4}\Clip{B_1\sqrt{\frac{\alpha \log(B_2 K)}{x}}}{\Big|\frac{\Delta_{\min}}{4eH}\vee \frac{\DeltaE(s_h,a_h)}{4e}}dx \\
        = & O\left(H \log (2SAHK) +  B_1(\frac{H}{\Delta_{\min}}\wedge \frac{1}{\DeltaE(s_h,a_h)})\log (B_2K)\right).
    \end{align*}
    As a result, we can establish the following regret upper bound (note that $k_{start} = O(k_{ost})$):
    \begin{align*}
        &\EE[\sum_{k=1}^K \VE{,1}^*(s_1) - \VE{,1}^{\piE{}^k}(s_1)]\\
        \leq & \sum_{k=k_{start}+1}^K 2e\EE_{\pi_{\tE}^k}[\sum_{h=1}^H \Clip{\min\{H, 4\bonus^k_{\tE,h}(s_h,a_h)\}}{\Big|\frac{\Delta_{\min}}{4eH}\vee \frac{\DeltaE(s_h,a_h)}{4e}}\mathbb{I}[\cEBk\cap\cEOk\cap\cECk]] \\
        & + H k_{start} + \sum_{k=k_{start}+1}^K H\cdot \Pr(\cEBk^\complement\cup\cEOk^\complement\cup\cECk^\complement)\\
        = & O\left(k_{start} \cdot H +  SAH^2\log(2SAHK) + B_1\log(B_2 K)\sum_{h=1}^H\sum_{s_h,a_h} (\frac{H}{\Delta_{\min}}\wedge \frac{1}{\DeltaE(s_h,a_h)}) \right).\numberthis\label{eq:RL_regret_loose}
    \end{align*}
    As introduced in maintext, because of the knowledge transfer, we may expect to achieve constant regret on some special state action pairs, and we analyze them in the following.
    \paragraph{Type 1: $(s_h,a_h) \in \cC^{\lambda, 1}_h \cup \cC^{\lambda, 2}_h$: Constant Regret because of Low Visitation Probability}
    As discussed in Lem. \ref{lem:effects_on_similar_states}, for $s_h \in \cZ_h^{\epsilon,\lambda}$, since $k_{start} \geq k_{occup} := c_{occup}\frac{C_1+\alpha C_2}{\lambda \Delta_{\min}}\log(\frac{\alpha C_1C_2 SAH}{\lambda \Delta_{\min}})$, on the event of $\cEOk,\cEBk$ and $\cECk$, we have $\forall a_h \neq \pi_{\tE}^*(s_h),~d^{\piE{}^k}_{\tE}(s_h,a_h) = 0$. Moreover, according to the definition of $\cC^{\lambda,2}_h$, we also have $\forall (s_h,a_h) \in \cC^{\lambda,2}_h,~d^{\piE{}^k}_{\tE}(s_h,a_h) = 0$. Therefore, for all $h\in[H]$ and $(s_h,a_h) \in \cC^{\lambda, 1}_h \cup \cC^{\lambda, 2}_h$:
    \begin{align*}
        \sum_{k=k_{start}+1}^K \EE_{\pi_{\tE}^k}[\Clip{\min\{H, 4\bonus^k_{\tE,h}(s_h,a_h)\}}{\Big|\frac{\Delta_{\min}}{4eH}\vee \frac{\DeltaE(s_h,a_h)}{4e}}\mathbb{I}[\cEBk\cap\cEOk\cap\cECk]] = 0.
    \end{align*}
    \paragraph{Type 2: $(s_h,a_h) \in \cC^*_h$, i.e. $d^*_{\tE}(s_h, a_h)=d^*_{\tE}(s_h) > 0$}
    Because of the sub-linear regret in Eq.~\eqref{eq:RL_regret_loose}, we may expect that $\NE{,h}^k(s_h,a_h)\approx \sum_{k=1}^K d^{\piE{}^k}_{\tE}(s_h,a_h) \sim O(K d^*_{\tE}(s_h,a_h) )$ when $K$ is large enough. To see this, note that $\sum_{\tk=1}^k (\VE{,1}^*(s_1) - \VE{,1}^{\piE{}^\tk}(s_1)) - \EE[\sum_{\tk=1}^k \VE{,1}^*(s_1) - \VE{,1}^{\piE{}^\tk}(s_1)]$ is a martingale difference sequence with bounded difference. We define $\cEEk := \{\sum_{\tk=1}^k \VE{,1}^*(s_1) - \VE{,1}^{\piE{}^\tk}(s_1) \geq H\sqrt{2 \alpha k \log k} + \EE[\sum_{\tk=1}^k \VE{,1}^*(s_1) - \VE{,1}^{\piE{}^\tk}(s_1)]\}$, according to the Azuma-Hoeffding inequality, we have:
    \begin{align*}
        \Pr(\cEEk) \leq \exp(\frac{-2\alpha H^2 k\log k}{2 k H^2}) \leq \frac{1}{k^\alpha}.
    \end{align*}
    On the event of $\cEBk, \cEOk, \cECk$ and $\cEEk$, as a result of Thm. \ref{thm:algO_regret_vs_algP_density}, we have:
    \begin{align*}
        |\sum_{\tk=1}^k d^{\piE{}^\tk}(s_h,a_h) - kd^*_{\tE}(s_h,a_h)| \leq H\sqrt{2 \alpha k \log k} + \eqref{eq:RL_regret_loose}
    \end{align*}
    To make sure $\sum_{\tk=1}^k d^{\piE{}^\tk}(s_h,a_h) \geq \frac{k}{2}d^*_{\tE}(s_h,a_h)$, we expect:
    \begin{align*}
        \frac{k}{2}d^*_{\tE}(s_h,a_h) \geq H\sqrt{2 \alpha k \log k} + \eqref{eq:RL_regret_loose}
    \end{align*}
    which can be satisfied by:
    \begin{align*}
        k \geq \bar{\tau}_{s_h} := \bar{c}^*_{s_h} \frac{1}{(d^*_{\tE}(s_h))^2}\Poly(S,A,H,\lambda^{-1}, \Delta_{\min}^{-1})
    \end{align*}
    for some constant $\bar{c}^*_{s_h}$. As a result, for $k \geq \max\{\tau_{s_h,a_h}^K, \bar{\tau}_{s_h}\} + 1$, we should have:
    \begin{align*}
        & \Clip{\min\{H, 4\bonus^k_{\tE,h}(s_h,a_h)\}}{\Big|\frac{\Delta_{\min}}{4eH}\vee \frac{\DeltaE(s_h,a_h)}{4e}}\\
        \leq & \Clip{\min\{H, 8B_1\sqrt{\frac{\alpha \log(k B_2)}{\sum_{k'=1}^{k-1}d^{\pi_{\tE}^k}_{\tE}(s_h,a_h)}}\}}{\Big|\frac{\Delta_{\min}}{4eH}} \leq \Clip{8B_1\sqrt{\frac{2\alpha \log(k B_2)}{k d^*_{\tE}(s_h,a_h)}}}{\Big|\frac{\Delta_{\min}}{4eH}}. \tag{$\DeltaE(s_h,a_h) = 0$}
    \end{align*}
    Note that $8B_1\sqrt{\frac{2\alpha \log(k B_2)}{kd^*_{\tE}(s_h,a_h)}} = 8B_1\sqrt{\frac{2\alpha \log(k B_2)}{kd^*_{\tE}(s_h)}} \leq \frac{\Delta_{\min}}{4eH}$ can be satisfied when $k \geq \tau_{s_h}' := c'_{s_h}\frac{\alpha H^2 B_1^2}{\Delta_{\min}^2 d^*_{\tE}(s_h)} \log (\frac{\alpha B_1 B_2 H}{\Delta_{\min} d^*_{\tE}(s_h)})$ for some absolute constant $c'_{s_h}$. 
    Therefore, $\min\{H, 4\bonus^k_{\tE,h}(s_h,a_h)\} \leq \frac{\Delta_{\min}}{4eH}$ and the regret will not increase after $k \geq \tau^*_{s_h,a_h} := c_{s_h,a_h}^* \max\{k_{start}, \tau_{s_h,a_h}, \bar{\tau}_{s_h}, \tau'_{s_h}\} = \Poly(S,A,H,\lambda^{-1}, \Delta_{\min}^{-1},(d^*_{\tE}(s_h))^{-1})$, for some absolute constant $c^*_{s_h,a_h}$, which implies that,
    \begin{align*}
        & \sum_{k=k_{start}+1}^K \EE_{\pi_{\tE}^k}[\Clip{\min\{H, 4\bonus^k_{\tE,h}(s_h,a_h)\}}{\Big|\frac{\Delta_{\min}}{4eH}\vee \frac{\DeltaE(s_h,a_h)}{4e}}\mathbb{I}[\cEBk\cap\cEOk\cap\cECk]]\\
        =
        & \sum_{k=k_{start}}^{\tau^*_{s_h,a_h}} \EE_{\pi_{\tE}^k}[\Clip{\min\{H, 4\bonus^k_{\tE,h}(s_h,a_h)\}}{\Big|\frac{\Delta_{\min}}{4eH}\vee \frac{\DeltaE(s_h,a_h)}{4e}}\mathbb{I}[\cEBk\cap\cEOk\cap\cECk]]\\
        = & O\left(\frac{HB_1}{\Delta_{\min}} \log (SAH B_2 \min\{K, \frac{1}{\lambda \Delta_{\min} d^*_{\tE}(s_h)}\})\right).
    \end{align*}
    where in last step, we use the fact that for $\Delta_{\tE}(s_h,a_h) = 0$:
    \begin{align*}
        O\Big(H \log (2SAHK) +  B_1 \log (B_2K) \cdot (\frac{H}{\Delta_{\min}}\wedge \frac{1}{\DeltaE(s_h,a_h)})\Big) = O\Big(\frac{HB_1}{\Delta_{\min}} \log (SAHB_2 K)\Big).
    \end{align*}
    As a summary, recall $k_{start}=\max\{k_{ost}, k_{occup}\}$, we have:
    \begin{align*}
        \Regret_{K}(\ME) = & O\Big(H\cdot \max\{\alpha\frac{B_1^2H^2S}{\lambda^2\Delta_{\min}^2}\log(\alpha HSAB_1B_2), \frac{(C_1+\alpha C_2)SH}{\lambda\Delta_{\min}^2}\log\frac{C_1C_2SAH}{\Delta_{\min}\lambda}\} \\
        &\quad \quad + \sum_{h=1}^H \sum_{(s_h,a_h) \in \cC^*_h} \frac{HB_1}{\Delta_{\min}} \log (SAH B_2 (K \wedge \frac{1}{\lambda \Delta_{\min} d^*_{\tE}(s_h)})) \\
        & \quad \quad + \sum_{h=1}^H \sum_{(s_h,a_h)\in \cS_h\times\cA_h \setminus \cC^{\lambda}_h} H \log (2SAHK) +  B_1 (\frac{H}{\Delta_{\min}}\wedge \frac{1}{\DeltaE(s_h,a_h)})\log (B_2K) \Big).
    \end{align*}
    By considering $B_1=O(SH)$ and $B_2=O(SA)$ in Example~\ref{example:choice_of_B}, and omitting all the constant terms independent w.r.t. $K$, we can rewrite the above upper bound to:
    \begin{align*}
        \Regret_{K}(\ME) = & O\Big(H\cdot \max\{\alpha\frac{S^3H^4}{\lambda^2\Delta_{\min}^2}\log(\alpha SAH), \frac{(C_1+\alpha C_2)SH}{\lambda\Delta_{\min}^2}\log\frac{C_1C_2SAH}{\Delta_{\min}\lambda}\} \\
        &\quad \quad + \sum_{h=1}^H \sum_{(s_h,a_h) \in \cC^*_h} \frac{SH^2}{\Delta_{\min}} \log (SAH (K \wedge \frac{1}{\lambda \Delta_{\min} d^*_{\tE}(s_h)})) \\
        & \quad \quad + SH \sum_{h=1}^H \sum_{(s_h,a_h)\in \cS_h\times\cA_h \setminus \cC^{\lambda}_h} (\frac{H}{\Delta_{\min}}\wedge \frac{1}{\DeltaE(s_h,a_h)})\log (SAHK) \Big)\\
        =& O\Big(
        SH \sum_{h=1}^H \sum_{(s_h,a_h)\in \cS_h\times\cA_h \setminus \cC^{\lambda}_h} (\frac{H}{\Delta_{\min}}\wedge \frac{1}{\DeltaE(s_h,a_h)})\log (SAHK)\Big).
    \end{align*}
\end{proof}
\section{Proofs for Tiered MAB with Multiple Source/Low-Tier Tasks}\label{appx:bandit_multi_task_version}
\LemAbsSimTask*
\begin{proof}
    In the following, we denote $\cW^* := \{\task \in [\Task]| i^*_{\tO,\task} = i^*_{\tE},~\muO{}(i^*_{\tE,\task^*}) \leq \muE{}(i^*_{\tE}) + \frac{\tilde{\Delta}_{\min}}{4}\}$.  $\tcW^* := \{\task \in [\Task]| i^*_{\tO,\task} = i^*_{\tE}\}$. 
    In another word, $\cW^*$ includes all transferable tasks, while $\tcW^*$ includes all the tasks which share the optimal action $i^*_{\tE}$ regardless of whether the value function are close enough or not.

    Consider the event $\cE := \{\exists k' \in [\frac{k}{2}, k],~s.t.~\cW^*\cap\cI^{k'}=\emptyset\} \cup \{\exists k' \in [\frac{k}{2}, k],~\exists \task \in [\Task],~s.t.~\upiO{,\task}^{k'} \neq i^*_{\tE}\}$, note that on its complement: $\cE^\complement := \{\forall k' \in [\frac{k}{2}, k],~\cW^*\cap\cI^{k'}\neq\emptyset\} \cap \{\forall k' \in [\frac{k}{2}, k],~\forall \task \in [\Task],~\upiO{,\task}^{k'} = i^*_{\tO,\task}\}$, if $\piE{}^k \neq i^*_{\tE}$ still happens, we must have:
        $
        \{\forall k' \in [\frac{k}{2}, k],~\exists \task \not\in \tcW^* ,~\task \in \cI^{k'},~\task^{k'}=\task\}.
        $
    That's because, if $\cE^\complement$ holds, and $\cI^{k'} \subset \tcW^* $ for some $k'$, no matter which task in $\tcW^*$ is chosen as $\task^{k'}$, for all $\tilde{k} \in [k',k]$, no matter whether $w^{\tilde{k}}$ changes or not, the action we transfer is always $i^*_{\tE}$ (i.e. $\piE{}^{\tk} = i^*_{\tE}$), because of the action inheritance startegy. Therefore,
    \begin{align*}
        \Pr(\piE{}^k \neq i^*_{\tE}) \leq& \Pr(\cE) + \Pr(\cE^\complement\cap \{\piE{}^k \neq i^*_{\tE}\})\\
        \leq& \Pr(\cE) + \Pr(\cE^\complement\cap \{\forall k' \in [\frac{k}{2}, k],~\exists \task \not\in \tcW^* ,~\task \in \cI^{k'},~\task^{k'}=\task\})\\
        \leq & \sum_{k'=\frac{k}{2}}^k \Pr(\task^* \not\in \cI^{k'}) + \sum_{k'=\frac{k}{2}}^k \sum_{\task=1}^\Task  \Pr(\upiO{,\task}^k \neq i^*_{\tO,\task}) \tag{$\Pr(\cW^*\cap\cI^{k'} = \emptyset) \leq \Pr(\task^* \not\in \cI^{k'})$}\\
        & +\sum_{k'=\frac{k}{2}}^k  \sum_{\task=1}^\Task  \Pr(\cE^\complement\cap \{\forall k' \in [\frac{k}{2}, k],~\exists \task \not\in \tcW^* ,~\task \in \cI^{k'},~\task^{k'}=\task\}).\numberthis\label{eq:t_neq_ttrust}
    \end{align*}
    For the first and second term, by considering $f(k)=1+16A^2\Task(k+1)^2$, with a similar discussion as Lem. \ref{lem:valid_under_est} and Lem. \ref{lem:behavior_of_UCB}, we have, for arbitrary $k \geq k_{\max}^{[\Task]}:=\max_{\task\in[\Task]}\{k^\task_{\max}\}$, where $k^\task_{\max} := c(A + \frac{\alpha A}{\DeltaO^2(i)}\log(1+\frac{\alpha A \Task}{\Delta_{\min}}))$ for some constant $c$ is an analogue of $k_{\max}$ defined in Lem. \ref{lem:behavior_of_UCB} specified on task $t$:
\begin{align*}
    \sum_{k'=\frac{k}{2}}^k \Pr(\task^* \not\in \cI^{k'}) + \sum_{\task=1}^\Task \sum_{k'=\frac{k}{2}}^k \Pr(\upiO{,\task}^k \neq i^*_{\tO,\task}) \leq \frac{2^{2\alpha}}{T\cdot k^{2\alpha - 1}} + \frac{2\cdot 2^{2\alpha}A\Task}{T\cdot k^{2\alpha-1}}+\frac{2\cdot 2^{2\alpha-1} A^2T}{T\cdot k^{2\alpha-2}} \leq \frac{3\cdot 2^{2\alpha-1} A^2}{k^{2\alpha-2}}.\numberthis\label{eq:sub_step_1}
\end{align*}
Therefore, we mainly focus on the second term. 
We denote $k'' := c''\frac{\alpha A }{\Delta_{\min}^2}\log\frac{\alpha A \Task}{\Delta_{\min}}$ for some constant $c''$, such that for all $\tk \geq k''$, we always have $\tk \geq 2A\cdot \frac{512\alpha \log f(\tk)}{\Delta_{\min}^2}$.
Therefore, for arbitrary $\task \not\in \tcW^*$, and arbitrary $\tk \geq \max\{k_{\max}^{[\Task]}, k''\}$ we have:
\begin{align*}
    &\Pr(\{\task \in \cI^\tk\} \cap \{\NE{}^\tk(i^*_{\tO,\task}) \geq \frac{512\alpha\log f(\tk)}{\Delta_{\min}^2}\} ) \\
    \leq & \Pr(\{\task \in \cI^\tk\} \cap \{\NE{}^\tk(i^*_{\tO,\task}) \geq \frac{512\alpha\log f(\tk)}{\Delta_{\min}^2}\} \cap \{\NO{,\task}^\tk(i^*_{\tO,\task}) > \frac{\tk}{2}\} ) + \Pr(\{\NO{,\task}^\tk(i^*_{\tO,\task}) \leq \frac{\tk}{2}\}) \\
    \leq & \frac{2A}{T\cdot\tk^{2\alpha-1}} + \Pr(\{\umuO{}^{t,\tk}(i^*_{\tO,\task}) \leq \omuE{}^{t,\tk}(i^*_{\tO,\task}) + \epsilon\}\cap\{\NO{,\task}^\tk(i^*_{\tO,\task}) > \frac{\tk}{2}\}\cap\{\NE{}^\tk(i^*_{\tO,\task}) \geq \frac{512\alpha\log f(\tk)}{\Delta_{\min}^2}\})
\end{align*}
For the second part, it equals:
\begin{align*}
    &\Pr(\{\hmuO{}^k(i_{\tO}^*) - \muO{}(i_{\tO}^*) - \sqrt{\frac{2\alpha\log f(k)}{\NO{,\task}^k(i_{\tO}^*)}} \leq \hmuE{}^k(i_{\tO}^*) - \muE{}(i_{\tO}^*) + (\muE{}(i_{\tO}^*) - \muO{}(i_{\tO}^*)) + \sqrt{\frac{2\alpha\log f(k)}{\NE{}^k(i_{\tO}^*)}} + \epsilon\}\\
    &\quad \cap\{\NO{,\task}^k(i^*_{\tO,\task}) > \frac{k}{2}\}\cap\{\NE{}^k(i^*_{\tO,\task}) \geq \frac{512\alpha\log f(k)}{\Delta_{\min}^2}\})\\
    \leq & \Pr(\{\hmuO{}^k(i_{\tO}^*) - \muO{}(i_{\tO}^*) - \sqrt{\frac{2\alpha\log f(k)}{\NO{,\task}^k(i_{\tO}^*)}} \leq \hmuE{}^k(i_{\tO}^*) - \muE{}(i_{\tO}^*) - \frac{\Delta_{\min}}{4} + \sqrt{\frac{2\alpha\log f(k)}{\NE{}^k(i_{\tO}^*)}} \}\\
    &\quad \cap\{\NO{,\task}^k(i^*_{\tO,\task}) > \frac{k}{2}\}\cap\{\NE{}^k(i^*_{\tO,\task}) \geq \frac{512\alpha\log f(k)}{\Delta_{\min}^2}\})\\
    \leq & \Pr(\hmuO{}^k(i_{\tO}^*) - \muO{}(i_{\tO}^*) \leq - \sqrt{\frac{2\alpha\log f(k)}{\NO{,\task}^k(i_{\tO}^*)}}) + \Pr( \sqrt{\frac{2\alpha\log f(k)}{\NE{}^k(i_{\tO}^*)}} \leq \hmuE{}^k(i_{\tO}^*) - \muE{}(i_{\tO}^*)) \tag{$\sqrt{\frac{2\alpha\log f(k)}{\NO{,\task}^k(i_{\tO}^*)}}, \sqrt{\frac{2\alpha\log f(k)}{\NE{}^k(i_{\tO}^*)}} \leq \frac{\Delta_{\min}}{16}$}\\
    \leq & \frac{1}{T\cdot k^{2\alpha}}.
\end{align*}
Therefore, we can conclude that 
\begin{align*}
    \forall k \geq \max\{k_{\max}^{[\Task]}, k''\},\quad \Pr(\{\task \in \cI^k\} \cap \{\NE{}^k(i^*_{\tO,\task}) \geq \frac{512\alpha\log f(k)}{\Delta_{\min}^2}\}) \leq \frac{4}{T\cdot k^{2\alpha}} + \frac{2A}{T\cdot k^{2\alpha-1}} \leq \frac{3A}{T\cdot k^{2\alpha-1}}.
\end{align*}
Next, we are ready to upper bound the second term in Eq.~\eqref{eq:t_neq_ttrust}.
The key observation is that, as long as the event $\{\forall k' \in [\frac{k}{2}, k],~\exists \task \not\in \tcW^* ,~\task \in \cI^{k'},~\task^{k'}=\task\}$ happens, no matter what the sequence $\{\task^{k'}\}_{k'=k/2}^k$ is, since we only have $A-1$ sub-optimal arms, and $i^*_{\tO,\task^{k'}} \neq i^*_{\tE}$ for all $k'\in[k/2,k]$, there must be an arm which has been taken for at least $\frac{k}{2(A-1)}$ times from step $k/2$ to $k$, therefore,
\begin{align*}
    &\Pr(\forall k' \in [\frac{k}{2}, k],~\exists \task \not\in \tcW^* ,~\task \in \cI^{k'},~\task^{k'}=\task) \\
    \leq & \Pr(\exists k'\in[\frac{k}{2},k],~\exists \task\not\in \tcW^*,~s.t.~\NE{}^{k'}(i^*_{\tO,\task}) = \NE{}^{k/2}(i^*_{\tO,\task}) + \frac{k}{2(A-1)} - 1,~\task\in\cI^{k'})\\
    \leq & \sum_{k'=\frac{k}{2}}^k\sum_{\task \not\in \tcW^*}\Pr(\{\task\in\cI^{k'}\}\cap\{\NE{}^{k'}(i^*_{\tO,\task}) \geq \frac{k}{2(A-1)} - 1,~\task\in\cI^{k'}\})\\
    \leq & \sum_{k'=\frac{k}{2}}^k\sum_{\task \not\in \tcW^*} \Pr(\{\task \in \cI^{k'}\} \cap \{\NE{}^{k'}(i^*_{\tO,\task}) \geq \frac{512\alpha\log f(k)}{\Delta_{\min}^2}\})\\
    \leq & \frac{3\cdot 2^{2\alpha-1}A}{k^{2\alpha-2}}.\numberthis\label{eq:sub_step_2}
\end{align*}
According to the definition of $k_{\max}^{[\Task]}$ and $k''$, there must exists a constant $c^*$ such that $\max\{k'', k_{\max}^{[\Task]}\} \leq c^*\frac{\alpha A }{\Delta_{\min}^2}\log\frac{\alpha A \Task}{\Delta_{\min}}$. By choosing such $c^*$, and combining Eq.~\eqref{eq:sub_step_1} and Eq.~\eqref{eq:sub_step_2}, we have:
\begin{align*}
    \Pr(\piE{}^k \neq i^*_{\tE})  \leq \frac{3\cdot 2^{2\alpha-1} A^2}{k^{2\alpha-2}} + \frac{3\cdot 2^{2\alpha-1}A}{k^{2\alpha-2}} \leq \frac{16A}{(k/2)^{2\alpha-2}} = O(\frac{A}{k^{2\alpha-2}}).
\end{align*}
\end{proof}

\begin{lemma}[Extension of Lem. \ref{lem:upper_bound_NEK}]\label{lem:upper_bound_NEK_MT}
    For arbitrary $K \geq A + 1$ and arbitrary $1\leq k_0 \leq K$, we have:
    \begin{align*}
        \NE{}^{K}(i) \leq & k_0 + \sum_{k=k_0 + 1}^{K} \mathbb{I}[\{\task^k \neq \Null\}\cap \{\umuOtk{}^k(\underline{\pi}_{\tO,\task^k}^k) \leq \omuE{}^k(\underline{\pi}_{\tO,\task^k}^k) + \epsilon\} \\
        &\quad\quad\quad\quad\quad \cap \{\NO{,\task^k}^k(\underline{\pi}_{\tO,\task^k}^k) > k/\ratio\}\cap\{i = \upiO{,\task^k}^k\}\cap\{\pi_{\tE}^k = i\}]\\
        & +\sum_{k=k_0 + 1}^{K} \mathbb{I}[0 \geq \hmuE{}^k(i^*) + \sqrt{\frac{2\alphaE \log f(k)}{\NE{}^k(i^*)}} - \muE(i^*)]\\
        & +\sum_{k=k_0 + 1}^{K} \mathbb{I}[\tilde{\mu}_{\tE}^k(i) + \sqrt{\frac{2\alphaE \log f(K)}{k}} - \muE(i) - \DeltaE(i)\geq 0]\numberthis\label{eq:decomposition_of_N_MT}
    \end{align*}
\end{lemma}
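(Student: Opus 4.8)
The plan is to mirror the proof of the single-task version, Lem.~\ref{lem:upper_bound_NEK}, adapting only the branching structure of Alg.~\ref{alg:Bandit_Setting} to the task-selection mechanism of Alg.~\ref{alg:Bandit_Setting_MT}. As in that proof, I would start from the deterministic (pathwise) identity $\NE{}^K(i)=\sum_{k=1}^K \mathbb{I}[\piE{}^k=i]\le k_0+\sum_{k=k_0+1}^K \mathbb{I}[\piE{}^k=i]$, and for each $k>k_0$ with $\piE{}^k=i$ (here $i\neq i^*_{\tE}$), split on whether iteration $k$ takes the exploit branch ($\task^k\neq\Null$) or the explore branch ($\cI^k=\emptyset$, so $\task^k=\Null$ and $\piE{}^k=\overline{\pi}_{\tE}^k$). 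Note that $\task^k=\Null$ holds if and only if $\cI^k=\emptyset$, so these two branches are exhaustive and disjoint.

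First I would dispatch the exploit branch, which is the only genuinely new piece. The observation is that in all three sub-cases of lines \ref{line:bandit_MT_TS_start}--\ref{line:bandit_MT_TS_end} — reusing $\task^{k-1}$, the action-inheritance rule of line \ref{line:bandit_MT_TS_mid}, or a uniform draw from $\cI^k$ — the selected index satisfies $\task^k\in\cI^k$ and the deployed arm equals $\piE{}^k=\underline{\pi}_{\tO,\task^k}^k$. Consequently $\piE{}^k=i$ forces $i=\underline{\pi}_{\tO,\task^k}^k$, while $\task^k\in\cI^k$ unpacks, by the definition of $\cI^k$ in line \ref{line:bandit_MT_TS_start}, into exactly the two checking conditions $\umuOtk{}^k(\underline{\pi}_{\tO,\task^k}^k)\le \omuE{}^k(\underline{\pi}_{\tO,\task^k}^k)+\epsilon$ and $\NO{,\task^k}^k(\underline{\pi}_{\tO,\task^k}^k)>k/\ratio$. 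These are precisely the events collected in the first sum on the right-hand side, so the exploit branch contributes there.

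Next I would handle the explore branch exactly as in the single-task case: $\task^k=\Null$ gives $\piE{}^k=\overline{\pi}_{\tE}^k$, so $\piE{}^k=i$ means $\omuE{}^k(i)\ge \omuE{}^k(i^*_{\tE})$, which — after using $\DeltaE(i)=\muE(i^*_{\tE})-\muE(i)$ to re-center — is the same event $e_2$ appearing in the proof of Lem.~\ref{lem:upper_bound_NEK}. From here the manipulation is verbatim: apply $\mathbb{I}[a\ge b]\le \mathbb{I}[a\ge 0]+\mathbb{I}[0\ge b]$ to peel off the optimal-arm underestimation event $\{0\ge \hmuE{}^k(i^*_{\tE})+\sqrt{2\alphaE\log f(k)/\NE{}^k(i^*_{\tE})}-\muE(i^*_{\tE})\}$ (the second sum), and on the residual event $\{\hmuE{}^k(i)+\sqrt{2\alphaE\log f(k)/\NE{}^k(i)}-\muE(i)-\DeltaE(i)\ge 0\}\cap\{\piE{}^k=i\}$ replace $\NE{}^k(i)$ by the running index $k$ — valid because the indicator fires only when $\piE{}^k=i$, which increments $\NE{}^k(i)$ — to obtain the third sum, with $\tilde{\mu}_{\tE}^k(i)$ the average of $k$ i.i.d.\ draws from the arm-$i$ reward law.

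I expect the main (and essentially only) obstacle to be the bookkeeping of the first step: verifying that every exploit sub-case certifies $\task^k\in\cI^k$ together with $\piE{}^k=\underline{\pi}_{\tO,\task^k}^k$, since the action-inheritance sub-case picks its task $w$ by matching the previously deployed arm rather than by re-running the checking test at step $k$ — one must still read off from line \ref{line:bandit_MT_TS_mid} that the chosen $w$ lies in $\cI^k$. Once this is confirmed, all remaining steps are identical to Lem.~\ref{lem:upper_bound_NEK}; in particular the statement is purely a deterministic decomposition of $\NE{}^K(i)$ and requires no fresh concentration input.
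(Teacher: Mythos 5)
Your proposal is correct and follows exactly the route the paper intends: the paper omits this proof on the grounds that it is the same as Lem.~\ref{lem:upper_bound_NEK} once $\task^k$ is specified, and your argument supplies precisely that bookkeeping — every exploit sub-branch of Alg.~\ref{alg:Bandit_Setting_MT} sets $\task^k\in\cI^k$ and deploys $\piE{}^k=\underline{\pi}_{\tO,\task^k}^k$, so membership in $\cI^k$ certifies the two checking conditions in the first sum, while the explore branch is handled verbatim as in the single-task case.
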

We omit the proof here since it is almost the same as Lem. \ref{lem:upper_bound_NEK}, except that we need to specify $\task^k$.

\ThmMTMAB*
\begin{proof}
    One key observation is that when analyzing $\NE{}^K$, we only need to analyze the second term in Eq.~\eqref{eq:decomposition_of_N_MT}, since the others can be directly bounded.

    We first study the case when there exists $\task^*$ such that $\ME$ and $M_{\tO,\task^*}$ are $\epsilon$-close. As a result of Lem. \ref{lem:absorbing_to_similar_task}, we only need to upper bound the regret before step $k^*$.
    Similar to the proof of Thm. \ref{thm:regret_bound_in_bandit}, we separate two cases for each $k$ and $i$. 
    \paragraph{Case 1-(a) $\task^k \neq \Null$, $0<\DeltaE(i) \leq 4\Delta_{\tO,\task^k}(i)$} 
    In the following, we will define $\tk_{t,i} := 3 + c\cdot \frac{3 \alpha }{\DeltaOt^2(i)}\log(T+\frac{\alpha A T}{\Delta_{\min}})$ (i.e. similar to the role of $\tk_i$ in Lem. \ref{lem:behavior_of_UCB} with specified task index $t$). Since $\DeltaE(i) \leq 4\Delta_{\tO,\task^k}(i)$, we define $\tk_{[\Task],i} := 3 + c\cdot \frac{48 \alpha }{\DeltaE^2(i)}\log(T+\frac{\alpha A T}{\Delta_{\min}})$. In this case, obviously $\tk_{[\Task],i} \geq \tk_{t,i}$. As a result of Lem. \ref{lem:behavior_of_UCB},
    \begin{align*}
        &\mathbb{I}[ \{\task^k \neq \Null\}\cap\{\umuOtk{}^k(\underline{\pi}_{\tO,\task^k}^k) \leq \omuE{}^k(\underline{\pi}_{\tO,\task^k}^k) + \epsilon\} \cap \{\NO{,\task^k}^k(\underline{\pi}_{\tO,\task^k}^k) > k/\ratio\}\cap\{i = \upiO{,\task^k}^k\}\cap\{\pi_{\tE}^k = i\}]\\
        \leq& \mathbb{I}[k \leq \tk_{[\Task],i}] + \mathbb{I}[\{\task^k \neq \Null\}\cap\{k > \tk_{[\Task],i}\}\cap\{\NO{,\task^k}^k(\underline{\pi}_{\tO,\task^k}^k) > k/\ratio\}].\numberthis\label{eq:0_DE_4DO_MT}
    \end{align*}
    by taking expectation, we have:
    \begin{align*}
        &\Pr(\mathbb{I}[ \{\task^k \neq \Null\}\cap\{\umuOtk{}^k(\underline{\pi}_{\tO,\task^k}^k) \leq \omuE{}^k(\underline{\pi}_{\tO,\task^k}^k) + \epsilon\} \\
        &\quad\quad\quad\quad\quad \cap \{\NO{,\task^k}^k(\underline{\pi}_{\tO,\task^k}^k) > k/\ratio\}\cap\{i = \upiO{,\task^k}^k\}\cap\{\pi_{\tE}^k = i\}])\\
        \leq & \mathbb{I}[k \leq \tk_{[\Task],i}] + \frac{2}{k^{2\alpha-1}}.
    \end{align*}
    \paragraph{Case 1-(b) $\task^k \neq \Null$, $\DeltaE(i) > 4\Delta_{\tO,\task^k}(i) \geq 0$}
    We consider $\bar{k}_{i} := \frac{c_{\tE,i}\alpha}{\DeltaE^2(i)}\log \frac{A\Task}{\Delta_{\min}}$, where $c_{\tE,i}$ is the minimal constant, such that when $k \geq \frac{c_{\tE,i}\alpha}{\DeltaE^2(i)}\log \frac{\alpha A\Task}{\Delta_{\min}}$, we always have $k \geq  \frac{256\alpha\log f(k)}{\DeltaE^2(i)}$.
    With a similar discussion as Eq.~\eqref{eq:upper_bound_case1b}, we have:
    \begin{align*}
        &\mathbb{I}[\{\task^k \neq \Null\}\cap\{\umuOtk{}^k(\underline{\pi}_{\tO,\task^k}^k) \leq \omuE{}^k(\underline{\pi}_{\tO,\task^k}^k) + \epsilon\} \cap \{\NO{,\task^k}^k(\underline{\pi}_{\tO,\task^k}^k) > k/\ratio\}\cap\{i = \upiO{,\task^k}^k\}\cap\{\pi_{\tE}^k = i\}]\\
        \leq&  \mathbb{I}[k < \bar{k}_{i}] + \mathbb{I}[\{\task^k \neq \Null\}\cap\{\{\umuO{,\task^k}^k(\underline{\pi}_{\tO,\task^k}^k) \leq \omuE{}^k(\underline{\pi}_{\tO,\task^k}^k) + \epsilon\}\}\\
        &\quad\quad\quad\quad\quad\quad\quad \cap\{\NO{,\task^k}^k(i) \geq \frac{128\alphaO\log f(k)}{\DeltaE^2(i)}\}\cap\{i = \upiO{}^k\}\cap\{\pi_{\tE}^k = i\}] \\
        \leq & \mathbb{I}[k < \bar{k}_{i}] +  \mathbb{I}[\{\task^k \neq \Null\}\cap\{\hmuO{,\task^k}^k(i) - \mu_{\tO,\task^k}(i) \leq -\sqrt{\frac{2\alphaO \log f(k)}{\NO{,\task^k}^k(i)}}\}] \\
        & + \mathbb{I}[\{\hmuE{}^k(i) - \muE(i) + \sqrt{\frac{2\alphaE \log f(k)}{\NE{}^k(i)}} \geq \frac{\DeltaE(i)}{4}\}\cap\{\pi_{\tE}^k = i\}].\numberthis\label{eq:4DO_DE_MT}
    \end{align*}
    We denote $k_i^* := \max\{\tk_{[\Task],i}, \bar{k}_i\}$. Combining Eq.~\eqref{eq:0_DE_4DO_MT} and \eqref{eq:4DO_DE_MT} with Eq.~\eqref{eq:decomposition_of_N_MT}, for arbitrary $K$, we have (recall that $\tilde{\mu}_{\tE}^k(i)$ is defined to be the average of $k$ random samples from reward distribution of arm $i$ in $\ME$):
    \begin{align*}
        &\EE[\NE{}^K(i)] \\
        \leq& \sum_{k=1}^{K} \mathbb{I}[k \leq \tk_{[\Task],i}] +  \sum_{k=1}^{K} \frac{2}{k^{2\alpha-1}} + \sum_{k=1}^{K} \mathbb{I}[k < \bar{k}_i] \\
        & +  \sum_{k=1}^{K} \Pr(\{\task^k \neq \Null\}\cap\{\hmuO{,\task^k}^k(i) - \mu_{\tO,\task^k}(i) \leq -\sqrt{\frac{2\alphaO \log f(k)}{\NO{,\task^k}^k(i)}}\}) \\
        & + \EE[\sum_{k=1}^{K} \mathbb{I}[\{\hmuE{}^k(i) - \muE(i) + \sqrt{\frac{2\alphaE \log f(k)}{\NE{}^k(i)}} \geq \frac{\DeltaE(i)}{4}\}\cap\{\pi_{\tE}^k = i\}]]\\
        &+\sum_{k=1}^{K} \Pr(0 \geq \hmuE{}^k(i^*) + \sqrt{\frac{2\alphaE \log f(k)}{\NE{}^k(i^*)}} - \muE(i^*))\\
        &+\sum_{k=1}^{K} \EE[\mathbb{I}[\tilde{\mu}_{\tE}^k(i) + \sqrt{\frac{2\alphaE \log f(K)}{k}} - \muE(i) - \DeltaE(i)\geq 0]]\\
        \leq & \tk_{[\Task],i} + \sum_{k=1}^K \frac{2}{k^{2\alpha-1}} + \bar{k}_i + 2\cdot \sum_{k=1}^{K} \frac{1}{8Ak^{2\alpha}} + 2\sum_{k=1}^{K} \EE[\mathbb{I}[\tilde{\mu}_{\tE}^k(i) + \sqrt{\frac{2\alphaE \log f(K)}{k}} - \muE(i) - \frac{\DeltaE(i)}{4}\geq 0]]\\
        = & O(\frac{1}{\DeltaE^2(i)}\log \Task K) \numberthis\label{eq:MT_NEK_upper_bound}.
    \end{align*}
    As a result, combining with Lem. \ref{lem:absorbing_to_similar_task}, we can conclude that:
    \begin{align*}
        \Regret_{K}(\ME) =&  \sum_{i \neq i^*} \DeltaE(i)\EE[\NE{}^{k^*}(i)] + \sum_{k = k^* + 1}^K \Pr(\pi_{\tE}^k \neq i^*_{\tE}) = O(\sum_{i \neq i^*} \frac{1}{\DeltaE(i)}\log\frac{A\Task}{\Delta_{\min}})
    \end{align*}
    Next, we study the case when there is no task among $\{\MOt\}_{\task=1}^\Task$ close enough to $\ME$. Similarly, we also decompose into three cases.
    \paragraph{Case 2-(a) $\task^k \neq \Null$, $i \neq i^*_{\tO,\task^k}, i\neq i^*_{\tE}$ and $0<\DeltaE(i) \leq 4\Delta_{\tO,\task^k}(i)$}
    The result is the same as Eq.~\eqref{eq:0_DE_4DO_MT}.

    \paragraph{Case 2-(b) $\task^k \neq \Null$, $i \neq i^*_{\tO,\task^k}, i\neq i^*_{\tE}$ and $\DeltaE(i) > 4\Delta_{\tO,\task^k}(i) > 0$}
    The result is the same as Eq.~\eqref{eq:4DO_DE_MT}.

    \paragraph{Case 2-(c) $\task^k \neq \Null$, $i = i^*_{\tO,\task^k}$} If $i^*_{\tO,\task^k} = i^*_{\tE}$, $\ME$ suffers no regret when choosing $i^*_{\tO,\task^k}$.
    Therefore, in the following, we only study the case when $i^*_{\tO,\task^k} \neq i^*_{\tE}$.
    For arm $i$ (note that $i = i^*_{\tO,\task^k}$ in this case), we define $k_{\max}' = \frac{c_{\max}' \alpha}{\DeltaE(i)^2}\log\frac{\alpha A T}{\Delta_{\min}}$, where $c_{\max}'$ is the minimal constant, such that for all $k \geq \frac{c_{\max}' \alpha}{\DeltaE(i)^2}\log\frac{\alpha A T}{\Delta_{\min}}$, we always have $k \geq \frac{1024\alpha\log f(k)}{\DeltaE(i)^2}$.
    With a similar discussion as Eq.~\eqref{eq:case_2c}, for the following event, we have:
    \begin{align*}
        &\mathbb{I}[\{\task^k\neq\Null\}\cap\{\umuO{,\task^k}^k(\upiO{,\task^k}^k) \leq \omuE{}^k(\upiO{,\task^k}^k) + \epsilon\} \cap \{\NO{,\task^k}^k(\upiO{,\task^k}^k) > k/\ratio\}\cap \{i = \upiO{,\task^k}^k\}  \cap\{\pi_{\tE}^k = i\}]\\
        \leq & \mathbb{I}[k \leq k_{\max}'] + \mathbb{I}[\{\task^k \neq \Null\} \cap \hmuO{,\task^k}^k(i) - \mu_{\tO,\task^k}(i) \leq -\sqrt{\frac{2\alpha\log f(k)}{\NO{,\task^k}^k(i_{\tO}^*)}}] \\
        & + \mathbb{I}[\{\hmuE{}^k(i) - \muE{}(i) + \sqrt{\frac{2\alpha\log f(k)}{\NE{}^k(i)}}\geq \frac{\DeltaE(i)}{8}\}\cap\{\pi_{\tE}^k = i\}].
    \end{align*}
    Combining all the cases above, similar to Case 1, for arbitrary $i\neq i^*_{\tE}$, we can conclude:
    \begin{align*}
        &\EE[\NE{}^K(i)] \\
        \leq& \sum_{k=1}^{K} \mathbb{I}[k \leq \tk_{[\Task],i}] +  \sum_{k=1}^{K} \frac{2}{k^{2\alpha-1}} + \sum_{k=1}^{K} \mathbb{I}[k < \bar{k}_i] +  \sum_{k=1}^{K} \Pr(\{\task^k \neq \Null\}\cap\{\hmuO{}^k(i) - \muO(i) \leq -\sqrt{\frac{2\alphaO \log f(k)}{\NO{,\task^k}^k(i)}}\}) \\
        & + \EE[\sum_{k=1}^{K} \mathbb{I}[\{\hmuE{}^k(i) - \muE(i) + \sqrt{\frac{2\alphaE \log f(k)}{\NE{}^k(i)}} \geq \frac{\DeltaE(i)}{4}\}\cap\{\pi_{\tE}^k = i\}]]\\
        &+\sum_{k=1}^{K} \Pr(0 \geq \hmuE{}^k(i^*) + \sqrt{\frac{2\alphaE \log f(k)}{\NE{}^k(i^*)}} - \muE(i^*))\\
        &+\sum_{k=1}^{K} \EE[\mathbb{I}[\tilde{\mu}_{\tE}^k(i) + \sqrt{\frac{2\alphaE \log f(K)}{k}} - \muE(i) - \DeltaE(i)\geq 0]]\\
        & + \sum_{k=1}^K \mathbb{I}[k \leq k_{\max}'] +  \sum_{k=1}^{K} \Pr(\{\task^k \neq \Null\}\cap\{\hmuO{}^k(i) - \muO(i) \leq -\sqrt{\frac{2\alphaO \log f(k)}{\NO{,\task^k}^k(i)}}\}) \\
        & + \EE[\sum_{k=1}^{K} \mathbb{I}[\{\hmuE{}^k(i) - \muE(i) + \sqrt{\frac{2\alphaE \log f(k)}{\NE{}^k(i)}} \geq \frac{\DeltaE(i)}{4}\}\cap\{\pi_{\tE}^k = i\}]]\\
        \leq & \tk_{[\Task],i} + \sum_{k=1}^K \frac{2}{k^{2\alpha-1}} + \bar{k}_i + 2\cdot \sum_{k=1}^{K} \frac{1}{8Ak^{2\alpha}} + 3\sum_{k=1}^{K} \EE[\mathbb{I}[\tilde{\mu}_{\tE}^k(i) + \sqrt{\frac{2\alphaE \log f(K)}{k}} - \muE(i) - \frac{\DeltaE(i)}{8}\geq 0]]\\
        = & O(\frac{1}{\DeltaE^2(i)}\log \Task K).
    \end{align*}
    which implies:
    \begin{align*}
        \Regret_{K}(\ME) =&  \sum_{i \neq i^*} \DeltaE(i)\EE[\NE{}^{K}(i)] = O(\sum_{i \neq i^*} \frac{1}{\DeltaE(i)}\log \Task K).
    \end{align*}
\end{proof}

\section{Proofs for Tiered RL with Multiple Source/Low-Tier Tasks}\label{appx:RL_multi_task_version}
We first introduce the notion of transferable states in this multi-source tasks setting. 
Comparing with Def.~\ref{def:transferable_states}, we have an additional constraint on $d^*_\tE(s_h) > 0$. This is because, to distinguish which source task to transfer, we require $s_h$ in $\ME$ to be visited frequently enough for accurate estimation, and it is only possible for those $s_h$ on optimal trajectories given that we expect $\Regret_K(\ME)$ is at least near-optimal.
\begin{definition}[$\lambda$-Transferable States in TRL-MST]\label{def:transferable_states_MT}
    Given any $\lambda > 0$, we say $s_h$ is $\lambda$-transferable if $d^*_{\tE}(s_h) > 0$, and $\exists w\in[W]$, such that $d^*_{\tO,w}(s_h) \geq \lambda$ and $\ME$ is $\frac{\tilde{\Delta}_{\min}}{4(H+1)}$-close to $M_{\tO,w}$ on state $s_h$.
    We use $\cZ_{h}^{\lambda,[W]}$ to denote the set of $\lambda$-transferable state at step $h \in [H]$.
\end{definition}

\begin{definition}[Benefitable States in TRL-MST]\label{def:benefitable_states_MT}
    Similar to the single task case, we define $\cC_h^{\lambda,[W],1} := \{(s_h,a_h) | s_h \in \cZ^{\lambda,[W]}, a_h \neq \pi^*_{\tE,h}(s_h)\}$, $\cC_h^{\lambda,[W],2} := \{(s_h,a_h) | \text{Block}(\{\cC^{\lambda,[W],1}_\ph\}_{\ph=1}^{h-1},s_h)=\text{True},s_h\not\in \cC_h^{\lambda,[W],1},a_h\in\cA_h\}$ and $\cC^*_h := \{(s_h,a_h)|d^*_\tE(s_h,a_h) > 0\}$, 
which represents the three categories of state-action pairs with constant regret.  We define $\cC_h^{\lambda,[W]} := \cC_h^{\lambda,[W],1} \cup \cC_h^{\lambda,[W],2} \cup \cC^*_h$, which captures the benefitable state-action pairs.  
\end{definition}
\begin{remark}[Constant Regret in Entire $M_\tE$]
For each individual task $M_{\tO,w}$, the additional constraint $d^*_{\tE}(s_h) > 0$ reduces the size of transferable states comparing with single task learning setting. However, if the tasks are diverse enough, we expect $\cC^{\lambda,[W]}_h$ to be much larger than $\cC^{\lambda}_h$ in Sec.~\ref{sec:single_task_RL} and we can achieve more benefits with only an additional cost of order $\log W$.
Besides, if $\forall h, s_h$ with $d^*_\tE(s_h) > 0$ we have $s_h \in \cZ^{\lambda,[W]}_h$, then, $\cC^{\lambda,[W]}_h = \cS_h\times\cA_h$ and we can achieve constant regret for the entire $\ME$.
\end{remark}

\subsection{Additional Algorithms, Conditions and Notations}\label{appx:RL_add_Alg_Cond_Nota_MT}
Our algorithm is provided in Alg.~\ref{alg:RL_Setting_MT}, which is extended from Alg.~\ref{alg:RL_Setting} and integrated with the ``Trust till Failure'' strategy introduced in bandit setting in Alg.~\ref{alg:Bandit_Setting_MT}.
\begin{algorithm}
    \textbf{Input}: 
    Ratio $\lambda \in (0,1)$; 
    Bonus term computation function \textbf{Bonus}; 
    Sequence of confidence level $(\delta_k)_{k\geq 1}$ with $\delta_k = 1/SAH\Task k^\alpha$;
    Model learning function \textbf{ModelLearning}. $\epsilon < \Delta_{\min}/4(H+1)$\\
    \textbf{Initialize}: $D^0_{\tOt} \gets \{\}$ for $\task\in[\Task]$, $D^0_{\tE} \gets \{\}$, set $\uVE{,h+1}^k, \uQE{,h+1}^k, \uVOt{,h+1}^k, \uQOt{,h+1}^k,\tVOt{,h+1}^k,\tQOt{,h+1}^k$ to be 0 for all $k=1,2,...$.\\
    \For{$k=1,2,...$}{
        \For{$t = 1,2,...T$}{
            $\pi_{\tOt}^k \gets \algO(D^{k-1}_{\tOt})$; collect data from $\MOt$ with $\pi_{\tOt}^k$; update $D_{\tOt}^{k} \gets D_{\tOt}^{k-1} \cup \{\tau^k_t\}.$ \\
            $\{\hmPOt{,h}^k\}_{h=1}^H \gets \textbf{ModelLearning}(D_{\tOt}^{k-1}),\quad \{\bonus_{\tO,h}^k\}_{h=1}^H \gets \textbf{Bonus}(D_{\tOt}^{k-1},~\delta_k)$.\\
            \For{$h=H,H-1...,1$}{
                $\uQOt{,h}^k(\cdot,\cdot) \gets \max\{0, \rO{,h}(\cdot,\cdot) + \hmPOt{,h}^k\uVOt{,h+1}^k(\cdot,\cdot) - \bonus_{\tO,h}^k(\cdot,\cdot)\}.$ \\
                $\uVOt{,h}^k(\cdot) = \max_a \uQOt{,h}^k(\cdot,a),\quad \upiO{,\task,h}^k(\cdot) \gets \arg\max_a \uQOt{,h}^k(\cdot,a).$\\
            }
        }
        $\{\hmPE{,h}^k\}_{h=1}^H \gets \textbf{ModelLearning}(D_{\tE}^{k-1}),\quad \{\bonus_{\tE,h}^k\}_{h=1}^H \gets \textbf{Bonus}(D_{\tE}^{k-1},~\delta_k)$.\\
        \For{$h=H,H-1...,1$}{
            $\uQE{,h}^{\pi_{\tE}^k}(\cdot,\cdot) \gets \max\{0, \rE{,h}^k(\cdot,\cdot) + \hmPE{,h}^{\pi_{\tE}^k}\uVE{,h+1}^k(\cdot,\cdot) - \bonus_{\tE,h}^k(\cdot,\cdot)\},~\quad \uVE{,h}^{\pi_{\tE}^k}(\cdot) = \uQE{,h}^{\pi_{\tE}^k}(\cdot,\pi_{\tE}^k)$\\
            $\tQ_{\tE,h}^{k}(\cdot,\cdot) \gets \min\{H, r_{\tE}(\cdot,\cdot) + \hmPE{,h}^k \tV^k_{\tE, h+1}(\cdot,\cdot) + \bonus_{\tE,h}^k(\cdot,\cdot)\}.$\\
            \For{$s_h\in\cS_h$}{
                $\cI^k_{s_h} \gets \{\task\in[\Task]|\{\uVOt{,h}^k(s_h) \leq \tQ_{\tE,h}^{k}(s_h,\upiO{,\task,h}^k) + \epsilon\} \cap \{\max_a \NO{,\task,h}^k(s_h,a) \geq \frac{\lambda}{3} k\}\}.$ \\
                \If{$\cI^k(s_h) \neq \emptyset$}{
                    \lIf{$\task^{k-1}_{s_h} \neq \Null$ and $\task^{k-1}\in \cI^k_{s_h}$}{
                        $\task^k_{s_h} \gets \task^{k-1}_{s_h}$
                    }
                    \lElseIf{$\task^{k-1}_{s_h} \neq \Null$ and $\exists w\in \cI^k(s_h)$ s.t. $\piE{}^{k-1}(s_h) = \arg\max_a \NO{,w,h}(s_h,a)$}{
                        $\task^k_{s_h} \gets w$
                    }
                    \lElse{
                        $\task^k_{s_h} \gets \text{Unif}(\cI^k_{s_h})$.
                    }
                    $\piE{}^k(s_h) \gets \arg\max_a \NO{,\task^k_{s_h},h}^k(s_h,a)$.
                }
                \lElse{
                    $\task^k_{s_h} \gets \Null$, $\pi_{\tE}^k(s_h) \gets \arg\max_a \tQ_{\tE,h}^{k}(s_h,a)$
                }
                $\tV^k_{\tE,h}(s_h) \gets \min\{H, \tQ_{\tE,h}^{k}(s_h,\pi_{\tE}^k)+\frac{1}{H}(\tQ_{\tE,h}^{k}(s_h,\pi_{\tE}^k) - \uQE{,h}^{\pi_{\tE}^k}(s_h,\pi_{\tE}^k))\}$
            }
        }
        Deploy $\pi_{\tE}$ to interact with $\ME$ and receive $\tauE^k$; update $D_{\tE}^{k} \gets D_{\tE}^{k-1} \cup \{\tauE^k\}$\\
    }
    \caption{Robust Tiered RL with Multiple Low-Tier Tasks}\label{alg:RL_Setting_MT}
\end{algorithm}
We consider the same \textbf{ModelLearning} and \textbf{Bonus} algorithm in Sec.~\ref{appx:RL_add_Alg_Cond_Nota}, but a different condition for $\algO$ listed below:
\begin{condition}[Condition on $\algO$ in MT-TRL]\label{cond:requirement_on_algO_MT}
    $\algO$ is an algorithm which returns deterministic policies at each iteration for each task $M_{tO,w} \in \cM_\tO$, and there exists $C_1,C_2$ only depending on $S,A,H$ and $\Delta_{\min}$ but independent of $k$, such that for arbitrary $k\geq 2$, we have $\Pr(\cEOTk) \geq 1-\frac{1}{k^\alpha}$ for $\cEOTk$ defined below:
    $$
    \cEOTk := \bigcap_{\task\in\Task}\{\sum_{\tk=1}^k \VO{,w,1}^*(s_1)-\VO{,w,1}^{\pi_{\tO}^\tk}(s_1) \leq C_1 + \alpha C_2 \log \Task k\}.
    $$
\end{condition}
Next, we introduce some notations. As analogues of $\cEBk$ and $\cECk$ in single low-tier task setting, we consider the following events:
\begin{align*}
    \cEBTk := \bigcap_{\substack{(\cdot)\in\{\tE,\tO_1,...,\tO_T\},\\ h\in[H], \\ s_h\in\cS_h,a_h\in\cA_h}}&\Big\{\{H\cdot \|\hmP_{(\cdot),h}^k(s_h,a_h) - \mP_{(\cdot),h}(s_h,a_h)\|_1 < \bonus_{(\cdot), h}^k(s_h,a_h) \leq B_1\sqrt{\frac{\log (B_2/\delta_k)}{N_{(\cdot),h}^k(s_h,a_h)}}\}\Big\};\\
    \cECTk :=   \bigcap_{\substack{h\in[H],\\s_h\in\cS_h,\\ a_h\in\cA_h}}\Big\{ & \{\frac{1}{2}\sum_{\tk=1}^k d^{\pi_{\tE}^\tk}(s_h,a_h) - \alpha \log (2SAH\Task k) \leq \NE{,h}^k(s_h,a_h) \\
    & \qquad\qquad \qquad\qquad \qquad\qquad \leq e\sum_{\tk=1}^k d^{\pi_{\tE}^\tk}(s_h,a_h) + \alpha \log (2SAH\Task k)\}\\
    & \cap \Big(\bigcap_{\task\in[\Task]}\{\frac{1}{2}\sum_{\tk=1}^k d^{\pi_{\tO,\task}^\tk}(s_h,a_h) - \alpha \log (2SAH\Task k) \leq \NO{,\task,h}^k(s_h,a_h) \\
    & \qquad\qquad \qquad\qquad \qquad\qquad \leq e\sum_{\tk=1}^k d^{\pi_{\tO,\task}^\tk}(s_h,a_h) + \alpha \log (2SAH\Task k)\}\Big)\Big\}.
\end{align*}
Under the choice of $\delta_k = 1/SAHWk^\alpha$, we have $\Pr(\cEBTk) \geq 1 - \frac{1}{k^\alpha}$. Besides, as a result of Lem. \ref{lem:concentration}, we have $\Pr(\cECTk) \geq 1 - \frac{1}{k^\alpha}$.

\subsection{Analysis}\label{appx:analysis_RL_MT}
In this sub-section, we introduce the analysis for Alg.~\ref{alg:RL_Setting_MT}.
We first provide several lemma and theorem for preparation, which are extended from single task setting. We will omit the detailed proofs if they are almost the same expect there are multiple source tasks and the additional $\Task$ in the log factors.
\begin{lemma}[Lem.~\ref{lem:convergence_speed_of_PVI} in TRL-MST Setting]\label{lem:convergence_speed_of_PVI_MT}
    There exists an absolute constant $c_\Xi^{[W]}$, such that for arbitrary fixed $\threshold > 0$, and for arbitrary 
    $$k \geq c_\Xi^{[W]} \max\{\frac{\alpha B_1^2H^2S}{\lambda^2\threshold^2}\log(\frac{\alpha HSAB_1B_2\Task}{\lambda \threshold}), \frac{(C_1+\alpha C_2)SH}{\Delta_{\min}\lambda\threshold}\log\frac{C_1C_2SAH\Task}{\Delta_{\min}\lambda\threshold}\}$$ 
    on the event $\cECTk,\cEBTk$ and $\cEOTk$, $\forall h \in [H], s_h \in \cS_h$ with $N^k_{\tO,h}(s_h) > \frac{\lambda}{3}$, we have
    \begin{align*}
        V^*_{\tO,h}(s_h) - \uVO{,h}^k(s_h) \leq \threshold.
    \end{align*}
\end{lemma}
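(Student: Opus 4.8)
The plan is to follow the single-task argument of Lemma~\ref{lem:convergence_speed_of_PVI} essentially verbatim, applied to each source task $M_{\tO,w}$ separately, and to track carefully where the extra $\Task$ factors enter the confidence and concentration bounds. Throughout I interpret the unindexed quantities $V^*_{\tO,h}$, $\uVO{,h}^k$, $\NO{,h}^k$ as referring to a fixed but arbitrary task $w\in[\Task]$ (writing $d^*_{\tO,w}$, $\pi^*_{\tO,w}$ below), since conditioning on the multi-task events $\cEOTk$, $\cEBTk$, $\cECTk$ renders the remaining analysis deterministic and per-task. These three events already carry the union bound over all $W$ tasks inside their definitions, so no further union bound is needed in this lemma; the $\log\Task$ dependence arises purely from the $\Task$-dependence of $\delta_k = 1/SAH\Task k^\alpha$ and of the deviation terms $\alpha\log(2SAH\Task k)$.

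First I would invoke the TRL-MST analogue of Lemma~\ref{lem:lower_bound_of_dOstar}, which holds by the identical argument with $\cEOTk,\cECTk$ in place of $\cEOk,\cECk$, to convert the visitation hypothesis $\NO{,h}^k(s_h) > \frac{\lambda}{3}$ into the occupancy lower bound $d^*_{\tO,w}(s_h)\ge \frac{\lambda}{9} > 0$, placing $s_h$ on the optimal trajectory of $M_{\tO,w}$. Next, on $\cEBTk$ the per-task version of Lemma~\ref{lem:underestimation} gives the pessimism gap bound
$$
V^*_{\tO,h}(s_h) - \uVO{,h}^k(s_h) \le \frac{18}{\lambda}\,\EE_{\pi^*_{\tO,w},M_{\tO,w}}\Big[\sum_{\ph=h}^H \min\{\bonus_{\tO,\ph}^k(s_\ph,\pi^*_{\tO,w}),H\}\,\Big|\,s_1\Big],
$$
exactly as in the single-task case.

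I would then split the trajectory states via $\cY^\threshold_{\ge h}:=\bigcup_{\ph\ge h}\{s_\ph : d^*_{\tO,w}(s_\ph) > \frac{\lambda\threshold}{36SH}\}$. For $s_\ph\notin\cY^\threshold_{\ge h}$ the crude bound $\min\{\bonus,H\}\le H$ together with the occupancy threshold contributes at most $\threshold/2$, identically to before. For $s_\ph\in\cY^\threshold_{\ge h}$ I would use $\cECTk$ to obtain $\NO{,w,\ph}^k(s_\ph,a_\ph)\ge \frac{k}{2}d^*_{\tO,w}(s_\ph,a_\ph) - \frac{1}{\Delta_{\min}}(C_1+\alpha C_2\log\Task k) - \alpha\log(2SAH\Task k)$, so that once $k$ exceeds a per-state burn-in $k_{s_\ph} = O\big(\frac{C_1+\alpha C_2}{\Delta_{\min}d^*_{\tO,w}(s_\ph)}\log\frac{C_1C_2SAH\Task}{\Delta_{\min}d^*_{\tO,w}(s_\ph)}\big)$ we have $\bonus_{\tO,\ph}^k\le 2B_1\sqrt{\log(B_2/\delta_k)/(k\,d^*_{\tO,w}(s_\ph))}$. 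Summing over the at most $SH$ states in $\cY^\threshold_{\ge h}$ and applying Cauchy--Schwarz as in the single-task proof bounds this part by $\frac{36}{\lambda}B_1H\sqrt{S\log(B_2/\delta_k)/k}$, which is $\le\threshold/2$ once $k\gtrsim \frac{\alpha B_1^2H^2 S}{\lambda^2\threshold^2}\log\frac{\alpha HSAB_1B_2\Task}{\lambda\threshold}$. Combining the two halves and choosing $c_\Xi^{[W]}$ to dominate both burn-in requirements yields the claim.

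The main obstacle, and it is one of bookkeeping rather than of ideas, is verifying that the $\Task$ factor surfaces exactly inside the logarithms as stated and never as a polynomial prefactor. The only places $\Task$ enters are $\log(B_2/\delta_k)=\log(B_2 SAH\Task k^\alpha)$ in the bonus and the deviation terms $\alpha\log(2SAH\Task k)$ and $C_1+\alpha C_2\log\Task k$ inherited from $\cECTk$ and $\cEOTk$; since each appears only inside a logarithm, the burn-in times pick up a $\log\Task$ while the leading $\Poly(S,A,H,\lambda^{-1},\Delta_{\min}^{-1})$ dependence is unchanged, which is precisely the ``additional $\log W$ factor'' promised in the introduction.
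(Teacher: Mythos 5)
Your proposal is correct and follows essentially the same route as the paper, which itself omits the proof of this lemma on the grounds that it is identical to the single-task Lemma~\ref{lem:convergence_speed_of_PVI} once the events $\cECTk,\cEBTk,\cEOTk$ replace their single-task counterparts and $\Task$ enters only through $\delta_k$ and the $\log(2SAH\Task k)$ deviation terms. Your per-task reading of the unindexed quantities, the observation that the union bound over $w\in[\Task]$ is already absorbed into the definitions of the multi-task events, and the tracking of where $\log\Task$ surfaces all match the intended argument.
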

\begin{lemma}[Lem.~\ref{lem:lower_bound_of_dOstar} in TRL-MST Setting]\label{lem:lower_bound_of_dOstar_MT}
    There exists a constant $c_{occup}^{[W]}$ which is independent of $\lambda,S,A,H$ and gap $\Delta$, s.t., for all $k \geq k_{occup}^{[W]} := c_{occup}^{[W]}\frac{C_1+\alpha C_2}{\lambda \Delta_{\min}}\log(\frac{\alpha C_1C_2 SAH\Task}{\lambda \Delta_{\min}})$, on the events of $\cEOTk$ and $\cECTk$, forall $w\in[W]$, $\NO{,w,h}^k(s_h,a_h) \geq \frac{\lambda}{3} k$ implies that $d^*_{\tO,w}(s_h,a_h) \geq \frac{\lambda}{9}$, and conversely, if $d^*_{\tO,w}(s_h,a_h) \geq \lambda$, we must have $\NO{,w,h}^k(s_h) \geq \NO{,w,h}^k(s_h,\pi^*_{\tO,w}) \geq \frac{\lambda}{3} k$.
\end{lemma}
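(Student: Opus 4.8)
The plan is to transcribe the proof of Lem.~\ref{lem:lower_bound_of_dOstar} task-by-task, observing that the only change in passing to the multi-source setting is an extra $\log\Task$ factor inside every logarithmic slack term, which can be absorbed into the constant $c_{occup}^{[W]}$. Concretely, I would fix an arbitrary $w\in[\Task]$ and establish the two implications separately, then note that because $\cEOTk$ and $\cECTk$ are defined as intersections over all $w\in[\Task]$ (and all $s_h,a_h$), a single threshold $k_{occup}^{[W]}$ works uniformly over the tasks.

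First I would invoke Cond.~\ref{cond:requirement_on_algO_MT}: on $\cEOTk$ the cumulative regret of $\algO$ on each task satisfies $\sum_{\tk=1}^k V^*_{\tO,w,1}(s_1)-V^{\pi_{\tO,w}^\tk}_{\tO,w,1}(s_1)\le C_1+\alpha C_2\log(\Task k)$. Since Cor.~\ref{corl:unique_optimal_policy} applies verbatim to the deterministic policy sequence $\{\pi_{\tO,w}^\tk\}$ in the MDP $M_{\tO,w}$, this yields $|\sum_{\tk=1}^k d^{\pi_{\tO,w}^\tk}_{\tO,w}(s_h,a_h)-k\,d^*_{\tO,w}(s_h,a_h)|\le \tfrac{1}{\Delta_{\min}}(C_1+\alpha C_2\log(\Task k))$. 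Combining this with the upper concentration bound of $\cECTk$, namely $\NO{,w,h}^k(s_h,a_h)\le e\sum_{\tk=1}^k d^{\pi_{\tO,w}^\tk}_{\tO,w}(s_h,a_h)+\alpha\log(2SAH\Task k)$, the hypothesis $\NO{,w,h}^k(s_h,a_h)\ge\tfrac{\lambda}{3}k$ would give
\[
\tfrac{\lambda}{3}k \le e\,k\,d^*_{\tO,w}(s_h,a_h) + \alpha\log(2SAH\Task k) + \tfrac{1}{\Delta_{\min}}\big(C_1+\alpha C_2\log(\Task k)\big).
\]
Exactly as in the single-task proof, I would then choose $c_{occup}^{[W]}$ so that for all $k\ge k_{occup}^{[W]}$ the two logarithmic terms are bounded by $\tfrac{3-e}{9}\lambda k$; rearranging gives $d^*_{\tO,w}(s_h,a_h)\ge\tfrac{\lambda}{9}$.

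For the converse I would use instead the lower concentration bound of $\cECTk$ together with the same occupancy-regret estimate: if $d^*_{\tO,w}(s_h,a_h)\ge\lambda$ then $\NO{,w,h}^k(s_h,a_h)\ge\tfrac12 k\,d^*_{\tO,w}(s_h,a_h)-\alpha\log(2SAH\Task k)-\tfrac{1}{\Delta_{\min}}(C_1+\alpha C_2\log(\Task k))\ge\tfrac{\lambda}{2}k-\tfrac{3-e}{9}\lambda k\ge\tfrac{\lambda}{3}k$ for the same choice of $c_{occup}^{[W]}$, whence $\NO{,w,h}^k(s_h)\ge\NO{,w,h}^k(s_h,\pi^*_{\tO,w})\ge\tfrac{\lambda}{3}k$.

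The only point requiring care — the mild main obstacle — is the uniformity of the threshold across the $\Task$ tasks: one must check that a single $c_{occup}^{[W]}$ simultaneously absorbs the $\log\Task$ factors for every $w$. This is immediate because the slack terms do not depend on $w$ (they enter only through the common concentration slack $\alpha\log(2SAH\Task k)$ and the uniform per-task regret bound $C_1+\alpha C_2\log(\Task k)$ of Cond.~\ref{cond:requirement_on_algO_MT}), so taking $k_{occup}^{[W]}=c_{occup}^{[W]}\frac{C_1+\alpha C_2}{\lambda\Delta_{\min}}\log(\frac{\alpha C_1C_2 SAH\Task}{\lambda\Delta_{\min}})$ with $c_{occup}^{[W]}$ large enough handles all tasks at once, and the claim holds on $\cEOTk\cap\cECTk$ for every $w\in[\Task]$.
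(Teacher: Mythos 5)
Your proposal is correct and matches the paper's (omitted) argument exactly: the paper explicitly states that the proofs in this section are identical to their single-task counterparts up to the extra $\Task$ inside the logarithms, and your task-by-task transcription of the proof of Lem.~\ref{lem:lower_bound_of_dOstar} — using Cond.~\ref{cond:requirement_on_algO_MT} and the per-task concentration in $\cECTk$, with uniformity over $w$ following from the events being intersections over all tasks — is precisely that argument.
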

\begin{restatable}{theorem}{ThmOverEstMT}[Thm.~\ref{thm:overestimation} in TRL-MST Setting]\label{thm:overestimation_MT}
    There exists a constant $c_{overest}^{[W]}$, such that, for arbitrary $k \geq k_{ost}^{[W]}$ with
    $$
    k_{ost}^{[W]} := c_{overest}^{[W]}\cdot \max\{\alpha\frac{B_1^2H^2S}{\lambda^2\Delta_{\min}^2}\log(HSA\Task B_1B_2), \frac{(C_1+\alpha C_2)SH}{\lambda\Delta_{\min}^2}\log\frac{C_1C_2SAH\Task }{\Delta_{\min}\lambda}\},
    $$
    on the event of $\cECTk, \cEBTk,\cEOTk$, we have:
    \begin{align*}
        \QE{,h}^*(s_h,a_h) \leq& \tQE{,h}^k(s_h,a_h),\quad \VE{,h}^*(s_h) \leq \tVE{,h}^k(s_h),\quad \forall h \in [H],~s_h\in\cS_h,~a_h\in\cA_h.
    \end{align*}
    and
    \begin{align}
        \VE{,1}^*(s_1) - \VE{,1}^{\pi_{\tE}^k}(s_1) \leq 2e\EE_{\pi_{\tE}^k}[\sum_{h=1}^H \Clip{\min\{H, 4\bonus^k_{\tE,h}(s_h,a_h)\}}{\Big|\frac{\Delta_{\min}}{4eH}\vee \frac{\DeltaE(s_h,a_h)}{4e}}].\label{eq:regret_bound_MT}
    \end{align}
\end{restatable}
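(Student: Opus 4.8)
The plan is to replicate the single-task argument for Theorem~\ref{thm:overestimation} (given in Appx.~\ref{appx:proof_for_RL}) almost verbatim, working throughout on the high-probability event $\cEOTk \cap \cEBTk \cap \cECTk$ and invoking the per-task analogues Lem.~\ref{lem:convergence_speed_of_PVI_MT} and Lem.~\ref{lem:lower_bound_of_dOstar_MT} already stated above. The threshold $k_{ost}^{[W]}$ would be fixed exactly as in the single-task case, by substituting $\threshold = \frac{\Delta_{\min}}{4(H+1)}$ into the convergence bound of Lem.~\ref{lem:convergence_speed_of_PVI_MT}; this is precisely what produces the stated expression together with the extra $\log \Task$ factor, which enters through the union bound over $W$ source tasks and the confidence level $\delta_k = 1/SAH\Task k^\alpha$.

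For Part~1 (overestimation) I would argue by backward induction on $h$, with $h=H+1$ trivial. Assuming $\tVE{,h+1}^k \geq \VE{,h+1}^*$, the bound $\tQE{,h}^k \geq \QE{,h}^*$ follows from $\cEBTk$ controlling $(\hmPE{,h}^k - \mPE{,h})\tVE{,h+1}^k$ by $\bonus_{\tE,h}^k$, exactly as before and independent of the task-selection logic. The split of $\tVE{,h}^k(s_h)$ into three cases then mirrors the single-task proof: when $\cI^k_{s_h} = \emptyset$ (explore-by-itself branch) we get $\tVE{,h}^k(s_h) = \max_a \tQE{,h}^k(s_h,a) \geq \QE{,h}^*(s_h,\pi^*_{\tE}) = \VE{,h}^*(s_h)$; when the selected task $w := \task^k_{s_h}$ gives $\piE{}^k(s_h) = \pi^*_{\tE,h}(s_h)$, the nonnegative $1/H$-revision again yields overestimation. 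The delicate case is $\cI^k_{s_h} \neq \emptyset$ with $\piE{}^k(s_h) \neq \pi^*_{\tE,h}(s_h)$: here I would invoke Lem.~\ref{lem:convergence_speed_of_PVI_MT} for the specific selected $w$ (whose visitation satisfies $\max_a \NO{,w,h}^k(s_h,a) \geq \tfrac{\lambda}{3}k$ by membership $w \in \cI^k_{s_h}$) to force $\upiO{,w,h}^k(s_h) = \pi^*_{\tO,w,h}(s_h)$ and, via Lem.~\ref{lem:lower_bound_of_dOstar_MT}, $\piE{}^k(s_h) = \arg\max_a \NO{,w,h}^k(s_h,a) = \pi^*_{\tO,w,h}(s_h)$. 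Then the OVD property of $M_{\tO,w}$ over $\ME$ (Assump.~\ref{assump:opt_value_dominance}, assumed for every source task) together with the checking inequality $\uVOt{,h}^k(s_h) \leq \tQE{,h}^k(s_h,\upiO{,w,h}^k) + \epsilon$ gives $\VE{,h}^*(s_h) - \tQE{,h}^k(s_h,\pi^*_{\tO,w,h}) \leq \tfrac{1}{H+1}\big(\VE{,h}^*(s_h) - \QE{,h}^*(s_h,\pi^*_{\tO,w,h})\big)$, using $\DeltaE(s_h,\pi^*_{\tO,w,h}) \geq \Delta_{\min}$. Adding $\frac{1}{H}$ of the underestimated gap in the update of $\tVE{,h}^k(s_h)$ then restores overestimation.

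For Part~2 (the regret bound Eq.~\eqref{eq:regret_bound_MT}), the surplus machinery of Def.~\ref{def:deficit}, together with the analogues of Lem.~\ref{lem:bounds_of_surplus} and Lem.~\ref{lem:surplus_vs_overestimation_gap}, transfers without change, since the value-update rule for $\tVE{,h}^k$ in Alg.~\ref{alg:RL_Setting_MT} is identical to that in Alg.~\ref{alg:RL_Setting} and these statements depend only on $\ME$-side quantities and on the already-established overestimation. I would reprove $\ddot{V}_{\tE,1}^{\pi_{\tE}^k}(s_1) - \VE{,1}^{\pi_{\tE}^k}(s_1) \geq \tfrac{1}{2}(\VE{,1}^*(s_1) - \VE{,1}^{\pi_{\tE}^k}(s_1))$ by expanding along the first deviation step, and then bound $\ddot{V}_{\tE,1}^{\pi_{\tE}^k}$ by the clipped surplus to obtain Eq.~\eqref{eq:regret_bound_MT}.

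The main obstacle is verifying that Part~1 Case~3 closes \emph{uniformly} over whichever task the trust-till-failure rule happens to select: I must ensure that membership $w \in \cI^k_{s_h}$ alone, past the burn-in $k_{ost}^{[W]}$, simultaneously supplies the PVI-convergence of $M_{\tO,w}$ and its OVD, so that the single-task Case-3 inequality chain applies to that $w$ regardless of the selection history $\{\task^{k'}_{s_h}\}$. Because Lem.~\ref{lem:convergence_speed_of_PVI_MT} and Lem.~\ref{lem:lower_bound_of_dOstar_MT} are stated per task and OVD is assumed for every $M_{\tO,w}$, this reduces to a per-task copy of the single-task argument, and the only genuinely new cost is the $\log \Task$ inflation of $k_{ost}^{[W]}$ and of the per-iteration failure probabilities, which is absorbed through the choice $\delta_k = 1/SAH\Task k^\alpha$.
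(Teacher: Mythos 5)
Your proposal matches the paper's intended argument exactly: the paper gives no separate proof of Thm.~\ref{thm:overestimation_MT} and explicitly states that it follows by repeating the single-task proof of Thm.~\ref{thm:overestimation} with the multi-task lemmas (Lem.~\ref{lem:convergence_speed_of_PVI_MT}, Lem.~\ref{lem:lower_bound_of_dOstar_MT}), the per-task OVD assumption, and the $\log\Task$-inflated confidence level $\delta_k = 1/SAH\Task k^\alpha$. Your identification of the only delicate point --- that Case~3 must close for whichever task the selection rule picks, which holds because membership in $\cI^k_{s_h}$ supplies the visitation threshold needed for that task's PVI convergence while OVD holds for every source task --- is precisely the right resolution, so the proposal is correct and essentially identical to the paper's approach.
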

\noindent Next, we first establish a $O(\log K)$-regret bound regardless of similarity between $\cM_\tO$ and $\ME$, and use it to prove Lem.~\ref{lem:absorb_to_sim_task_RL}, which will be further used to establish the tighter bound in Thm.~\ref{thm:regret_bound_RL_MT}.
\begin{theorem}[Regret bound for general cases]\label{thm:regret_bound_RL_MT_general}
    \begin{align}
        \Regret_{K}(\ME) = O\left(k_{start}^{[W]} \cdot H +  SAH^2\log(2SAHK) + B_1\log(B_2 K)\sum_{h=1}^H\sum_{s_h,a_h} (\frac{H}{\Delta_{\min}}\wedge \frac{1}{\DeltaE(s_h,a_h)}) \right).\label{eq:regret_bound_RL_MT_general}
    \end{align}
\end{theorem}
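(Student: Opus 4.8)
The plan is to follow the single–low-tier-task argument that produces the loose bound Eq.~\eqref{eq:RL_regret_loose}, replacing every ingredient by its TRL-MST counterpart, since the general-case bound never invokes the task-selection mechanism and hence only needs the overestimation guarantee. First I would set the burn-in time $k_{start}^{[W]} := \max\{k_{ost}^{[W]}, k_{occup}^{[W]}\}$ and split the cumulative regret as $\EE[\sum_{k=1}^K \VE{,1}^*(s_1)-\VE{,1}^{\piE{}^k}(s_1)] = \EE[\sum_{k\le k_{start}^{[W]}}(\cdots)] + \EE[\sum_{k>k_{start}^{[W]}}(\cdots)]$. Since $\VE{,1}^*(s_1)-\VE{,1}^{\piE{}^k}(s_1)\le H$ always holds, the first sum contributes the burn-in term $H\cdot k_{start}^{[W]}$, which matches the leading term of the claimed bound.

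For the tail $k>k_{start}^{[W]}$ I would condition on the good event $\cECTk \cap \cEBTk \cap \cEOTk$. On this event Thm.~\ref{thm:overestimation_MT} gives the overestimation property together with the clipped-bonus regret bound Eq.~\eqref{eq:regret_bound_MT}, so the per-episode regret is at most $2e\,\EE_{\piE{}^k}[\sum_{h=1}^H \Clip{\min\{H,4\bonus_{\tE,h}^k(s_h,a_h)\}}{\big|\frac{\Delta_{\min}}{4eH}\vee\frac{\DeltaE(s_h,a_h)}{4e}}]$. Off the good event I would bound the per-episode regret by $H$; because $\Pr(\cECTk^\complement \cup \cEBTk^\complement \cup \cEOTk^\complement)=O(k^{-\alpha})$ with $\alpha>2$, summing $H\cdot\Pr(\text{failure})$ over $k$ contributes only an $O(H)$ constant. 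It then remains to bound $\sum_{k>k_{start}^{[W]}}\EE_{\piE{}^k}[\Clip{\min\{H,4\bonus_{\tE,h}^k(s_h,a_h)\}}{\cdots}\mathbb{I}[\cECTk\cap\cEBTk\cap\cEOTk]]$ for each fixed $(s_h,a_h)$ and then sum over $\cS_h\times\cA_h$ and $h$.

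The technically delicate step — and the one I expect to require the most care — is converting this expected clipped-bonus sum into the per-pair rate $\frac{H}{\Delta_{\min}}\wedge\frac{1}{\DeltaE(s_h,a_h)}$. Using Cond.~\ref{cond:bonus_term} with $\delta_k = 1/(SAH\Task k^\alpha)$ gives $\bonus_{\tE,h}^k(s_h,a_h)=O(B_1\sqrt{\alpha\log(B_2 SAH\Task K)/\NE{,h}^k(s_h,a_h)})$, where the extra $\Task=W$ inside the logarithm is exactly what produces the $\log(B_2\Task K)$ factor of the stated bound. I would then invoke the concentration event $\cECTk$ to replace $\NE{,h}^k(s_h,a_h)$ by $\tfrac12\sum_{k'<k} d^{\piE{}^{k'}}(s_h,a_h)$ once $k$ exceeds a data-dependent threshold $\tau_{s_h,a_h}^K := \min\{k : \tfrac14\sum_{k'<k} d^{\piE{}^{k'}}(s_h,a_h)\ge\alpha\log(2SAH\Task K)\}$, absorbing the finitely many pre-threshold episodes into an $O(H\log(SAH\Task K))$ term. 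For the remaining episodes an integral comparison $\sum_k d^{\piE{}^k}(\cdot)\,\Clip{B_1\sqrt{\log(\cdot)/\sum_{k'<k}d^{\piE{}^{k'}}(\cdot)}}{\cdots}\lesssim \int \Clip{B_1\sqrt{\log(\cdot)/x}}{\cdots}\,dx$ evaluates to $O(B_1\log(B_2K)\,(\frac{H}{\Delta_{\min}}\wedge\frac{1}{\DeltaE(s_h,a_h)}))$ exactly as in the single-task derivation, with the clipping threshold $\frac{\Delta_{\min}}{4eH}\vee\frac{\DeltaE(s_h,a_h)}{4e}$ capping the integral and selecting the minimum of the two rates. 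Summing over all $(h,s_h,a_h)$ and collecting the burn-in, failure-probability, and low-count contributions yields Eq.~\eqref{eq:regret_bound_RL_MT_general}. The only genuinely new bookkeeping relative to the single-task proof is the $\log\Task$ overhead threading through $\delta_k$ and the union bounds defining $\cEBTk,\cECTk,\cEOTk$; no new structural idea is needed, since all of the multi-task difficulty has already been discharged into Thm.~\ref{thm:overestimation_MT}, Lem.~\ref{lem:convergence_speed_of_PVI_MT}, and Lem.~\ref{lem:lower_bound_of_dOstar_MT}.
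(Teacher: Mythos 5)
Your proposal is correct and follows essentially the same route as the paper's own proof: the same burn-in split at $k_{start}^{[W]} = \max\{k_{ost}^{[W]}, k_{occup}^{[W]}\}$, the same reduction to the clipped-bonus sum via Thm.~\ref{thm:overestimation_MT} on the event $\cEBTk\cap\cEOTk\cap\cECTk$, and the same threshold $\tau_{s_h,a_h}^K$ plus integral-comparison argument per state-action pair, with the only multi-task change being the $\log\Task$ threaded through $\delta_k$ and the union bounds. No gaps.
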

\begin{proof}
    Consider $k_{start}^{[W]} := \max\{k_{ost}^{[W]}, k_{occup}^{[W]}\}$.
    Similar to the proof of Thm. \ref{thm:regret_upper_bound}, we can conduct identical techniques and provide the following bounds for arbitrary $s_h,a_h$:
    \begin{align*}
        &\sum_{k=k_{start}^{[W]}+1}^K \EE_{\pi_{\tE}^k}[\Clip{\min\{H, 4\bonus^k_{\tE,h}(s_h,a_h)\}}{\Big|\frac{\Delta_{\min}}{4eH}\vee \frac{\DeltaE(s_h,a_h)}{4e}}]\\
        \leq & H\cdot \sum_{k=1}^{\tau_{s_h,a_h}^K} d_{\tE}^{\piE{}^k}(s_h,a_h) + \sum_{k=\tau_{s_h,a_h}^K+1}^K \EE_{\pi_{\tE}^k}[\Clip{\min\{H, 4B_1\sqrt{\frac{\alpha\log(K B_2)}{\NE{,h}^k(s_h,a_h)}}\}}{\Big|\frac{\Delta_{\min}}{4eH}\vee \frac{\DeltaE(s_h,a_h)}{4e}}]\\
        = & O\left(H \log (2SAH\Task K) +  B_1 (\frac{H}{\Delta_{\min}}\wedge \frac{1}{\DeltaE(s_h,a_h)})\log (B_2\Task K)\right).
    \end{align*}
    where $\tau_{s_h,a_h}^K := \min_k~s.t.~\forall k' \geq k,~\frac{1}{4}\sum_{k'=1}^{k-1}d^{\pi_{\tE}^k}_{\tE}(s_h,a_h) \geq \alpha \log (2SAH\Task K)$.
    Then, we can conclude:
    \begin{align*}
        &\EE[\sum_{k=1}^K \VE{,1}^*(s_1) - \VE{,1}^{\piE{}^k}(s_1)]\\
        \leq & \sum_{k=k_{start}^{[W]}+1}^K 2e\EE_{\pi_{\tE}^k}[\sum_{h=1}^H \Clip{\min\{H, 4\bonus^k_{\tE,h}(s_h,a_h)\}}{\Big|\frac{\Delta_{\min}}{4eH}\vee \frac{\DeltaE(s_h,a_h)}{4e}}\mathbb{I}[\cEBTk\cap\cEOTk\cap\cECTk]] \\
        & + \sum_{k=k_{start}^{[W]}+1}^K H\cdot \Pr(\cEBTk^\complement\cup\cEOTk^\complement\cup\cECTk^\complement) + H\cdot k_{start}^{[W]} \\
        = & O\left(k_{ost}^{[W]} \cdot H +  SAH^2\log(2SAH\Task K) + B_1\sum_{h=1}^H\sum_{s_h,a_h} (\frac{H}{\Delta_{\min}}\wedge \frac{1}{\DeltaE(s_h,a_h)}) \log(B_2 \Task K) \right).
    \end{align*}
\end{proof}
In the following, we try to show an extension of Lem.~\ref{lem:absorbing_to_similar_task} in RL setting.
To establish result, we require some techniques to overcome the difficulty raised by state transition. 
We first establish sub-linear regret bound for Alg.~\ref{alg:RL_Setting_MT} based on the multi-task version of Eq.~\eqref{eq:regret_bound}, and use it we can show that $\sum_{k'=k/2}^k d^*_{\tE}(s_h,a_h) > 0$ when $k$ is large enough, which imply that $\pi^*_{\tO,w^{\tilde{k}}}(s_h) = \pi^*_{\tE}(s_h)$ for some $\tilde{k} \in [\frac{k}{2},k]$ with high probability.
Under good events, by our task selection strategy, we can expect if $s_h \in \cZ^{\lambda,[W]}_h$ and $\pi^*_{\tO,w^{\tilde{k}}}(s_h) = \pi^*_{\tE}(s_h)$, from $\tilde{k}$ to $k$, either the trusted task does not change, or it will hand over to another one recommending the same action, which is exactly $\pi^*_{\tE}(s_h)$.
\begin{restatable}{lemma}{LemAbsSimTaskRL}[Absorb to Similar Task]\label{lem:absorb_to_sim_task_RL}
    Under Assump.~\ref{assump:unique_optimal_policy}, \ref{assump:opt_value_dominance}, \ref{assump:lower_bound_Delta_min}
    for all $s_h \in \cZ^{\lambda, [W]}_h$, 
    by running Alg.~\ref{alg:RL_Setting_MT} in Appx.~\ref{appx:RL_add_Alg_Cond_Nota_MT} with $\epsilon = \frac{\tilde{\Delta}_{\min}}{4(H+1)}$, $\alpha > 2$ and arbitrary $\lambda > 0$, there exists $\iota^*_{s_h} = \Poly(SAH,\lambda^{-1},\Delta_{\min}^{-1},1/d^*_{\tE}(s_h),\log W)$, such that, $\forall k \geq \iota^*_{s_h}$, $\Pr(\piE{}^k(s_h) \neq \piE{}^*(s_h)) = O(\frac{1}{k^{\alpha-1}})$.
\end{restatable}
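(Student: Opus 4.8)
The plan is to lift the bandit argument of Lem.~\ref{lem:absorbing_to_similar_task} to the RL setting, inserting an extra layer that certifies $s_h$ is visited often enough for the self-correction to fire. First I would invoke Thm.~\ref{thm:regret_bound_RL_MT_general} to obtain the $O(\log K)$ pseudo-regret bound Eq.~\eqref{eq:regret_bound_RL_MT_general}. Combining this with the $\ME$-analogue of Cor.~\ref{corl:unique_optimal_policy} and the martingale (Azuma--Hoeffding) event $\cEEk$ from the proof of Thm.~\ref{thm:regret_upper_bound_detailed}, I would show that on the good events $\cEOTk\cap\cEBTk\cap\cECTk\cap\cEEk$ the realized occupancy obeys $\sum_{\tk=1}^{k} d^{\piE{}^\tk}_\tE(s_h,\piE{}^*(s_h)) \geq \tfrac{1}{2}\,k\, d^*_\tE(s_h) - O(\log(SAH\Task k))$, using that $d^*_\tE(s_h)>0$ is guaranteed by Def.~\ref{def:transferable_states_MT}. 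Once $k$ exceeds a $\Poly(SAH,\lambda^{-1},\Delta_{\min}^{-1},1/d^*_\tE(s_h),\log W)$ threshold, this is positive, and by the lower-tail bound of Lem.~\ref{lem:concentration} the state $s_h$ is reached along its optimal action at some $\tilde k \in [\tfrac{k}{2},k]$ with probability $1-O(k^{-\alpha})$.

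Next I would set up the task-selection bookkeeping exactly as in the bandit proof. Define the transferable tasks $\cW^*_{s_h} := \{w\in[W]\mid d^*_{\tO,w}(s_h)\geq\lambda,\ \ME \text{ is } \tfrac{\tilde{\Delta}_{\min}}{4(H+1)}\text{-close to } M_{\tO,w} \text{ on } s_h\}$ and the larger set $\tcW^*_{s_h}:=\{w\in[W]\mid \pi^*_{\tO,w}(s_h)=\piE{}^*(s_h)\}$; since $s_h\in\cZ^{\lambda,[W]}_h$ we have $\cW^*_{s_h}\neq\emptyset$, and $\cW^*_{s_h}\subseteq\tcW^*_{s_h}$ because closeness forces a shared optimal action by Def.~\ref{def:close_state}. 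On the good events, Lem.~\ref{lem:convergence_speed_of_PVI_MT} and Lem.~\ref{lem:lower_bound_of_dOstar_MT} give $\upiO{,w,h}^k(s_h)=\pi^*_{\tO,w}(s_h)$ for every $w$ and keep at least one $w^*\in\cW^*_{s_h}$ inside $\cI^k_{s_h}$ (the RL analogue of Lem.~\ref{lem:valid_under_est}, combining underestimation from Lem.~\ref{lem:underestimation}, overestimation from Thm.~\ref{thm:overestimation_MT}, and Assump.~\ref{assump:opt_value_dominance}). The crux is then the inheritance invariant: once $s_h$ is visited at $\tilde k$ and the trusted task $\task^{\tilde k}_{s_h}$ lies in $\tcW^*_{s_h}$, the recommended action $\arg\max_a \NO{,\task^{\tilde k}_{s_h},h}^{\tilde k}(s_h,a)$ is $\piE{}^*(s_h)$; and whenever the trusted task is evicted from $\cI^{k'}_{s_h}$, the analogue of line~\ref{line:bandit_MT_TS_mid} in Alg.~\ref{alg:RL_Setting_MT} hands the trust to a task still recommending the previous action $\piE{}^*(s_h)$. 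Hence $\task^{k'}_{s_h}$ can only migrate within $\tcW^*_{s_h}$ for all $k'\in[\tilde k,k]$, so $\piE{}^k(s_h)=\piE{}^*(s_h)$.

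Finally I would union-bound the failure modes by requiring the good events to hold at \emph{every} $k'\in[\tfrac{k}{2},k]$. The sources are: (i) a PVI estimate $\upiO{,w,h}^{k'}$ disagreeing with $\pi^*_{\tO,w}(s_h)$, or some $w^*\in\cW^*_{s_h}$ leaving $\cI^{k'}_{s_h}$; (ii) a dissimilar task $w\notin\tcW^*_{s_h}$ holding the trust position yet surviving the check despite $s_h$ having been visited — bounded, as in the bandit case, by the number of visits to $s_h$ times a per-step self-correction probability, using that the $\tfrac1H$-inflated optimistic gap in $\tQ_{\tE,h}^{k}$ separates $M_{\tO,w}$ from $\ME$; and (iii) $s_h$ failing to be reached in $[\tfrac{k}{2},k]$. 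The main obstacle is that, unlike the bandit setting where each pull deterministically shrinks the relevant uncertainty, here the self-correction can act only when $s_h$ is actually visited, so the entire argument is gated on the indirect visitation guarantee of the first step. Moreover each RL confidence event carries only a polynomial tail $k'^{-\alpha}$ (from $\delta_{k'}=1/SAH\Task k'^{\alpha}$), and demanding all of them over a window of length $\Theta(k)$ costs $\sum_{k'=k/2}^{k} k'^{-\alpha}=O(k^{-(\alpha-1)})$, which dominates the bandit-style $O(k^{-(2\alpha-2)})$ contributions and yields the stated rate; the requirement that the occupancy concentrate is exactly what injects the $1/d^*_\tE(s_h)$ and $\log W$ factors into $\iota^*_{s_h}$.
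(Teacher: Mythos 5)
Your proposal is correct and follows essentially the same route as the paper's proof: the same window $[k/2,k]$, the same three failure modes (the similar task leaving $\cI^{k'}_{s_h}$, the PVI/counting recommendations disagreeing with $\pi^*_{\tO,w}(s_h)$, and $s_h$ not being reached), the same regret-plus-Azuma occupancy lower bound that injects the $1/d^*_\tE(s_h)$ dependence, the same inheritance invariant confining the trust to $\tcW^*_{s_h}$, and the same union bound over a window of length $\Theta(k)$ yielding the $O(k^{-(\alpha-1)})$ rate. The only deviation is that your failure mode (ii) --- a bandit-style self-correction bound for dissimilar tasks surviving the check --- is redundant: since the deployed policy is deterministic at $s_h$, your own occupancy lower bound already forces $\piE{}^{k'}(s_h)=\piE{}^*(s_h)$ at some $k'$ in the window, which together with the other good events places the trusted task inside $\tcW^*_{s_h}$ without any elimination argument, and this is exactly how the paper dispenses with that step.
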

\begin{proof}
    Similar to MAB setting, we define $\cW^*_{s_h} := \{t \in [\Task]| \pi^*_{\tO,\task}(s_h) = \pi^*_{\tE}(s_h), \VO{,w,h}(s_h) \leq \VE{,w,h}(s_h) + \frac{\tilde{\Delta}_{\min}}{4(H+1)}, d^*_{\tO,\task}(s_h) > \lambda\}$ and $\tcW^*_{s_h} := \{t \in [\Task]| \pi^*_{\tO,\task}(s_h) = \pi^*_{\tE}(s_h), d^*_{\tO,\task}(s_h) > \lambda\}$.
    The key observation is that, when $k \geq k_{start}$:
    \begin{align*}
        \Pr(\piE{}^k(s_h) = \piE{}^*(s_h))  \geq &\Pr(\underbrace{\{\forall k' \in [\frac{k}{2}, k],~\cW^*_{s_h} \subset \cI^{k'}_{s_h}\}}_{\cE_{k,1}} \cap \underbrace{\{\exists k' \in [\frac{k}{2},k],~s.t.~d^{\piE{}^{k'}}_{\tE}(s_h,\piE{}^k)  > 0\}}_{\cE_{k,2}}\\
        &\quad\quad \cap \underbrace{\{\forall k' \in [\frac{k}{2}, k],~\forall t \in \cI^{k'}_{s_h},~\upiO{,w}^{k'}(s_h) = \piO{,\task^{k'}}^{*}(s_h)\}}_{\cE_{k,3}}).
    \end{align*}
    That's because if $\cE_{k,2}$ holds at some $k'$, then, we can only have the following two cases: (1) $\task^{k'}_{s_h} = \task^*_{s_h}$: because of $\cE_{k,1}$, we must have $\task^k_{s_h} = \task^*_{s_h}$, which implies $\piE{}^k(s_h) = \piE{}^*(s_h)$ as a result of $\cE_{k,3}$; (2) $\task^{k'}_{s_h} \neq \task^*_{s_h}$: because of $\cE_{k,1}$ and $\cE_{k,3}$, $\task^\tk$ can only transfer inside $\tcW^*_{s_h}$ for $\tk\in[k',k]$, and therefore we still have $\piE{}^k(s_h) = \piE{}^*(s_h)$. In the following, we will provide a upper bound for $\Pr(\cE_{k,1}^\complement), \Pr(\cE_{k,2}^\complement), \Pr(\cE_{k,3}^\complement)$.

    On the event of $\cEBTk$, with a similar analysis as Lem. \ref{lem:underestimation} on $\uQO{,\task,h}^k$, we have:
    \begin{align*}
        \Pr(\cE_{k,1}^\complement) \leq \sum_{k'=\frac{k}{2}}^k \Pr(\task^*_{s_h} \not\in \cI^{k'}_{s_h}) \leq \sum_{k'=\frac{k}{2}}^k \Pr(\cEBTk^\complement) \leq \frac{2}{(k/2)^{\alpha - 1}}.
    \end{align*}
    Besides, for arbitrary $s_h$ with $d^*_{\tE}(s_h) > 0$, and arbitrary $k$, by Azuma-Hoeffding inequality and Thm. \ref{thm:algO_regret_vs_algP_density}, with probability at least $1-\frac{1}{k^\alpha}$, we have:
    \begin{align*}
        \sum_{k' = k/2}^k d^{\pi_{\tE}^k}_{\tE}(s_h,\pi^*_{\tE}) \geq& \sum_{k' = k/2}^k d^{*}_{\tE}(s_h,\pi^*_{\tE}) - H\sqrt{2\alpha k \log k} - \EE[\sum_{k=1}^K \VE{,1}^*(s_1) - \VE{,1}^{\piE{}^k}(s_1)] \\
        = & \frac{k}{2} d^{*}_{\tE}(s_h) - H\sqrt{2\alpha k \log k} - \EE[\sum_{k=1}^K \VE{,1}^*(s_1) - \VE{,1}^{\piE{}^k}(s_1)].
    \end{align*}
    As a result of Thm. \ref{thm:regret_bound_RL_MT_general}, $H\sqrt{2\alpha k \log k} + \EE[\sum_{k=1}^K \VE{,1}^*(s_1) - \VE{,1}^{\piE{}^k}(s_1)]$ will be sub-linear w.r.t. $k$, so there exists $\iota_{s_h} := c_{s_h} \Poly(S,A,H,1/\lambda, 1/\Delta_{\min}, 1/d^*_\tE(s_h))$ for some absolute constant $c_{s_h}$, such that for arbitrary $k \geq \iota_{s_h}$,  $\sum_{k' = k/2}^k d^{\pi_{\tE}^k}_{\tE}(s_h,\pi^*_{\tE}) > 0$, which implies:
    \begin{align*}
        \forall k \geq \iota_{s_h},~\Pr(\cE_{k,2}^\complement) = \Pr(\sum_{k' = k/2}^k d^{\pi_{\tE}^k}_{\tE}(s_h,\pi^*_{\tE}) = 0) \leq \frac{1}{k^\alpha}.
    \end{align*}
    Moreover, as a result of Lem.~\ref{lem:lower_bound_of_dOstar_MT}, when $k \geq k_{occup}^{[W]} := c_{occup}^{[W]}\frac{C_1+\alpha C_2}{\lambda \Delta_{\min}}\log(\frac{\alpha C_1 C_2SAH\Task }{\lambda \Delta_{\min}})$, on the event of $\cEOTk$ and $\cECTk$, if $t \in \cI^k_{s_h}$, we must have $\max_a \NO{,\task,h}(s_h,a) > \lambda k$, which implies $\pi^*_{\tO,\task}(s_h) = \arg\max \NO{,\task,h}^k(s_h,a)$. Therefore, $\forall k \geq 2 \cdot k_{occup}^{[W]}$:
    \begin{align*}
        \Pr(\cE_{k,3}^\complement) \leq& \sum_{k'=k/2}^k \sum_{t\in [\Task]}\Pr(\{t \in \cI^{k'}_{s_h}\} \cap \{\pi^*_{\tO,\task}(s_h) \neq \arg\max_a \NO{,\task,h}^{k'}(s_h,a)\})\\
        \leq & \sum_{k'=k/2}^k  \Pr(\cEOTk^\complement) + \Pr(\cECTk^\complement)\leq \frac{4}{(k/2)^{\alpha-1}}.
    \end{align*}
    Therefore, for arbitrary $k \geq \iota^*_{s_h} := \max\{k_{start}^{[W]}, \iota_{s_h}, k_{occup}^{[W]}\}=\Poly(S,A,H,1/\lambda, 1/\Delta_{\min}, 1/d^*_\tE(s_h),\log \Task)$, we have:
    \begin{align*}
        \Pr(\piE{}^k(s_h) = \piE{}^*(s_h)) \geq 1 - \frac{2}{(k/2)^{\alpha-1}} -\frac{2}{(k/2)^{\alpha-1}} - \frac{4}{(k/2)^{\alpha-1}} \geq 1 - \frac{8}{(k/2)^{\alpha - 1}}.
    \end{align*}
    which finishes the proof.
\end{proof}

\begin{theorem}[Detailed Version of Thm.~\ref{thm:regret_bound_RL_MT}]\label{thm:regret_bound_RL_MT_formal}
    Under Assump. \ref{assump:unique_optimal_policy}, \ref{assump:opt_value_dominance} and \ref{assump:lower_bound_Delta_min}, 
    Cond.~\ref{cond:bonus_term} and~\ref{cond:requirement_on_algO_MT},
    by running Alg.~\ref{alg:RL_Setting_MT} in Appx.~\ref{appx:RL_add_Alg_Cond_Nota_MT}, with $\epsilon=\frac{\tilde{\Delta}_{\min}}{4(H+1)}$, $\alpha > 2$ and arbitrary $\lambda > 0$, we have\footnote{We only keep non-constant terms and defer the detailed result to Eq.~\eqref{eq:regret_bound_RL_detail_MT}.}:
    \begin{align*}
        &\Regret_{K}(\ME) \\
        =& O\Big(H\cdot \max\{\alpha\frac{B_1^2H^2S}{\lambda^2\Delta_{\min}^2}\log(HSA\Task B_1B_2), \frac{(C_1+\alpha C_2)SH}{\lambda\Delta_{\min}^2}\log\frac{C_1C_2SAH\Task }{\Delta_{\min}\lambda}\}\\
        &+ \sum_{h=1}^H \sum_{(s_h,a_h) \in \cC^{\lambda,[W],1}_h} H \log (2SAH\Task (K\wedge \frac{1}{\Delta_{\min}\lambda d^*_{\tE}(s_h)}))  +  \frac{B_1}{\DeltaE(s_h,a_h)}\log (SAHB_2\Task (K\wedge \frac{1}{\Delta_{\min}\lambda d^*_{\tE}(s_h)}))\\
        & + \sum_{h=1}^H \sum_{(s_h,a_h) \in \cC^{\lambda,[W],2}_h} H \log (2SAH\Task (K\wedge \frac{1}{\Delta_{\min}\lambda d^*_{\tE,\min}})) \\
        & \qquad\qquad\qquad\qquad + B_1(\frac{H}{\Delta_{\min}}\wedge \frac{1}{\DeltaE(s_h,a_h)})\log (SAHB_2\Task (K\wedge \frac{1}{\Delta_{\min}\lambda d^*_{\tE,\min}}))\\
        & + \sum_{h=1}^H \sum_{(s_h,a_h) \in \cC^{*}_h} \frac{HB_1}{\Delta_{\min}} \log (SAH B_2\Task \min\{K, \frac{1}{\lambda \Delta_{\min} d^*_{\tE}(s_h)}\})\\
        & + \sum_{h=1}^H \sum_{(s_h,a_h) \in \cS_h\times\cA_h \setminus (\cC^{\lambda,[W],1}_h\cup \cC^{\lambda,[W],2}_h)} H \log (2SAH\Task K) +  B_1 (\frac{H}{\Delta_{\min}}\wedge \frac{1}{\DeltaE(s_h,a_h)})\log (B_2\Task K)\Big)\numberthis\label{eq:regret_bound_RL_detail_MT}\\
        =& O(SH\sum_{h=1}^H \sum_{(s_h,a_h) \in \cS_h\times\cA_h \setminus \cC^{\lambda,[W]}_h} \frac{H}{\Delta_{\min}}\wedge \frac{1}{\DeltaE(s_h,a_h)}\log (SAH\Task K)).
    \end{align*}
\end{theorem}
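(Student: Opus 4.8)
The plan is to follow the architecture of the single-source detailed bound (Thm.~\ref{thm:regret_upper_bound_detailed}), replacing each ingredient by its multi-task counterpart and carrying the union-bound factor $\log\Task$ through the high-probability events $\cEBTk,\cECTk,\cEOTk$. Concretely, I would set the burn-in threshold $k_{start}^{[W]} := \max\{k_{ost}^{[W]}, k_{occup}^{[W]}\}$ and split $\Regret_K(\ME) = \EE[\sum_k \VE{,1}^*(s_1) - \VE{,1}^{\piE{}^k}(s_1)]$ at $k_{start}^{[W]}$. For $k \le k_{start}^{[W]}$ I bound each summand crudely by $H$, which produces the leading $\Poly(S,A,H,\lambda^{-1},\Delta_{\min}^{-1})$ term; for $k > k_{start}^{[W]}$, Thm.~\ref{thm:overestimation_MT} replaces the instantaneous regret by the clipped-bonus expectation in Eq.~\eqref{eq:regret_bound_MT} on the good events, while the complementary failure events cost only a convergent tail $\sum_k k^{-\Theta(\alpha)}$.

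Next I would establish the generic per-pair bound that is agnostic to task similarity. Using $\cECTk$ to lower-bound $\NE{,h}^k(s_h,a_h)$ by $\tfrac12\sum_{k'<k} d^{\piE{}^{k'}}_\tE(s_h,a_h) - \alpha\log(2SAH\Task k)$, and then comparing the resulting sum of decaying, clipped bonuses to an integral exactly as in Eq.~\eqref{eq:relaxed_clipping_bound} onward, yields for every $(s_h,a_h)$ a contribution of order $H\log(SAH\Task K) + B_1(\tfrac{H}{\Delta_{\min}}\wedge \tfrac{1}{\DeltaE(s_h,a_h)})\log(B_2\Task K)$. Summing over all pairs recovers Thm.~\ref{thm:regret_bound_RL_MT_general}, the near-optimal fallback bound; this will also be reused below when controlling the benefitable subclasses.

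The heart of the proof is upgrading three subclasses to $K$-independent regret. \emph{First}, on $\cC^{\lambda,[W],1}_h$ (suboptimal actions at transferable states), Lem.~\ref{lem:absorb_to_sim_task_RL} gives $\Pr(\piE{}^k(s_h)\neq\piE{}^*(s_h)) = O(k^{-(\alpha-1)})$ for $k\ge \iota^*_{s_h}$; since $\alpha>2$ the tail $\sum_k k^{-(\alpha-1)}$ converges, so $\sum_k \EE[d^{\piE{}^k}_\tE(s_h,a_h)]\cdot H$ is a constant, with the burn-in $\iota^*_{s_h}$ generating the $\log(K\wedge\tfrac{1}{\Delta_{\min}\lambda d^*_\tE(s_h)})$ factor. \emph{Second}, on $\cC^{\lambda,[W],2}_h$ (blocked states) every trajectory reaching $s_h$ must first take a category-one suboptimal action, so $\EE[d^{\piE{}^k}_\tE(s_h)] \le \sum_{\ph,(s_\ph,a_\ph)\in\cC^{\lambda,[W],1}_\ph}\EE[d^{\piE{}^k}_\tE(s_\ph,a_\ph)] = O(k^{-(\alpha-1)})$ as well, again summable. \emph{Third}, on $\cC^*_h$ (where $d^*_\tE(s_h)>0$), the sub-linear regret of Thm.~\ref{thm:regret_bound_RL_MT_general} together with an Azuma--Hoeffding argument forces $\NE{,h}^k\gtrsim k\,d^*_\tE(s_h,a_h)$ after a $\Poly$ burn-in, so the bonus decays below the clip threshold $\tfrac{\Delta_{\min}}{4eH}$ and the clip zeroes the contribution, exactly mirroring the single-source Type-2 analysis.

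Finally I would sum over all $(s_h,a_h)$ and $h$, collapse the four groups via $\cC^{\lambda,[W]}_h = \cC^{\lambda,[W],1}_h\cup\cC^{\lambda,[W],2}_h\cup\cC^*_h$, substitute $B_1=O(SH)$, $B_2=O(SA)$, and discard all $K$-independent terms to obtain the stated $O(SH\sum_h \sum_{(s_h,a_h)\notin \cC^{\lambda,[W]}_h}(\tfrac{H}{\Delta_{\min}}\wedge \tfrac{1}{\DeltaE(s_h,a_h)})\log(SAH\Task K))$. The main obstacle is the first subclass: unlike the single-source case where Lem.~\ref{lem:effects_on_similar_states} forced the correct action \emph{deterministically} on good events (so the suboptimal visitation was exactly zero), the ``Trust till Failure'' selection only yields the polynomial mis-selection decay $O(k^{-(\alpha-1)})$ of Lem.~\ref{lem:absorb_to_sim_task_RL}. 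Establishing this decay requires ruling out ``marginally similar'' tasks that occupy the trust slot before being eliminated, and I expect the delicate points to be (i) checking that $\alpha>2$ makes the mis-selection tail summable so the regret stays constant, and (ii) verifying that this approximate correction propagates through the $\text{Block}$ mechanism of category two while the union bound over $\Task$ source tasks inflates the logarithms by only $\log\Task$.
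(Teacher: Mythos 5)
Your proposal follows essentially the same route as the paper's proof of Thm.~\ref{thm:regret_bound_RL_MT_formal}: the same burn-in split at $k_{start}^{[W]}=\max\{k_{ost}^{[W]},k_{occup}^{[W]}\}$, the same generic fallback bound (Thm.~\ref{thm:regret_bound_RL_MT_general}) via the clipped-bonus integral, the same treatment of the three benefitable subclasses (Lem.~\ref{lem:absorb_to_sim_task_RL} with the summable $O(k^{-(\alpha-1)})$ tail for $\cC^{\lambda,[W],1}_h$, the ancestor-convergence/blocking argument for $\cC^{\lambda,[W],2}_h$, and the Azuma--Hoeffding plus clipping argument for $\cC^*_h$), and the same final substitution $B_1=O(SH)$, $B_2=O(SA)$. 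You also correctly identify the key structural difference from the single-source case, namely that the ``Trust till Failure'' mechanism only yields a polynomial mis-selection decay rather than deterministic correctness on good events; no gaps.
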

\begin{proof}
    Consider the same $k_{start}^{[W]} = \max\{k_{ost}^{[W]}, k_{occup}^{[W]}\}$. Similar to single task setting, we study the following two types of states.
    \paragraph{Type 1: $(s_h,a_h) \in \cC^{\lambda, [W], 1}_h \cup \cC^{\lambda, [W], 2}_h$: Constant Regret because of Low Visitation Probability}
    For each $(s_h,a_h) \in \cC^{\lambda, [W], 1}_h$, by leveraging Lem.~\ref{lem:absorb_to_sim_task_RL}, we have:
    \begin{align*}
        &\sum_{k=k_{start}^{[W]}+1}^K \EE_{\pi_{\tE}^k}[\Clip{\min\{H, 4\bonus^k_{\tE,h}(s_h,a_h)\}}{\Big|\frac{\Delta_{\min}}{4eH}\vee \frac{\DeltaE(s_h,a_h)}{4e}}\mathbb{I}[\cEBTk\cap\cEOTk\cap\cECTk]]\\
        =&\sum_{k=k_{start}^{[W]}+1}^{\iota^*_{s_h}} \EE_{\pi_{\tE}^k}[\Clip{\min\{H, 4\bonus^k_{\tE,h}(s_h,a_h)\}}{\Big|\frac{\Delta_{\min}}{4eH}\vee \frac{\DeltaE(s_h,a_h)}{4e}}\mathbb{I}[\cEBTk\cap\cEOTk\cap\cECTk]]\\
        =& O\Big(H \log (2SAH\Task (K\wedge \frac{1}{\Delta_{\min}\lambda d^*_{\tE}(s_h)})) +  \frac{B_1}{\DeltaE(s_h,a_h)}\log (SAHB_2\Task (K\wedge \frac{1}{\Delta_{\min}\lambda d^*_{\tE}(s_h)}))\Big).
    \end{align*}
    Note that different from single task setting, the convergence speed of $\pi^k_\tE(s_h)$ to $\pi^*_\tE(s_h)$ for $s_h \in \cZ^{\lambda,[W]}$ depends on $d^*_\tE(s_h)$. Therefore, for $(s_h,a_h) \in \cC^{\lambda, [W], 2}_h$, we can guarantee $d^{\pi^k_{\tE}(s_h,a_h)}$ decays to zero only after the convergence of all its ancester states in $\cZ^{\lambda,[W]}$, and:
    \begin{align*}
        &\sum_{k=k_{start}^{[W]}+1}^K \EE_{\pi_{\tE}^k}[\Clip{\min\{H, 4\bonus^k_{\tE,h}(s_h,a_h)\}}{\Big|\frac{\Delta_{\min}}{4eH}\vee \frac{\DeltaE(s_h,a_h)}{4e}}\mathbb{I}[\cEBTk\cap\cEOTk\cap\cECTk]]\\
        \leq &\sum_{k=k_{start}^{[W]}+1}^{k_{s_h,a_h}} \EE_{\pi_{\tE}^k}[\Clip{\min\{H, 4\bonus^k_{\tE,h}(s_h,a_h)\}}{\Big|\frac{\Delta_{\min}}{4eH}\vee \frac{\DeltaE(s_h,a_h)}{4e}}\mathbb{I}[\cEBTk\cap\cEOTk\cap\cECTk]]\\
        =& O\Big(H \log (2SAH\Task (K\wedge \frac{1}{\Delta_{\min}\lambda d^*_{\tE,\min}})) +  B_1(\frac{H}{\Delta_{\min}}\wedge \frac{1}{\DeltaE(s_h,a_h)})\log (SAHB_2\Task (K\wedge \frac{1}{\Delta_{\min}\lambda d^*_{\tE,\min}}))\Big).
    \end{align*}
    where $k_{s_h,a_h} := \max_{h'\in[h-1],s_{h'}\in \cC^{\lambda,[W],1}_{h'}}\iota^*_{s_{h'}}$ and $d^*_{\tE,\min} := \min_{s_h:d^*_\tE(s_h) > 0} d^*_\tE(s_h)$ is the minimal probability to reach state by optimal policy in $\tE$.

    \paragraph{Type 2: $(s_h,a_h) \in \cC^*_h$, i.e. $d^*_{\tE}(s_h, a_h)=d^*_{\tE}(s_h) > 0$}
    Similar to the discussion of Type 2 in the proof of Thm. \ref{thm:regret_upper_bound}, when $k \geq \tau^*_{s_h} $ for some $\tau^*_{s_h} = \Poly(S,A,H,\lambda^{-1}, \Delta_{\min}^{-1},(d^*_{\tE}(s_h))^{-1},\log W)$, we have $\sum_{k'=1}^{k-1} d^{\pi^k_\tE}_\tE(s_h,a_h) = O(k d^*_{\tE}(s_h))$, and therefore,
    \begin{align*}
        & \sum_{k=k_{start}^{[W]}+1}^K \EE_{\pi_{\tE}^k}[\Clip{\min\{H, 4\bonus^k_{\tE,h}(s_h,a_h)\}}{\Big|\frac{\Delta_{\min}}{4eH}\vee \frac{\DeltaE(s_h,a_h)}{4e}}|\cEBTk,\cEOTk,\cECTk]\\
        =&O\left(\frac{HB_1}{\Delta_{\min}} \log (SAH B_2\Task \min\{K, \frac{1}{\lambda \Delta_{\min} d^*_{\tE}(s_h)}\})\right).
    \end{align*}
    Therefore, we can conclude that:
    \begin{align*}
        &\Regret_{K}(\ME) \\
        =& O\Big(H\cdot \max\{\alpha\frac{B_1^2H^2S}{\lambda^2\Delta_{\min}^2}\log(HSA\Task B_1B_2), \frac{(C_1+\alpha C_2)SH}{\lambda\Delta_{\min}^2}\log\frac{C_1C_2SAH\Task }{\Delta_{\min}\lambda}\}\\
        &+ \sum_{h=1}^H \sum_{(s_h,a_h) \in \cC^{\lambda,[W],1}_h} H \log (2SAH\Task (K\wedge \frac{1}{\Delta_{\min}\lambda d^*_{\tE}(s_h)}))  +  \frac{B_1}{\DeltaE(s_h,a_h)}\log (SAHB_2\Task (K\wedge \frac{1}{\Delta_{\min}\lambda d^*_{\tE}(s_h)}))\\
        & + \sum_{h=1}^H \sum_{(s_h,a_h) \in \cC^{\lambda,[W],2}_h} H \log (2SAH\Task (K\wedge \frac{1}{\Delta_{\min}\lambda d^*_{\tE,\min}})) \\
        & \qquad\qquad\qquad\qquad + B_1(\frac{H}{\Delta_{\min}}\wedge \frac{1}{\DeltaE(s_h,a_h)})\log (SAHB_2\Task (K\wedge \frac{1}{\Delta_{\min}\lambda d^*_{\tE,\min}}))\\
        & + \sum_{h=1}^H \sum_{(s_h,a_h) \in \cC^{*}_h} \frac{HB_1}{\Delta_{\min}} \log (SAH B_2\Task \min\{K, \frac{1}{\lambda \Delta_{\min} d^*_{\tE}(s_h)}\})\\
        & + \sum_{h=1}^H \sum_{(s_h,a_h) \in \cS_h\times\cA_h \setminus (\cC^{\lambda,[W],1}_h\cup \cC^{\lambda,[W],2}_h)} H \log (2SAH\Task K) +  B_1 (\frac{H}{\Delta_{\min}}\wedge \frac{1}{\DeltaE(s_h,a_h)})\log (B_2\Task K)\Big).
    \end{align*}
    We consider the similar Hoeffding bound for the choice of $B_1$ and $B_2$ as Example~\ref{example:choice_of_B}, and by plugging $B_1=\Theta(SH)$ and $B_2=\Theta(SA)$ into the above equation and omitting all the terms independent with $K$, we have:
    \begin{align*}
        \Regret_{K}(\ME)=& O(SH\sum_{h=1}^H \sum_{(s_h,a_h) \in \cS_h\times\cA_h \setminus \cC^{\lambda,[W]}_h} \frac{H}{\Delta_{\min}}\wedge \frac{1}{\DeltaE(s_h,a_h)}\log (SAH\Task K)).
    \end{align*}
\end{proof}

\section{Missing Details for Experiments}\label{appx:experiments}
\paragraph{Construction of Source and Target Tasks} We first randomly construct the transition function of the high-tier task $M_{\text{Hi}}$ (i.e. $\mathbb{P}_{\text{Hi}}$ are randomly sampled and normalized to make sure their validity). Then, similarly, we randomly construct the reward function of $M_{\text{Hi}}$ and shift the reward function to ensure $M_{\text{Hi}}$ has unique optimal policy and $\Delta_{\min, \text{Hi}} = 0.1$.

Next, we construct the source tasks by randomly permute the transition matrix. In another word, for any $s_h$, we randomly permute $a_1,a_2,a_3$ to $a_1',a_2',a_3'$ and assign $\mP_{h,\text{Lo}}(\cdot|s_h,a_i') \gets \mP_{h,\text{Hi}}(\cdot|s_h,a_i)$ for $i\in[3]$. In this way, the Optimal Value Dominance (OVD) condition is ensured, and we can expect some of $s_h$ are transferable when $\pi^*_{\text{Lo}}(s_h) = \pi^*_{\text{Hi}}(s_h)$.
When the number of source tasks $W > 1$, we repeat the above process and construct $W$ different source tasks.
\endgroup
\end{document}